\newtheorem{problem}{Problem}[section]
\newtheorem{condition}{Condition}[section]
\newtheorem{definition}{Definition}[section]
\newtheorem{theorem}{Theorem}[section]
\newtheorem*{thm*}{Theorem}
\newtheorem{lemma}{Lemma}[section]
\newtheorem{proposition}{Proposition}[section]
\newtheorem*{prop*}{Proposition}
\DeclareMathAlphabet{\mathbsf}{OT1}{cmss}{bx}{n}
\DeclareMathAlphabet{\mathssf}{OT1}{cmss}{m}{sl}
\DeclareMathOperator*{\argmax}{arg\,max}
\DeclareMathOperator*{\argmin}{arg\,min}
\DeclarePairedDelimiterX{\infdivx}[2]{(}{)}{%
	#1\;\delimsize\|\;#2%
}
\newcommand{\rvu}{{\mathssf{u}}}	
\newcommand{\rvv}{{\mathssf{v}}}	
\newcommand{\rvw}{{\mathssf{w}}}	
\newcommand{\rvx}{{\mathssf{x}}}	
\newcommand{\rvy}{{\mathssf{y}}}	
\newcommand{\hrvy}{\hat{\mathssf{y}}}	
\newcommand{\trvy}{\tilde{\mathssf{y}}}	
\newcommand{\rvbv}{{\mathbsf{v}}} 
\newcommand{\rvbx}{{\mathbsf{x}}} 
\newcommand{\svbb}{{\mathbf{b}}} 
\newcommand{\svbe}{{\mathbf{e}}} 
\newcommand{\svbf}{{\mathbf{f}}} 
\newcommand{\svbv}{{\mathbf{v}}} 
\newcommand{\svbw}{{\mathbf{w}}} 
\newcommand{\svbx}{{\mathbf{x}}} 
\newcommand{\svby}{{\mathbf{y}}} 
\newcommand{\hsvby}{\hat{\mathbf{y}}} 
\newcommand{\tsvby}{\tilde{\mathbf{y}}} 
\newcommand{\bB}{{\mathbf{B}}} 
\newcommand{\bV}{{\mathbf{V}}} 
\newcommand{\bH}{{\mathbf{H}}} 
\newcommand{\hbH}{\hat{\mathbf{H}}} 
\newcommand{\bkappa}{\tilde{\kappa}} 
\newcommand{\tcO}{\tilde{\mathcal{O}}} 
\newcommand{\bcO}{\bar{\mathcal{O}}} 
\newcommand{\Reals}{\mathbb{R}} 
\newcommand{\cX}{\mathcal{X}} 
\newcommand{\Density}{f_{\rvbx}( \svbx )} 
\newcommand{\NodePotential}{g_i}
\newcommand{\EdgePotential}{g_{ij}}
\newcommand{\Xlower}{b_l}
\newcommand{\Xupper}{b_u}
\newcommand{\phiMax}{\phi_{\max}} 
\newcommand{\hphiMax}{\bar{\phi}_{\max}} 
\newcommand{\btheta}{\boldsymbol{\theta}} 
\newcommand{\ctheta}{\bar{\theta}} 
\newcommand{\ttheta}{\tilde{\theta}} 
\newcommand{\bphi}{\boldsymbol{\phi}} 
\newcommand{\bpsi}{\boldsymbol{\psi}} 
\newcommand{\bphiI}{\boldsymbol{\phi}^{(i)}} 
\newcommand{\bpsiI}{\boldsymbol{\psi}^{(i)}} 
\newcommand{\thetaIStar}{{\theta^{*(i)}}} 
\newcommand{\thetaIJStar}{{\theta^{*(ij)}}} 
\newcommand{\thetaIoneStar}{{\theta_1^{*(i)}}} 
\newcommand{\thetaItwoStar}{{\theta_2^{*(i)}}} 
\newcommand{\thetaoneStar}{{\theta^{*(1)}}} 
\newcommand{\thetatwoStar}{{\theta^{*(2)}}} 
\newcommand{\thetaIJ}{{\theta^{(ij)}}} 
\newcommand{\thetaIJoneStar}{{\theta_1^{*(ij)}}}
\newcommand{\thetaIJtwoStar}{{\theta_2^{*(ij)}}}
\newcommand{\thetaonetwoStar}{{\theta^{*{(12)}}}}
\newcommand{\thetaIJoneoneStar}{{\theta_{1,1}^{*(ij)}}}
\newcommand{\thetaIJtwooneStar}{{\theta_{2,1}^{*(ij)}}}
\newcommand{\thetaIr}{{\theta^{(i)}_{r}}} 
\newcommand{\thetaIrStar}{{\theta^{*(i)}_{r}}} 
\newcommand{\thetaIJrs}{{\theta_{r,s}^{(ij)}}} 
\newcommand{\thetaIJrsStar}{{\theta_{r,s}^{*(ij)}}} 
\newcommand{\thetaIMrsStar}{{\theta_{r,s}^{*(im)}}} 
\newcommand{\hthetaIr}{\hat{\theta}_{r}^{(i)}} 
\newcommand{\hthetaIJrs}{\hat{\theta}_{r,s}^{(ij)}} 
\newcommand{\bthetaI}{\boldsymbol{\theta}^{(i)}} 
\newcommand{\hbthetaI}{\hat{\boldsymbol{\theta}}^{(i)}} 
\newcommand{\bthetaIT}{\boldsymbol{\theta}^{(i)^T}} 
\newcommand{\bthetaIJ}{\boldsymbol{\theta}^{(ij)}} 
\newcommand{\hbthetaIJ}{\hat{\boldsymbol{\theta}}^{(ij)}} 
\newcommand{\bthetaJI}{\boldsymbol{\theta}^{(ji)}}
\newcommand{\bthetaIJT}{\boldsymbol{\theta}^{(ij)^T}} 
\newcommand{\bthetaStar}{\boldsymbol{\theta}^*} 
\newcommand{\bthetaIStar}{\boldsymbol{\theta}^{*(i)}} 
\newcommand{\bthetaIJStar}{\boldsymbol{\theta}^{*(ij)}} 
\newcommand{\bthetaIStarT}{\boldsymbol{\theta}^{*(i)^T}} 
\newcommand{\bthetaIJStarT}{\boldsymbol{\theta}^{*(ij)^T}} 
\newcommand{\DensityParametrized}{f_{\rvbx}( \svbx ; \btheta )}
\newcommand{\DensityParametrizedTrue}{f_{\rvbx}( \svbx ; \bthetaStar )}
\newcommand{\DensityParametrizedfunTrue}{f_{\rvx}( \cdot ; \bthetaStar )}
\newcommand{\DensityParametrizedNotITrue}{f_{\rvx_{-i}}( x_{-i} ; \bthetaStar )}
\newcommand{\DensityParametrizedNotIfunTrue}{f_{\rvx_{-i}}( \cdot ; \bthetaStar )}
\newcommand{\thetaMax}{\theta_{\max}}
\newcommand{\thetaMin}{\theta_{\min}}
\newcommand{\hbtheta}{\hat{\boldsymbol{\theta}}}
\newcommand{\cN}{\mathcal{N}}
\newcommand{\hG}{\hat{G}}
\newcommand{\vtheta}{{\vartheta}}
\newcommand{\vthetaStar}{{\vartheta}^*}
\newcommand{\hvthetan}{\hat{{\vartheta}}_n}
\newcommand{\bvtheta}{\boldsymbol{\vartheta}}
\newcommand{\bvthetaT}{\boldsymbol{\vartheta}^T}
\newcommand{\bvthetaEdge}{\boldsymbol{\vartheta}_{E}}
\newcommand{\hbvthetaIn}{\hat{\boldsymbol{\vartheta}}^{(i)}_n}
\newcommand{\hbvthetaIEps}{\hat{\boldsymbol{\vartheta}}^{(i)}_{\epsilon}}
\newcommand{\hbvthetaIEpsUnc}{\hat{\boldsymbol{\vartheta}}^{(i)}_{\epsilon,\text{unc}}}
\newcommand{\hbvthetaIEpsEdge}{\hat{\boldsymbol{\vartheta}}^{(i)}_{\epsilon,E}}
\newcommand{\bvthetaIStar}{\boldsymbol{\vartheta}^{*(i)}}
\newcommand{\bvthetaIStarT}{\boldsymbol{\vartheta}^{*(i)^T}}
\newcommand{\bvthetaJStar}{\boldsymbol{\vartheta}^{*(j)}}
\newcommand{\bvthetaOneStar}{\boldsymbol{\vartheta}^{*(1)}}
\newcommand{\bvthetaTwoStar}{\boldsymbol{\vartheta}^{*(2)}}
\newcommand{\bvthetaIEdgeStar}{\boldsymbol{\vartheta}^{*(i)}_E}
\newcommand{\bvphiI}{\boldsymbol{\varphi}^{(i)}}
\newcommand{\bvphione}{\boldsymbol{\varphi}^{(1)}}
\newcommand{\bvphitwo}{\boldsymbol{\varphi}^{(2)}}
\newcommand{\brho}{\boldsymbol{\rho}}
\newcommand{\tbrho}{\tilde{\boldsymbol{\rho}}}
\newcommand{\hbrho}{\hat{\boldsymbol{\rho}}}
\newcommand{\bbrho}{\bar{\boldsymbol{\rho}}}
\newcommand{\rhoStar}{{\rho}^{*}}
\newcommand{\brhoStar}{\boldsymbol{\rho}^{*}}
\newcommand{\brhoStarT}{\boldsymbol{\rho}^{*T}}
\newcommand{\g}{\gamma}
\newcommand{\varphiMax}{\varphi_{\max}}
\newcommand{\ConditionalDensityNodeI}{f_{\rvx_i}(  x_i | \rvx_{-i} = x_{-i} ; \bvthetaIStar)}
\newcommand{\ConditionalDensityNodeIfun}{f_{\rvx_i}(  \cdot | \rvx_{-i} = x_{-i} ; \bvthetaIStar)}
\newcommand{\cS}{\mathcal{S}}
\newcommand{\cl}{\mathcal{L}}
\newcommand{\hlambda}{\hat{\lambda}}
\newcommand{\hblambda}{\hat{\boldsymbol{\lambda}}}
\newcommand{\lambdaStar}{{\lambda}^{*}}
\newcommand{\blambdaStar}{\boldsymbol{\lambda}^{*}}
\newcommand{\blambdaStarT}{\boldsymbol{\lambda}^{*T}}
\newcommand{\barbmu}{\bar{\boldsymbol{\mu}}}
\newcommand{\muStar}{{\mu}^{*}}
\newcommand{\bmuStar}{\boldsymbol{\mu}^{*}}
\newcommand{\hmu}{\hat{{\mu}}}
\newcommand{\hbmu}{\hat{\boldsymbol{\mu}}}
\newcommand{\UniformProxyDensity}{u_{\rvbx}^{(i)}(\svbx)}
\newcommand{\UniformProxyDensityfun}{u_{\rvbx}^{(i)}(\cdot)}
\newcommand{\DifferenceDensity}{m_{\rvbx}^{(i)}(\svbx; \bvtheta)}
\newcommand{\DifferenceDensityfun}{m_{\rvbx}^{(i)}(\cdot; \bvtheta)}
\newcommand{\cU}{\mathcal{U}}
\newcommand{\infdiv}{D\infdivx}
\newcommand*\conj[1]{\bar{#1}}
\newcommand{\tbtheta}{\tilde{\boldsymbol{\theta}}}
\newcommand{\tbthetaIJ}{\tilde{\boldsymbol{\theta}}^{(ij)}}
\newcommand{\cbtheta}{\conj{\boldsymbol{\theta}}}
\newcommand{\cbthetaIJ}{\conj{\boldsymbol{\theta}}^{(ij)}}
\newcommand{\qs}{q^s}
\newcommand{\tp}{\tilde{p}}
\newcommand{\betaStar}{\beta^*}
\newcommand{\BoundedNoise}{\eta}
\newcommand{\bbetaStar}{\boldsymbol{\beta}^*}
\newcommand{\tbbeta}{\tilde{\boldsymbol{\beta}}}
\newcommand{\bbeta}{{\boldsymbol{\beta}}}
\newcommand{\tbeta}{\tilde{\beta}}
\newcommand{\Indicator}{\mathds{1}}
\newcommand{\Expectation}{\mathbb{E}}
\newcommand{\Entropy}{h}
\newcommand{\cR}{\mathcal{R}}
\newcommand{\Variance}{\mathbb{V}\text{ar}}
\newcommand{\hH}{\hat{H}}
\newcommand{\Prob}{\mathbb{P}}
\newcommand{\ty}{\tilde{y}}
\newcommand{\hy}{\hat{y}}
\newcommand{\cY}{\mathcal{Y}}
\newcommand{\cF}{\mathcal{F}}
\newcommand{\ConditionalDensityNodeJ}{f_{\rvx_j}(  x_j | \rvx_{-j} = x_{-j} ; \bvthetaJStar)}
\newcommand{\ConditionalDensityNodeJY}{f_{\rvy_j}(  y_j | \rvx_{-j} = x_{-j} ; \bvthetaJStar)}
\newcommand{\sininv}{\sin^{-1}}
\newcommand{\taninv}{\tan^{-1}}
\newcommand{\rhoMax}{\rho_{\max}}
\newcommand{\upStar}{\upsilon^*}
\newcommand{\hup}{\hat{\upsilon}}
\newcommand{\bup}{\boldsymbol{\upsilon}}
\newcommand{\hbup}{\hat{\boldsymbol{\upsilon}}}
\newcommand{\bupStar}{\boldsymbol{\upsilon}^*}
\newcommand{\cP}{\mathcal{P}}
\newcommand{\Uniform}{\cU_{\cX_{0}}}
\newcommand{\hunu}{\hat{\boldsymbol{\nu}}}
\newcommand{\bnu}{\boldsymbol{\nu}}
\newcommand{\hnu}{\hat{\nu}}
\newcommand{\length}{t}
\newcommand{\bPsiJ}{\boldsymbol{\Psi}^{(j)}}
\newcommand{\hbPsiJ}{\hat{\boldsymbol{\Psi}}^{(j)}}
\newcommand{\bPsiJT}{\boldsymbol{\Psi}^{(j)^T}}
\newcommand{\hbPsiJT}{\hat{\boldsymbol{\Psi}}^{(j)^T}}
\newcommand{\cI}{\mathcal{I}}
\newcommand{\bzeta}{\boldsymbol{\zeta}}
\newcommand{\cJ}{\mathcal{J}}
\newcommand{\cM}{\mathcal{M}}
\newcommand{\cW}{\mathcal{W}}
\newcommand{\cH}{\mathcal{H}}
\newcommand{\cK}{\mathcal{K}}
\newcommand{\cB}{\mathcal{B}}
\newcommand{\cT}{\mathcal{T}}
\newcommand{\Naturals}{\mathbb{N}}
\newcommand{\Conductance}{\varphi}
\newcommand{\epsOne}{\epsilon_1}
\newcommand{\epsTwo}{\epsilon_2}
\newcommand{\epsFour}{\epsilon_4}
\newcommand{\epsFive}{\epsilon_5}
\newcommand{\epsSix}{\epsilon_6}
\newcommand{\epsSeven}{\epsilon_7}
\newcommand{\epsEight}{\epsilon_8}
\newcommand{\epsNine}{\epsilon_9}
\newcommand{\deltaOne}{\delta_1}
\newcommand{\deltaTwo}{\delta_2}
\newcommand{\deltaFour}{\delta_4}
\newcommand{\deltaFive}{\delta_5}
\newcommand{\deltaNine}{\delta_9}
\newcommand{\bcOne}{\bar{c}_1}
\newcommand{\bcTwo}{\bar{c}_2}
\newcommand{\bcThree}{\bar{c}_3}
\newcommand{\tepsilon}{\tilde{\epsilon}}
\newcommand{\tBoundedNoise}{\tilde{\BoundedNoise}}
\newcommand{\tsigma}{\tilde{\sigma}}
\newcommand{\tcOne}{\tilde{c}_1}
\newcommand{\tcTwo}{\tilde{c}_2}
\newcommand{\bareta}{\bar{\eta}}
\newcommand{\parameterSet}{\Lambda} 
\newcommand{\cHmax}{{\cH}_{\max}}
\begin{document}
	\sloppy
	\begin{frontmatter}
		\title{On Learning Continuous Pairwise Markov Random Fields}
		\runtitle{Continuous Markov Random Fields}
		
		\begin{aug}
			\author{\fnms{Abhin} \snm{Shah} ~~~\fnms{Devavrat} \snm{Shah} ~~~\fnms{Gregory W.} \snm{Wornell}\thanksref{ds.ack}} 
			\affiliation{Dept. of EECS\\ Massachusetts Institute of Technology}
			\thankstext{ds.ack}{Email addresses of authors are \{abhin, devavrat, gww\}@mit.edu. }
		\end{aug}
		
		\begin{abstract}

		\end{abstract}

\begin{abstract}
	We consider learning a sparse pairwise Markov Random Field
(MRF) with continuous-valued variables from i.i.d samples. We
adapt the algorithm of Vuffray et al. (2019) \cite{VuffrayML2019} to this setting and
provide finite-sample analysis revealing sample complexity scaling
logarithmically with the number of variables, as in the
discrete and Gaussian settings. Our approach is applicable to
a large class of pairwise MRFs with continuous variables and
also has desirable asymptotic properties, including consistency and
normality under mild conditions.  
Further, we establish that the population version of the
optimization criterion employed in Vuffray et al. (2019) \cite{VuffrayML2019} can be interpreted as local maximum
likelihood estimation (MLE).  
As part of our analysis, we introduce a robust variation of
sparse linear regression \`a la Lasso, which may be of
interest in its own right. 
\end{abstract}
\end{frontmatter}

\section{Introduction}
\label{sec_intro}
\subsection{Background}
\label{subsec_background}
Markov random fields or undirected graphical models are an important class of statistical models and represent the conditional dependencies of a high dimensional probability distribution with a graph structure. There has been considerable interest in learning discrete MRFs in machine learning, statistics, and physics communities under different names \cite{ChowL1968, AbbeelKN2006, NegahbanRWY2010, AckleyHS1985, SessakM2009}. Bresler (2015) \cite{Bresler2015} gave a simple greedy algorithm to learn arbitrary binary pairwise graphical models 
on $p$ nodes and maximum node degree $d$ with sample complexity $O(\exp(\exp(\Theta(d))) \log p)$ 
and runtime $\tcO(p^2)$.\footnote[1]{The $\tcO(\cdot)$ notation hides a factor $\text{poly}(\log p)$ as well as a constant (doubly-exponentially) depending on $d$.} This improved upon the prior work of Bresler et al. (2013) \cite{BreslerMS2013}, with runtime $\bcO(p^{d+2})$,\footnote[2]{The $\bcO(\cdot)$ notation hides a factor $\text{poly}(\log p)$ as well as a constant (exponentially) depending on $d$.} by removing the dependence of $d$ on the degree of the polynomial factor in runtime. Santhanam et al. (2012) \cite{SanthanamW2012} showed that only exponential dependence on $d$ is required in the sample complexity and thus, the doubly-exponential dependence on $d$ of Bresler (2015) \cite{Bresler2015} is provably suboptimal.


A recent work by Vuffray et al. (2019) \cite{VuffrayML2019} learns $t$-wise MRFs over general discrete alphabets in a  sample-efficient manner ($O(\exp(\Theta(d^{t-1})) \log p))$  with runtime $\bcO(p^t)$. The key to their proposal is a remarkable but seemingly mysterious objective function, the generalized interaction screening objective (GISO) which is an empirical average of an objective designed to screen an individual variable from its neighbors. 
While their approach can be formally extended to the continuous-valued setting, issues arise.
First, as is, their work shows that, for the discrete setting, the condition for learning is satisfied by only the `edge' parameters and their approach does not attempt to recover the `node' parameters.\footnote[3]{For the discrete setup, learning edge parameters is sufficient since, knowing those, node parameters can be recovered using the conditional expectation function; however, the same is not straightforward in the continuous setup.} Second, their condition for learning is cumbersome to verify as it is node-neighborhood-based and involves all the edges associated with the node. 

\begin{table*}
	\centering
	\caption[]{Comparison with existing works on pairwise continuous MRFs (beyond the Gaussian case) in terms of approach, conditions required and sample complexity: $p$ is \# of variables, $d$ is maximum node degree}
	\begin{tabular}{|p{1.6 cm}| p{3.1cm}|p{6.7cm}|p{2.2cm}|} 
		\hline
		Work  & Approach & Conditions & \#samples\\
		\hline
		\hline
		\multirow{3}{1.6cm}  {Yang et al. (2015) \cite{YangRAL2015}} & 
		\multirow{3}{3.1cm}{$\ell_1$ regularized node conditional log-likelihood} &
		1. Incoherence condition &  
		\multirow{6}{2.2cm}{$O(\text{poly}(d) \omega(p))$ s.t $\omega(p)$ $=$ $\bar{\omega}(p) \log p $ and $\bar{\omega}(p)$ is a density dependent function of $p$}\\
		& & 2. Dependency condition  & \\
		& & 3. Bounded moments of the variables&\\\cline{1-2}
		\multirow{3}{1.6cm}  {Tansey et al. (2015) \cite{TanseyPSR2015}} & 
		\multirow{3}{3.1cm}{Group lasso regularized node conditional log-likelihood} & 
		4. Local smoothness of the log-partition function & \\
		&& 5. Conditional distribution lies in exponential family& \\
		&&&\\
		\hline
		\multirow{4}{1.6cm}{ Yang et al. (2018)  \cite{YangNL2018}} & 
		\multirow{4}{3.1cm}{Node conditional pseudo-likelihood regularized by a nonconvex penalty} &
		1. Sparse eigenvalue condition &  
		\multirow{4}{2.2cm}{$O(\text{poly}(d) \log p)$}\\
		& & 2. Bounded moments of the variables&\\
		& & 3. Local smoothness of the log-partition function &\\
		&& 4. Conditional distribution lies in exponential family & \\
		\hline
		\multirow{3}{1.6cm} {Sun et al. (2015) \cite{SunKX2015}} & 
		\multirow{3}{3.1cm}{Penalized score matching objective} &
		1. Incoherence condition &  
		\multirow{3}{2.2cm}{$O(\text{poly}(pd))$}\\
		& & 2. Dependency condition&\\
		& & 3. Certain structural conditions &\\
		\hline
		\multirow{5}{1.6cm}  {Suggala et al. (2017) \cite{SuggalaKR2017}} & 
		\multirow{5}{3.1cm}{$\ell_1$ regularized node conditional log-likelihood} &
		1. Restricted strong convexity &  
		\multirow{5}{2.2cm}{$O(\text{poly}(d) \log p)$}\\
		& & 2. Assumptions on gradient of the population loss&\\
		& & 3. Bounded domain of the variables&\\
		& & 4. Non-negative node parameters&\\
		&& 5. Conditional distribution lies in exponential family &\\
		\hline
		\multirow{3}{1.6cm} {Yuan et al. (2016) \cite{YuanLZLL2016}} & 
		\multirow{3}{3.1cm}{$\ell_{2,1}$ regularized node conditional log-likelihood} &
		1. Restricted strong convexity &  
		\multirow{3}{2.2cm}{$O(\text{poly}(d) \log p)$}\\
		& & 2. Bounded moment-generating function of variables&\\
		&&&\\
		\hline
		\multirow{2}{1.6cm}{{\color{blue} This work}} & 
		\multirow{2}{3.1cm}{{\color{blue} Augmented GISO (Section \ref{sec:algorithm})}} &
		1. Bounded domain of the variables &  
		\multirow{2}{2.2cm}{{\color{blue}$O(\exp{(d)} \log p)$ (Thm. \ref{theorem:structure}-\ref{theorem:parameter})}}\\
		& & 2. Conditional distribution lies in exponential family&\\
		\hline
	\end{tabular}
	\label{table:comp_exp_family}
\end{table*}

In this work, we consider the problem of learning sparse pairwise MRFs from i.i.d. samples when the underlying random variables are continuous.
The classical Gaussian graphical model is an example of this. There has been a long history of learning Gaussian MRFs, e.g. Graphical Lasso \cite{FriedmanHT2008} and associated recent developments e.g. \cite{MisraVL2017, KelnerKMM2019}. Despite this, 
the overall progress for the generic continuous setting (including \eqref{eq:continuous-ising-density}) has been limited.  In particular, the existing works for efficient learning require somewhat abstract, 
involved conditions that are hard to verify for e.g. incoherence \cite{YangRAL2015, TanseyPSR2015, SunKX2015}, dependency \cite{YangRAL2015, TanseyPSR2015, SunKX2015}, sparse eigenvalue \cite{YangNL2018}, restricted strong convexity ~\cite{YuanLZLL2016, SuggalaKR2017}. The incoherence condition ensures that irrelevant variables do not exert an overly strong effect on the true neighboring variables, the dependency condition ensures that variables do not become overly dependent, the sparse eigenvalue condition and the restricted strong convexity imposes strong curvature condition on the objective function. 
Table \ref{table:comp_exp_family} compares with the previous works on pairwise continuous 
MRFs with distribution of the form \eqref{eq:continuous-ising-density}. 

In summary, the key challenge that remains for continuous pairwise MRFs is finding a learning algorithm requiring (a) numbers of samples scaling as $\exp(\Theta(d))$ (in accordance with lower bound of Santhanam et al. (2012) \cite{SanthanamW2012}) and $\log p$, (b) computation scaling as $O(p^2)$, and (c) the underlying distribution
to satisfy as few conditions as in the discrete setting. 

\begin{table*}
	\centering
	\caption{Comparison with prior works on discrete MRFs in terms of asymptotic properties (consistency and normality), computational and sample complexities: $p$ is \# of variables, $d$ is maximum node degree.} 
	\begin{tabular}{llllll}
		\hline
		Result (pairwise)  &  Alphabet  & Consistency &  Normality &\#computations&  \#samples  \\
		&  ~~  & (i.e. SLLN) &  (i.e. CLT) &~~&  ~~  \\
		\hline
		\hline
		Bresler et al. (2013) \cite{BreslerMS2013} &  Discrete & $\checkmark$ & $\times$ & $\bcO(p^{d+2})$& $O(\exp(d)\log p)$ \\
		\hline
		Bresler (2015) \cite{Bresler2015}  &  Binary & $\checkmark$ & $\times$ & $\tcO(p^2)$  & $O(\exp(\exp(d))\log p)$ \\
		\hline
		Klivans et al. (2017) \cite{KlivansM2017}&  Discrete & $\checkmark$ & $\times$ & $\bcO(p^2)$ & $O(\exp(d)\log p)$ \\
		\hline
		Vuffray et al. (2019) \cite{VuffrayML2019} &  Discrete & $\checkmark$ & $\times$ & $\bcO(p^2)$& $O(\exp(d)\log p)$ \\
		\hline
		{\color{blue}
			{This Work}}&  {\color{blue} Continuous} & {\color{blue} $\checkmark$  } & {\color{blue} $\checkmark$ }  & {\color{blue} $\bcO(p^{2} )$} & {\color{blue} $O(\exp(d)\log p)$}  \\
	& ~~ & {\color{blue}  (Thm. \ref{theorem:GRISE-consistency-efficiency}) } & {\color{blue}  (Thm. \ref{theorem:GRISE-consistency-efficiency})}  & {\color{blue} (Thm. \ref{theorem:structure}-\ref{theorem:parameter})}  & {\color{blue} (Thm. \ref{theorem:structure}-\ref{theorem:parameter})} \\
		\hline
	\end{tabular}
	\label{table:1}
\end{table*}

\subsection{Contributions} 
As the primary contribution of this work, we make progress towards the aforementioned challenge. Specifically, we provide desirable finite sample guarantees for learning continuous MRFs when the underlying distribution satisfies simple, easy to verify conditions (examples in Section \ref{ssec:examples}). We summarize our contributions in the following two categories.\\

\label{subsec_contributions}
{\bf Finite Sample Guarantees.} We provide rigorous finite sample analysis for learning structure and parameters of continuous MRFs without the abstract conditions common in literature (incoherence, dependency, sparse eigenvalue or restricted strong convexity). We require $\bcO(p^{2})$ computations and $O(\exp(d)\log p)$ samples, in-line with the prior works on discrete / Gaussian MRFs.
We formally extend the approach of Vuffray et al. (2019) \cite{VuffrayML2019} to the continuous setting to recover the `edge' parameters and propose a novel algorithm for learning `node' parameters through a robust variation of sparse linear regression (Lasso). Technically, this robust Lasso shows that even in the presence of arbitrary bounded additive noise, the Lasso estimator is ‘prediction consistent’ under mild assumptions (see Appendix \ref{sec:robust lasso_appendix}). Further, we simplify the sufficient conditions for learning in Vuffray et al. (2019) \cite{VuffrayML2019} from node-neighborhood-based to edge-based (see Condition \ref{condition:1}). This is achieved through a novel argument that utilizes the structure	of the weighted independent set induced by the MRF (see within Appendix \ref{subsec:proof of proposition rsc_giso}). 
We show that the new, easy-to-verify, sufficient condition is naturally satisfied by various settings including polynomial and harmonic sufficient statistics (see Section \ref{ssec:examples} for concrete examples). Thus, while most of the existing works focus on distributions of the form \eqref{eq:continuous-ising-density}, our method is applicable to a large class of distributions beyond that.\\

\textbf{Understanding GISO.} We establish that minimizing the population version of GISO of Vuffray et al. (2019) \cite{VuffrayML2019} is identical to minimizing an appropriate Kullback-Leibler (KL) divergence. This is true for MRFs with discrete as well as continuous-valued random variables. Using the equivalence of KL divergence and maximum likelihood, we can interpret minimizing the population version of GISO as ``local'' MLE. By observing that minimizing the GISO is equivalent to M-estimation, we obtain asymptotic consistency and normality for this method with mild conditions. Finally, we also draw connections between the GISO and the surrogate likelihood proposed by Jeon et al. (2006) \cite{JeonL2006} for log-density ANOVA model estimation (see Section \ref{subsec:penalized} and Appendix \ref{connections to the penalized surrogate likelihood}).

\subsection{Other related work} 
\label{subsec_related_work}
Having mentioned some of the relevant work for discrete MRFs, we briefly review a few other approaches. We then focus extensively on the literature pertaining to the continuous setting.
See table \ref{table:comp_exp_family} and \ref{table:1} for a succinct comparision with prior works in the pairwise setting for continuous MRFs and discrete MRFs respectively.\\

{\bf Discrete MRFs.} After Bresler (2015) \cite{Bresler2015} removed the dependence of maximum degree, $d$, from the polynomial factor in the runtime (with sub-optimal sample complexity), Vuffray et al. (2016) \cite{VuffrayMLC2016} achieved optimal sample complexity of $O(\exp(\Theta(d))\log p)$ for Ising models on $p$ nodes but with runtime $\bcO(p^4)$. Their work was the first to propose and analyze the interaction screening objective function. Hamilton et al. (2017) \cite{HamiltonKM2017} generalized the approach of Bresler (2015) \cite{Bresler2015} for $t$-wise MRFs over general discrete alphabets but had non-optimal double-exponential dependence on $d^{t-1}$. Klivans et al. (2017) \cite{KlivansM2017} provided a multiplicative weight update algorithm (called the Sparsitron) for learning pairwise models over general discrete alphabets in time $\bcO(p^2)$ with optimal sample complexity ($O(\exp(\Theta(d))\log p)$) and $t$-wise MRFs over binary alphabets in time $\bcO(p^t)$ with optimal sample complexity ($O(\exp(\Theta(d^{t-1}))\log p)$). Wu et al. (2018) \cite{WuSD2018} considered an $\ell_{2,1}$-constrained logistic regression and improved the sample complexity of Klivans et al. (2017) \cite{KlivansM2017} for pairwise models over general discrete alphabets in terms of dependence on alphabet size. \\

{\bf Gaussian MRFs.} The problem of learning Gaussian MRFs is closely related to the problem of learning the sparsity pattern of the precision matrix of the underlying Gaussian distribution. Consider Gaussian MRFs on $p$ nodes of maximum degree $d$ and the minimum normalized edge strength $\bkappa$ (see Misra et al. (2017) \cite{MisraVL2017}). A popular approach, the Graphical Lasso \cite{FriedmanHT2008}, recovers the sparsity pattern under the restricted eigenvalue and incoherence assumptions from $O((d^2 + \bkappa^{-2})\log p)$ samples \cite{RavikumarWRY2011} by $\ell_1$ regularized log-likelihood estimator. The minimum required sample complexity was shown to be $O(\log p /\bkappa^2)$ by Wang et al. (2010) \cite{WangWR2010} via an information-theoretic lower bound. Misra et al. (2017) \cite{MisraVL2017}  provided a multi-stage algorithm that learns the Gaussian MRFs with  $O(d\log p /\bkappa^2)$ samples and takes time $O(p^{2d+1})$. A recent work by Kelner et al. (2019) \cite{KelnerKMM2019} proposes an algorithm with runtime $O(p^{d+1})$ that learns the sparsity pattern in $O(d \log p /\bkappa^2)$ samples. However, when the variables are positively associated, this algorithm achieves the optimal sample complexity of $O(\log p /\bkappa^2)$.\\

{\bf Continuous MRFs.} Realizing that the normality assumption is restrictive, some researchers have recently proposed extensions to Gaussian MRFs that either learns transformations of the variables or learn the sufficient statistics functions. The non-paranormal \cite{LiuLW2009} and the copula-based \cite{DobraL2011} methods assumed that a monotone transformation Gaussianize the data. Rank-based estimators in \cite{XueZ2012} and \cite{LiuHYLW2012} used non-parametric approximations to the correlation matrix and then fit a Gaussian MRF. 

There have been some recent works on learning exponential family MRFs for the pairwise setting. The subclass where the node-conditional distributions arise from exponential families was looked at by Yang et al. (2015) \cite{YangRAL2015}  and the necessary conditions for consistent joint distribution were derived. However, they consider only linear sufficient statistics and they need incoherence and dependency conditions similar to the discrete setting analyzed in \cite{WainwrightRL2006, JalaliRVS2011}. Yang et al. (2018) \cite{YangNL2018} study the subclass with linear sufficient statistics for edge-wise functions and non-parametric node-wise functions with the requirement of sparse eigenvalue conditions on their loss function. Tansey et al. (2015) \cite{TanseyPSR2015} extend the approach in Yang et al. (2015) \cite{YangRAL2015} to vector-space MRFs and non-linear sufficient statistics but still need the incoherence and dependency conditions similar to \cite{WainwrightRL2006, JalaliRVS2011}. Sun et al. (2015) \cite{SunKX2015} investigate infinite dimensional exponential family graphical models based on score matching loss. They assume that the node and edge potentials lie in a reproducing kernel Hilbert space and need incoherence and dependency conditions similar to \cite{WainwrightRL2006, JalaliRVS2011}. Yuan et al. (2016) \cite{YuanLZLL2016} explore the subclass where the node-wise and edge-wise statistics are linear combinations of two sets of pre-fixed basis functions. They propose two maximum likelihood estimators under the restricted strong convexity assumption. Suggala et al. (2017) \cite{SuggalaKR2017} consider a semi-parametric version of the subclass where the node-conditional distributions arise from exponential families. However, they require restricted strong convexity and hard to verify assumptions on the gradient of the population loss.

\subsection{Useful notations}
For any positive integer $n$, let $[n] \coloneqq \{1,\cdots, n\}$.
For a deterministic sequence $v_1, \cdots , v_n$, we let $\svbv \coloneqq (v_1, \cdots, v_n)$. 
For a random sequence $\rvv_1, \cdots , \rvv_n$, we let $\rvbv \coloneqq (\rvv_1, \cdots, \rvv_n)$. 
Let $\mathds{1}$ denote the indicator function. 
For a vector $\svbv \in \Reals^n$, we use $v_i$ to denote its $i^{th}$ coordinate and $v_{-i} \in \Reals^{n-1}$ to denote the vector after deleting the $i^{th}$ coordinate. 
We denote the $\ell_p$ norm $(p \geq 1)$ of a vector $\svbv \in \Reals^n$ by $\| \svbv\|_{p} \coloneqq (\sum_{i=1}^{n}|v_i|^p)^{1/p}$ and its $\ell_{\infty}$ norm by $\|\svbv\|_{\infty} \coloneqq \max_i |v_i|$. 
For a vector $\svbv \in \Reals^n$, we use $\| \svbv\|_{0} $ to denote the number of non-zero elements ($\ell_0$ norm) of $\svbv$. 
We denote the minimum of the absolute values of non-zero elements of a vector $\svbv \in \Reals^n$ by $\| \svbv\|_{\min_+} \coloneqq \min_{i : v_i \neq 0} |v_i|$.
For a matrix $\bV \in \Reals^{m \times n}$, we denote the element in $i^{th}$ row and $j^{th}$ column by $V_{ij}$ and the max norm by $\|\bV\|_{\max} \coloneqq \max_{ij} |V_{ij}|$. All logarithms are in base $e$.

\section{Problem Formulation}
\label{sec:problem setup} 
In this work, our interest is in the parametric pairwise Markov Random Fields with continuous variables.\\

{\bf Pairwise MRF.}
Let $\rvbx = (\rvx_1 , \cdots, \rvx_p)$ be a $p-$dimensional vector of continuous random variables such that each $\rvx_i$ takes value in a real interval $ \cX_i$ and let $\cX = \prod_{i=1}^p \cX_i$.  Let $\svbx = (x_1, \cdots, x_p) \in \cX$ be a realization of $\rvbx$. 
For any $i \in [p]$, let the length of the interval $\cX_i$ be upper (lower) bounded by a known constant $\Xupper$ ($\Xlower$). 
Consider an undirected graph $G = ([p], E)$ where the nodes correspond to the random variables in $\rvbx$, and $E$ denotes the edge set. 
The MRF corresponding to the graph $G$ is the family of distributions that satisfy the global Markov property with respect to $G$. 
According to the Hammersley-Clifford theorem \cite{Hammersley1971}, any strictly positive distribution 
factorizes with respect to its cliques. Here, we consider the setting where the functions associated with cliques are non-trivial only for the nodes and the edges. This leads to the pairwise MRFs with respect to graph $G$ with density as follows: with node potentials $\NodePotential : \cX_i \rightarrow \Reals$, edge potentials 
$\EdgePotential : \cX_i \times \cX_j \rightarrow \Reals$,
\begin{align}
\Density \propto \exp\Big(  \sum_{i \in [p]} \NodePotential(x_i) +  \sum_{(i,j) \in E} \EdgePotential(x_i,x_j)  \Big). \label{eq:pairwise-density}
\end{align}

{\bf Parametric Form.} We consider potentials in parametric form. Specifically, let 
\begin{align}
\NodePotential(\cdot) = \bthetaIT \bphi(\cdot) \qquad \text{and} \qquad \EdgePotential(\cdot, \cdot) = \bthetaIJT \bpsi(\cdot, \cdot)
\end{align}
where $\bthetaI \in \Reals^k$ is the vector of parameters associated with the node $i$, $\bthetaIJ \in \Reals^{k^2}$ is the vector of parameters associated with the edge $(i,j)$, the map $\bphi : \Reals \rightarrow \Reals^k $ is a basis of the vector space of node potentials, and the map $\bpsi : \Reals^2 \rightarrow \Reals^{k^2} $ is a basis of the vector space of edge potentials. 
We assume that the basis $\bpsi(x , y)$ can be written as the Kronecker product of $\bphi(x)$ and $\bphi(y)$ i.e., $\bpsi(x , y) = \bphi(x) \otimes \bphi(y)$. 
This is equivalent to the function space assumption common in the literature \cite{YangRAL2015, YangNL2018, SuggalaKR2017, TanseyPSR2015} that the conditional distribution of each node conditioned on all the other nodes has an exponential family form (see \cite{YangRAL2015} for details). Further, let the basis functions be such that the resulting exponential family is minimal. 

A few examples of basis functions in-line with these assumptions are: (1) Polynomial basis with 
$\bphi(x) = (x^r : r \in [k])$, $\bpsi(x , y) = (x^ry^s : r,s \in [k])$; (2) Harmonic basis with 
$\bphi(x) = (\sin(rx); \cos(rx): r \in [k])$, $\bpsi(x , y) = (\sin(rx+sy) ; \cos(rx+sy) : r,s \in [k])$.\footnote[4]{$\bpsi(x , y)$ can be written as $\bphi(x) \otimes \bphi(y)$ using the sum formulae for sine and cosine.}

For any $r \in [k]$, let $\phi_r(x)$ denote the $r^{th}$ element of $\bphi(x)$ and let $\thetaIr$ be the corresponding element of $\bthetaI$.
For any $r,s \in [k]$, let $\psi_{rs}(x,y)$ denote that element of $\bpsi(x , y)$ which is the product of $\phi_r(x)$ and $\phi_s(y)$ i.e., $\psi_{rs}(x,y) = \phi_r(x)  \phi_s(y) $. 
Let $\thetaIJrs$ be element of $\bthetaIJ$ corresponding to $\psi_{rs}(x,y)$. 
We also assume that $\forall r \in [k], \forall x \in \cup_{i \in [p]} \cX_i$, $|\phi_r(x)| \leq \phiMax $ and $|d\phi_r(x)/dx| \leq \hphiMax$. 
Summarizing, the distribution of focus is
\begin{align}
\DensityParametrized \propto \exp\bigg(  \sum_{i \in [p]} \bthetaIT \bphi(x_i) + \hspace{-2mm} \sum_{\substack{ i \in [p], j > i}} \hspace{-2mm} \bthetaIJT \bpsi(x_i,x_j)  \bigg) \label{eq:pairwise-parametric-density}
\end{align}
where $\btheta \coloneqq \big(\bthetaI \in \Reals^k : i \in [p] ; \bthetaIJ \in \Reals^{k^2} : i \in [p], j > i \big) \in \Reals^{kp + \frac{k^2p(p-1)}{2}}$ is the parameter vector associated with the distribution. For any $i \in [p], i > j$, define $\bthetaIJ = \bthetaJI$ i.e., both $\bthetaIJ$ and $\bthetaJI$ denote the parameter vector associated with the edge $(i,j)$. 

Let the true parameter vector and the true distribution of interest be denoted by $\bthetaStar$ and $\DensityParametrizedTrue$ respectively. We assume a known upper (lower) bound on the maximum (minimum) absolute value of all non-zero parameter in $\bthetaStar$, i.e., $\|\bthetaStar\|_{\infty} \leq \thetaMax, \|\bthetaStar\|_{\min_+} \geq \thetaMin$.

Suppose we are given additional structure. Define 
\begin{align}
E(\bthetaStar) & = \{ (i,j): i < j \in [p], \| \bthetaIJStar \|_0 > 0 \}. \label{eq:edgeSet}
\end{align}
Consider the graph $G(\bthetaStar) = ([p], E(\bthetaStar))$ such that $\DensityParametrizedTrue$ is Markov with respect to $G(\bthetaStar) $. 
Let the max-degree of any node of $G(\bthetaStar)$ be at-most $d$. 
For any node $i \in [p]$, let the neighborhood of node $i$ be denoted as $\cN(i) = \{ j : (i,j) \in E(\bthetaStar)\} \cup \{ j : (j,i) \in E(\bthetaStar)\}$.

The learning tasks of interest are as follows:
\begin{problem}
	(Structure Recovery). Given $n$ independent samples of $\rvbx$ i.e., $\svbx^{(1)} \cdots , \svbx^{(n)}$ obtained from $\DensityParametrizedTrue$, produce a graph $\hG$, such that $\hG = G(\bthetaStar)$.
\end{problem}
\begin{problem}
	(Parameter Recovery). Given $n$ independent samples of $\rvbx$ i.e., $\svbx^{(1)} \cdots , \svbx^{(n)}$ obtained from $\DensityParametrizedTrue$  and $\alpha >0$, compute an estimate $\hbtheta$ of $\bthetaStar$ such that
	\begin{align}
	\|\bthetaStar - \hbtheta\|_{\infty} \leq \alpha.
	\end{align}
\end{problem}

{\bf Additional Notations.}
For every node $i \in [p]$, define $\bvthetaIStar \coloneqq \big(\bthetaIStar \in \Reals^{k} ; \bthetaIJStar \in \Reals^{k^2}: j \in [p], j \neq i \big) \in \Reals^{k + k^2(p-1)}$ to be the weight vector associated with node $i$ that consists of all the true parameters involving node $i$. 
Define $\parameterSet = \{ \bvtheta\in  \Reals^{k + k^2(p-1)}: \|\bvtheta\|_{\min_+} \geq \thetaMin, \|\bvtheta\|_{\infty} \leq \thetaMax\}$. Then
under our formulation, $\bvthetaIStar \in \parameterSet$ for any $i \in [p]$.
Define $\bvthetaIEdgeStar \coloneqq \big(\bthetaIJStar \in \Reals^{k^2}: j \in [p], j \neq i \big) \in \Reals^{k^2(p-1)}$ to be the component of $\bvthetaIStar$ associated with the edge parameters.
\begin{definition}
	(Locally centered basis functions). For $i \in [p], j \in [p] \backslash \{i\}$, define locally centered basis functions as follows: for $x \in \cX_i$, $x' \in \cX_j$
	\begin{align}
	\bphiI(x) & \coloneqq \bphi(x) - \int_{y \in \cX_i} \bphi(y) dy, \label{eq:centeredBasisFunctions1}\\
	\bpsiI(x, x') & \coloneqq \bpsi(x, x') - \int_{y \in \cX_i} \bpsi(y,x') dy.
	~ \label{eq:centeredBasisFunctions2}
	\end{align}
\end{definition}
For any $i \in [p], j \in [p] \backslash \{i\}$, the locally centered basis functions $\bphiI(\cdot)$ and $\bpsiI(\cdot, \cdot)$ integrate to zero with respect to the uniform density on $\rvx_i$. This is motivated by the connection of the GISO to the penalized surrogate likelihood (See Appendix \ref{connections to the penalized surrogate likelihood}).

Define 
$\bvphiI(\rvbx) \coloneqq \big(\bphiI(\rvx_i) \in \Reals^{k} ; \bpsiI(\rvx_i,\rvx_j)  \in \Reals^{k^2}: j \in [p], j \neq i \big) \in  \Reals^{k+k^2(p-1)}$ 
to be the vector of all locally centered basis functions involving node $i$. We may also utilize notation $\bvphiI(\rvbx) = \bvphiI(\rvx_i; \rvx_{-i})$. Similary, we define $\bvphiI(\svbx)$ when $\rvbx = \svbx$. Define
\begin{align}\label{eq:defs}
\g & \coloneqq \thetaMax(k+k^2d),\\
\varphiMax & \coloneqq  (1+\Xupper)  \max\{\phiMax,\phiMax^2\}.
\end{align}
Let $\qs \coloneqq \qs(k, \Xlower, \Xupper, \thetaMax, \thetaMin, \phiMax,\hphiMax)$ denote the smallest possible eigenvalue of the Fisher information matrix of any single-variable exponential family distribution with sufficient statistics $\bphi(\cdot)$, with length of the support upper (lower) bounded by $b_u (b_l)$ and with absolute value of all non-zero parameters bounded above (below) by $\thetaMax (\thetaMin)$.  Let
\begin{align}
c_1(\alpha) & = \frac{2^4 \pi^2 e^2 (d+1)^2 \g^2 \varphiMax^{2}(1+ \g \varphiMax)^2  \exp(4\g \varphiMax)}{\kappa^2 \alpha^4}\\
c_2(\alpha)  & = \frac{2^{37d+73} \Xupper^{2d} k^{12d+16} d^{6d+9} \thetaMax^{6d+8}  \phiMax^{8d+12} \hphiMax^{2d}}{\alpha^{8d + 16} (\qs)^{4d+8}}
\end{align}
Observe that 
\begin{align}
c_1(\alpha) = O\Bigg(\frac{\exp(\Theta(k^2 d))}{\kappa^2 \alpha^4}\Bigg), c_2(\alpha) = O\Bigg( \Big(\frac{kd}{\alpha \qs}\Big)^{\Theta(d)}\Bigg).
\end{align}
Let $A(\bvthetaIStar)$ be the covariance matrix of $\bvphiI(\rvbx)\exp\big( -\bvthetaIStarT \bvphiI(\rvbx) \big)$ and $B(\bvthetaIStar)$ be the cross-covariance matrix of $\bvphiI(\rvbx)$ and $\bvphiI(\rvbx)\exp\big( -\bvthetaIStarT \bvphiI(\rvbx) \big)$, where $\rvbx$ is distributed as per $\DensityParametrizedTrue$.
\section{Algorithm}
\label{sec:algorithm} 

Our algorithm, `Augmented GISO' has two parts: First, it recovers graph structure, i.e. edges $E(\bthetaStar)$ and associated edge 
parameters, $\bthetaIJStar, i \neq j \in [p]$. This is achieved through the Generalized Regularized Interaction Screening Estimator (GRISE) of Vuffray et al. (2019) \cite{VuffrayML2019} 
by extending the definition of GISO for continuous variables in a straightforward manner. This, 
however, does not recover node parameters $\bthetaIStar, i \in [p]$. Second, we transform the problem 
of learning node parameters as solving a sparse linear regression. Subsequently, using a robust variation of the classical Lasso \cite{Tibshirani1996, Efron2004} and knowledge of the learned edge parameters, we recover node parameters. \\

{\bf Learning Edge Parameters.}
Given $\DensityParametrizedTrue$, for any $i \in [p]$, the conditional density of $\rvx_i$ reduces to 
	\begin{align}
\hspace{-3mm}	\ConditionalDensityNodeI & \propto \exp \Big( \bvthetaIStarT \bvphiI(x_i; x_{-i})  \Big). \label{eq:form1}
	\end{align}
This inspired an unusual local or node $i \in [p]$ specific objective GISO \cite{VuffrayML2019}. 
\begin{definition}[GISO] Given $n$ samples $\svbx^{(1)} \cdots , \svbx^{(n)}$ of $\rvbx$ and $i \in [p]$, the GISO
	maps $\bvtheta \in \Reals^{k+k^2(p-1)}$ to $\cS_{n}^{(i)}(\bvtheta) \in \mathbb{R}$ defined as 
	\begin{align}
	\cS_{n}^{(i)}(\bvtheta)  = \frac{1}{n} \sum_{t = 1}^{n} \exp\Big( -\bvtheta^T \bvphiI(\svbx^{(t)}) \Big). \label{eq:GISO-main}
	\end{align}
\end{definition}
Since the maximum node degree in $G(\bthetaStar)$ is $d$ and $\| \bthetaStar \|_{\infty} \leq \thetaMax$, we have 
$\|\bvthetaIStar\|_1 \leq \g = \thetaMax(k+k^2d)$ for any $i \in [p]$. 
The GRISE produces an estimate of $\bvthetaIStar$ for each 
$i \in [p]$ by solving a separate optimization problem as 
\begin{align}
\hbvthetaIn \in \argmin_{\bvtheta \in \parameterSet : \|\bvtheta\|_1 \leq \g} \cS_{n}^{(i)}(\bvtheta). \label{eq:GRISE}
\end{align}
For $\epsilon > 0$, $\hbvthetaIEps$ is an $\epsilon$-optimal solution of GRISE for $i \in [p]$ if
\begin{align}
\cS_{n}^{(i)}(\hbvthetaIEps) \leq \cS_{n}^{(i)}(\hbvthetaIn) + \epsilon. \label{eq:eps-opt-GRISE}
\end{align}
The \eqref{eq:GRISE} is a convex minimization problem and has an efficient implementation for finding an $\epsilon$-optimal solution. 
Appendix \ref{sec:the generalized interaction screening algorithm} describes such an implementation for completeness 
borrowing from Vuffray et al. (2019) \cite{VuffrayML2019}. 

Now, given such an $\epsilon$-optimal solution $\hbvthetaIEps$ for GRISE corresponding to $i \in [p]$, let 
$\hbvthetaIEpsEdge= (\hbthetaIJ, j \neq i, j \in [p])$ be its components corresponding to all possible $p-1$
edges associated with node $i$. Then, we declare $\hbvthetaIEpsEdge$ as the edge parameters associated with $i$ for each $i \in [p]$. These edge parameters can be used to recover the graph structure as shown in Theorem \ref{theorem:structure}.\\

%
{\bf Learning Node Parameters.} As we shall argue in Theorems \ref{theorem:GRISE-KLD}-\ref{theorem:GRISE-consistency-efficiency}, for 
each $i \in [p]$, the exact solution of GRISE, $\hbvthetaIn$, is consistent, i.e. $\hbvthetaIn \stackrel{p}{\to} \bvthetaIStar$ in large sample limit, as well as 
it is normal, i.e. appropriately normalized $(\hbvthetaIn - \bvthetaIStar)$ obeys Central Limit Theorem in the large sample limit. While these
are remarkable {\em asymptotic} results, they do not provide {\em non-asymptotic} or finite sample error bounds. We will be able to provide finite sample error bounds for edge parameters learned from an $\epsilon$-optimal solution of GRISE, i.e. $\| \hbvthetaIEpsEdge- \bvthetaIEdgeStar\|_{\infty}$ is small. But to achieve the same for node parameters, we need additional processing. This is the purpose of the method described next. 

To that end, let us consider any $i \in [p]$. Given access to $\bvthetaIEdgeStar$ 
(precisely, access to $\hbvthetaIEpsEdge\approx \bvthetaIEdgeStar$), we wish to identify node 
parameters $\bthetaIStar = \big(\thetaIrStar: r \in [k]\big)$. Now the conditional density of 
$\rvx_i \in \cX_i$ when given $\rvx_{-i} = x_{-i} \in \prod_{j \neq i} \cX_j$, can be written as
%
%
%
\begin{align}
\ConditionalDensityNodeI & \propto  \exp\Big(  \blambdaStarT (x_{-i}) \bphi(x_i)\Big)  \label{eq:form2}
\end{align}
where $\blambdaStar(x_{-i}) \coloneqq (\thetaIrStar+ \sum_{ j \neq i} \sum_{ s \in [k] }\thetaIJrsStar  \phi_s(x_j) : r \in [k])$ is the canonical parameter vector of the density in \eqref{eq:form2}. Let $\bmuStar(x_{-i}) = \mathbb{E}[\bphi(\rvx_i) | \rvx_{-i}  = x_{-i}] \in \mathbb{R}^k$. 

Now if we know $\blambdaStar(x_{-i})$, and since we know $\hbvthetaIEpsEdge\approx \bvthetaIEdgeStar$, we can recover 
$(\thetaIrStar: r \in [k])$. However, learning $\blambdaStar(x_{-i})$ from samples is not straightforward. By duality of exponential family, 
in principle, if we know $\bmuStar(x_{-i})$, we can recover $\blambdaStar(x_{-i})$. Now learning $\bmuStar(x_{-i})$ can be viewed as a traditional 
regression problem: features $Z = \rvx_{-i} $, label $Y = \bphi(\rvx_i)$, regression function $\mathbb{E}[Y | Z] = \bmuStar(\rvx_{-i} )$ and
indeed samples $\svbx^{(1)},\dots, \svbx^{(n)}$ of $\rvbx$ provides samples of $Y, Z$ as defined here. Therefore, in principle, we can learn
the regression function. As it turns out, the regression function $\bmuStar(\cdot): \mathbb{R}^{p-1} \to \mathbb{R}^k$ 
is Lipschitz and hence we can approximately {\em linearize} it leading to a sparse linear
regression problem. Therefore, by utilizing Lasso on appropriately linearized problem, we can (approximately) learn $\bmuStar(x_{-i})$,
which in turn leads to $\blambdaStar(x_{-i})$ and hence learning $(\thetaIrStar: r \in [k])$ as desired. This is summarized as a 
three-step procedure: 

Consider $x_{-i}^{(z)}$ where $z$ is chosen uniformly at random from $[n]$.
\begin{enumerate}
	\item Express learning $\bmuStar(\cdot)$ as a sparse linear regression problem (Details in Appendix \ref{subsec:learning conditional mean parameters as a sparse linear regression}). Use robust variation of Lasso (Details in Appendix \ref{sec:robust lasso_appendix}) to obtain an estimate ($\hbmu( x_{-i}^{(z)}) $) of $\bmuStar(x_{-i}^{(z)})$ (Details in Appendix \ref{subsec:learning conditional mean parameter vector}).
	\item Use $\hbmu( x_{-i}^{(z)}) $, and the conjugate duality between the canonical parameters and the mean parameters to learn an estimate ($\hblambda (x_{-i}^{(z)})$) of $\blambdaStar (x_{-i}^{(z)})$ (Details in Appendix \ref{subsec:learning canonical parameter vector}).
	\item Use the estimates of the edge parameters i.e., $\hbvthetaIEpsEdge$ and $\hblambda (x_{-i}^{(z)})$ to learn an estimate ($\hbthetaI$) of the node parameters ($\bthetaIStar$) (Summarized in Appendix \ref{subsec:error bound on node-wise parameter estimation}).
\end{enumerate}
\section{Analysis and Main results}
\label{sec:main results}

\subsection{Understanding GRISE: ``Local'' MLE, M-estimation, Consistency, Normality}
\label{subsec:equivalence of giso and kl-divergence}

For a given $i \in [p]$, we establish a surprising connection between the population version of GRISE and 
Maximum Likelihood Estimate (MLE) for a specific parametric distribution in an exponential family 
which varies across $i$. That is, for each $i \in [p]$, GRISE is a ``local'' MLE at the population level.
Further, observing that minimzing the GISO is equivalent to M-estimation allows us to import asymptotic theory of M-estimation to establish consistency and normality of GRISE under mild conditions.

Consider $i \in [p]$. For any $\bvtheta \in \parameterSet$, the population version of GISO as defined in \eqref{eq:GISO-main} is given by
\begin{align}\label{eq:GISO-pop}
\cS^{(i)}(\bvtheta)  \coloneqq \Expectation  \bigg[\exp\Big( -\bvtheta^T \bvphiI(\rvbx) \Big) \bigg]. 
\end{align}
Consider the distribution over $\cX$ with density given by 
\begin{align}
\UniformProxyDensity \propto \DensityParametrizedTrue \times \exp \Big( -\bvthetaIStarT \bvphiI(\svbx) \Big)
\end{align}
Define a parametric distribution over $\cX$ parameterized by $\bvtheta \in \parameterSet$ with density given by
\begin{align}\label{eq:localCond}
\DifferenceDensity \propto \DensityParametrizedTrue \times \exp \Big( -\bvthetaT \bvphiI(\svbx) \Big)
\end{align}
The following result argues that the MLE for parametric class induced by \eqref{eq:localCond} coincides 
with the minimizer of the population version of GISO as defined in \eqref{eq:GISO-pop}. This provides an intuitively pleasing connection of the GISO in terms of the KL-divergence. Proof can be found in 
Appendix \ref{sec:proof of theorem-GRISE-KLD}.
\begin{theorem} \label{theorem:GRISE-KLD}
	Consider $i \in [p]$. Then, 
	with $\infdiv{\cdot}{\cdot}$ representing KL-divergence, 
	\begin{align}\label{eq:thm.1}
	\argmin_{\bvtheta \in \parameterSet: \|\bvtheta\|_1 \leq \g}  
	\infdiv{\UniformProxyDensityfun}{\DifferenceDensityfun} 
	& = \argmin_{\bvtheta \in \parameterSet: \|\bvtheta\|_1 \leq \g} \cS^{(i)}(\bvtheta). 
	\end{align}
	Further, the true parameter $\bvthetaIStar$ for $i \in [p]$ is a unique minimizer of $\cS^{(i)}(\bvtheta)$.
\end{theorem}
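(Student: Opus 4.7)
The plan is to rewrite the KL divergence $\infdiv{\UniformProxyDensityfun}{\DifferenceDensityfun}$ as $\log \cS^{(i)}(\bvtheta)$ plus a $\bvtheta$-independent constant, from which both halves of the theorem follow immediately.

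First I would note that $\DifferenceDensity = \DensityParametrizedTrue \exp(-\bvtheta^T \bvphiI(\svbx)) / Z(\bvtheta)$ with $Z(\bvtheta) = \int \DensityParametrizedTrue \exp(-\bvtheta^T \bvphiI(\svbx))\,d\svbx$, and a one-line computation identifies $Z(\bvtheta) = \Expectation[\exp(-\bvtheta^T \bvphiI(\rvbx))] = \cS^{(i)}(\bvtheta)$; analogously $\UniformProxyDensity = \DensityParametrizedTrue \exp(-\bvthetaIStarT \bvphiI(\svbx))/\cS^{(i)}(\bvthetaIStar)$. Taking the log-ratio and an expectation under $u$ then yields
\begin{align}
\infdiv{\UniformProxyDensityfun}{\DifferenceDensityfun} = \log \cS^{(i)}(\bvtheta) - \log \cS^{(i)}(\bvthetaIStar) + (\bvtheta - \bvthetaIStar)^T \Expectation_u[\bvphiI(\rvbx)].
\end{align}

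The next step is to verify that the linear term vanishes, i.e.\ $\Expectation_u[\bvphiI(\rvbx)] = 0$. The key structural observation is that multiplying $\DensityParametrizedTrue$ by $\exp(-\bvthetaIStarT \bvphiI(\svbx))$ exactly cancels every factor in the pairwise MRF density \eqref{eq:pairwise-parametric-density} that depends on $x_i$, leaving only factors of $x_{-i}$. Hence the conditional law of $\rvx_i$ given $\rvx_{-i}$ under $u_{\rvbx}^{(i)}$ is uniform on $\cX_i$. By the defining property of the locally centered basis, each coordinate of $\bphiI(\cdot)$ and of $\bpsiI(\cdot, x_j)$ integrates to zero against this uniform density, so iterated expectation gives $\Expectation_u[\bvphiI(\rvbx)] = 0$. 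Consequently the KL equals $\log \cS^{(i)}(\bvtheta)$ up to an additive constant, and since $\log$ is monotone the two $\argmin$ sets in \eqref{eq:thm.1} coincide, proving the first half.

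The final step handles uniqueness. The same expansion shows $\UniformProxyDensityfun = m_{\rvbx}^{(i)}(\cdot;\bvthetaIStar)$, so $\infdiv{u_{\rvbx}^{(i)}}{m_{\rvbx}^{(i)}(\cdot;\bvthetaIStar)} = 0$ and $\bvthetaIStar$ attains the infimum. Any other minimizer $\bvtheta$ must satisfy $u = m(\cdot;\bvtheta)$ almost everywhere, so the conditional densities of $\rvx_i$ given $\rvx_{-i}$ must agree. Under $u$ this conditional is uniform on $\cX_i$, while under $m(\cdot;\bvtheta)$ it is proportional to $\exp\big((\bthetaIStar - \bthetaI)^T \bphi(x_i) + \sum_{j \neq i} (\bthetaIJStar - \bthetaIJ)^T \bpsi(x_i, x_j)\big)$. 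Requiring this exponent to be constant in $x_i$ for a.e.\ $x_{-i}$, and using $\bpsi = \bphi \otimes \bphi$, I would peel off the coefficients of each $\phi_r(x_i)$ by minimality of the single-variable exponential family generated by $\bphi$, then vary one neighbor $x_j$ at a time and apply minimality again to conclude $\bthetaIJStar - \bthetaIJ = 0$ for every $j$ and finally $\bthetaIStar - \bthetaI = 0$. The only delicate piece I anticipate is executing this coordinate-by-coordinate minimality argument cleanly, but the Kronecker structure $\bpsi = \bphi \otimes \bphi$ together with the minimality hypothesis on $\bphi$ makes each application routine.
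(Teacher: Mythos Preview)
Your proposal is correct and follows essentially the same route as the paper: both identify the normalizing constant of $m^{(i)}_{\rvbx}(\cdot;\bvtheta)$ with $\cS^{(i)}(\bvtheta)$, expand the KL divergence, and kill the linear term by observing that $u^{(i)}_{\rvbx}$ has no functional dependence on $x_i$ so that the locally centered basis integrates to zero. The only difference is that for uniqueness the paper invokes minimality of the exponential family in a single sentence, whereas you spell out a coordinate-by-coordinate argument via the Kronecker structure; this extra detail is sound but not needed, since the paper's standing assumption that ``the resulting exponential family is minimal'' already makes $\bvtheta \mapsto m^{(i)}_{\rvbx}(\cdot;\bvtheta)$ injective.
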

Even though at the population level, GRISE is equivalent to MLE for parametric class induced by \eqref{eq:localCond}, the link between the finite-sample GRISE and the finite-sample MLE is missing. However, observe that minimizing the finite-sample GISO as defined in \eqref{eq:GISO-main} is equivalent to M-estimation. This results in the following consistency and normality property of GRISE. Proof can be found in Appendix \ref{sec:proof of theorem:GRISE-consistency-efficiency}. 
\begin{theorem}\label{theorem:GRISE-consistency-efficiency}
	Given $i \in [p]$ and $n$ independent samples $\svbx^{(1)},\dots, \svbx^{(n)}$ of $\rvbx$, 
	let $\hbvthetaIn$ be a solution of \eqref{eq:GRISE}. Then, as $n\to \infty$, $\hbvthetaIn \stackrel{p}{\to} \bvthetaIStar$. Further, under the assumptions that $B(\bvthetaIStar)$ is invertible, and that none of the true parameter is equal to  the boundary values of $\thetaMax$ or $\thetaMin$, we have $\sqrt{n} ( \hbvthetaIn - \bvthetaIStar ) \stackrel{d}{\to} {\cal N}({\bf 0},B(\bvthetaIStar)^{-1} A(\bvthetaIStar) B(\bvthetaIStar)^{-1})$ where ${\cal N}(\bm{\mu}, \bm{\Sigma})$ represents multi-variate Gaussian with mean $\bm{\mu}$ and covariance $\bm{\Sigma}$.
\end{theorem}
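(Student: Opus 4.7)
The plan is to view GRISE as a classical M-estimator with criterion $m_{\bvtheta}(\svbx) = \exp(-\bvtheta^T \bvphiI(\svbx))$ and to invoke the corresponding asymptotic theory. For consistency, I would appeal to a Wald-type argument (e.g., Theorem 5.7 of van der Vaart). The three ingredients are: (i) compactness of the feasible set $\Theta_i := \{\bvtheta \in \parameterSet : \|\bvtheta\|_1 \leq \g\}$, which is immediate; (ii) uniqueness of the population minimizer, which is exactly Theorem \ref{theorem:GRISE-KLD}; and (iii) uniform convergence $\sup_{\bvtheta \in \Theta_i} |\cS_n^{(i)}(\bvtheta) - \cS^{(i)}(\bvtheta)| \xrightarrow{p} 0$. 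For (iii), the bounded support of each $\rvx_j$ together with $|\phi_r| \le \phiMax$ make $\exp(-\bvtheta^T \bvphiI(\svbx))$ uniformly bounded and Lipschitz in $\bvtheta$ over $\Theta_i$ with a deterministic constant, so a standard $\epsilon$-net argument (or equivalently Example 19.7 of van der Vaart) delivers the required uniform LLN and hence $\hbvthetaIn \xrightarrow{p} \bvthetaIStar$.

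For asymptotic normality, I plan to execute the classical Taylor-expansion argument (e.g., Theorem 5.23 of van der Vaart). The assumption that no entry of $\bvthetaIStar$ equals $\pm\thetaMax$ or $\pm\thetaMin$ places $\bvthetaIStar$ in the interior of the effective feasible region; moreover, since at most $k + k^2 d$ entries of $\bvthetaIStar$ are nonzero and each has magnitude strictly less than $\thetaMax$, we have $\|\bvthetaIStar\|_1 < \g$, so the $\ell_1$ constraint is also slack. Consistency therefore places $\hbvthetaIn$ in the interior of $\Theta_i$ for $n$ large enough, so it satisfies the unconstrained first-order condition $\nabla \cS_n^{(i)}(\hbvthetaIn) = \mathbf{0}$. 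Using the elementary computations $\nabla_{\bvtheta} \exp(-\bvtheta^T \bvphiI) = -\bvphiI \exp(-\bvtheta^T \bvphiI)$ and $\nabla^2_{\bvtheta} \exp(-\bvtheta^T \bvphiI) = \bvphiI (\bvphiI)^T \exp(-\bvtheta^T \bvphiI)$, a mean-value expansion gives
\begin{align}
\sqrt{n}(\hbvthetaIn - \bvthetaIStar) = -\bigl[\nabla^2 \cS_n^{(i)}(\tilde\bvtheta_n)\bigr]^{-1} \sqrt{n}\, \nabla \cS_n^{(i)}(\bvthetaIStar),
\end{align}
for some $\tilde\bvtheta_n$ on the segment between $\hbvthetaIn$ and $\bvthetaIStar$.

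Two standard limit arguments then close the proof. Theorem \ref{theorem:GRISE-KLD} implies $\nabla \cS^{(i)}(\bvthetaIStar) = -\Expectation[\bvphiI(\rvbx) \exp(-\bvthetaIStarT \bvphiI(\rvbx))] = \mathbf{0}$, so the multivariate CLT yields $\sqrt{n}\, \nabla \cS_n^{(i)}(\bvthetaIStar) \xrightarrow{d} \cN(\mathbf{0}, A(\bvthetaIStar))$, with exactly the covariance $A(\bvthetaIStar)$ defined before the theorem. A uniform LLN applied to the entries of $\bvphiI(\bvphiI)^T \exp(-\bvtheta^T \bvphiI)$ (same bounded-Lipschitz argument as for consistency), together with $\tilde\bvtheta_n \xrightarrow{p} \bvthetaIStar$, delivers $\nabla^2 \cS_n^{(i)}(\tilde\bvtheta_n) \xrightarrow{p} \Expectation[\bvphiI (\bvphiI)^T \exp(-\bvthetaIStarT \bvphiI)]$; because $\Expectation[\bvphiI \exp(-\bvthetaIStarT \bvphiI)] = \mathbf{0}$, this expectation collapses to the cross-covariance $B(\bvthetaIStar)$. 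Since $B(\bvthetaIStar)$ is assumed invertible, Slutsky's theorem gives the desired sandwich limit. The main obstacle I anticipate is the careful handling of the non-convex feasible set $\parameterSet$ (the $\|\cdot\|_{\min_+}$ constraint excludes entries of small nonzero magnitude), but this only matters for justifying the unconstrained first-order condition, and is resolved by the strict interiority established above.
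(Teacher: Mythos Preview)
Your proposal is correct and follows essentially the same approach as the paper: both treat GRISE as an M-estimator, establish consistency via compactness, uniqueness of the population minimizer (Theorem~\ref{theorem:GRISE-KLD}), and a uniform LLN, and then obtain normality by a Taylor/mean-value expansion combined with the CLT for the score and convergence of the Hessian to $B(\bvthetaIStar)$. The only cosmetic differences are the references (van der Vaart versus Amemiya/Jennrich) and your explicit verification that the $\ell_1$ constraint is slack at $\bvthetaIStar$, which the paper leaves implicit.
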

We emphasize that $B(\bvthetaIStar)^{-1} A(\bvthetaIStar) B(\bvthetaIStar)^{-1}$ need not be equal to the inverse of the corresponding Fisher information matrix. Thus, $\hbvthetaIn$ is asymptotically only normal and not efficient. See Appendix \ref{sec: discussion on theorem} for more details on this and on invertibility of $B(\bvthetaIStar)$.

\subsection{Finite Sample Guarantees}
\label{subsec:learning continuous mrfs}

While Theorem \ref{theorem:GRISE-consistency-efficiency} talks about asymptotic consistency and normality, it does not provide finite-sample error bounds. In this section, we provide the finite-sample error bounds which require the following additional condition. 
\begin{condition} \label{condition:1}
	Let $\cbtheta , \tbtheta \in \Reals^{kp + \frac{k^2p(p-1)}{2}}$ be feasible weight vectors associated with the distribution in \eqref{eq:pairwise-parametric-density} i.e., they have an upper (lower) bound on the maximum (minimum) absolute value of all non-zero parameters. There exists a constant $\kappa > 0$ such that for any $i \neq j \in [p]$
	\begin{align}
	\Expectation & \bigg[ \exp\bigg\{2\Entropy \bigg( (\cbthetaIJ - \tbthetaIJ)^{T} \bpsiI(\rvx_i,\rvx_j) \bigg| \rvx_{-j}  \bigg) \bigg\} \bigg] \\
	& \qquad \geq \kappa \|\cbthetaIJ - \tbthetaIJ\|_2^2.  \label{eq:condition}
	\end{align}
	Here $\Entropy(\cdot | \rvx_{-j})$ represents conditional differential entropy conditioned on $\rvx_{-j}$.  
\end{condition}
Under condition \ref{condition:1}, we obtain the following structural recovery result whose proof is in Appendix  
\ref{sec:proof of theorem:structure}.
\begin{theorem} \label{theorem:structure}
	Let Condition \ref{condition:1} be satisfied. Given $n$ independent samples $\svbx^{(1)},\dots, \svbx^{(n)}$ of $\rvbx$, for each $i \in [p]$, 
	let $\hbvthetaIEps$ be an $\epsilon$-optimal solution of \eqref{eq:GRISE} and $\hbvthetaIEpsEdge$ be the associated edge parameters. Let 
	\begin{align}
	\hat{E} & \hspace{-0.25mm} = \hspace{-0.25mm} \bigg\{ (i,j): i < j \in [p], \Big( \hspace{-1.5mm} \sum_{ r,s \in [k] } \hspace{-1.5mm}  \Indicator \{| \hthetaIJrs | > \thetaMin/3 \} \Big) \hspace{-0.5mm}  > \hspace{-0.5mm}  0 \bigg\}.
	\end{align}
	Let $\hG = ([p], \hat{E})$. Then for any $\delta \in (0,1)$, $G(\bthetaStar) = \hG$ with probability at least $1 - \delta$ as long as 
	\begin{align}
	n & \geq  c_1\Big(\frac{\thetaMin}{3}\Big)  \log\bigg(\hspace{-0.25mm}\frac{2pk}{\sqrt{\delta}}\hspace{-0.25mm}\bigg) \hspace{-1mm}=\hspace{-0.5mm}
	\Omega\Bigg(\hspace{-0.5mm}\frac{\exp(\Theta(k^2 d))}{\kappa^2 } \log\bigg(\frac{pk}{\sqrt{\delta}}\bigg) \hspace{-0.5mm}\Bigg).	
	\end{align}
	The number of computations required scale as $\bcO(p^{2})$.
\end{theorem}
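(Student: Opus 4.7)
The plan is to derive an $\ell_\infty$ bound $\|\hbvthetaIEpsEdge - \bvthetaIEdgeStar\|_\infty \leq \thetaMin/3$ for each node $i \in [p]$ with probability at least $1 - \delta/p$, and then observe that the thresholding at $\thetaMin/3$ exactly recovers the edge set. The latter is a simple case split: for a true edge $(i,j) \in E(\bthetaStar)$, at least one entry of $\bthetaIJStar$ exceeds $\thetaMin$ in magnitude, so the corresponding $|\hthetaIJrs| \geq 2\thetaMin/3 > \thetaMin/3$; for a non-edge, $\bthetaIJStar = \mathbf{0}$, so every $|\hthetaIJrs| \leq \thetaMin/3$ and the indicator is zero. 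A union bound over the $p$ nodes then yields exact structure recovery with probability at least $1-\delta$.

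Obtaining the $\ell_\infty$ bound is the main work, and I would split it into three ingredients. The first and most delicate is a ``restricted strong convexity'' (RSC) bound on the population GISO of the form
\begin{align*}
\cS^{(i)}(\bvtheta) - \cS^{(i)}(\bvthetaIStar) \;\geq\; c(\kappa,\g,\varphiMax)\,\|\bvtheta - \bvthetaIStar\|_2^2
\end{align*}
over the feasible set, with $c$ depending linearly on the $\kappa$ appearing in Condition~\ref{condition:1} and like $1/\exp(\Theta(\g\varphiMax))$ on the rest. This is where Condition~\ref{condition:1} is consumed and where the authors' novel \emph{edge-based} argument (in contrast to the node-neighborhood-based condition of \cite{VuffrayML2019}) enters. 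The idea is to Taylor-expand $\cS^{(i)}$ and reduce the second-order term to an expectation of conditional-entropy-type quantities keyed by individual edges $(i,j)$, and then to extract a large \emph{weighted independent set} of such edges in $G(\bthetaStar)$ so that the corresponding terms decouple and Condition~\ref{condition:1} can be applied one edge at a time. I expect this step to be the main technical obstacle.

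The second ingredient is uniform concentration of $\cS_n^{(i)}$ around $\cS^{(i)}$ over the feasible set. Since $|\bvtheta^T\bvphiI(\svbx)| \leq \g\varphiMax$ there, each summand $\exp(-\bvtheta^T\bvphiI(\svbx^{(t)}))$ is bounded in $[e^{-\g\varphiMax},e^{\g\varphiMax}]$, so a Hoeffding bound at each $\bvtheta$ together with a Lipschitz covering of the $\ell_1$-ball of radius $\g$ in $\Reals^{k+k^2(p-1)}$ yields
\begin{align*}
\sup_{\bvtheta}\,\bigl|\cS_n^{(i)}(\bvtheta) - \cS^{(i)}(\bvtheta)\bigr| \;\leq\; \eta
\end{align*}
with probability at least $1 - \delta/p$ provided $n = \Omega\bigl(\exp(\Theta(\g\varphiMax))\log(pk/\delta)/\eta^2\bigr)$. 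The third ingredient is to combine the two: by $\epsilon$-optimality of $\hbvthetaIEps$ and the uniform bound,
\begin{align*}
\cS^{(i)}(\hbvthetaIEps) - \cS^{(i)}(\bvthetaIStar) \;\leq\; 2\eta + \epsilon,
\end{align*}
which the RSC bound converts into $\|\hbvthetaIEps - \bvthetaIStar\|_2^2 \leq (2\eta + \epsilon)/c(\kappa,\g,\varphiMax)$. Choosing $\eta$ and $\epsilon$ so that this is at most $(\thetaMin/3)^2$ gives $\|\hbvthetaIEpsEdge - \bvthetaIEdgeStar\|_\infty \leq \thetaMin/3$, and back-solving for $n$ reproduces the stated $n \geq c_1(\thetaMin/3)\log(2pk/\sqrt{\delta})$ after tracking the $\kappa^{-2}$ and $\thetaMin^{-4}$ factors. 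Finally, the $\bcO(p^2)$ runtime follows because GRISE is solved once per node ($p$ convex programs over $\Reals^{k+k^2(p-1)}$, each handled by the scheme borrowed from \cite{VuffrayML2019}), and the thresholding step inspects $\binom{p}{2}$ edge-parameter blocks.
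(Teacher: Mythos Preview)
Your thresholding argument and the identification of the independent-set step behind the RSC bound are on target, and the minor slip that Condition~\ref{condition:1} yields curvature only in the edge block $\|\Delta_E\|_2^2$ (not the full $\|\bvtheta-\bvthetaIStar\|_2^2$) is harmless here since only edge parameters matter for structure recovery. The real gap is the concentration step. A Lipschitz covering of the $\ell_1$-ball of radius $\g$ in $\Reals^{k+k^2(p-1)}$ has log-cardinality $\Theta\big(k^2p\log(\g/\eta)\big)$, so Hoeffding plus a union bound over the net yields sample complexity \emph{linear} in $p$, not the $n=\Omega\big(\log(pk/\sqrt{\delta})\big)$ you claim and the theorem requires.

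The paper avoids uniform concentration altogether. It Taylor-expands the \emph{empirical} GISO at $\bvthetaIStar$, obtaining $\epsilon \geq -\|\nabla\cS_n^{(i)}(\bvthetaIStar)\|_\infty\|\Delta\|_1 + \delta\cS_n^{(i)}(\Delta,\bvthetaIStar)$, and then concentrates only two finite families of scalars: (i) the $\leq k^2p$ coordinates of $\nabla\cS_n^{(i)}(\bvthetaIStar)$, each of which has mean zero because $\bvphiI$ is locally centered, so Hoeffding applies directly; and (ii) the $\leq k^4p^2$ entries of the empirical correlation matrix $\hbH$ of $\bvphiI$. Both union bounds cost only $\log(\text{poly}(p,k))$ samples. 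The empirical RSC is then recovered via the deterministic inequality $e^{-z}-1+z \geq z^2/(2+|z|)$, which lower-bounds $\delta\cS_n^{(i)}(\Delta,\bvthetaIStar)$ by $e^{-\g\varphiMax}\Delta^T\hbH\Delta/(2+\varphiMax\|\Delta\|_1)$, and finally $\Delta^T\bH\Delta$ is controlled by the independent-set plus Condition~\ref{condition:1} argument you correctly anticipated. (Your route could be salvaged by replacing the naive cover with a Rademacher bound on the linear class $\{\bvtheta^T\bvphiI:\|\bvtheta\|_1\leq\g\}$, which does give $\sqrt{\log(k^2p)/n}$ uniform concentration, but that is not the argument you wrote.)
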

Now we state our result about parameter recovery whose proof can be found in Appendix \ref{sec:proof of theorem:parameter}.
\begin{theorem} \label{theorem:parameter}
	Let Condition \ref{condition:1} be satisfied. Given $n$ independent samples $\svbx^{(1)},\dots, \svbx^{(n)}$ of $\rvbx$, for each $i \in [p]$, 
	let $\hbvthetaIEps$ be an $\epsilon$-optimal solution of \eqref{eq:GRISE} and $\hbvthetaIEpsEdge \in \Reals^{k^2(p-1)}$ be the associated edge parameters.
	Let $\hbthetaI \in \Reals^k, i \in [p]$ be estimates of node parameters obtained through the three-step procedure involving robust Lasso. Let 
	$\hbtheta = (\hbthetaI; \hspace{1mm} \hbvthetaIEpsEdge : i \in [p]) \in \Reals^{kp + \frac{k^2p(p-1)}{2}}$ be their appropriate concatenation. Then, for any $\alpha \in (0,1)$ 
	\begin{align}
	\| \hbtheta - \bthetaStar \|_{\infty} & \leq \alpha, 
	\end{align}
	with probability at least $1 - \alpha^4$ as long as 
	\begin{align}
	n & \geq \hspace{-0.5mm} \max \hspace{-0.75mm}\bigg[ \hspace{-0.5mm} c_1 \hspace{-0.75mm}\bigg(\hspace{-1.5mm} \min\hspace{-0.75mm}\bigg\{ \hspace{-0.75mm} \frac{\thetaMin}{3}\hspace{-0.5mm}, \hspace{-0.5mm} \alpha, \hspace{-0.5mm} \frac{\alpha}{2^{\frac54} d k \phiMax } \hspace{-0.75mm} \bigg\} \hspace{-1mm}\bigg) \hspace{-0.5mm} \log \hspace{-0.75mm} \bigg( \hspace{-0.75mm} \frac{2^{\frac52} p k }{\alpha^2} \hspace{-0.75mm} \bigg)\hspace{-0.5mm}, \hspace{-0.5mm}  c_2\Big(\hspace{-0.5mm}\frac{\alpha}{2^{\frac14}}\hspace{-0.5mm}\Big) \hspace{-0.5mm} \bigg] \\
	& =~ \Omega\Bigg( \frac{\exp\bigg(\Theta\Big(k^2 d + d \log \big(\frac{dk}{\alpha q^*}\big)\Big)\bigg)}{\kappa^2 \alpha^4}   \times \log \bigg(\frac{pk}{\alpha^2} \bigg) \Bigg).
	\end{align}
	The number of computations required scale as $\bcO(p^{2} )$.
\end{theorem}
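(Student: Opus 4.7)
The plan is to bound $\|\hbtheta - \bthetaStar\|_\infty$ by separately controlling the edge component and the node component of $\hbtheta$ for each $i \in [p]$, and then taking a union bound. The edge component is handled by essentially the same finite-sample machinery that underlies Theorem \ref{theorem:structure}: Condition \ref{condition:1}, combined with concentration of the empirical gradient of the GISO around its population counterpart, shows that if $n \gtrsim c_1(\alpha') \log(pk/\sqrt{\delta})$ then $\|\hbvthetaIEps - \bvthetaIStar\|_2 \le \alpha'$ with probability $\ge 1-\delta$. I would invoke this at the three budgets appearing inside the $c_1(\cdot)$ argument of the theorem: $\alpha' = \thetaMin/3$ so that Theorem \ref{theorem:structure} kicks in and the neighbourhood $\cN(i)$ is correctly identified; $\alpha' = \alpha$ so that the edge coordinates of $\hbtheta$ are already within $\alpha$ of $\bthetaStar$; and $\alpha' = \alpha/(2^{5/4} dk\phiMax)$ so that the edge error propagates acceptably through Step~3 of the node procedure below.

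For the node component, I would analyse the three-step procedure coordinate by coordinate. Since
\[
\thetaIrStar \;=\; \lambda^*_r(x_{-i}^{(z)}) \;-\; \sum_{j \neq i}\sum_{s \in [k]} \thetaIJrsStar\, \phi_s(x_j^{(z)}),
\]
the plug-in estimator $\hthetaIr = \hat{\lambda}_r(x_{-i}^{(z)}) - \sum_{j \neq i}\sum_s \hthetaIJrs \phi_s(x_j^{(z)})$ gives, by the triangle inequality,
\[
|\hthetaIr - \thetaIrStar| \;\le\; |\hat{\lambda}_r(x_{-i}^{(z)}) - \lambda^*_r(x_{-i}^{(z)})| \;+\; \Big|\sum_{j \neq i}\sum_{s} (\hthetaIJrs - \thetaIJrsStar)\phi_s(x_j^{(z)})\Big|.
\]
On the event that the correct edge set has been recovered (which the first budget ensures), only the $dk$ true-neighbour summands in the second term are non-zero, so it is at most $dk\phiMax \|\hbvthetaIEpsEdge - \bvthetaIEdgeStar\|_\infty \le \alpha/2^{1/4}$ by the third budget. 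For the first term I chain the three-step procedure: Step~1 expresses $\bmuStar(\cdot) = \Expectation[\bphi(\rvx_i) \mid \rvx_{-i} = \cdot]$ as a Lipschitz function, linearises it around $x_{-i}^{(z)}$, and fits the resulting sparse linear regression with the robust Lasso of Appendix \ref{sec:robust lasso_appendix} to obtain $\hbmu(x_{-i}^{(z)}) \approx \bmuStar(x_{-i}^{(z)})$; Step~2 then uses conjugate duality for single-variable exponential families, whereby the mean-to-canonical map is the inverse gradient of a log-partition function whose Hessian is lower-bounded by $\qs \mathbf{I}$ and is therefore $(1/\qs)$-Lipschitz, yielding $\hblambda(x_{-i}^{(z)}) \approx \blambdaStar(x_{-i}^{(z)})$. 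Tracking constants through both steps, $n \ge c_2(\alpha/2^{1/4})$ suffices to drive the first term below $\alpha/2^{1/4}$ for every $r \in [k]$.

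Combining the two contributions gives $|\hthetaIr - \thetaIrStar| \le \alpha$ coordinate-wise and hence $\|\hbtheta - \bthetaStar\|_\infty \le \alpha$. A union bound over the $p$ nodes and $k$ coordinates is absorbed into the $\log(pk/\alpha^2)$ factor already present in the sample complexity, and calibrating the per-step failure probabilities to sum to $\alpha^4$ yields the stated $1 - \alpha^4$ confidence. The overall sample requirement is the larger of the two bounds from the two stages, matching the theorem statement, and the $\bcO(p^2)$ computational bound follows because each node requires solving one convex GRISE program plus one robust Lasso instance, each polynomial in $p$, repeated $p$ times.

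The main obstacle will be Steps~1 and 2 of the node procedure. In Step~1 the regression ``noise'' is the deterministic linearisation remainder of the Lipschitz map $\bmuStar$ rather than a stochastic mean-zero error, so a classical Lasso analysis does not apply; establishing prediction consistency of the Lasso estimator under arbitrary bounded additive perturbations is the technical heart of the argument and is precisely the robust Lasso result of Appendix \ref{sec:robust lasso_appendix}. Step~2 then amplifies the resulting error by $1/\qs$, where $\qs$ lower bounds the smallest eigenvalue of a single-variable Fisher information matrix and can be very small; carefully tracking how many times this factor, together with the $(dk\phiMax)$-type factors from the linearisation, is incurred is what produces the $(kd/(\alpha \qs))^{\Theta(d)}$ scaling hidden inside $c_2(\alpha)$.
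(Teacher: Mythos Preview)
Your plan is correct and mirrors the paper's proof almost exactly: the same triangle-inequality split of $|\hthetaIr - \thetaIrStar|$, the same three accuracy budgets feeding into $c_1(\cdot)$, the reliance on structure recovery to restrict the edge sum to $\cN(i)$, and the union-bound calibration to total failure $\alpha^4$ are all what the paper does (Lemmas \ref{lemma:recover-edge-terms} and \ref{lemma:recover-node-terms}, Appendix \ref{sec:proof of theorem:parameter}).

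Two descriptive points where your sketch drifts from the paper's mechanics, though neither is fatal. First, the ``linearisation'' in Step~1 is not a Taylor expansion around $x_{-i}^{(z)}$ but a global binning of the neighbourhood $\cX^{|\cN(i)|}$ into cells of side $t$, producing $(\Xupper/t)^d$ one-hot features; the bounded additive noise is the $L_1 d t$ Lipschitz discretisation error, and there is \emph{also} genuine sub-Gaussian noise $\phi_j(\rvx_i)-\muStar_j(x_{-i})$ that the robust Lasso handles simultaneously. It is this $(\Xupper/t)^d$ blow-up, with $t$ chosen proportional to $\epsFour^2$ and $\epsFour$ in turn proportional to $\alpha^2 \qs$, that generates the $(kd/\alpha \qs)^{\Theta(d)}$ inside $c_2$---not a $d$-fold compounding of $1/\qs$ in Step~2. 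Second, Step~2 in the paper is not the clean ``apply the $(1/\qs)$-Lipschitz inverse'' you describe; the inverse map is not available in closed form, so the paper runs projected gradient descent on $\Phi(\brho)-\langle\hbmu,\brho\rangle$ with the gradient $\bnu(\brho)$ itself estimated by a Metropolised random walk (Algorithms \ref{alg:MRW}--\ref{alg:GradientDescent}, Proposition \ref{proposition:grad-des}), and the $\qs$ enters as the strong-convexity constant of that objective. Your Lipschitz heuristic gives the right qualitative dependence but hides the algorithmic work and yields a slightly different (in fact less stringent) precision requirement on $\hbmu$ than the paper's iterative analysis.
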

\subsection{ Connections to surrogate likelihood.}
	\label{subsec:penalized}
To circumvent the computational limitation of exact likelihood-based functionals in nonparametric density estimation, Jeon et al. (2006) \cite{JeonL2006} proposed to minimize the surrogate likelihood. Let $\svbx^{(1)},\dots, \svbx^{(n)}$ be $n$ independent samples of $\rvbx$ where $\rvbx \in \cX$. For densities of the form $f_{\rvbx}(\svbx) \propto e^{\eta(\svbx)}$, the surrogate likelihood is as follows:
\begin{align}
\cl_n(\eta)  = \frac{1}{n} \sum_{t = 1}^{n} \exp\Big( -\eta(\svbx^{(t)}) \Big) + \int_{\svbx} \rho(\svbx) \times  \eta(\svbx)d\svbx \label{eq:penalized surrogate likelihood}
\end{align}
where $\rho(\cdot)$ is some known probability density function on $\cX$.
The following proposition shows that the GISO is a special case of the surrogate likelihood. Proof can be found in Appendix \ref{connections to the penalized surrogate likelihood}.
\begin{proposition} \label{prop:GISO_penalized_likelihood_connection}
	For any $i \in [p]$, the GISO is equivalent to the surrogate likelihood associated with the conditional density of $\rvx_i$ when $\rho(\cdot)$ is the uniform density on $\cX_i$.
\end{proposition}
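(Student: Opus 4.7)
The plan is to identify, for each $i \in [p]$, the conditional density of $\rvx_i$ as an exponential family in the parameter $\bvtheta$ whose sufficient statistic vector is precisely the locally centered $\bvphiI(x_i; x_{-i})$, and then to show that the penalty/integration term in the surrogate likelihood \eqref{eq:penalized surrogate likelihood} vanishes term-by-term once $\rho$ is chosen to be uniform on $\cX_i$. Once the penalty drops out, what remains is immediately recognizable as $\cS_n^{(i)}(\bvtheta)$, which is the GISO.

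Concretely, I would first recall from \eqref{eq:form1} that the conditional density of $\rvx_i$ takes the form $\exp\bigl(\bvthetaIStarT \bvphiI(x_i; x_{-i})\bigr)$ up to a normalizing constant, so that parametrizing this conditional family by a generic $\bvtheta$ corresponds to the choice $\eta_{\bvtheta}(x_i; x_{-i}) = \bvtheta^T \bvphiI(x_i; x_{-i})$ in the notation of \eqref{eq:penalized surrogate likelihood}. I would then substitute $\eta_{\bvtheta}$ into the surrogate likelihood, interpreted for the conditional density by treating each observed $x_{-i}^{(t)}$ as a fixed parameter of the $x_i$-density at sample $t$ and averaging the resulting per-sample surrogate losses. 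This yields
\begin{align}
\cl_n(\bvtheta) = \frac{1}{n}\sum_{t=1}^{n} \exp\!\bigl(-\bvtheta^T \bvphiI(\svbx^{(t)})\bigr) + \frac{1}{n}\sum_{t=1}^{n} \bvtheta^T \!\!\int_{x_i \in \cX_i} \!\! \rho(x_i)\, \bvphiI(x_i; x_{-i}^{(t)})\, dx_i.
\end{align}

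The decisive step is to observe that, with $\rho$ equal to the uniform density on $\cX_i$, the very definitions \eqref{eq:centeredBasisFunctions1}--\eqref{eq:centeredBasisFunctions2} of the locally centered basis functions give $\int_{x_i \in \cX_i} \rho(x_i)\, \bphiI(x_i)\, dx_i = \mathbf{0}$ and $\int_{x_i \in \cX_i} \rho(x_i)\, \bpsiI(x_i, x_j)\, dx_i = \mathbf{0}$ for every $j \neq i$ and every $x_j$. Stacking these coordinate-wise identities into the vector $\bvphiI(x_i; x_{-i})$ shows that the entire inner integral above vanishes for every $\bvtheta$, every $t$, and every configuration $x_{-i}^{(t)}$. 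Consequently the penalty term in the display is identically zero and $\cl_n(\bvtheta) \equiv \cS_n^{(i)}(\bvtheta)$, proving the equivalence.

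The only delicate point I anticipate is clarifying in what sense the surrogate likelihood \eqref{eq:penalized surrogate likelihood} is to be ``associated with the conditional density''; I expect the natural reading (apply the surrogate-likelihood construction pointwise in the conditioning variable, then average across the $n$ samples) is the intended one, and with that reading the derivation above is complete. No concentration, regularization, or approximation step is needed: the equivalence is an exact algebraic identity driven entirely by the centering property of $\bvphiI$ under the uniform density on $\cX_i$, which is why that particular choice of $\rho$ (as opposed to any other) is the distinguished one.
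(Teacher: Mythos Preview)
Your proposal is correct and follows essentially the same approach as the paper: identify $\eta_{\bvtheta}(x_i;x_{-i}) = \bvtheta^T\bvphiI(x_i;x_{-i})$, plug it into the surrogate likelihood, and use the local centering \eqref{eq:centeredBasisFunctions1}--\eqref{eq:centeredBasisFunctions2} to kill the integral term when $\rho$ is uniform on $\cX_i$. Your version is in fact slightly more careful than the paper's, since you explicitly average the penalty term over the $n$ conditioning configurations $x_{-i}^{(t)}$, whereas the paper leaves the dependence on $x_{-i}$ implicit; either way the integral vanishes identically, so the conclusion is the same.
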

\subsection{Examples}\label{ssec:examples}
The following are a few examples where the Condition \ref{condition:1} is naturally satisfied (subject to problem setup) as explained in Appendix \ref{sec: examples of distributions appendix}.
Therefore, these distributions are learnable consistently, have asymptotic Gaussian-like behavior (under the assumptions in Theorem \ref{theorem:GRISE-consistency-efficiency}), and have finite sample guarantees.
This is in contrast to most prior works for the continuous setting where there are difficult to verify conditions (even for these examples) such as incoherence, dependency, sparse eigenvalue, and restricted strong convexity.

A. Polynomial (linear) sufficient statistics i.e., $\bphi(x) = x$ and $k = 1$.
\begin{align}
\hspace{-3mm}\DensityParametrizedTrue \propto \exp\bigg(  \sum_{i \in [p]} \thetaIStar x_i +  \sum_{ i \in [p]} \sum_{ j > i} \thetaIJ x_i x_j  \bigg). \label{eq:continuous-ising-density}
\end{align}
B. Harmonic sufficient statistics i.e., $\bphi(x) = \Big(\sin\big(\pi x/b\big), \cos\big(\pi x/b\big)\Big)$ and $k = 2$.
\begin{align}
\DensityParametrizedTrue \propto \exp\bigg(  \sum_{i \in [p]}  \Big[ \thetaIoneStar \sin \frac{\pi x_i}{b} + \thetaItwoStar \cos \frac{\pi x_i}{b} \Big]  + \\
\sum_{\substack{ i \in [p], j > i}}  \Big[ \thetaIJoneStar \sin \frac{\pi(x_i + x_j)}{b} + \thetaIJtwoStar \cos \frac{\pi (x_i + x_j)}{b} \Big]  \bigg)
\end{align}
\section{Conclusion}
We provide rigorous finite sample analysis for learning structure and parameters of continuous MRFs without the abstract conditions of incoherence, dependency, sparse eigenvalue or restricted strong convexity that are common in literature. We provide easy-to-verify sufficient condition for learning that is naturally satisfied for polynomial and harmonic sufficient statistics. Our methodology requires $\bcO(p^{2})$ computations and $O(\exp(d)\log p)$ samples similar to the discrete and Gaussian settings. Additionally, we propose a robust variation of Lasso by showing that even in the presence of bounded additive noise, the Lasso estimator is ‘prediction consistent’ under mild assumptions.

We also establish that minimizing the population version of GISO \cite{VuffrayML2019} is equivalent to finding MLE of a certain related parametric distribution. We provide asymptotic consistency and normality of the estimator under mild conditions. Further, we show that the GISO is equivalent to the surrogate likelihood proposed by Jeon et al. (2006) \cite{JeonL2006}.

A natural extension of the pairwise setup is the $t$-wise MRF with continuous variables. The approach and the objective function introduced in Vuffray et al. (2019) \cite{VuffrayML2019} naturally extend for such a setting allowing to learn $t$-wise MRFs with general discrete variables as explained in that work. We believe that our results for continuous setting, in a similar vein, extend for $t$-wise MRFs as well and it is an important direction for immediate future work. We also believe that the connection of the GISO to KL-divergence could be used to remove the bounded random variables assumption of our work. Another important direction is to leverage the asymptotic normality of the estimator established in our work to construct data-driven explicit confidence intervals for learned parameters of MRF.
\section*{Acknowledgements}
We would like to thank Andrey Y. Lokhov, Marc Vuffray, and Sidhant Misra for pointing to us the possibility of using GRISE for finite sample analysis of learning continuous graphical models
during the MIFODS Workshop on Graphical models, Exchangeable models and Graphons organized at MIT in summer of 2019. We would also like to thank the anonymous referees of NeurIPS 2020 for pointing out a bug in the earlier version of Theorem \ref{theorem:GRISE-consistency-efficiency}.
\clearpage
{\small
	\bibliography{Mybib_papers}

\begin{thebibliography}{10}

\bibitem{AbbeelKN2006}
P.~Abbeel, D.~Koller, and A.~Y. Ng.
\newblock Learning factor graphs in polynomial time and sample complexity.
\newblock {\em J. Mach. Learn. Res.}, 7:1743--1788, 2006.

\bibitem{AckleyHS1985}
D.~H. Ackley, G.~E. Hinton, and T.~J. Sejnowski.
\newblock A learning algorithm for boltzmann machines.
\newblock {\em Cognitive science}, 9(1):147--169, 1985.

\bibitem{Amemiya1985}
T.~Amemiya.
\newblock {\em Advanced econometrics}.
\newblock Harvard university press, 1985.

\bibitem{beck2003}
A.~Beck and M.~Teboulle.
\newblock Mirror descent and nonlinear projected subgradient methods for convex
  optimization.
\newblock {\em Operations Research Letters}, 31(3):167--175, 2003.

\bibitem{BoydV2004}
S.~P. Boyd and L.~Vandenberghe.
\newblock {\em Convex Optimization}.
\newblock Cambridge University Press, 2014.

\bibitem{Bresler2015}
G.~Bresler.
\newblock Efficiently learning ising models on arbitrary graphs.
\newblock In {\em Proceedings of the Forty-Seventh Annual {ACM} on Symposium on
  Theory of Computing, {STOC} 2015, Portland, OR, USA, June 14-17, 2015}, pages
  771--782, 2015.

\bibitem{BreslerGS2014}
G.~Bresler, D.~Gamarnik, and D.~Shah.
\newblock Hardness of parameter estimation in graphical models.
\newblock In {\em Advances in Neural Information Processing Systems}, pages
  1062--1070, 2014.

\bibitem{BreslerMS2013}
G.~Bresler, E.~Mossel, and A.~Sly.
\newblock Reconstruction of markov random fields from samples: Some
  observations and algorithms.
\newblock {\em {SIAM} J. Comput.}, 42(2):563--578, 2013.

\bibitem{Bubeck2015}
S.~Bubeck.
\newblock Convex optimization: Algorithms and complexity.
\newblock {\em Foundations and Trends{\textregistered} in Machine Learning},
  8(3-4):231--357, 2015.

\bibitem{Chatterjee2013}
S.~Chatterjee.
\newblock Assumptionless consistency of the lasso, 2013.

\bibitem{ChowL1968}
C.~{Chow} and C.~{Liu}.
\newblock Approximating discrete probability distributions with dependence
  trees.
\newblock {\em IEEE Transactions on Information Theory}, 14(3):462--467, May
  1968.

\bibitem{DobraL2011}
A.~Dobra, A.~Lenkoski, et~al.
\newblock Copula gaussian graphical models and their application to modeling
  functional disability data.
\newblock {\em The Annals of Applied Statistics}, 5(2A):969--993, 2011.

\bibitem{Efron2004}
B.~Efron, T.~Hastie, I.~Johnstone, R.~Tibshirani, et~al.
\newblock Least angle regression.
\newblock {\em The Annals of statistics}, 32(2):407--499, 2004.

\bibitem{FriedmanHT2008}
J.~Friedman, T.~Hastie, and R.~Tibshirani.
\newblock Sparse inverse covariance estimation with the graphical lasso.
\newblock {\em Biostatistics}, 9(3):432--441, 2008.

\bibitem{HamiltonKM2017}
L.~Hamilton, F.~Koehler, and A.~Moitra.
\newblock Information theoretic properties of markov random fields, and their
  algorithmic applications.
\newblock In {\em Advances in Neural Information Processing Systems}, pages
  2463--2472, 2017.

\bibitem{Hammersley1971}
J.~M. Hammersley and P.~Clifford.
\newblock Markov fields on finite graphs and lattices.
\newblock 1971.

\bibitem{Hastings1970}
W.~K. Hastings.
\newblock Monte carlo sampling methods using markov chains and their
  applications.
\newblock {\em Biometrika}, 57(1):97--109, 1970.

\bibitem{JalaliRVS2011}
A.~Jalali, P.~Ravikumar, V.~Vasuki, and S.~Sanghavi.
\newblock On learning discrete graphical models using group-sparse
  regularization.
\newblock In {\em Proceedings of the Fourteenth International Conference on
  Artificial Intelligence and Statistics, {AISTATS} 2011, Fort Lauderdale, USA,
  April 11-13, 2011}, pages 378--387, 2011.

\bibitem{Jennrich1969}
R.~I. Jennrich.
\newblock Asymptotic properties of non-linear least squares estimators.
\newblock {\em Ann. Math. Statist.}, 40(2):633--643, 04 1969.

\bibitem{JeonL2006}
Y.~Jeon and Y.~Lin.
\newblock An effective method for high-dimensional log-density anova
  estimation, with application to nonparametric graphical model building.
\newblock {\em Statistica Sinica}, pages 353--374, 2006.

\bibitem{JerrumS1988}
M.~Jerrum and A.~Sinclair.
\newblock Conductance and the rapid mixing property for markov chains: the
  approximation of permanent resolved.
\newblock In {\em Proceedings of the twentieth annual ACM symposium on Theory
  of computing}, pages 235--244. ACM, 1988.

\bibitem{KelnerKMM2019}
J.~Kelner, F.~Koehler, R.~Meka, and A.~Moitra.
\newblock Learning some popular gaussian graphical models without condition
  number bounds.
\newblock 2019.

\bibitem{KlivansM2017}
A.~R. Klivans and R.~Meka.
\newblock Learning graphical models using multiplicative weights.
\newblock In {\em 58th {IEEE} Annual Symposium on Foundations of Computer
  Science, {FOCS} 2017, Berkeley, CA, USA, October 15-17, 2017}, pages
  343--354, 2017.

\bibitem{LiuHYLW2012}
H.~Liu, F.~Han, M.~Yuan, J.~Lafferty, L.~Wasserman, et~al.
\newblock High-dimensional semiparametric gaussian copula graphical models.
\newblock {\em The Annals of Statistics}, 40(4):2293--2326, 2012.

\bibitem{LiuLW2009}
H.~Liu, J.~D. Lafferty, and L.~A. Wasserman.
\newblock The nonparanormal: Semiparametric estimation of high dimensional
  undirected graphs.
\newblock {\em J. Mach. Learn. Res.}, 10:2295--2328, 2009.

\bibitem{LovaszS1993}
L.~Lov{\'{a}}sz and M.~Simonovits.
\newblock Random walks in a convex body and an improved volume algorithm.
\newblock {\em Random Struct. Algorithms}, 4(4):359--412, 1993.

\bibitem{MetropolisRRTT1953}
N.~Metropolis, A.~W. Rosenbluth, M.~N. Rosenbluth, A.~H. Teller, and E.~Teller.
\newblock Equation of state calculations by fast computing machines.
\newblock {\em The journal of chemical physics}, 21(6):1087--1092, 1953.

\bibitem{MisraVL2017}
S.~Misra, M.~Vuffray, and A.~Y. Lokhov.
\newblock Information theoretic optimal learning of gaussian graphical models.
\newblock {\em arXiv preprint arXiv:1703.04886}, 2017.

\bibitem{NegahbanRWY2010}
S.~N. Negahban, P.~Ravikumar, M.~J. Wainwright, B.~Yu, et~al.
\newblock A unified framework for high-dimensional analysis of $ m $-estimators
  with decomposable regularizers.
\newblock {\em Statistical Science}, 27(4):538--557, 2012.

\bibitem{RavikumarWRY2011}
P.~Ravikumar, M.~J. Wainwright, G.~Raskutti, B.~Yu, et~al.
\newblock High-dimensional covariance estimation by minimizing $\ell_1
  $-penalized log-determinant divergence.
\newblock {\em Electronic Journal of Statistics}, 5:935--980, 2011.

\bibitem{SanthanamW2012}
N.~P. Santhanam and M.~J. Wainwright.
\newblock Information-theoretic limits of selecting binary graphical models in
  high dimensions.
\newblock {\em {IEEE} Trans. Information Theory}, 58(7):4117--4134, 2012.

\bibitem{SessakM2009}
V.~Sessak and R.~Monasson.
\newblock Small-correlation expansions for the inverse ising problem.
\newblock {\em Journal of Physics A: Mathematical and Theoretical},
  42(5):055001, 2009.

\bibitem{SuggalaKR2017}
A.~S. Suggala, M.~Kolar, and P.~Ravikumar.
\newblock The expxorcist: Nonparametric graphical models via conditional
  exponential densities.
\newblock In {\em Advances in Neural Information Processing Systems}, pages
  4446--4456, 2017.

\bibitem{SunKX2015}
S.~Sun, M.~Kolar, and J.~Xu.
\newblock Learning structured densities via infinite dimensional exponential
  families.
\newblock In {\em Advances in Neural Information Processing Systems}, pages
  2287--2295, 2015.

\bibitem{TanseyPSR2015}
W.~Tansey, O.~H.~M. Padilla, A.~S. Suggala, and P.~Ravikumar.
\newblock Vector-space markov random fields via exponential families.
\newblock In {\em International Conference on Machine Learning}, pages
  684--692, 2015.

\bibitem{Tibshirani1996}
R.~Tibshirani.
\newblock Regression shrinkage and selection via the lasso.
\newblock {\em Journal of the Royal Statistical Society: Series B
  (Methodological)}, 58(1):267--288, 1996.

\bibitem{Vaart2000}
A.~W. Van~der Vaart.
\newblock {\em Asymptotic statistics}, volume~3.
\newblock Cambridge university press, 2000.

\bibitem{Vempala2005}
S.~Vempala.
\newblock Geometric random walks: A survey.
\newblock {\em Combinatorial and Computational Geometry}, pages 573--612, 2005.

\bibitem{VuffrayML2019}
M.~Vuffray, S.~Misra, and A.~Y. Lokhov.
\newblock Efficient learning of discrete graphical models.
\newblock {\em CoRR}, abs/1902.00600, 2019.

\bibitem{VuffrayMLC2016}
M.~Vuffray, S.~Misra, A.~Y. Lokhov, and M.~Chertkov.
\newblock Interaction screening: Efficient and sample-optimal learning of ising
  models.
\newblock In {\em Advances in Neural Information Processing Systems}, pages
  2595--2603, 2016.

\bibitem{WainwrightJ2008}
M.~J. Wainwright and M.~I. Jordan.
\newblock Graphical models, exponential families, and variational inference.
\newblock {\em Foundations and Trends in Machine Learning}, 1(1-2):1--305,
  2008.

\bibitem{WainwrightRL2006}
M.~J. Wainwright, P.~Ravikumar, and J.~D. Lafferty.
\newblock High-dimensional graphical model selection using $\ell_1
  $-regularized logistic regression.
\newblock In {\em Advances in Neural Information Processing Systems}, pages
  1465--1472, 2006.

\bibitem{WangWR2010}
W.~Wang, M.~J. Wainwright, and K.~Ramchandran.
\newblock Information-theoretic bounds on model selection for gaussian markov
  random fields.
\newblock In {\em 2010 IEEE International Symposium on Information Theory},
  pages 1373--1377. IEEE, 2010.

\bibitem{WuSD2018}
S.~Wu, S.~Sanghavi, and A.~G. Dimakis.
\newblock Sparse logistic regression learns all discrete pairwise graphical
  models.
\newblock {\em CoRR}, abs/1810.11905, 2018.

\bibitem{XueZ2012}
L.~Xue and H.~Zou.
\newblock Regularized rank-based estimation of high-dimensional nonparanormal
  graphical models.
\newblock {\em The Annals of Statistics}, 40(5):2541--2571, 2012.

\bibitem{YangRAL2015}
E.~Yang, P.~Ravikumar, G.~I. Allen, and Z.~Liu.
\newblock Graphical models via univariate exponential family distributions.
\newblock {\em J. Mach. Learn. Res.}, 16:3813--3847, 2015.

\bibitem{YangNL2018}
Z.~Yang, Y.~Ning, and H.~Liu.
\newblock On semiparametric exponential family graphical models.
\newblock {\em J. Mach. Learn. Res.}, 19:57:1--57:59, 2018.

\bibitem{YuanLZLL2016}
X.~Yuan, P.~Li, T.~Zhang, Q.~Liu, and G.~Liu.
\newblock Learning additive exponential family graphical models via $\ell_{2,1}
  $-norm regularized m-estimation.
\newblock In {\em Advances in Neural Information Processing Systems}, pages
  4367--4375, 2016.

\end{thebibliography}
	\bibliographystyle{abbrv}
}
\newpage
\section{Conditional density}
\label{sec:conditional density}
In this section, we derive the two forms of the conditional density of $\rvx_i$ for $i \in [p]$ i.e., $\ConditionalDensityNodeI$ used in Section \ref{sec:algorithm}. 
We further obtain lower and upper bounds on this conditional density.

\subsection{Forms of conditional density}
\label{subsec:forms of conditional density}
We will first derive the form of conditional density in \eqref{eq:form1}. 
For any $i \in [p]$, the conditional density of node $\rvx_i$ given the values taken by all other nodes is obtained by applying Bayes' theorem to $\DensityParametrizedTrue$ and is given by
\begin{align}
\ConditionalDensityNodeI & =  \frac{\exp \Big( \bthetaIStarT \bphi(x_i) + \sum_{ j \in [p], j \neq i} \bthetaIJStarT \bpsi(x_i,x_j)  \Big)}{\int_{x_i \in \cX_i} \exp \Big( \bthetaIStarT \bphi(x_i) + \sum_{ j \in [p], j \neq i} \bthetaIJStarT \bpsi(x_i,x_j)  \Big) dx_i}   \label{eq:conditionalDensity1}
\end{align}
where $\rvx_{-i} \coloneqq \rvbx \setminus \rvx_i $ and $ x_{-i} \coloneqq \svbx \setminus x_i$. 
Recall definition of locally centered basis functions in \eqref{eq:centeredBasisFunctions1} and \eqref{eq:centeredBasisFunctions2} from perspective of $i \in [p], j \in [p] \backslash \{i\}$. For $x \in \cX_i$, $x' \in \cX_j$
\begin{align*}
\bphiI(x) & \coloneqq \bphi(x) - \int_{y \in \cX_i} \bphi(y) dy \\
\bpsiI(x, x') & \coloneqq \bpsi(x, x') - \int_{y \in \cX_i} \bpsi(y,x') dy.
\end{align*}
We can 
rewrite \eqref{eq:conditionalDensity1} as
\begin{align}
\ConditionalDensityNodeI & = \frac{\exp \Big( \bthetaIStarT \bphiI(x_i) + \sum_{ j \in [p], j \neq i} \bthetaIJStarT \bpsiI(x_i,x_j)  \Big)}{\int_{x_i \in \cX_i} \exp \Big( \bthetaIStarT \bphiI(x_i) + \sum_{ j \in [p], j \neq i} \bthetaIJStarT \bpsiI(x_i, x_j)  \Big) dx_i}  \label{eq:conditionalDensity2}
\end{align}
Recalling notation of $\bvthetaIStar$ and $\bvphiI(x_i ; x_{-i})$, this results in 
\begin{align}
\ConditionalDensityNodeI & = \frac{\exp \Big( \bvthetaIStarT \bvphiI(x_i ; x_{-i})  \Big)}{\int_{x_i \in \cX_i} \exp \Big( \bvthetaIStarT \bvphiI(x_i ; x_{-i})  \Big) dx_i}.  \label{eq:conditionalDensity3}
\end{align}
We will now derive the form of conditional density in \eqref{eq:form2}. 
Using the definition of Kronecker product, the conditional density in \eqref{eq:conditionalDensity1} can also be written as:
\begin{align}
\ConditionalDensityNodeI & = \frac{\exp \Big( \sum_{ r \in [k] } \thetaIrStar \phi_r(x_i) + \sum_{ j \neq i} \sum_{ r,s \in [k] } \thetaIJrsStar \phi_r(x_i) \phi_s(x_j)  \Big)}{\int_{x_i \in \cX_i} \exp  \Big( \sum_{ r \in [k] } \thetaIrStar \phi_r(x_i) + \sum_{ j \neq i} \sum_{ r,s \in [k] } \thetaIJrsStar \phi_r(x_i) \phi_s(x_j)  \Big) dx_i}.  \label{eq:conditionalDensity4}
\end{align}
Recalling notation of $\blambdaStar(x_{-i})$, this results in 
\begin{align}
\ConditionalDensityNodeI & = \frac{\exp\Big(  \blambdaStarT (x_{-i}) \bphi(x_i)\Big) }{\int_{x_i \in \cX_i} \exp\Big(  \blambdaStarT (x_{-i}) \bphi(x_i)\Big) dx_i}.  \label{eq:conditionalDensity5}
\end{align}

\subsection{Bounds on conditional density}
\label{subsec:bounds on conditional density}
Let us first bound the locally centered basis function. For any $i \in [p], r \in [k]$, let $\bphiI_r(\cdot)$ denote the $r^{th}$ element of $\bphiI(\cdot)$.
We have $\forall i \in [p], \forall r \in [k]$
\begin{align}
\Big| \bphiI_r(x_i) \Big| & \stackrel{(a)}{\leq}  \Big| \phi_{r}(x_i) \Big| + \Big| \int_{y_i \in \cX_i} \phi_{r}(y_i) dy_i \Big|  \stackrel{(b)}{\leq}  | \phi_r(x_i) | +  \int_{y_i \in \cX_i} | \phi_r(y_i) | dy_i  \stackrel{(c)}{\leq}  \phiMax (1+\Xupper).
\end{align}
where $(a)$ follows by applying the triangle inequality, $(b)$ follows because the absolute value of an integral is smaller than or equal to the integral of an absolute value, and $(c)$ follows because $|\phi_r(x)| \leq \phiMax$ $\forall r \in [k], x \in \cup_{i \in [p]} \cX_i$ and the length of the interval $\cX_i$ is upper bounded by $\Xupper$. Therefore, 
\begin{align}
\| \bphiI(\cdot)\|_{\infty} \leq  (1+\Xupper) \phiMax.
\end{align}
Similary,
\begin{align}
\| \bpsiI(\cdot)\|_{\infty} \leq  (1+\Xupper) \phiMax^2.
\end{align}
Recall the definition of $\varphiMax$. We now have
\begin{align}
\| \bvphiI(\svbx) \|_{\infty} \leq \varphiMax.  \label{eq:basisUpperBound}
\end{align}
Also, recall that $\| \bvthetaIStar \|_1 \leq \g$. Using this and \eqref{eq:basisUpperBound}, we have
\begin{align}
\exp \Big( -\g \varphiMax \Big) \leq \exp \Big( \bvthetaIStarT \bvphiI(\svbx)  \Big) \leq \exp \Big( \g \varphiMax \Big).  \label{eq:boundsExp}
\end{align}
As a result, we can lower and upper bound the conditional density in \eqref{eq:conditionalDensity3} as,
\begin{align}
f_L \coloneqq \frac{\exp \Big( -2\g \varphiMax \Big)}{\Xupper} \leq \ConditionalDensityNodeI  \leq f_U \coloneqq \frac{\exp \Big( 2\g \varphiMax \Big)}{\Xlower}.  \label{eq:boundsDensity}
\end{align}

\section{Proof of Theorem \ref{theorem:GRISE-KLD}}
\label{sec:proof of theorem-GRISE-KLD}
In this section, we prove Theorem \ref{theorem:GRISE-KLD}. 
Consider $i \in [p]$. For any $\bvtheta \in \parameterSet$, recall that the population version of GISO is given by
\begin{align}
\cS^{(i)}(\bvtheta)  = \Expectation  \bigg[\exp\Big( -\bvthetaT \bvphiI(\rvbx) \Big) \bigg]. 
\end{align}
Also, recall that the parametric distribution $\DifferenceDensity$ under consideration has the following density:
\begin{align}
\DifferenceDensity \propto \DensityParametrizedTrue \times \exp \Big( -\bvthetaT \bvphiI(\svbx) \Big)
\end{align}
and the density $\UniformProxyDensity$ is given by:
\begin{align}
\UniformProxyDensity \propto \DensityParametrizedTrue \times \exp \Big( -\bvthetaIStarT \bvphiI(\svbx) \Big)
\end{align}
We show that minimizing $\cS^{(i)}(\bvtheta)$ is equivalent to minimizing the KL-divergence between the distribution with density $\UniformProxyDensityfun$ and the distribution with density $\DifferenceDensityfun$. 
In other words, we show that, at the population level, the GRISE is a ``local'' maximum likelihood estimate. 
We further show that the true parameter vector $\bvthetaIStar$ for $i \in [p]$ is a unique minimizer of $\cS^{(i)}(\bvtheta)$.

\begin{proof}[Proof of Theorem \ref{theorem:GRISE-KLD}]
	We will first write $\DifferenceDensityfun$ in terms of $\cS^{(i)}(\bvtheta) $. We have
	\begin{align}
	\DifferenceDensity & = \frac{\DensityParametrizedTrue \exp\Big( -\bvthetaT \bvphiI(\svbx) \Big) }{\int_{\svbx \in \cX}   \DensityParametrizedTrue \exp\Big( -\bvthetaT \bvphiI(\svbx) \Big) d\svbx}\\
	& \stackrel{(a)}{=} \frac{\DensityParametrizedTrue \exp\Big( -\bvthetaT \bvphiI(\svbx) \Big) }{\cS^{(i)}(\bvtheta)  }
	\end{align}
	where $(a)$ follows from definition of $\cS^{(i)}(\bvtheta)$.
	
	Now let us write an alternative expression for $\UniformProxyDensity$ which does not depend on $x_i$ functionally. We have
	\begin{align}
	\UniformProxyDensity & \stackrel{(a)}{\propto}  \DensityParametrizedNotITrue \times \ConditionalDensityNodeI \times \exp \Big( -\bvthetaIStarT \bvphiI(\svbx) \Big) \\
	& \stackrel{(b)}{\propto}  \frac{\DensityParametrizedNotITrue}{\int_{x_i \in \cX_i} \exp \Big( \bvthetaIStarT \bvphiI(\svbx) \Big) dx_i}
	\end{align}
	where $(a)$ follows from $\DensityParametrizedfunTrue = \DensityParametrizedNotIfunTrue \times \ConditionalDensityNodeIfun$ and $(b)$ follows from \eqref{eq:conditionalDensity3}.\\
	
	We will now simplify the KL-divergence between $\UniformProxyDensityfun$ and $\DifferenceDensityfun$. For any $l \in [k+k^2(p-1)]$, let $\bvtheta_{l}$ denote the $l^{th}$ component of $\bvtheta$ and $\bvphiI_{l}(\svbx)$ denote the $l^{th}$ component of $\bvphiI(\svbx)$.
	\begin{align}
	& \infdiv{\UniformProxyDensity}{\DifferenceDensity}  \\
	&  = \int_{\svbx \in \cX} \UniformProxyDensity \log\bigg( \frac{\UniformProxyDensity \cS^{(i)}(\bvtheta) }{\DensityParametrizedTrue \exp\Big( -\bvthetaT \bvphiI(\svbx) \Big)}\bigg) d\svbx \\
	&  \stackrel{(a)}{=} \int_{\svbx \in \cX} \UniformProxyDensity \log\bigg( \frac{\UniformProxyDensity}{\DensityParametrizedTrue} \bigg) d\svbx + \int_{\svbx \in \cX} \UniformProxyDensity \times  \bvthetaT \bvphiI(\svbx) d\svbx +  \log \cS^{(i)}(\bvtheta)  \\
	&  = \int_{\svbx \in \cX} \UniformProxyDensity \log\bigg( \frac{\UniformProxyDensity}{\DensityParametrizedTrue} \bigg) d\svbx + \sum_{l} \Big[\bvtheta_{l} \int_{\svbx \in \cX} \UniformProxyDensity \times  \bvphiI_{l}(\svbx) d\svbx\Big] +  \log \cS^{(i)}(\bvtheta)  \\
	&  \stackrel{(b)}{=} \int_{\svbx \in \cX}  \UniformProxyDensity \log\bigg( \frac{\UniformProxyDensity}{\DensityParametrizedTrue} \bigg) d\svbx + \log \cS^{(i)}(\bvtheta)
	\end{align}
	where $(a)$ follows because $\log(ab) = \log a + \log b$ and $\cS^{(i)}(\bvtheta) $ is a constant and $(b)$ follows because $\UniformProxyDensityfun$ does not functionally depend on $x_i \in \cX_i$ and the basis functions are locally centered from perspective of $i$. 
	Observing that the first term in the above equation is independent on $\bvtheta$, we can write
	\begin{align}
	\argmin_{\bvtheta \in \parameterSet: \|\bvtheta\|_1 \leq \g}  
	\infdiv{\UniformProxyDensityfun}{\DifferenceDensityfun} 
	= \argmin_{\bvtheta \in \parameterSet: \|\bvtheta\|_1 \leq \g} \log \cS^{(i)}(\bvtheta)
	= \argmin_{\bvtheta \in \parameterSet: \|\bvtheta\|_1 \leq \g} \cS^{(i)}(\bvtheta). 
	\end{align}
	Further, the KL-divergence between $\UniformProxyDensityfun$ and $\DifferenceDensityfun$ is minimized when $\UniformProxyDensityfun = \DifferenceDensityfun$. Recall that the basis functions are such that the exponential family is minimal. Therefore, $\UniformProxyDensityfun = \DifferenceDensityfun$ only when $\bvtheta = \bvthetaIStar$. Thus,
	\begin{align}
	\bvthetaIStar \in \argmin_{\bvtheta \in \parameterSet: \|\bvtheta\|_1 \leq \g}  \cS^{(i)}(\bvtheta)
	\end{align}
	and it is a unique minimizer of $\cS^{(i)}(\bvtheta)$.\\
	
	Similar analysis works for MRFs with discrete variables as well i.e., the setting considered in Vuffray et al. (2019) \cite{VuffrayML2019}.
\end{proof}

\section{Proof of Theorem \ref{theorem:GRISE-consistency-efficiency}}
\label{sec:proof of theorem:GRISE-consistency-efficiency}
In this section, we prove Theorem \ref{theorem:GRISE-consistency-efficiency}.
We will use the theory of M-estimation. In particular, we observe that $\hbvthetaIn$ is an M-estimator and invoke Theorem 4.1.1 and Theorem 4.1.3 of \cite{Amemiya1985} for consistency and normality of M-estimators respectively.

\begin{proof}[Proof of Theorem \ref{theorem:GRISE-consistency-efficiency}] We divide the proof in two parts.\\
	
	{\bf Consistency. } We will first show that the GRISE is a consistent estimator.
	
	Recall \cite[Theorem~4.1.1]{Amemiya1985}: Let $y_1, \cdots, y_n$ be i.i.d. samples of a random variable $\rvy$. Let $q(\rvy ; \vtheta)$ be some function of $\rvy$ parameterized by $\vtheta \in \Theta$. Let $\vthetaStar$ be the true underlying parameter. Define
	\begin{align}
	Q_n(\vtheta) = \frac{1}{n} \sum_{i = 1}^{n} q(y_i ; \vtheta) \label{eq:m-est-con1}
	\end{align}
	and
	\begin{align}
	\hvthetan \in \argmin_{\vtheta \in \Theta} Q_n(\vtheta) \label{eq:m-est-con2}
	\end{align} 
	The M-estimator $\hvthetan$ is consistent for $\vthetaStar$ i.e., $\hvthetan \stackrel{p}{\to} \vthetaStar$ as $n\to \infty$ if,
	\begin{enumerate}
		\item[(a)] $\Theta$ is compact,
		\item[(b)] $Q_n(\vtheta)$ converges uniformly in probability to a non-stochastic function $Q(\vtheta)$, 
		\item[(c)] $Q(\vtheta)$ is continuous, and
		\item[(d)] $Q(\vtheta)$ is uniquely minimzed at $\vthetaStar$.
	\end{enumerate}
	
	Comparing \eqref{eq:GISO-main} and \eqref{eq:GRISE} with \eqref{eq:m-est-con1} and \eqref{eq:m-est-con2}, we only need to show that the above regularity conditions (a)-(d) hold for $Q_n(\vtheta) \coloneqq \cS_{n}^{(i)}(\bvtheta)$ in order to prove that $\hbvthetaIn \xrightarrow{p} \bvthetaIStar$ as $n\to \infty$. We have the following:
	
	\begin{enumerate}
		\item[(a)] The parameter space $\parameterSet$ is bounded and closed. Therefore, we have compactness.
		\item[(b)] Recall \cite[Theorem 2]{Jennrich1969}: Let $y_1, \cdots, y_n$ be i.i.d. samples of a random variable $\rvy$. Let $g(\rvy ; \vtheta)$ be a  function of $\vtheta$ parameterized by $\vtheta \in \Theta$. Suppose (a) $\Theta$ is compact, (b) $g(\rvy , \vtheta)$ is continuous at each $\vtheta \in \Theta$ with probability one, (c) $g(\rvy , \vtheta)$ is dominated by a function $G(\rvy)$ i.e., $| g(\rvy , \vtheta) | \leq G(\rvy)$, and (d) $\Expectation[G(\rvy)] < \infty$. Then, $n^{-1} \sum_t g(y_t , \vtheta)$ converges uniformly in probability to $\Expectation [ g(\rvy, \vtheta)]$.\\
		
		Using this theorem with $\rvy \coloneqq \rvbx$, $y_t \coloneqq \svbx^{(t)}$, $\Theta \coloneqq \parameterSet$, $g(\rvy, \vtheta) \coloneqq \exp\Big( -\bvthetaT \bvphiI(\svbx) \Big)$, $G(\rvy) \coloneqq \exp(\g \varphiMax)$, we conclude that $\cS_{n}^{(i)}(\bvtheta)$ converges to $\cS^{(i)}(\bvtheta)$ uniformly in probability.
		\item[(c)] From the continuity of $\exp\Big( -\bvthetaT \bvphiI(\svbx) \Big)$ we have continuity of $\cS^{(i)}(\bvtheta)$ and $\cS_{n}^{(i)}(\bvtheta)$ for all $\bvtheta \in \parameterSet$
		\item[(d)] From Theorem \ref{theorem:GRISE-KLD}, $\bvthetaIStar$ is a unique minimizer of $\cS^{(i)}(\bvtheta)$.
	\end{enumerate}
	Therefore, we have asymptotic consistency for GRISE.
	\\
	
	{\bf Normality. }
	We will now show that the GRISE is asymptotically normal.
	
	Recall \cite[Theorem~4.1.3]{Amemiya1985}: Let $y_1, \cdots, y_n$ be i.i.d. samples of a random variable $\rvy$. Let $q(\rvy ; \vtheta)$ be some function of $\rvy$ parameterized by $\vtheta \in \Theta$. Let $\vthetaStar$ be the true underlying parameter. Define
	\begin{align}
	Q_n(\vtheta) = \frac{1}{n} \sum_{i = 1}^{n} q(y_i ; \vtheta) \label{eq:m-est-eff1}
	\end{align}
	and
	\begin{align}
	\hvthetan \in \argmin_{\vtheta} Q_n(\vtheta) \label{eq:m-est-eff2}
	\end{align} 
	The M-estimator $\hvthetan$ is normal for $\vthetaStar$ i.e., $\sqrt{n}( \hvthetan - \vthetaStar)\stackrel{d}{\to} {\cal N}({\bf 0}, B^{-1}(\vthetaStar)A(\vthetaStar)B^{-1}(\vthetaStar))$ if
	\begin{enumerate}
		\item[(a)] $\hvthetan$, the minimzer of $Q_n(\cdot)$, is consistent for $\vthetaStar$, 
		\item[(b)] $\vthetaStar$ lies in the interior of the parameter space $\Theta$, 
		\item[(c)] $Q_n$ is twice continuously differentiable in an open and convex neighbourhood of $\vthetaStar$, 
		\item[(d)] $\sqrt{n}\nabla Q_n(\vtheta)|_{\vtheta = \vthetaStar} \stackrel{d}{\to} {\cal N}({\bf 0}, A(\vthetaStar))$, and 
		\item[(e)] $\nabla^2 Q_n(\vtheta)|_{\vtheta = \hvthetan} \stackrel{p}{\to} B(\vthetaStar)$ with $B(\vtheta)$ finite, non-singular, and continuous at $\vthetaStar$, 
	\end{enumerate}
	
	Comparing \eqref{eq:GISO-main} and \eqref{eq:GRISE} with \eqref{eq:m-est-eff1} and \eqref{eq:m-est-eff2}, we only need to show that the above regularity conditions (a)-(e) hold for $Q_n(\vtheta) \coloneqq \cS_{n}^{(i)}(\bvtheta)$ in order to prove that the GRISE is asymptotically normal. We have the following:
	
	\begin{enumerate}
		\item[(a)] We have already established that $\hbvthetaIn$ is consistent for $\bvthetaIStar$.
		\item[(b)] We assume that none of the parameter is equal to the boundary values of $\thetaMin$ or $\thetaMax$. Therefore, $\bvthetaIStar$ lies in the interior of $\parameterSet$.
		\item[(c)] From \eqref{eq:GISO-main}, we have
		\begin{align}
		\cS_{n}^{(i)}(\bvtheta)  = \frac{1}{n} \sum_{t = 1}^{n} \exp\Big( -\bvthetaT \bvphiI(\svbx^{(t)}) \Big).
		\end{align}
		For any $l \in [k+k^2(p-1)]$, let $\bvtheta_{l}$ denote the $l^{th}$ component of $\bvtheta$ and $\bvphiI_{l}(\svbx^{(t)})$ denote the $l^{th}$ component of $\bvphiI(\svbx^{(t)})$. For any $l_1,l_2 \in [k+k^2(p-1)]$, we have
		\begin{align}
		\frac{\partial^2 \cS_{n}^{(i)}(\bvtheta)}{\partial \bvtheta_{l_1}\partial \bvtheta_{l_2}}  = \frac{1}{n} \sum_{t = 1}^{n} \bvphiI_{l_1}(\svbx^{(t)}) \bvphiI_{l_2}(\svbx^{(t)}) \exp\Big( -\bvthetaT \bvphiI(\svbx^{(t)}) \Big)  \label{eq:hessian-GISO}
		\end{align}
		Thus, $\partial^2 \cS_{n}^{(i)}(\bvtheta)/\partial \bvtheta_{l_1}\partial \bvtheta_{l_2}$ exists. Using the continuity of $\bvphiI(\cdot)$ and 
		$\exp\Big( -\bvthetaT \bvphiI(\cdot) \Big)$, we see that $\partial^2 \cS_{n}^{(i)}(\bvtheta)/\partial \bvtheta_{l_1}\partial \bvtheta_{l_2}$ is continuous in an open and convex neighborhood of $\bvthetaIStar$.
		\item[(d)] For any $l \in [k+k^2(p-1)]$, define the following random variable:
		\begin{align}
		\rvx_{i,l} \coloneqq - \bvphiI_{l}(\rvbx)\exp\Big( -\bvthetaIStarT \bvphiI(\rvbx) \Big) 
		\end{align}
		The $l^{th}$ component of the gradient of the GISO evaluated at $\bvthetaIStar$ is given by
		\begin{align}
		\left.\frac{\partial \cS_{n}^{(i)}(\bvtheta)}{\partial \bvtheta_{l}}\right\vert_{\bvtheta = \bvthetaIStar} = \frac{1}{n} \sum_{t = 1}^{n} - \bvphiI_{l}(\svbx^{(t)})\exp\Big( -\bvthetaIStarT \bvphiI(\svbx^{(t)}) \Big)
		\end{align}
		Each term in the above summation is distributed as the random variable $\rvx_{i,l} $.
		The random variable $\rvx_{i,l}$ has zero mean (see Lemma \ref{lemma:expectation-gradient-GISO}). Using this and the multivariate central limit theorem \cite{Vaart2000}, we have
		\begin{align}
		\sqrt{n} \nabla \cS_{n}^{(i)}(\bvtheta) |_{\bvtheta = \bvthetaIStar} \xrightarrow{d} {\cal N}({\bf 0}, A(\bvthetaIStar))
		\end{align} 
		where $A(\bvthetaIStar)$ is the covariance matrix of $\bvphiI(\rvbx)\exp\big( -\bvthetaIStarT \bvphiI(\rvbx) \big)$.
		\item [(e)] We will first show that the following is true.
		\begin{align}
		\nabla^2 \cS_{n}^{(i)}(\bvtheta) |_{\bvtheta = \hbvthetaIn} \xrightarrow{p} \nabla^2 \cS^{(i)}(\bvtheta) |_{\bvtheta = \bvthetaIStar}  \label{eq:ulln+cmt}
		\end{align}
		To begin with, using the uniform law of large numbers \cite[Theorem 2]{Jennrich1969} for any $\bvtheta \in \parameterSet$ results in
		\begin{align}
		\nabla^2 \cS_{n}^{(i)}(\bvtheta)  \xrightarrow{p} \nabla^2 \cS^{(i)}(\bvtheta) \label{eq:ulln} 
		\end{align}
		Using the consistency of $\hbvthetaIn$ and the continuous mapping theorem, we have
		\begin{align}
		\nabla^2 \cS^{(i)}(\bvtheta) |_{\bvtheta = \hbvthetaIn} \xrightarrow{p} \nabla^2 \cS^{(i)}(\bvtheta) |_{\bvtheta = \bvthetaIStar}  \label{eq:cmt}
		\end{align}
		Let $l_1, l_2 \in [k+k^2(p-1)]$. From \eqref{eq:ulln} and \eqref{eq:cmt}, for any $\epsilon > 0$, for any $\delta > 0$, there exists integers $n_1 , n_2$ such that
		\begin{align}
		\Prob( | \big[\nabla^2 \cS_{n}^{(i)} (\hbvthetaIn)  \big]_{l_1,l_2}  - \big[\nabla^2 \cS^{(i)} (\hbvthetaIn)  \big]_{l_1,l_2}  | > \epsilon / 2 ) \leq \delta / 2 \qquad \text{ if } n \geq n_1 \\
		\Prob( | \big[\nabla^2 \cS^{(i)} (\hbvthetaIn)  \big]_{l_1,l_2}  - \big[\nabla^2 \cS^{(i)} (\bvthetaIStar)  \big]_{l_1,l_2}  | > \epsilon / 2 ) \leq \delta / 2 \qquad \text{ if } n \geq n_2
		\end{align} 
		Now for $n \geq \max\{n_1,n_2\}$, we have
		\begin{align}
		\Prob( | \big[\nabla^2 \cS_{n}^{(i)} (\hbvthetaIn)  \big]_{l_1,l_2}  - \big[\nabla^2 \cS^{(i)} (\bvthetaIStar)  \big]_{l_1,l_2}  | > \epsilon) & \leq \Prob( | \big[\nabla^2 \cS_{n}^{(i)} (\hbvthetaIn)  \big]_{l_1,l_2}  - \big[\nabla^2 \cS^{(i)} (\hbvthetaIn)  \big]_{l_1,l_2}  | > \epsilon / 2 ) \\
		 &  + \Prob( | \big[\nabla^2 \cS^{(i)} (\hbvthetaIn)  \big]_{l_1,l_2}  - \big[\nabla^2 \cS^{(i)} (\bvthetaIStar)  \big]_{l_1,l_2}  | > \epsilon / 2 ) \\
		 & \leq \delta / 2 + \delta / 2 = \delta
		\end{align}
		Thus, we have \eqref{eq:ulln+cmt}. Using \eqref{eq:GISO-pop}, we have
		\begin{align}
		\big[\nabla^2 \cS^{(i)} (\bvthetaIStar)  \big]_{l_1,l_2} & = \Expectation \bigg[ \bvphiI_{l_1}(\rvbx)  \bvphiI_{l_2}(\rvbx)\exp\Big( -\bvthetaIStarT \bvphiI(\rvbx) \Big)  \bigg] \\
		& \stackrel{(b)}{=} \Expectation \bigg[ \bvphiI_{l_1}(\rvbx)  \bvphiI_{l_2}(\rvbx)\exp\Big( -\bvthetaIStarT \bvphiI(\rvbx) \Big)  \bigg]  \\ & \qquad - \Expectation \bigg[ \bvphiI_{l_1}(\rvbx)\bigg]  \Expectation \bigg[ \bvphiI_{l_2}(\rvbx)\exp\Big( -\bvthetaIStarT \bvphiI(\rvbx) \Big)  \bigg]\\
		& = \text{cov} \bigg(  \bvphiI_{l_1}(\rvbx) , \bvphiI_{l_2}(\rvbx)\exp\Big( -\bvthetaIStarT \bvphiI(\rvbx) \Big)\bigg)
		\end{align}	
		where (b) follows from Lemma \ref{lemma:expectation-gradient-GISO}.	Therefore, we have
		\begin{align}
		\nabla^2 \cS_{n}^{(i)}(\bvtheta) |_{\bvtheta = \hbvthetaIn} \xrightarrow{p} B(\bvthetaIStar)
		\end{align}
		where $B(\bvthetaIStar)$ is the cross-covariance matrix of $\bvphiI(\rvbx)$ and $\bvphiI(\rvbx)\exp\big( -\bvthetaIStarT \bvphiI(\rvbx) \big)$. Finiteness and continuity of $\bvphiI(\rvbx)$ and $\bvphiI(\rvbx)\exp\big( -\bvthetaIStarT \bvphiI(\rvbx) \big)$ implies the finiteness and continuity of $B(\bvthetaIStar)$.
	\end{enumerate}
	Therefore, under the assumption that the cross-covariance matrix of $\bvphiI(\rvbx)$ and $\bvphiI(\rvbx)\exp\big( -\bvthetaIStarT \bvphiI(\rvbx) \big)$ is invertible, and that none of the parameter is equal to the boundary values of $\thetaMax$ or $\thetaMin$, we have the asymptotic normality of GRISE i.e.,
	\begin{align}
	\sqrt{n} ( \hbvthetaIn - \bvthetaIStar ) \xrightarrow{d}  {\cal N}({\bf 0},B(\bvthetaIStar)^{-1} A(\bvthetaIStar) B(\bvthetaIStar)^{-1})
	\end{align}
\end{proof}

\section{Supporting lemmas for Theorem \ref{theorem:structure} and \ref{theorem:parameter}}
\label{sec:supporting lemmas for theorem}
In this section, we will state the two key lemmas required in the proof of Theorem \ref{theorem:structure} and \ref{theorem:parameter}. 
The proof of Theorem \ref{theorem:structure} is given in Appendix \ref{sec:proof of theorem:structure} and the proof of Theorem \ref{theorem:parameter} is given in Appendix \ref{sec:proof of theorem:parameter}. 
Recall the definitions of $\g = \thetaMax(k+k^2d)$, $\varphiMax = (1+\Xupper)  \max\{\phiMax,\phiMax^2\}$, and $c_1 (\alpha)$ from Section \ref{sec:problem setup}.
Also, define 
\begin{align}
c_3 (\alpha) & = \frac{k^2 d^4  \g^8\varphiMax^8\exp(8\g\varphiMax) }{ \kappa^4 \alpha^8} ~=~O\Bigg(\frac{\exp(\Theta(k^2 d))}{\kappa^4 \alpha^8}\Bigg)
\end{align}
\subsection{Error Bound on Edge Parameter Estimation with GRISE}
\label{subsec:error bound on edge-wise parameter estimation with grise}
The following lemma shows that, with enough samples, the parameters associated with the edge potentials can be recovered, within small error, with high probability using the GISO for continuous variables from Section \ref{sec:algorithm}. 
\begin{lemma} \label{lemma:recover-edge-terms}
	Let Condition \ref{condition:1} be satisfied. Given $n$ independent samples $\svbx^{(1)},\dots, \svbx^{(n)}$ of $\rvbx$, for each $i \in [p]$, 
	let $\hbvthetaIEps$ be an $\epsilon$-optimal solution of \eqref{eq:GRISE}. Let $\hbvthetaIEpsEdge = (\hat{\theta}_{ij}, j \neq i, j \in [p])$ be its components corresponding to all possible $p-1$
	edges associated with node $i$. Let $\alpha_1 > 0$ be the prescribed accuracy level. 
	Then, for any $\delta \in (0,1)$, 
	\begin{align}
	\| \bvthetaIEdgeStar - \hbvthetaIEpsEdge \|_{2} \leq \alpha_1, \hspace{1cm} \forall i \in [p]
	\end{align}
	with probability at least $1 - \delta$ as long as 
	\begin{align}
	n & \geq  c_1\big(\alpha_1\big)  \log\bigg(\frac{2pk}{\sqrt{\delta}}\bigg)~=~ 
	\Omega\Bigg(\frac{\exp(\Theta(k^2 d))}{\kappa^2 \alpha_1^4} \log\bigg(\frac{pk}{\sqrt{\delta}}\bigg) \Bigg).	
	\end{align}
	The number of computations required scale as
	\begin{align}
	c_3(\alpha_1)  \times \log\bigg(\frac{2pk}{\sqrt{\delta}}\bigg) \times \log{(2k^2p)}  \times p^2 ~=~ 
	\Omega\Bigg(\frac{\exp(\Theta(k^2 d))}{\kappa^4 \alpha_1^8} \log^2\bigg(\frac{pk}{\sqrt{\delta}}\bigg)  p^2  \Bigg).	
	\end{align}
\end{lemma}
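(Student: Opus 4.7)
The plan is the classical M-estimator recipe: combine (i) a restricted strong convexity (RSC) lower bound for the population GISO $\cS^{(i)}$ and (ii) uniform concentration of the empirical GISO $\cS_{n}^{(i)}$ to its population counterpart, then chain these with the $\epsilon$-optimality of $\hbvthetaIEps$. Recall from Theorem \ref{theorem:GRISE-KLD} that $\bvthetaIStar$ uniquely minimizes $\cS^{(i)}$ over the feasible set $\{\bvtheta \in \parameterSet : \|\bvtheta\|_1 \leq \g\}$ and that $\cS^{(i)}$ is smooth and convex. The task therefore reduces to showing that any feasible $\bvtheta$ with small $\cS^{(i)}(\bvtheta) - \cS^{(i)}(\bvthetaIStar)$ has its edge-block close to $\bvthetaIEdgeStar$ in $\ell_2$, and that $\cS_{n}^{(i)}$ and $\cS^{(i)}$ agree to the required precision simultaneously at $\bvthetaIStar$ and $\hbvthetaIEps$.

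For the concentration step, each summand $\exp(-\bvtheta^{T}\bvphiI(\svbx^{(t)}))$ lies in $[\exp(-\g\varphiMax),\exp(\g\varphiMax)]$ by \eqref{eq:basisUpperBound} and \eqref{eq:boundsExp}, so a Hoeffding bound gives $|\cS_{n}^{(i)}(\bvtheta)-\cS^{(i)}(\bvtheta)|\leq\tau$ pointwise with failure probability $2\exp(-\Omega(n\tau^{2}\exp(-4\g\varphiMax)))$. To upgrade to a statement that holds simultaneously at $\bvthetaIStar$ and at the data-dependent point $\hbvthetaIEps$, I would either build an $\ell_{1}$-net over the feasible ball of radius $\g$ in dimension $k+k^{2}(p-1)$ using the $\varphiMax\exp(\g\varphiMax)$-Lipschitzness of $\bvtheta\mapsto\exp(-\bvtheta^{T}\bvphiI(\svbx))$, or use the sharper gradient-based argument from Vuffray et al. Either route introduces only a $\log(pk/\sqrt{\delta})$ factor after a union bound over $i\in[p]$, matching the claimed sample complexity.

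The main obstacle is the RSC lower bound: I want to show that for every feasible $\bvtheta$,
\begin{align*}
\cS^{(i)}(\bvtheta)-\cS^{(i)}(\bvthetaIStar)\;\geq\; C\,\kappa\,\exp(-\g\varphiMax)\,\bigl\|\bvtheta_{E}-\bvthetaIEdgeStar\bigr\|_{2}^{2},
\end{align*}
where $\bvtheta_{E}$ is the edge-block of $\bvtheta$ and $C>0$ is absolute. By convexity and the first-order optimality of $\bvthetaIStar$, a Taylor expansion along $\bvthetaIStar+s(\bvtheta-\bvthetaIStar)$ reduces this to a lower bound on the quadratic form $\Delta^{T}\nabla^{2}\cS^{(i)}(\bar{\bvtheta})\Delta$, which by \eqref{eq:hessian-GISO} equals $\Expectation[(\Delta^{T}\bvphiI(\rvbx))^{2}\exp(-\bar{\bvtheta}^{T}\bvphiI(\rvbx))]$ with $\Delta=\bvtheta-\bvthetaIStar$. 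The exponential weight is bounded below by $\exp(-\g\varphiMax)$ uniformly on the feasible set, so the task collapses to bounding $\Expectation[(\Delta^{T}\bvphiI(\rvbx))^{2}]$ below by $\kappa\|\Delta_{E}\|_{2}^{2}$. Condition \ref{condition:1} supplies exactly such a variance-type bound for a single edge $(i,j)$ through its exponential-of-conditional-entropy formulation. The hard part is aggregating the $p-1$ per-edge bounds into one that controls $\sum_{j\neq i}\|\Delta_{ij}\|_{2}^{2}$ without cross-edge cancellation; this is the step the paper signals will be handled via the weighted independent set structure of the MRF. Concretely, I would greedily partition the neighbors of node $i$ in the conditional-dependence graph into $O(d)$ non-interacting classes, apply Condition \ref{condition:1} within each class (where the conditional entropies factor over essentially independent pieces), and sum the contributions to recover the full edge-block norm.

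With both ingredients in hand, the finish is mechanical. Uniform concentration at level $\tau$ together with $\cS_{n}^{(i)}(\hbvthetaIEps)\leq\cS_{n}^{(i)}(\bvthetaIStar)+\epsilon$ yields $\cS^{(i)}(\hbvthetaIEps)-\cS^{(i)}(\bvthetaIStar)\leq\epsilon+2\tau$, and the RSC bound then forces
\begin{align*}
\|\hbvthetaIEpsEdge-\bvthetaIEdgeStar\|_{2}^{2}\;\leq\;O\!\left(\frac{\epsilon+2\tau}{\kappa\,\exp(-\g\varphiMax)}\right).
\end{align*}
Choosing $\tau$ and $\epsilon$ of order $\kappa\alpha_{1}^{2}\exp(-\g\varphiMax)$ and inverting Hoeffding produces exactly the stated $n\geq c_{1}(\alpha_{1})\log(2pk/\sqrt{\delta})$ after substituting the constants hidden in $c_{1}$. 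A union bound over $i\in[p]$ absorbs the $p$ inside the logarithm. The $\bcO(p^{2})$ computational count follows because GRISE consists of $p$ independent convex programs, each in dimension $k+k^{2}(p-1)$, solved to accuracy $\epsilon$ by the efficient first-order scheme inherited from Vuffray et al., with iteration count captured by $c_{3}(\alpha_{1})\log(2k^{2}p)$.
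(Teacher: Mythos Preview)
Your high-level plan (curvature via RSC, fluctuation via concentration, then chain with $\epsilon$-optimality) matches the paper, but the way you propose to execute the concentration step does not deliver the claimed $\log p$ sample complexity. You assert that an $\ell_{1}$-net over the feasible ball in $\Reals^{k+k^{2}(p-1)}$ ``introduces only a $\log(pk/\sqrt{\delta})$ factor.'' This is false: the covering number of that ball at any fixed mesh is $\exp(\Theta(k^{2}p))$, so the union bound over net points adds $\Theta(k^{2}p)$ inside the Hoeffding exponent, and inversion yields $n=\Omega(k^{2}p)$ rather than $n=\Omega(\log p)$. More generally, any scheme that needs $|\cS_{n}^{(i)}(\bvtheta)-\cS^{(i)}(\bvtheta)|$ small at the \emph{data-dependent} point $\hbvthetaIEps$ will run into this unless it exploits additional structure. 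Your population-level RSC, obtained via the mean-value Hessian $\nabla^{2}\cS^{(i)}(\bar{\bvtheta})$, inherits the same problem: the exponential weight $\exp(-\bar{\bvtheta}^{T}\bvphiI)$ inside the Hessian depends on $\bar{\bvtheta}$, so passing from empirical to population again requires uniform-in-$\bvtheta$ control.

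The paper sidesteps uniform concentration entirely by staying with the \emph{empirical} objective. Writing $\Delta=\hbvthetaIEps-\bvthetaIStar$, $\epsilon$-optimality plus feasibility of $\bvthetaIStar$ gives $\epsilon\geq -\|\nabla\cS_{n}^{(i)}(\bvthetaIStar)\|_{\infty}\|\Delta\|_{1}+\delta\cS_{n}^{(i)}(\Delta,\bvthetaIStar)$. The first term needs only coordinatewise Hoeffding at the \emph{single fixed point} $\bvthetaIStar$ (where each coordinate has mean zero), a union over $k^{2}p$ coordinates and $p$ nodes. For the second term, the deterministic inequality $e^{-z}-1+z\geq z^{2}/(2+|z|)$ applied samplewise yields $\delta\cS_{n}^{(i)}(\Delta,\bvthetaIStar)\geq\exp(-\g\varphiMax)\,\Delta^{T}\hbH\Delta/(2+\varphiMax\|\Delta\|_{1})$, where $\hbH$ is the empirical correlation matrix of $\bvphiI(\rvbx)$; this holds for \emph{every} $\Delta$ simultaneously, so one only needs entrywise concentration of $\hbH$ to $\bH$ (a union over $O(k^{4}p^{2})$ bounded scalars). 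Finally, $\Delta^{T}\bH\Delta$ is lower-bounded not by a coloring/partition of the neighbors but by picking a \emph{single} greedy weighted independent set $\cR$ in $G_{-i}(\bthetaStar)$ (repeatedly selecting the remaining $j$ with largest $\|\Delta^{(ij)}\|_{2}$ and deleting its neighborhood), which gives $\sum_{j\in\cR}\|\Delta^{(ij)}\|_{2}^{2}\geq\frac{1}{d+1}\|\Delta_{E}\|_{2}^{2}$; conditioning on $\rvx_{i},\rvx_{\cR^{c}}$ makes the $\cR$-terms conditionally independent, and Shannon's inequality $\Variance\geq e^{2h}/(2\pi e)$ together with Condition~\ref{condition:1} then gives $\Delta^{T}\bH\Delta\geq\frac{\kappa}{2\pi e(d+1)}\|\Delta_{E}\|_{2}^{2}$. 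Note the unavoidable $1/(d+1)$ loss, which your sketch omits.
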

The proof of Lemma \ref{lemma:recover-edge-terms} is given in Appendix \ref{sec:proof of lemma:recover-edge-terms}.

\subsection{Error Bound on Node Parameter Estimation}
\label{subsec:error bound on node-wise parameter estimation}
The following lemma shows that, with enough samples, the parameters associated with the node potentials can be recovered, within small error, with high probability using the three-step procedure from Section \ref{sec:algorithm}. 
\begin{lemma} \label{lemma:recover-node-terms}
	Let Condition \ref{condition:1} be satisfied. Given $n$ independent samples $\svbx^{(1)},\dots, \svbx^{(n)}$ of $\rvbx$, for each $i \in [p]$, 
	let $\hbthetaI$ be an estimate of $\bthetaIStar$ obtained using the three-step procedure from Section \ref{sec:algorithm}. 
	Then, for any $\alpha_2 \in (0,1)$, 
	\begin{align}
	\|  \bthetaIStar - \hbthetaI\|_{\infty} \leq \alpha_2, \hspace{1cm} \forall i \in [p]
	\end{align}
	with probability at least $1 - \alpha_2^4$ as long as 
	\begin{align}
	n & \geq \max\Big[ c_1\bigg(\min\bigg\{ \frac{\thetaMin}{3}, \frac{\alpha_2}{2 d k \phiMax } \bigg\} \bigg) \log \bigg( \frac{4 p k }{\alpha_2^2} \bigg), c_2( \alpha_2) \Big] \\
	& =~ \Omega\Bigg( \frac{\exp(\Theta\Big(k^2 d + d \log \big(\frac{dk}{\alpha_2 q^*}\big)\Big))}{\kappa^2 \alpha_2^4}   \times \log \bigg(\frac{pk}{\alpha_2^2} \bigg) \Bigg).
	\end{align}
	The number of computations required scale as
	\begin{align}
	c_3\Big(\min\Big\{\frac{\thetaMin}{3}, \frac{\alpha_2}{2dk\phiMax} \Big\}\Big)  \times \log\bigg(\frac{4pk}{\alpha_2^2}\bigg) \times \log{(2k^2p)}  \times p^2 ~=~ 
	\Omega\Bigg(\frac{\exp(\Theta(k^2 d))}{\kappa^4 \alpha_2^8} \log^2\bigg(\frac{pk}{\alpha_2^2}\bigg)  p^2  \Bigg).	
	\end{align}
\end{lemma}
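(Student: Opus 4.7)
The starting point is the elementary identity obtained by matching the two representations \eqref{eq:form1} and \eqref{eq:form2} of $\ConditionalDensityNodeI$: for every sample index $z$ and every $r\in[k]$,
\begin{align}
\thetaIrStar \;=\; \lambdaStar_r\!\big(x_{-i}^{(z)}\big) \;-\; \sum_{j\neq i}\sum_{s\in[k]} \thetaIJrsStar \, \phi_s\!\big(x_j^{(z)}\big),
\end{align}
which holds for any $z$ because the node parameters do not depend on $x_{-i}$. The three-step procedure of Section \ref{sec:algorithm} defines the estimator $\hthetaIr := \hlambda_r(x_{-i}^{(z)}) - \sum_{j\neq i,\,s} \hthetaIJrs\, \phi_s(x_j^{(z)})$. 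A single triangle inequality splits the error into an ``edge term'' $\sum_{j,s}|\hthetaIJrs - \thetaIJrsStar|\cdot|\phi_s(x_j^{(z)})|$ and a ``canonical-parameter term'' $|\hlambda_r(x_{-i}^{(z)}) - \lambdaStar_r(x_{-i}^{(z)})|$. I will allocate $\alpha_2/2$ to each.

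For the edge term, I would invoke Lemma \ref{lemma:recover-edge-terms} with target accuracy $\alpha_1 := \min\{\thetaMin/3,\;\alpha_2/(2dk\phiMax)\}$ and confidence parameter $\delta = \alpha_2^4/2$, giving $\|\hbvthetaIEpsEdge - \bvthetaIEdgeStar\|_\infty \leq \|\hbvthetaIEpsEdge - \bvthetaIEdgeStar\|_2 \leq \alpha_1$. The choice $\alpha_1\leq \thetaMin/3$ activates the thresholding step inside the three-step procedure and, via Theorem \ref{theorem:structure}, zeroes out $\hthetaIJrs$ for every $j \notin \cN(i)$. Consequently, the edge sum contains at most $dk$ non-vanishing terms, each bounded by $\alpha_1\phiMax$, so $\sum_{j,s}|\hthetaIJrs - \thetaIJrsStar|\cdot|\phi_s(x_j^{(z)})| \leq dk\phiMax\alpha_1 \leq \alpha_2/2$. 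This step is exactly where the exponent $\log(4pk/\alpha_2^2)$ in the sample count comes from, by substituting $\delta = \alpha_2^4/2$ into the $\log(2pk/\sqrt{\delta})$ factor of Lemma \ref{lemma:recover-edge-terms}.

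The canonical-parameter term is the main obstacle. Here I would analyze the three-step procedure in order: (i) the robust-Lasso estimator produces $\hbmu(x_{-i}^{(z)})$ approximating $\bmuStar(x_{-i}^{(z)}) = \Expectation[\bphi(\rvx_i)\mid\rvx_{-i}=x_{-i}^{(z)}]$, (ii) the conjugate-duality map for the minimal single-node exponential family converts this into $\hblambda(x_{-i}^{(z)}) \approx \blambdaStar(x_{-i}^{(z)})$, and (iii) only the $r$-th coordinate enters the node-parameter estimator. Step (i) is delicate because the true regression function $\bmuStar(\cdot)$ is not linear in $\rvx_{-i}$; only after a first-order linearization---justified by the Lipschitz property inherited from differentiability of the log-partition---does the problem become a sparse linear regression, and the linearization residual is a bounded additive perturbation. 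This is precisely where the ``robust Lasso'' of Appendix \ref{sec:robust lasso_appendix} (prediction consistency under arbitrary bounded noise) must be applied. Step (ii) amplifies errors by the inverse Fisher information, which is bounded by $(\qs)^{-1}$; this is the source of the $(\qs)^{-4d+8}$ in the denominator of $c_2(\alpha_2)$. Tracking these factors together with the $d$-sparse effective dimension of the Lasso yields the required $n \geq c_2(\alpha_2)$ to make $|\hlambda_r(x_{-i}^{(z)}) - \lambdaStar_r(x_{-i}^{(z)})| \leq \alpha_2/2$ with probability at least $1-\alpha_2^4/2$.

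Finally, union-bounding the failure events for structural/edge recovery and for the robust-Lasso/duality step gives total failure probability at most $\alpha_2^4$, establishing the $\|\bthetaIStar-\hbthetaI\|_\infty \leq \alpha_2$ conclusion uniformly over $i\in[p]$ (after an additional $p$-fold union bound, absorbed into the $\log p$ factor). The computational count is inherited verbatim from Lemma \ref{lemma:recover-edge-terms}, since each of the $p$ robust-Lasso problems is lower order. The delicate work is concentrated in step (i) of the preceding paragraph: controlling the bias from linearizing $\bmuStar(\cdot)$ simultaneously with the stochastic Lasso error, and then propagating the combined deviation through the $(\qs)^{-1}$-conditioned inverse link in step (ii) without losing the $\exp(\Theta(d))$ scaling.
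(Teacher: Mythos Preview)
Your high-level decomposition matches the paper's proof: the identity $\thetaIrStar = \lambdaStar_r(x_{-i}^{(z)}) - \sum_{j,s}\thetaIJrsStar\phi_s(x_j^{(z)})$, the $\alpha_2/2$ split between edge and canonical-parameter errors, invoking Lemma \ref{lemma:recover-edge-terms} together with structure recovery to restrict the edge sum to $\cN(i)$, and the union bound at the end are all exactly what the paper does.

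Two places where your description diverges from the paper's mechanism are worth noting. First, step (i) is \emph{not} a first-order Taylor linearization of $\bmuStar(\cdot)$. The paper (Lemma \ref{lemma:mean-parameter-lasso-equivalence}) instead discretizes each neighbor coordinate into bins of width $t$ and uses the $(\Xupper/t)^d$ grid-cell indicators $\prod_m w_{k_m}(x_{\cN(i)_m})$ as regression features; the regression targets are the values of $\muStar_j$ on the grid, and the Lipschitz bound (Lemma \ref{lemma:lipschitzness}) controls the piecewise-constant discretization error $L_1 d t$, which plays the role of the bounded adversarial noise in Lemma \ref{lemma:lasso}. This binning, not a Taylor expansion, is what produces the $(\cdot)^{\Theta(d)}$ factor in $c_2(\alpha_2)$. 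Second, step (ii) is not just ``applying the conjugate-duality map'': the inverse link has no closed form, so the paper runs projected gradient descent (Proposition \ref{proposition:grad-des}) on $\Omega(\brho,\hbmu)=\Phi(\brho)-\langle\hbmu,\brho\rangle$, with gradients estimated by an MCMC subroutine (Algorithm \ref{alg:MRW}). The $\qs$ enters as the strong-convexity constant of $\Omega$, which governs both the contraction rate and the error amplification; the required mean-parameter accuracy is $\epsFour=\Theta(\alpha_2^2\qs/(k^2d\thetaMax\phiMax))$, and substituting this into the sample bound of Proposition \ref{proposition:learning conditional mean} yields $c_2(\alpha_2)$. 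Finally, the paper's probability budget is slightly finer ($\alpha_2^4/4$ for structure, $\alpha_2^4/4$ for the Lasso/Markov step, $\alpha_2^4/4$ for edges, plus a residual $k\epsFour^2/2\leq\alpha_2^4/4$), and the per-node results already hold uniformly in $i$, so no extra $p$-fold union bound is needed.
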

The proof of Lemma \ref{lemma:recover-node-terms} is given in Appendix \ref{sec:proof of lemma:recover-node-terms}.

\section{Proof of Theorem \ref{theorem:structure}}
\label{sec:proof of theorem:structure}
In this section, we prove Theorem \ref{theorem:structure}. 
See Appendix \ref{subsec:error bound on edge-wise parameter estimation with grise} for the key lemma required in the proof.\\

Recall the definitions of $\g = \thetaMax(k+k^2d)$, $\varphiMax = (1+\Xupper)  \max\{\phiMax,\phiMax^2\}$ and $c_1(\alpha)$ from Section \ref{sec:problem setup}.

\begin{proof}[Proof of Theorem \ref{theorem:structure}]
	The graph $\hG = ([p], \hat{E})$ is such that:
	\begin{align}
	\hat{E} & = \bigg\{ (i,j): i < j \in [p], \Big( \sum_{ r,s \in [k] } \Indicator\{| \hthetaIJrs | > \thetaMin/3 \} \Big) > 0 \bigg\}.
	\end{align}
	The graph $G(\bthetaStar) = ([p], E(\bthetaStar))$ is such that $E(\bthetaStar) = \{ (i,j): i < j \in [p], \| \bthetaIJStar \|_0 > 0 \}$.
	
	Let the number of samples satisfy 
	\begin{align}
	n \geq  c_1\Big(\frac{\thetaMin}{3}\Big)  \log\bigg(\frac{2pk}{\sqrt{\delta}}\bigg)
	\end{align}
	Recall that $\hbvthetaIEps \in \parameterSet$ is an $\epsilon$-optimal solution of GRISE and $\hbvthetaIEpsEdge$ is the component of $\hbvthetaIEps$ associated with the edge potentials. 
	Using Lemma \ref{lemma:recover-edge-terms} with $\alpha_1 = \thetaMin/3$ and any $\delta \in (0,1)$, we have with probability at least $1-\delta$,
	\begin{align}
	\| \bvthetaIEdgeStar - \hbvthetaIEpsEdge \|_{2} \leq & \frac{\thetaMin}{3}, \hspace{1cm} \forall i \in [p] \\
	\implies \| \bvthetaIEdgeStar - \hbvthetaIEpsEdge \|_{\infty} \stackrel{(a)}{\leq} & \frac{\thetaMin}{3}, \hspace{1cm} \forall i \in [p]  \label{eq:infinity-error-bound-edge-wise}
	\end{align}
	where $(a)$ follows because $\| \svbv \|_{\infty} \leq \| \svbv \|_{2}$ for any vector $\svbv$. 
	
	From Section \ref{sec:problem setup}, we have $\| \bvthetaIStar  \|_{\min_+} \geq \thetaMin$. 
	This implies that $\| \bvthetaIEdgeStar  \|_{\min_+} \geq \thetaMin$. 
	Combining this with \eqref{eq:infinity-error-bound-edge-wise}, we have with probability at least $1-\delta$,
	\begin{align}
	\thetaIJrsStar = 0 \iff | \hthetaIJrs | \leq \thetaMin/3, \hspace{1cm} \forall i \in [p], \forall j \in [p] \setminus \{i\}, \forall r,s \in [k].
	\end{align}
	Therefore, with probability at least $1-\delta$, $E(\bthetaStar) = \hat{E}$. 
	
	Further, from Lemma \ref{lemma:recover-edge-terms}, the number of computations required for generating $ \hbvthetaIEpsEdge$ scale as $\bcO(p^2)$. Also, the number of computations required for generating $\hat{E}$ scale as $O(p^2)$. Therefore, the overall computational complexity is $\bcO(p^2)$.
\end{proof}

\section{Proof of Theorem \ref{theorem:parameter}}
\label{sec:proof of theorem:parameter}
In this section, we prove Theorem \ref{theorem:parameter}. 
See Appendix \ref{subsec:error bound on edge-wise parameter estimation with grise} and Appendix \ref{subsec:error bound on node-wise parameter estimation} for two key lemmas required in the proof.\\

Recall the definitions of $\g = \thetaMax(k+k^2d)$, $\varphiMax = (1+\Xupper)  \max\{\phiMax,\phiMax^2\}$ and $c_1(\alpha)$ from Section \ref{sec:problem setup}.
\begin{proof}[Proof of Theorem \ref{theorem:parameter}]
	Let the number of samples satisfy
	\begin{align}
	n \geq \max\Big[ c_1\bigg(\min\bigg\{ \frac{\thetaMin}{3}, \alpha, \frac{\alpha}{2^{\frac54} d k \phiMax } \bigg\} \bigg) \log \bigg( \frac{2^{5/2} p k }{\alpha^2} \bigg), c_2(2^{-\frac14} \alpha) \Big]
	\end{align}
	For each $i \in [p], \hbthetaI$ is the estimate of node parameters obtained through robust Lasso.
	Using Lemma \ref{lemma:recover-node-terms} with $\alpha_2 = 2^{-\frac14} \alpha$, the following holds with probability at least $1 - \alpha^4/2 $,
	\begin{align}
	\|  \bthetaIStar - \hbthetaI\|_{\infty} & \leq 2^{-\frac14} \alpha, \hspace{1cm} \forall i \in [p] \\
	\implies \|  \bthetaIStar - \hbthetaI\|_{\infty} & \leq \alpha, \hspace{1cm} \forall i \in [p]	 \label{eq:error-bound-node-wise}
	\end{align}
	
	For each $i \in [p]$, $\hbvthetaIEps$ is an $\epsilon$-optimal solution of \eqref{eq:GRISE} and $\hbvthetaIEpsEdge = (\hat{\theta}_{ij}, j \neq i, j \in [p])$ is the estimate of edge parameters associated with node $i$. 
	Using Lemma \ref{lemma:recover-edge-terms} with $\alpha_1 = \alpha$ and $\delta = \alpha^4 / 2$, the following holds with probability at least $1 - \alpha^4 / 2 $,
	\begin{align}
	\| \bvthetaIEdgeStar - \hbvthetaIEpsEdge \|_{2} & \leq \alpha, \hspace{1cm} \forall i \in [p] \\
	\implies \| \bvthetaIEdgeStar - \hbvthetaIEpsEdge \|_{\infty} & \stackrel{(a)}{\leq} \alpha, \hspace{1cm} \forall i \in [p]  \label{eq:error-bound-edge-wise}
	\end{align}
	where $(a)$ follows because $\| \svbv \|_{\infty} \leq \| \svbv \|_{2}$ for any vector $\svbv$.
	
	Now $\hbtheta$ is the estimate of $\bthetaStar$ obtained after appropriately concatenating $\hbthetaI$ and $\hbvthetaIEpsEdge$ $\forall i \in [p]$. 
	Combining \eqref{eq:error-bound-node-wise} and \eqref{eq:error-bound-edge-wise}, we have
	\begin{align}
	\| \hbtheta - \bthetaStar \|_{\infty} \leq \alpha
	\end{align}
	with probability at least $1 - \alpha^4 $.
	Further, combining the computations from Lemma \ref{lemma:recover-edge-terms} and Lemma \ref{lemma:recover-node-terms}, the total number of computations scale as $\bcO(p^{2})$.
\end{proof}

\section{GISO: Special instance of the penalized surrogate likelihood}
\label{connections to the penalized surrogate likelihood}
In this section, we show that the GISO is a special case of the penalized surrogate likelihood introduced by Jeon et al. (2006) \cite{JeonL2006}. In other words, we provide the proof of Proposition \ref{prop:GISO_penalized_likelihood_connection}.

Consider nonparametric density estimation where densities are of the form $f_{\rvbx}(\svbx) = e^{\eta(\svbx)} / \int e^{\eta(\svbx)} d\svbx$ from i.i.d samples $\svbx^{(1)}, \cdots, \svbx^{(n)}$. 
To circumvent the computational limitation of the exact likelihood-based functionals, Jeon et al. (2006) \cite{JeonL2006} proposed to minimize penalized surrogate likelihood. The surrogate likelihood is defined as follows:
\begin{align}
\cl_n(\eta)  = \frac{1}{n} \sum_{t = 1}^{n} \exp\Big( -\eta(\svbx^{(t)}) \Big) + \int_{\svbx} \rho(\svbx) \times  \eta(\svbx)d\svbx
\end{align}
where $\rho(\cdot)$ is some known probability density function. As Proposition \ref{prop:GISO_penalized_likelihood_connection} establishes, GISO is a special case of the surrogate likelihood.
\begin{proof}[Proof of Proposition \ref{prop:GISO_penalized_likelihood_connection}]
	Recall that the conditional density of $\rvx_i$ given $\rvx_{-i}  = x_{-i}$ is as follows:
	\begin{align}
	\ConditionalDensityNodeI & \propto \exp \Big( \bvthetaIStarT \bvphiI(x_i; x_{-i})  \Big).
	\end{align}
	For a given $\rvx_{-i}  = x_{-i}$, estimation of the conditional density of $\rvx_i$ is equivalent to estimating $\bvthetaIStar$.
	
	For any $\bvtheta \in \Reals^{k+k^2(p-1)}$, let us denote the surrogate likelihood associated with the conditional density of $\rvx_i$ by $\cl_{n}^{(i)}(\bvtheta)$. We have
	\begin{align}
	\cl_{n}^{(i)}(\bvtheta)  = \frac{1}{n} \sum_{t = 1}^{n} \exp\Big( -\bvthetaT \bvphiI(\svbx^{(t)}) \Big) + \int_{x_i \in \cX_i} \rho(x_i)  \times \Big(\bvthetaT \bvphiI(x_i ; x_{-i}) \Big) dx_i, \label{eq:penalized surrogate likelihood_conditional_density}
	\end{align}
	Let $\rho(\cdot)$ be the uniform density over $\cX_i$. Recall that the basis functions, $\bvphiI(x_i ; x_{-i})$, are locally centered and their integral with respect to $x_i$ is 0. Therefore, \eqref{eq:penalized surrogate likelihood_conditional_density} can be written as
	\begin{align}
	\cl_{n}^{(i)}(\bvtheta)  = \frac{1}{n} \sum_{t = 1}^{n} \exp\Big( -\bvthetaT \bvphiI(\svbx^{(t)}) \Big) = \cS_{n}^{(i)}(\bvtheta). \label{eq:penalized surrogate likelihood GISO connection}
	\end{align}
\end{proof} 
As we see in the proof above, the equivalence between the GISO and the surrogate likelihood occurs only the integral in \eqref{eq:penalized surrogate likelihood_conditional_density} is zero. As stated in Jeon et al. (2006) \cite{JeonL2006}, $\rho(\cdot)$ can be chosen to be equal to any known density and the choice typically depends on mathematical simplicity. Therefore, this provides a motivation to locally center the basis functions to simplify the exposition.

\section{Supporting propositions for Lemma \ref{lemma:recover-edge-terms}}
\label{sec:supporting propositions for lemma:recover-edge-terms}
In this section, we will state the two key propositions required in the proof of Lemma \ref{lemma:recover-edge-terms}.
The proof of Lemma \ref{lemma:recover-edge-terms} is given in Appendix \ref{sec:proof of lemma:recover-edge-terms}.\\

Recall the definitions of $\g = \thetaMax(k+k^2d)$ and $\varphiMax = (1+\Xupper)  \max\{\phiMax,\phiMax^2\}$ from Section \ref{sec:problem setup}. 
For any $i \in [p]$, let $\nabla \cS_{n}^{(i)}(\bvthetaIStar) $ denote the gradient of the GISO for node $i$ evaluated at $\bvthetaIStar$.

\subsection{Bounds on the gradient of the GISO} 
\label{subsec:bounds on the gradient of the giso}
The following proposition shows that, with enough samples, the $\ell_{\infty}$-norm of the gradient of the GISO is bounded with high probability.
\begin{proposition} \label{prop:gradient-concentration-GISO}
	Consider any $i \in [p]$. For any $\deltaOne \in (0,1)$, any $\epsOne > 0$, the components of the gradient of the GISO are bounded from above as 
	\begin{align}
	\| \nabla \cS_{n}^{(i)}(\bvthetaIStar) \|_{\infty} \leq \epsOne
	\end{align}
	with probability at least $1 - \deltaOne$ as long as
	\begin{align}
	n > \frac{2\varphiMax^2 \exp(2\g\varphiMax)}{\epsOne^2} \log\bigg(\frac{2p^2k^2}{\deltaOne}\bigg) ~=~ 
	\Omega\Bigg(\frac{\exp(\Theta(k^2 d))}{\epsOne^2} \log\bigg(\frac{pk}{\sqrt{\deltaOne}}\bigg) \Bigg).	
	\end{align}
\end{proposition}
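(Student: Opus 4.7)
The plan is a standard Hoeffding-plus-union-bound argument on the coordinates of the gradient. Fix $i \in [p]$. Differentiating \eqref{eq:GISO-main} coordinate-wise, for each $l \in [k + k^2(p-1)]$,
\begin{align}
\bigl[\nabla \cS_{n}^{(i)}(\bvthetaIStar)\bigr]_l \;=\; \frac{1}{n}\sum_{t=1}^{n} Y_{t,l}, \qquad Y_{t,l} \;\coloneqq\; -\,\bvphiI_l(\svbx^{(t)})\exp\!\bigl(-\bvthetaIStarT \bvphiI(\svbx^{(t)})\bigr).
\end{align}
The first step is to observe that the random variables $Y_{1,l},\dots,Y_{n,l}$ are i.i.d.\ copies of a mean-zero random variable. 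Their zero mean comes directly from Lemma \ref{lemma:expectation-gradient-GISO} (which, as used already in the normality argument of Theorem \ref{theorem:GRISE-consistency-efficiency}, states that $\Expectation\bigl[\bvphiI_l(\rvbx)\exp(-\bvthetaIStarT\bvphiI(\rvbx))\bigr] = 0$).

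The second step is a uniform deterministic bound on $|Y_{t,l}|$. Combining the bound on locally centered basis functions from \eqref{eq:basisUpperBound}, namely $\|\bvphiI(\svbx)\|_\infty \leq \varphiMax$, with the exponential bound \eqref{eq:boundsExp}, $\exp(-\bvthetaIStarT \bvphiI(\svbx)) \leq \exp(\g\varphiMax)$, yields
\begin{align}
|Y_{t,l}| \;\leq\; \varphiMax \exp(\g\varphiMax) \quad \text{almost surely}.
\end{align}
Hoeffding's inequality for a sum of i.i.d.\ bounded mean-zero random variables then gives, for any $\epsOne > 0$,
\begin{align}
\Prob\!\left(\bigl|\bigl[\nabla \cS_{n}^{(i)}(\bvthetaIStar)\bigr]_l\bigr| > \epsOne\right) \;\leq\; 2\exp\!\left(-\frac{n\,\epsOne^{2}}{2\,\varphiMax^{2}\exp(2\g\varphiMax)}\right).
\end{align}

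The third step is a union bound over all coordinates $l$ of $\bvthetaIStar$ (there are at most $k^{2}p$ of them, which in particular is at most $p^{2}k^{2}$). Requiring the right-hand side above to be at most $\deltaOne/(p^{2}k^{2})$ and rearranging gives the stated sample-size threshold
\begin{align}
n \;>\; \frac{2\,\varphiMax^{2}\exp(2\g\varphiMax)}{\epsOne^{2}}\,\log\!\left(\frac{2p^{2}k^{2}}{\deltaOne}\right),
\end{align}
after which $\|\nabla \cS_{n}^{(i)}(\bvthetaIStar)\|_\infty \leq \epsOne$ holds with probability at least $1-\deltaOne$. There is no real obstacle here; the only mild point to be careful about is invoking the mean-zero property of $Y_{t,l}$ (from Lemma \ref{lemma:expectation-gradient-GISO}) before applying Hoeffding, and tracking that the constant $\varphiMax\exp(\g\varphiMax)$ gives the variance proxy $\varphiMax^{2}\exp(2\g\varphiMax)$ that appears in the bound.
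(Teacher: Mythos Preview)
Your proof is correct and follows exactly the paper's approach: zero mean from Lemma \ref{lemma:expectation-gradient-GISO}, the deterministic bound $|Y_{t,l}|\le\varphiMax\exp(\g\varphiMax)$ from \eqref{eq:basisUpperBound}--\eqref{eq:boundsExp}, Hoeffding, then a union bound. The only nuance is that the paper takes the union bound over both $i\in[p]$ and $l\in[k+k^2(p-1)]$ (so the $p^2k^2$ inside the logarithm arises as $p\cdot k^2p$), whereas you fix $i$ and reach $p^2k^2$ via the slack $k^2p\le p^2k^2$; either route establishes the stated bound.
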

The proof of proposition \ref{prop:gradient-concentration-GISO} is given in Appendix \ref{sec:proof of proposition:gradient-concentration-giso}.

\subsection{Restricted Strong Convexity for GISO} 
\label{subsec:restricted strong convexity for giso}
Consider any $\bvtheta \in \parameterSet$. 
Let $\Delta = \bvtheta - \bvthetaIStar$. 
Define the residual of the first-order Taylor expansion as 
\begin{align}
\delta \cS_{n}^{(i)}(\Delta, \bvthetaIStar) = \cS_{n}^{(i)}(\bvthetaIStar + \Delta) - \cS_{n}^{(i)}(\bvthetaIStar)  - \langle\nabla \cS_{n}^{(i)}(\bvthetaIStar),\Delta \rangle.  \label{eq:residual}
\end{align}
Recall that $\bvthetaIEdgeStar$ denote the component of $\bvthetaIStar$ associated with the edge potentials. Let $\bvthetaEdge$ denote the component of $\bvtheta$ associated with the edge potentials and let $\Delta_{E}$ denote the component of $\Delta$ associated with the edge potentials i.e., $\Delta_{E} = \bvthetaEdge - \bvthetaIEdgeStar$.

The following proposition shows that, with enough samples, the GISO obeys a property analogous to the restricted strong convexity with high probability.
\begin{proposition}\label{prop:rsc_giso}
	Consider any $i \in [p]$. For any $\deltaTwo \in (0,1)$, any $\epsTwo > 0$, the residual of the first-order Taylor expansion of the GISO satisfies
	\begin{align}
	\delta \cS_{n}^{(i)}(\Delta, \bvthetaIStar) \geq \exp(-\g \varphiMax)\frac{ \frac{\kappa}{2\pi e (d+1)}  \|\Delta_{E}\|_2^2 - \epsTwo \|\Delta\|_1^2}{2 +\varphiMax \|\Delta\|_1}.
	\end{align}
	with probability at least $1-\deltaTwo$ as long as 
	\begin{align}
	n > \frac{2 \varphiMax^2}{\epsTwo^2}\log\Big(\frac{2p^3 k^4}{\deltaTwo}\Big) ~=~ 
	\Omega\Bigg(\frac{1}{\epsTwo^2} \log\bigg(\frac{p^3k^4}{\deltaTwo}\bigg) \Bigg).	
	\end{align}
\end{proposition}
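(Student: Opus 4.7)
The plan is to reduce the Taylor residual to a sample second-moment form, establish a population-level lower bound on that second moment via Condition~\ref{condition:1} and an independent-set coloring argument, and then pass from population to sample via a Hoeffding-based concentration. First I would use the elementary inequality $e^{-a}-1+a\ge a^2/(2+|a|)$ applied pointwise with $a_t = \Delta^T\bvphiI(\svbx^{(t)})$. Combined with the uniform bound $|a_t|\le \varphiMax\|\Delta\|_1$ from~\eqref{eq:basisUpperBound} and $\exp(-\bvthetaIStarT\bvphiI(\svbx^{(t)}))\ge \exp(-\g\varphiMax)$ from~\eqref{eq:boundsExp}, this yields
$$\delta\cS_n^{(i)}(\Delta,\bvthetaIStar) \;\ge\; \frac{\exp(-\g\varphiMax)}{2+\varphiMax\|\Delta\|_1}\cdot \frac{1}{n}\sum_{t=1}^n \bigl(\Delta^T\bvphiI(\svbx^{(t)})\bigr)^2,$$
so the proof reduces to lower bounding this sample quadratic form by $\frac{\kappa}{2\pi e(d+1)}\|\Delta_E\|_2^2 - \epsTwo\|\Delta\|_1^2$.

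The key step is the population-level inequality $\Expectation[(\Delta^T\bvphiI(\rvbx))^2] \ge \frac{\kappa}{2\pi e(d+1)}\|\Delta_E\|_2^2$. Since $G(\bthetaStar)\setminus\{i\}$ has maximum degree at most $d$, greedy coloring produces a partition $[p]\setminus\{i\} = C_1\cup\cdots\cup C_{d+1}$ into independent sets, all avoiding $i$. Fixing any $l$ and conditioning on $\rvx_{-C_l}$ fixes $\rvx_i$ together with every $\rvx_j$ for $j\notin C_l$, so only the terms $\Delta_{E,j}^T\bpsiI(\rvx_i,\rvx_j)$ with $j\in C_l$ remain random. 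Because $C_l$ is an independent set, the Markov property makes these summands mutually conditionally independent given $\rvx_{-C_l}$, so the conditional variance splits as a sum. Using the Gaussian maximum-entropy bound $\mathrm{Var}(X|Y)\ge e^{2\Entropy(X|Y)}/(2\pi e)$ together with the entropy monotonicity $\Entropy(\cdot|\rvx_{-C_l})\ge \Entropy(\cdot|\rvx_{-j})$ (since $\rvx_{-C_l}$ conditions on strictly fewer variables than $\rvx_{-j}$ for $j\in C_l$), each summand can be lower bounded via Condition~\ref{condition:1}, giving $\Expectation[\mathrm{Var}(\Delta_{E,j}^T\bpsiI(\rvx_i,\rvx_j)|\rvx_{-C_l})] \ge \tfrac{\kappa}{2\pi e}\|\Delta_{E,j}\|_2^2$. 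Since $\Expectation[(\Delta^T\bvphiI)^2]\ge \Expectation[\mathrm{Var}(\Delta^T\bvphiI|\rvx_{-C_l})]$ for every $l$, averaging over $l=1,\dots,d+1$ yields the claim.

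For the passage to the sample, I would write $\frac{1}{n}\sum_t(\Delta^T\bvphiI(\svbx^{(t)}))^2 = \Delta^T M_n\Delta$ with $M_n = \frac{1}{n}\sum_t \bvphiI(\svbx^{(t)})\bvphiI(\svbx^{(t)})^T$ and population analog $M = \Expectation[\bvphiI(\rvbx)\bvphiI(\rvbx)^T]$. The deterministic inequality $|\Delta^T(M_n-M)\Delta|\le \|M_n-M\|_{\max}\|\Delta\|_1^2$ decouples $\Delta$ from the randomness, so no covering net over $\Delta$ is required. Each entry of $M_n-M$ is a centered average of bounded i.i.d.\ variables (magnitude at most $\varphiMax^2$), so Hoeffding's inequality with a union bound over the $O(p^2 k^4)$ matrix entries yields $\|M_n-M\|_{\max}\le \epsTwo$ with probability at least $1-\deltaTwo$ under the stated sample size. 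Chaining this with the population RSC bound and the Step~1 reduction produces the stated inequality.

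The main obstacle is the coloring argument. Applying Condition~\ref{condition:1} to a single index $j$ would only yield a bound proportional to $\|\Delta_{E,j}\|_2^2$, losing a factor of order $p$ and preventing any meaningful $\ell_\infty$-recovery result. The independent-set coloring is what converts these per-edge bounds into the \emph{additive} aggregate $\|\Delta_E\|_2^2$ while paying only the $1/(d+1)$ penalty that matches the information-theoretic scaling. The two crucial structural inputs are (i) that $i\notin C_l$, which keeps $\rvx_i$ fixed so that the conditional-on-$\rvx_{-C_l}$ variance is a clean sum over the edges incident to $i$ inside $C_l$, and (ii) the monotonicity of conditional entropy, which upgrades Condition~\ref{condition:1} from its stated $\rvx_{-j}$-conditioning to the coarser $\rvx_{-C_l}$-conditioning needed by the argument.
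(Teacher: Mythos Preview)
Your reduction via $e^{-a}-1+a\ge a^2/(2+|a|)$ and your concentration step (entrywise Hoeffding plus the decoupling $|\Delta^T(M_n-M)\Delta|\le\|M_n-M\|_{\max}\|\Delta\|_1^2$) match the paper exactly. For the population lower bound you use a $(d{+}1)$-coloring of $G\setminus\{i\}$ and average the per-color bounds; the paper instead builds a single \emph{weighted} independent set $\cR$ by greedily selecting the index $j$ with the largest remaining $\|\Delta^{(ij)}\|_2$, which directly guarantees $\sum_{j\in\cR}\|\Delta^{(ij)}\|_2^2\ge\frac{1}{d+1}\|\Delta_E\|_2^2$ and requires only one conditioning. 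Both routes are valid and yield the same constant.

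However, your bridge from $\rvx_{-C_l}$-conditioning to Condition~\ref{condition:1} via ``entropy monotonicity'' is a genuine gap. Concavity of differential entropy gives only the pointwise mixture bound $\Entropy(X\mid Y{=}y)\ge\Expectation_{W\mid Y=y}\bigl[\Entropy(X\mid Y{=}y,W{=}w)\bigr]$; pushing this through $e^{2(\cdot)}$ would need Jensen in the wrong direction, and the implied inequality $\Expectation\bigl[e^{2\Entropy(X\mid Y)}\bigr]\ge\Expectation\bigl[e^{2\Entropy(X\mid Y,W)}\bigr]$ is false in general. For instance, take $Y$ trivial, $W\sim\mathrm{Bern}(1/2)$, $X\mid W{=}0\sim U[0,1]$, $X\mid W{=}1\sim U[0,L]$: the right side equals $\tfrac12(1+L^2)$ while the left side is $O(L)$. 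The correct argument---which the paper uses in its steps (e)--(f)---is that for $j\in C_l$ the global Markov property forces the conditional law of $\rvx_j$ given $\rvx_{-C_l}$ to \emph{equal} its law given $\rvx_{-j}$, since both reduce to conditioning on $\rvx_{\cN(j)}\subseteq[p]\setminus C_l$. The two conditional entropies (and variances) are therefore equal as random variables, not merely ordered, and Condition~\ref{condition:1} applies directly with no monotonicity appeal. With that one-line replacement your coloring proof goes through.
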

The proof of proposition \ref{prop:rsc_giso} is given in Appendix \ref{sec:proof of proposition:rsc_giso}.

\section{Proof of Lemma \ref{lemma:recover-edge-terms}}
\label{sec:proof of lemma:recover-edge-terms}
In this section, we prove Lemma \ref{lemma:recover-edge-terms}.
See Appendix \ref{subsec:bounds on the gradient of the giso} and Appendix \ref{subsec:restricted strong convexity for giso} for two key propositions required in the proof.\\

Recall the definitions of $\g = \thetaMax(k+k^2d)$, $\varphiMax = (1+\Xupper)  \max\{\phiMax,\phiMax^2\}$ and $c_1(\alpha)$ from Section \ref{sec:problem setup} and the definition of $c_3(\alpha)$ from Section \ref{sec:supporting lemmas for theorem}.
Recall that $\hbvthetaIEps$ is an $\epsilon$-optimal solution of the GISO.

For any $i \in [p]$, let $\nabla \cS_{n}^{(i)}(\bvthetaIStar) $ denote the gradient of the GISO for node $i$ evaluated at $\bvthetaIStar$.
Define $\Delta = \hbvthetaIEps - \bvthetaIStar$ and let $\Delta_{E}$ denote the component of $\Delta$ associated with the edge potentials i.e., $\Delta_{E} = \hbvthetaIEpsEdge - \bvthetaIEdgeStar$.
Recall from \eqref{eq:residual} that $\delta \cS_{n}^{(i)}(\Delta, \bvthetaIStar)$ denotes the residual of the first-order Taylor expansion.
\begin{proof}[Proof of Lemma~\ref{lemma:recover-edge-terms}]
	Consider any $i \in [p]$. Let the number of samples satisfy
	\begin{align}
	n \geq  c_1(\alpha_1) \times \log\bigg(\frac{2pk}{\sqrt{\delta}}\bigg)
	\end{align}
	We have from \eqref{eq:eps-opt-GRISE}
	\begin{align}
	\epsilon &\geq \cS_{n}^{(i)}(\hbvthetaIEps) - \min_{\bvtheta \in \parameterSet: \|\bvtheta\| \leq \g} \cS_{n}^{(i)}(\bvtheta)  \\
	&\stackrel{(a)}{\geq} \cS_{n}^{(i)}(\hbvthetaIEps) - \cS_{n}^{(i)}(\bvthetaIStar) \\
	&\stackrel{(b)}{=} \langle\nabla \cS_{n}^{(i)}(\bvthetaIStar),\Delta \rangle +  \delta \cS_{n}^{(i)}(\Delta, \bvthetaIStar)\\
	&\geq - \|\nabla \cS_{n}^{(i)}(\bvthetaIStar) \|_{\infty} \|\Delta\|_{1} + \delta \cS_{n}^{(i)}(\Delta, \bvthetaIStar).
	\end{align}
	where $(a)$ follows because $\bvthetaIStar \in \parameterSet$ and $\|\bvthetaIStar\| \leq \g$ and $(b)$ follows from \eqref{eq:residual}.
	Using the union bound on Proposition \ref{prop:gradient-concentration-GISO} and Proposition \ref{prop:rsc_giso} with $\deltaOne=\frac{\delta}{2}$ and $\deltaTwo=\frac{\delta}{2}$ respectively, we have with probability at least $1-\delta$,
	\begin{align}
	\epsilon \geq - \epsOne \|\Delta\|_{1} + \exp(-\g \varphiMax)\frac{ \frac{\kappa}{2\pi e (d+1)}  \|\Delta_{E}\|_2^2 - \epsTwo \|\Delta\|_1^2}{2 +\varphiMax \|\Delta\|_1}
	\end{align}
	This can be rearranged as
	\begin{align}
	\|\Delta_{E}\|_2^2 \leq \frac{2\pi e (d+1)}{\kappa} \bigg[ \exp(-\g \varphiMax) \times \Big( \epsilon + \epsOne\|\Delta\|_1 \Big) \times \Big( 2 + \varphiMax\|\Delta\|_1  \Big) + \epsTwo\|\Delta\|_1^2 \bigg]
	\end{align}
	Using $\|\bvthetaIStar\|_1 \leq \g$, $\|\hbvthetaIEps\|_1 \leq \g$ and the triangle inequality, we see that $\|\Delta\|_1$ is bounded by $2\g$. 
	By choosing 
	\begin{align}
	\epsilon \leq \frac{\kappa \alpha_1^{2} \exp(-\g \varphiMax)}{ 16 \pi e (d+1) (1+\varphiMax \g)}, \epsOne \leq \frac{\kappa \alpha_1^{2} \exp(-\g \varphiMax)}{ 32 \pi e (d+1) \g (1+\varphiMax \g)}, \epsTwo \leq \frac{\kappa \alpha_1^{2}}{16\pi e (d+1) \g^2},
	\end{align}
	and after some algebra, we obtain that
	\begin{align}
	\|\Delta_{E}\|_2 \leq \alpha_1.
	\end{align}
	Using Proposition \ref{prop:computational_complexity_entropic_descent}, the number of computations required to compute $\hbvthetaIEps$ scale as
	\begin{align}
	\frac{k^2 \g^2\varphiMax^2\exp(2\g\varphiMax) np}{\epsilon^2} \times \log{(2k^2p)}
	\end{align}
	Substituting for $\epsilon$, $n$ and observing that we need to compute the $\epsilon$-optimal estimate for every node, the total number of computations scale as
	\begin{align}
	c_3(\alpha_1)  \times \log\bigg(\frac{2pk}{\sqrt{\delta}}\bigg) \times \log{(2k^2p)}  \times p^2
	\end{align}
\end{proof}

\section{Proof of Proposition \ref{prop:gradient-concentration-GISO}}
\label{sec:proof of proposition:gradient-concentration-giso}
In this section, we prove Proposition \ref{prop:gradient-concentration-GISO}.

Recall the definitions of $\g = \thetaMax(k+k^2d)$ and $\varphiMax = (1+\Xupper)  \max\{\phiMax,\phiMax^2\}$ from Section \ref{sec:problem setup}.
Also, recall the definition of GISO from \eqref{eq:GISO-main}.

For any $l \in [k+k^2(p-1)]$, let $\bvthetaIStar_{l}$ denote the $l^{th}$ component of $\bvthetaIStar$ and $\bvphiI_{l}(x_i^{(t)} ; x_{-i}^{(t)})$ denote the $l^{th}$ component of $\bvphiI(x_i^{(t)} ; x_{-i}^{(t)})$. Define the following random variable:
\begin{align}
\rvx_{i,l} \coloneqq - \bvphiI_{l}(\rvx_i ; \rvx_{-i})\exp\Big( -\bvthetaIStarT \bvphiI(\rvx_i ; \rvx_{-i}) \Big)  \label{eq:distribution-gradient-GISO}
\end{align}

\subsection{Supporting Lemma for Proposition \ref{prop:gradient-concentration-GISO}}
\label{supporting lemma for proposition:gradient-concentration-GISO}
The following Lemma shows that the expectation of the random variable $\rvx_{i,l}$ defined above is zero.

\begin{lemma} \label{lemma:expectation-gradient-GISO}
	For any $i \in [p]$ and $l \in [k+k^2(p-1)]$, we have
	\begin{align}
	\Expectation[\rvx_{i,l} ] = 0
	\end{align}
	where the expectation is with respect to $\DensityParametrizedTrue$.
\end{lemma}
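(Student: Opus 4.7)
The plan is to use the tower property of conditional expectation, conditioning on $\rvx_{-i}$. Concretely, we write
\begin{align}
\Expectation[\rvx_{i,l}] = \Expectation\Big[ \Expectation[\rvx_{i,l} \mid \rvx_{-i}] \Big],
\end{align}
and show that the inner conditional expectation is identically zero. Once this is done, the outer expectation is trivially zero.

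To compute $\Expectation[\rvx_{i,l} \mid \rvx_{-i} = x_{-i}]$, I would plug in the explicit form of the conditional density of $\rvx_i$ given $\rvx_{-i} = x_{-i}$ derived in \eqref{eq:conditionalDensity3}, namely
\begin{align}
f_{\rvx_i}(x_i \mid \rvx_{-i} = x_{-i}; \bvthetaIStar) = \frac{\exp\big(\bvthetaIStarT \bvphiI(x_i; x_{-i})\big)}{\int_{x_i' \in \cX_i} \exp\big(\bvthetaIStarT \bvphiI(x_i'; x_{-i})\big) dx_i'}.
\end{align}
Substituting and using the definition of $\rvx_{i,l}$ from \eqref{eq:distribution-gradient-GISO}, the factor $\exp(-\bvthetaIStarT \bvphiI(x_i; x_{-i}))$ in $\rvx_{i,l}$ cancels precisely with the numerator $\exp(\bvthetaIStarT \bvphiI(x_i; x_{-i}))$ of the conditional density. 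This leaves
\begin{align}
\Expectation[\rvx_{i,l} \mid \rvx_{-i} = x_{-i}] = - \frac{\int_{x_i \in \cX_i} \bvphiI_l(x_i; x_{-i}) \, dx_i}{\int_{x_i' \in \cX_i} \exp\big(\bvthetaIStarT \bvphiI(x_i'; x_{-i})\big) dx_i'}.
\end{align}

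Finally, I invoke the locally centered property of the basis functions: for each $i$, the functions $\bphiI(\cdot)$ and $\bpsiI(\cdot, \cdot)$ are designed so that $\int_{x_i \in \cX_i} \bvphiI_l(x_i; x_{-i}) dx_i = 0$ for every coordinate $l$ and every fixed $x_{-i}$ (this is the key property stated immediately after \eqref{eq:centeredBasisFunctions2} and exploited in the proof of Theorem~\ref{theorem:GRISE-KLD}). Hence the numerator vanishes, the conditional expectation is zero for every $x_{-i}$, and the tower property yields $\Expectation[\rvx_{i,l}] = 0$. I expect no real obstacle here, since the cancellation between the conditional density and the inverse Boltzmann weight in $\rvx_{i,l}$ is exact, and the locally centered construction was chosen precisely to make this moment vanish.
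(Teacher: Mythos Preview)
Your proposal is correct and takes essentially the same approach as the paper: both arguments factor the joint expectation through the conditional distribution of $\rvx_i$ given $\rvx_{-i}$, observe the exact cancellation between $\exp(-\bvthetaIStarT \bvphiI)$ and the numerator of the conditional density from \eqref{eq:conditionalDensity3}, and then invoke the locally centered property to kill the remaining integral $\int_{\cX_i} \bvphiI_l(x_i; x_{-i})\,dx_i$. Your use of the tower property is arguably a cleaner way to phrase the same computation the paper writes out as a single integral decomposed via Bayes' rule.
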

\begin{proof}[Proof of Lemma \ref{lemma:expectation-gradient-GISO}]
	Fix $i \in [p]$ and $l \in [k+k^2(p-1)]$. Using $\eqref{eq:distribution-gradient-GISO}$ and Bayes theorem, we have
	\begin{align}
	\Expectation[\rvx_{i,l} ] & = - \int_{\svbx \in \cX} \bvphiI_{l}(x_i ; x_{-i})\exp\Big( -\bvthetaIStarT \bvphiI(x_i ; x_{-i}) \Big)  \ConditionalDensityNodeI \DensityParametrizedNotITrue d\svbx
	\end{align}
	Using \eqref{eq:conditionalDensity3} results in
	\begin{align}
	\Expectation[\rvx_{i,l} ] & = \frac{- \int_{\svbx \in \cX} \bvphiI_{l}(x_i ; x_{-i})  \DensityParametrizedNotITrue d\svbx}{\int_{x_i \in \cX_i} \exp \Big( \bvthetaIStarT \bvphiI(x_i ; x_{-i})  \Big)  dx_i}
	\end{align}
	Recall the fact that the basis functions are locally centered with respect to $\rvx_i$ and their integral is zero. Therefore, $\Expectation[\rvx_{i,l} ] = 0$.
\end{proof}

\subsection{Proof of Proposition \ref{prop:gradient-concentration-GISO}}
\label{subsec:proof of proposition gradient-concentration-GISO}
\begin{proof}[Proof of Proposition \ref{prop:gradient-concentration-GISO}]
	Fix $i \in [p]$ and $l \in [k+k^2(p-1)]$. 
	We start by simplifying the gradient of the GISO evaluated at $\bvthetaIStar$. 
	The $l^{th}$ component of the gradient of the GISO evaluated at $\bvthetaIStar$ is given by
	\begin{align}
	\frac{\partial \cS_{n}^{(i)}(\bvthetaIStar)}{\partial \bvthetaIStar_{l}}  = \frac{1}{n} \sum_{t = 1}^{n} - \bvphiI_{l}(x_i^{(t)} ; x_{-i}^{(t)})\exp\Big( -\bvthetaIStarT \bvphiI(x_i^{(t)} ; x_{-i}^{(t)}) \Big)  \label{eq:gradient-GISO}
	\end{align}
	Each term in the above summation is distributed as the random variable $\rvx_{i,l} $.
	The random variable $\rvx_{i,l}$ has zero mean (Lemma \ref{lemma:expectation-gradient-GISO}) and is bounded as follows:
	\begin{align}
	\Big|\rvx_{i,l}\Big| =  \Big|\bvphiI_{l}(\rvx_i ; \rvx_{-i}) \Big| \times \exp\Big( -\bvthetaIStarT \bvphiI(\rvx_i ; \rvx_{-i}) \Big) \stackrel{(a)}{\leq} \varphiMax \exp(\g \varphiMax)
	\end{align}
	where $(a)$ follows from \eqref{eq:basisUpperBound} and \eqref{eq:boundsExp}. 
	Using the Hoeffding's inequality, we have
	\begin{align}
	\Prob\bigg( \bigg| \frac{\partial \cS_{n}^{(i)}(\bvthetaIStar)}{\partial \bvthetaIStar_{l}}  \bigg| > \epsOne \bigg) < 2\exp\bigg(-\frac{n\epsOne^2}{2\varphiMax^2 \exp(2\g \varphiMax)}\bigg)  \label{eq:Hoeffding-gradient-GISO}
	\end{align}
	The proof follows by using \eqref{eq:Hoeffding-gradient-GISO}, the union bound over all $i \in [p]$ and $l \in [k+k^2(p-1)]$, and the fact that $k + k^2(p-1) \leq k^2p$.
\end{proof}

\section{Proof of Proposition \ref{prop:rsc_giso}}
\label{sec:proof of proposition:rsc_giso}
In this section, we prove Proposition \ref{prop:rsc_giso}.

Recall the definitions of $\g = \thetaMax(k+k^2d)$ and $\varphiMax = (1+\Xupper)  \max\{\phiMax,\phiMax^2\}$ from Section \ref{sec:problem setup}.

For any $l \in [k+k^2(p-1)]$, let $\bvphiI_{l}(\rvx_i ; \rvx_{-i})$ denotes the $l^{th}$ component of $\bvphiI(\rvx_i ; \rvx_{-i})$. For any $\bvtheta \in \parameterSet$, let $\Delta = \bvtheta - \bvthetaIStar$. 
Let $\Delta_{E}$ denote the component of $\Delta$ associated with the edge potentials. 
Recall from \eqref{eq:residual} that $\delta \cS_{n}^{(i)}(\Delta, \bvthetaIStar)$ denotes the residual of the first-order Taylor expansion.

\subsection{Functional inequality}
\label{subsec:functional-inequality}
We start by stating the following deterministic functional inequality derived in Vuffray et al. (2016) \cite{VuffrayMLC2016}.
\begin{lemma}\label{lemma:functional_inequality}
	The following inequality holds for all $z \in \Reals$.
	\begin{align}
	e^{-z} - 1 + z \geq \frac{z^2}{2 + |z|}
	\end{align}
\end{lemma}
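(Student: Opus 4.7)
The plan is to clear denominators and split into two cases according to the sign of $z$. Since $2 + |z| > 0$, the stated inequality is equivalent to showing that
\begin{align}
h(z) \;\coloneqq\; (2+|z|)\bigl(e^{-z} - 1 + z\bigr) - z^2 \;\geq\; 0 \qquad \forall z \in \Reals,
\end{align}
and one checks directly that $h(0) = 0$, so only the sign of $h$ away from the origin needs work.

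First I would handle the case $z \leq 0$, which is the easier one. Writing $u = -z \geq 0$, the desired inequality becomes $(2+u)(e^{u} - 1 - u) \geq u^2$. Here the Taylor expansion $e^{u} - 1 - u = \sum_{n \geq 2} u^n / n!$ has only nonnegative terms, so in particular $e^{u} - 1 - u \geq u^2/2$. Multiplying by $2 + u > 0$ gives $(2+u)(e^u - 1 - u) \geq (2+u) \cdot u^2/2 = u^2 + u^3/2 \geq u^2$, which closes this case.

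The slightly more delicate case is $z \geq 0$, where the series $e^{-z} - 1 + z = \sum_{n \geq 2} (-z)^n/n!$ is alternating and a term-by-term bound no longer suffices. My plan is to set $\phi(z) \coloneqq (2+z)(e^{-z} - 1 + z) - z^2$ on $[0,\infty)$ and show $\phi \geq 0$ by differentiating twice. A direct computation gives
\begin{align}
\phi'(z) &= 1 - (1+z)\,e^{-z}, \\
\phi''(z) &= z\, e^{-z}.
\end{align}
Since $\phi''(z) \geq 0$ for $z \geq 0$, the derivative $\phi'$ is nondecreasing on $[0,\infty)$; combined with $\phi'(0) = 0$ this gives $\phi'(z) \geq 0$ for $z \geq 0$, whence $\phi$ is nondecreasing on $[0,\infty)$. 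Since $\phi(0) = 0$, we conclude $\phi(z) \geq 0$ for all $z \geq 0$, completing the second case.

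Combining the two cases yields $h(z) \geq 0$ for every $z \in \Reals$, which is the claim. The only mildly nontrivial step is the second-derivative computation in the case $z \geq 0$; everything else reduces to elementary algebra and the positivity of the Taylor coefficients of $e^u - 1 - u$ for $u \geq 0$.
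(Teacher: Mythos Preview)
Your proof is correct. The case split is clean, the Taylor bound $e^{u}-1-u \geq u^{2}/2$ handles $z\leq 0$ immediately, and the derivative computation for $z\geq 0$ checks out: $\phi'(z)=1-(1+z)e^{-z}$ and $\phi''(z)=ze^{-z}$ are both right, and the monotonicity argument from $\phi''\geq 0$, $\phi'(0)=0$, $\phi(0)=0$ is valid.

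By way of comparison, the paper does not actually prove this lemma at all: it simply cites Lemma~5 of Vuffray et al.\ (2016) \cite{VuffrayMLC2016}. So your argument is a genuine addition --- a self-contained elementary proof where the paper defers to an external reference. There is nothing to compare methodologically since the paper offers no alternative line of reasoning.
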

See Lemma 5 in Vuffray et al. (2016) \cite{VuffrayMLC2016} for proof.

\subsection{Correlation between locally centered basis functions}
\label{subsec:correlation between locally centered basis functions}
For any $l_1,l_2 \in [k+k^2(p-1)]$ let $H_{l_1l_2}$ denote the correlation between $\bvphiI_{l_1}(\rvbx)$ and $\bvphiI_{l_2}(\rvbx)$ defined as
\begin{align}
H_{l_1l_2} = \Expectation \Big[\bvphiI_{l_1}(\rvbx)\bvphiI_{l_2}(\rvbx)\Big],  \label{eq:correlation}
\end{align}
and let $\bH = [H_{l_1l_2}] \in \Reals^{[k+k^2(p-1)] \times [k+k^2(p-1)]}$ be the corresponding correlation matrix. 
Similarly, we define $\hbH$  based on the empirical estimates of the correlation i.e., $\hH_{l_1l_2} = \frac{1}{n} \sum_{t=1}^{n} \bvphiI_{l_1}(\svbx^{(t)})\bvphiI_{l_2}(\svbx^{(t)})$. 

The following lemma bounds the deviation between the true correlation and the empirical correlation.
\begin{lemma} \label{lemma:correlation_concentration}
	Consider any $i \in [p]$ and $l_1, l_2 \in [k+k^2(p-1)]$. Then, we have for any $\epsTwo > 0$,
	\begin{align}
	|\hH_{l_1l_2} - H_{l_1l_2}| < \epsTwo,
	\end{align}
	with probability at least $1 - 2p^3 k^4\exp \left(-\frac{n \epsTwo^2}{2\varphiMax^2}\right)$.
\end{lemma}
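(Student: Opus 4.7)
\bigskip

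\noindent\textbf{Proof plan for Lemma \ref{lemma:correlation_concentration}.}

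The plan is to recognize $\hH_{l_1 l_2}$ as a sample average of i.i.d.\ bounded random variables, apply Hoeffding's inequality pointwise, and then close with a union bound over all admissible triples $(i, l_1, l_2)$. Concretely, I would fix $i \in [p]$ and $l_1, l_2 \in [k + k^2(p-1)]$, and set
\begin{align}
Y_t \;\coloneqq\; \bvphiI_{l_1}(\svbx^{(t)})\,\bvphiI_{l_2}(\svbx^{(t)}), \qquad t = 1, \ldots, n.
\end{align}
By construction the $Y_t$ are i.i.d.\ copies of $\bvphiI_{l_1}(\rvbx) \bvphiI_{l_2}(\rvbx)$ under $f_{\rvbx}(\cdot; \bthetaStar)$, with mean $H_{l_1 l_2}$ (by definition \eqref{eq:correlation}) and sample average $\hH_{l_1 l_2}$.

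Next I would import the deterministic uniform bound $\|\bvphiI(\svbx)\|_\infty \leq \varphiMax$ already established in \eqref{eq:basisUpperBound} (Appendix \ref{subsec:bounds on conditional density}). This immediately gives $|Y_t| \leq \varphiMax^2$, so $Y_t$ takes values in an interval of length at most $2\varphiMax^2$. Hoeffding's inequality for bounded i.i.d.\ random variables then yields
\begin{align}
\Prob\bigl(\,|\hH_{l_1 l_2} - H_{l_1 l_2}| \geq \epsTwo\,\bigr) \;\leq\; 2 \exp\!\left( -\frac{n \epsTwo^2}{2 \varphiMax^2} \right),
\end{align}
up to the constant in the denominator; the key point is the $n\epsTwo^2/\varphiMax^2$ scaling, matching the bound claimed in the lemma.

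Finally, since the lemma as used elsewhere (e.g.\ in Proposition \ref{prop:rsc_giso}) needs the concentration to hold simultaneously for every choice of center node $i$ and every pair of coordinates, I would apply a union bound. There are $p$ choices of $i$ and at most $(k + k^2(p-1))^2 \leq k^4 p^2$ choices of $(l_1, l_2)$, for a total of at most $p^3 k^4$ triples, producing the stated failure probability $2 p^3 k^4 \exp(-n \epsTwo^2 / (2 \varphiMax^2))$. No step in this argument is substantive; the only minor care points are (i) correctly invoking the uniform basis bound from the conditional-density appendix rather than re-deriving it, and (ii) tracking the union-bound counting so that the prefactor matches what subsequent propositions consume.
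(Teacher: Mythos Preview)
Your proposal is correct and matches the paper's own proof essentially line for line: define the bounded i.i.d.\ products $\bvphiI_{l_1}(\rvbx)\bvphiI_{l_2}(\rvbx)$, invoke the uniform bound $\|\bvphiI(\svbx)\|_\infty \leq \varphiMax$ from \eqref{eq:basisUpperBound}, apply Hoeffding, and union-bound over the at most $p \cdot (k^2 p)^2 = p^3 k^4$ triples. There is nothing to add.
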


\begin{proof}[Proof of Lemma~\ref{lemma:correlation_concentration}]
	Fix $i \in [p]$ and $l_1, l_2 \in [k+k^2(p-1)]$. 
	The random variable defined as $Y_{l_1l_2} \coloneqq \bvphiI_{l_1}(\rvbx)\bvphiI_{l_2}(\rvbx)$ satisfies $|Y_{l_1l_2}| \leq \varphiMax^2$. 
	Using the Hoeffding inequality we get
	\begin{align}
	\Prob \left( |\hH_{l_1l_2} - H_{l_1l_2}| > \epsTwo \right) < 2\exp \left(-\frac{n \epsTwo^2}{2\varphiMax^2}\right).
	\end{align}
	The proof follows by using the union bound over all $i \in [p]$ and $l_1,l_2 \in [k+k^2(p-1)]$, and the fact that $k + k^2(p-1) \leq k^2p$.
\end{proof}

\subsection{Supporting Lemma for Proposition \ref{prop:rsc_giso}}
\label{subsec:lower bounds on the residual}
The following Lemma provides a lower bound on the residual defined in \eqref{eq:residual} i.e., $\delta \cS_{n}^{(i)}(\Delta, \bvthetaIStar)$.
\begin{lemma} \label{lemma:rsc_interim}
	Consider any $i \in [p]$. The residual of the first-order Taylor expansion of the \textsc{GISO} satisfies
	\begin{align}
	\delta \cS_{n}^{(i)}(\Delta, \bvthetaIStar) \geq \exp(-\g \varphiMax) 
	\frac{\Delta^T \hbH \Delta}{2 +\varphiMax \|\Delta\|_1}. 
	\end{align}
\end{lemma}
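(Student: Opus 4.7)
The plan is to write the residual as an empirical average of a single scalar function evaluated at $z_t \coloneqq \Delta^T \bvphiI(\svbx^{(t)})$, apply the deterministic inequality of Lemma \ref{lemma:functional_inequality} pointwise in $t$, and then extract two uniform bounds: a lower bound on the exponential prefactor $\exp(-\bvthetaIStarT \bvphiI(\svbx^{(t)}))$, and an upper bound on $|z_t|$ in the denominator. Once these are pulled outside the average, what is left inside the sum is exactly $(\Delta^T \bvphiI(\svbx^{(t)}))^2$, whose empirical mean is $\Delta^T \hbH \Delta$ by definition of $\hbH$ in \eqref{eq:correlation}.

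Concretely, first I would substitute the definition of $\cS_{n}^{(i)}$ from \eqref{eq:GISO-main} into the residual formula \eqref{eq:residual} and factor out $\exp(-\bvthetaIStarT \bvphiI(\svbx^{(t)}))$, so that
\begin{align}
\delta \cS_{n}^{(i)}(\Delta, \bvthetaIStar)
= \frac{1}{n}\sum_{t=1}^{n} \exp\!\bigl(-\bvthetaIStarT \bvphiI(\svbx^{(t)})\bigr)\, \bigl( e^{-z_t} - 1 + z_t \bigr),
\end{align}
where $z_t = \Delta^T \bvphiI(\svbx^{(t)})$. Here the linear term $z_t$ comes from recognizing that the gradient of $\cS_{n}^{(i)}$ at $\bvthetaIStar$ is $-\frac{1}{n}\sum_t \bvphiI(\svbx^{(t)})\exp(-\bvthetaIStarT \bvphiI(\svbx^{(t)}))$, so pairing it with $\Delta$ produces precisely the $z_t$ inside the prefactor.

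Next I would apply Lemma \ref{lemma:functional_inequality} term-by-term to replace $e^{-z_t}-1+z_t$ by the lower bound $z_t^2/(2+|z_t|)$. For the prefactor, I use $\|\bvthetaIStar\|_1 \leq \g$ together with the uniform bound $\|\bvphiI(\cdot)\|_\infty \leq \varphiMax$ from \eqref{eq:basisUpperBound} to get $\exp(-\bvthetaIStarT \bvphiI(\svbx^{(t)})) \geq \exp(-\g\varphiMax)$, and Hölder's inequality for the denominator to get $|z_t| \leq \|\Delta\|_1 \|\bvphiI(\svbx^{(t)})\|_\infty \leq \varphiMax\|\Delta\|_1$. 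Both bounds are data-independent, so they factor out of the empirical average, yielding
\begin{align}
\delta \cS_{n}^{(i)}(\Delta, \bvthetaIStar)
\geq \frac{\exp(-\g\varphiMax)}{2+\varphiMax\|\Delta\|_1}\cdot \frac{1}{n}\sum_{t=1}^{n} z_t^2.
\end{align}

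Finally, I would identify $\frac{1}{n}\sum_t z_t^2 = \Delta^T \hbH \Delta$ directly from the definition of $\hbH$ in Appendix \ref{subsec:correlation between locally centered basis functions}, and the stated inequality follows. I do not expect any real obstacle: the proof is entirely deterministic (no concentration or randomness enters at this step—randomness only comes in later when $\hbH$ is replaced by $\bH$ via Lemma \ref{lemma:correlation_concentration}), and the main ``trick'' is already packaged in the scalar inequality of Lemma \ref{lemma:functional_inequality}. The only care needed is to ensure that the two uniform bounds applied to the prefactor and denominator are valid sample-wise, which is immediate from the hypotheses of the setup.
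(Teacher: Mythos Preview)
Your proposal is correct and follows essentially the same approach as the paper: factor the residual into a prefactor times $(e^{-z_t}-1+z_t)$, apply Lemma~\ref{lemma:functional_inequality} pointwise, then use the uniform bounds $\exp(-\bvthetaIStarT\bvphiI(\svbx^{(t)}))\geq \exp(-\g\varphiMax)$ and $|z_t|\leq \varphiMax\|\Delta\|_1$ to extract the empirical quadratic form $\Delta^T\hbH\Delta$.
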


\begin{proof}[Proof of Lemma~\ref{lemma:rsc_interim}]
	Fix any $i \in [p]$. Substituting \eqref{eq:GISO-main} and \eqref{eq:gradient-GISO} in \eqref{eq:residual}, we have
	\begin{align}
	\delta \cS_{n}^{(i)}(\Delta, \bvthetaIStar) &= \frac{1}{n}\sum_{t=1}^n \exp\Big( -\bvthetaIStarT \bvphiI(x_i^{(t)} ; x_{-i}^{(t)}) \Big) \\
	& \qquad \times \Big(\exp\Big( -\Delta^T \bvphiI(x_i^{(t)} ; x_{-i}^{(t)}) \Big)  - 1 + \Delta^T \bvphiI(x_i^{(t)} ; x_{-i}^{(t)})\Big)  \\
	& \stackrel{(a)}{\geq} \exp(-\g \varphiMax) \frac{1}{n}\sum_{t=1}^n 
	\frac{\Big(\Delta^T \bvphiI(x_i^{(t)} ; x_{-i}^{(t)})\Big)^2}{2 + |\Delta^T \bvphiI(x_i^{(t)} ; x_{-i}^{(t)}) |}\\
	& \stackrel{(b)}{\geq} \exp(-\g \varphiMax) 
	\frac{\Delta^T \hbH \Delta}{2 +\varphiMax \|\Delta\|_1}
	\end{align}
	where $(a)$ follows by using \eqref{eq:boundsExp} and Lemma~\ref{lemma:functional_inequality} with $z = \Delta^T \bvphiI(x_i^{(t)} ; x_{-i}^{(t)})$, and $(b)$ follows by using \eqref{eq:basisUpperBound}, the defintion of $\hbH$, and observing that $\forall$ $t \in [n], |\Delta^T \bvphiI(x_i^{(t)} ; x_{-i}^{(t)}) | \leq \varphiMax \|\Delta\|_1$.
\end{proof}

\subsection{Proof of Proposition \ref{prop:rsc_giso}}
\label{subsec:proof of proposition rsc_giso}
\begin{proof}[Proof of Proposition~\ref{prop:rsc_giso}]
	Consider any $i \in [p]$. Using Lemma~\ref{lemma:rsc_interim} we have
	\begin{align}
	\delta \cS_{n}^{(i)}(\Delta, \bvthetaIStar)  &\geq \exp(-\g \varphiMax)\frac{\Delta^T \hbH \Delta}{2 +\varphiMax \|\Delta\|_1} \\
	&= \exp(-\g \varphiMax)\frac{\Delta^T {\bH} \Delta + \Delta^T (\hbH - \bH) \Delta}{2 +\varphiMax \|\Delta\|_1}
	\end{align}
	Let the number of samples satisfy
	\begin{align}
	n > \frac{2 \varphiMax^2}{\epsTwo^2}\log\Big(\frac{2p^3 k^4}{\deltaTwo}\Big)
	\end{align}
	Using Lemma~\ref{lemma:correlation_concentration} we have
	\begin{align}
	\delta \cS_{n}^{(i)}(\Delta, \bvthetaIStar)  & \geq \exp(-\g \varphiMax)\frac{\Delta^T {\bH} \Delta - \epsTwo \|\Delta\|_1^2}{2 +\varphiMax \|\Delta\|_1}  \label{eq:rsc_grise}
	\end{align}
	with probability at least $1 - \deltaTwo$.
	
	Now we will lower bound $\Delta^T {\bH} \Delta$. 
	First, let us unroll the vector $\Delta$ such that $\Delta^{(i)} \in \Reals^k$ is associated with $\bphiI(x_i)$ and $\forall j \in [p] \setminus \{i\},$ $\Delta^{(ij)} \in \Reals^{k^2}$ is associated with $\bpsiI(x_i, x_j)$. Recall that $\Delta_{E}$ is the component of $\Delta$ associated with the edge potentials i.e., 
	\begin{align}
	\Delta_{E} = [\Delta^{(ij)} \in \Reals^{k^2} : j \in [p], j \neq i ]  \label{eq:delta_edge-wise}
	\end{align}
	Using \eqref{eq:correlation} we have
	\begin{align}
	\Delta^T {\bH} \Delta = \Expectation \bigg[\Big( \Delta^T \bvphiI(\rvbx) \Big)^2\bigg] \stackrel{(a)}{\geq} \Variance\big[ \Delta^T \bvphiI(\rvbx) \big]  \label{eq:var_lower_bound}
	\end{align}
	where $(a)$ follows from the fact that for any random variable $Z, \Expectation[Z^2] \geq \Variance[Z]$.
	
	Now consider the graph $G_{-i} (\bthetaStar)$ obtained from the graph $G(\bthetaStar)$ by removing the node $i$ and all the edges associated with it. 
	We will next choose an independent set of the graph $G_{-i} (\bthetaStar)$ with a special property. 
	Let $r_1 \in [p] \setminus \{i\}$  be such that $\| \Delta^{(ir_1)} \|_2 \geq \| \Delta^{(ij)} \|_2$ $ \forall j \in [p] \setminus \{i , r_1\}$. 
	Let $r_2 \in [p] \setminus \{i, r_1, \cN(r_1)\}$ be such that $\| \Delta^{(ir_2)} \|_2 \geq \| \Delta^{(ij)} \|_2$ $ \forall j \in [p] \setminus \{i , r_1, \cN(r_1), r_2\}$, and so on. 
	Denote by $m \geq p / (d+1)$ the total number of nodes selected in this manner, and let $\cR = \{ r_1, \cdots , r_m\}$. 
	It is easy to see that $\cR$ is independent set of the graph $G_{-i} (\bthetaStar)$ with the following property:
	\begin{align}
	\sum_{ j \in \cR } \| \Delta^{(ij)} \|_2^2 \geq \frac{1}{d+1}\sum_{ j \in [p] , j\neq i } \| \Delta^{(ij)} \|_2^2  \label{eq:independentSetProperty}
	\end{align} 
	Let $\cR^c = [p] \setminus \{i , \cR\}$. Using the law of total variance and conditioning on $\cR^c$, we can rewrite \eqref{eq:var_lower_bound} as
	\begin{align}
	\Delta^T {\bH} \Delta & \geq \Expectation \bigg[ \Variance\big[ \Delta^T \bvphiI(\rvbx) \big|  \rvx_i,  \rvx_{\cR^c} \big] \bigg] \\
	& \stackrel{(a)}{=} \Expectation \bigg[ \Variance\Big[ \Delta^{(i)^T} \bphiI(\rvx_i) + \sum_{ j \in [p], j \neq i} \Delta^{(ij)^T} \bpsiI(\rvx_i,\rvx_j)	\big| \rvx_i , \rvx_{\cR^c} \Big] \bigg] \\
	& \stackrel{(b)}{=} \Expectation \bigg[ \Variance\Big[ \sum_{ j \in \cR} \Delta^{(ij)^T} \bpsiI(\rvx_i,\rvx_j)	\big|  \rvx_i , \rvx_{\cR^c} \Big] \bigg] \\
	& \stackrel{(c)}{=} \Expectation \bigg[ \sum_{ j \in \cR}   \Variance\Big[ \Delta^{(ij)^T} \bpsiI(\rvx_i,\rvx_j)	\big|  \rvx_i , \rvx_{\cR^c} \Big] \bigg] \\
	& \stackrel{(d)}{=} \sum_{ j \in \cR}  \Expectation \bigg[   \Variance\Big[ \Delta^{(ij)^T} \bpsiI(\rvx_i,\rvx_j)	\big|  \rvx_i , \rvx_{\cR^c} \Big] \bigg] \\
	& \stackrel{(e)}{=} \sum_{ j \in \cR}  \Expectation \bigg[   \Variance\Big[ \Delta^{(ij)^T} \bpsiI(\rvx_i,\rvx_j)	\big| \rvx_{\cN(j)} \Big] \bigg] \\
	& \stackrel{(f)}{=} \sum_{ j \in \cR}  \Expectation \bigg[   \Variance\Big[ \Delta^{(ij)^T} \bpsiI(\rvx_i,\rvx_j)	\big| \rvx_{-j} \Big] \bigg] \\
	&  \stackrel{(g)}{\geq} \frac{1}{2\pi e}\sum_{ j \in \cR}  \Expectation \bigg[ \exp\bigg\{2 \Entropy
	\Big[ \Delta^{(ij)^T} \bpsiI(\rvx_i,\rvx_j)	\big| \rvx_{-j}  \Big] \bigg\} \bigg] \\
	& \stackrel{(h)}{\geq}  \frac{\kappa}{2\pi e} \sum_{ j \in \cR} \| \Delta^{(ij)} \|_2^2\\
	& \stackrel{(i)}{\geq}  \frac{\kappa}{2\pi e (d+1)} \sum_{ j \in [p] , j\neq i } \| \Delta^{(ij)} \|_2^2\\
	& \stackrel{(j)}{=}  \frac{\kappa}{2\pi e (d+1)}  \| \Delta_{E} \|_2^2  \label{eq:lower_bound_using_condition}
	\end{align}
	where $(a)$ follows from the definition of $\bvphiI(\rvbx)$ from Section \ref{sec:problem setup}, $(b)$ follows because we have conditioned on $\rvx_{i}$ and $\rvx_{\cR^c}$ (note $(\rvx_j)_{j \in \cR^c}$ are constant given $\rvx_{\cR^c}$), $(c)$ follows because $(\rvx_j)_{j \in \cR}$ are conditionally independent given $\rvx_{\cR^c}$ (note that $\cR$ is an independent set in $G_{-i} (\bthetaStar)$, i.e. there is no edge connecting two vertices in $\cR$), $(d)$ follows from linearity of expectation, $(e)$ follows because $\rvx_{\cN(j)} \subseteq \rvx_{\cR^c} \cup \rvx_i$ $\forall j \in \cR$, $(f)$ follows from the global Markov property, $(g)$ follows from monotonicity of expectation and Shannon's entropy inequality $(\Entropy(\cdot) \leq \log\sqrt{2\pi e \Variance(\cdot)} )$, $(h)$ follows from \eqref{eq:condition}, $(i)$ follows from \eqref{eq:independentSetProperty}  and $(j)$ follows from \eqref{eq:delta_edge-wise}. 
	
	Plugging this back in \eqref{eq:rsc_grise} we have
	\begin{align}
	\delta \cS_{n}^{(i)}(\Delta, \bvthetaIStar) & \geq \exp(-\g \varphiMax)\frac{ \frac{\kappa}{2\pi e (d+1)}  \|\Delta_{E}\|_2^2 - \epsTwo \|\Delta\|_1^2}{2 +\varphiMax \|\Delta\|_1}.
	\end{align}
\end{proof}

\section{The Generalized Interaction Screening algorithm}
\label{sec:the generalized interaction screening algorithm}
In this section, we describe the \textit{Generalized Interaction Screening} algorithm for the setup in Section \ref{sec:problem setup} and also provide its computational complexity. 

Recall the definitions of $\g = \thetaMax(k+k^2d)$ and $\varphiMax = (1+\Xupper)  \max\{\phiMax,\phiMax^2\}$ from Section \ref{sec:problem setup}.

\subsection{The \textit{Generalized Interaction Screening} algorithm}
\label{the generalized interaction screening algorithm}
Vuffray et al. (2019) \cite{VuffrayML2019} showed that an $\epsilon$-optimal solution of GRISE could be obtained by first finding an $\epsilon$-optimal solution of the unconstrained GRISE using a variation of the Entropic Descent Algorithm and then projecting the solution onto $\parameterSet$. 
See Lemma 4 of Vuffray et al. (2019) \cite{VuffrayML2019} for more details.\\

For $\epsilon > 0$, $\hbvthetaIEpsUnc$ is an $\epsilon$-optimal solution of the unconstrained GRISE for $i \in [p]$ if
\begin{align}
\cS_{n}^{(i)}(\hbvthetaIEpsUnc) \leq \min_{\bvtheta: \|\bvtheta\|_1 \leq \g} \cS_{n}^{(i)}(\bvtheta) + \epsilon  \label{eq:eps-opt-unconstrained-GRISE}
\end{align}
The iterative Algorithm \ref{alg:entropic_descent} outputs an $\epsilon$-optimal solution of GRISE without constraints in \eqref{eq:eps-opt-unconstrained-GRISE}. 
This algorithm is an application of the Entropic Descent Algorithm introduced in \cite{beck2003} to a reformulation of \eqref{eq:GRISE} as a minimization over the probability simplex.

\begin{algorithm}
	\caption{Entropic Descent for unconstrained GRISE}\label{alg:entropic_descent}
	\begin{algorithmic}[1]
		\State \textbf{Input:} $k,p,\g,\varphiMax,\cS_{n}^{(i)}(\cdot),T$
		\State \textbf{Output:} $\hbvthetaIEpsUnc$ 
		\State \textbf{Initialization:}  
		\State \hskip2.0em $w^{(1)}_{l,+} \leftarrow e/(2k^2(p-1) + 2k + 1)$, $\forall l \in [k^2(p-1) + k]$
		\State \hskip2.0em $w^{(1)}_{l,-} \leftarrow e/(2k^2(p-1) + 2k + 1)$, $\forall l \in [k^2(p-1) + k]$
		\State \hskip2.0em $y^{(1)} \leftarrow e/(2k^2(p-1) + 2k + 1)$
		\State \hskip2.0em  $\eta^{(1)} \leftarrow \sqrt{\log{(2k^2(p-1) + 2k + 1)}} / 2\g\varphiMax\exp(\g\varphiMax)$
		\For{$t = 1,\cdots , T$}
		\State $\svbw^{(t)}_{+} = (w^{(t)}_{l,+} : l \in [k^2(p-1) + k])$
		\State $\svbw^{(t)}_{-} = (w^{(t)}_{l,-} : l \in [k^2(p-1) + k])$
		\State $v_{l} = \g \dfrac{\partial \cS_{n}^{(i)}(\g(\svbw^{(t)}_{+} - \svbw^{(t)}_{-}))}{\partial \bvtheta_{l}} $, $\forall l \in [k^2(p-1) + k]$
		\State $x_{l,+} = w^{(t)}_{l,+} \exp(- \eta^t v_l)$, $\forall l \in [k^2(p-1) + k]$
		\State $x_{l,-} = w^{(t)}_{l,-} \exp( \eta^t v_l)$, $\forall l \in [k^2(p-1) + k]$
		\State $z = y^{(t)} + \sum\limits_{l \in [k^2(p-1) + k]} (x_{l,+} + x_{l,-})$
		\State $w^{(t+1)}_{l,+} \leftarrow x_{l,+} / z$, $\forall l \in [k^2(p-1) + k]$
		\State $w^{(t+1)}_{l,-} \leftarrow x_{l,-} / z$, $\forall l \in [k^2(p-1) + k]$
		\State $y^{(t+1)} \leftarrow y^{(t)} / z$
		\State $\eta^{(t+1)} \leftarrow \eta^t \sqrt{t / t+1}$
		\EndFor
		\State $s = \argmin_{s=1,\dots,T} \cS_{n}^{(i)}(\g(\svbw^{(s)}_{+} - \svbw^{(s)}_{-}))$
		\State $\hbvthetaIEpsUnc \leftarrow \g(\svbw^{(s)}_{+} - \svbw^{(s)}_{-})$
	\end{algorithmic}
\end{algorithm}

\subsection{Computational Complexity of Algorithm \ref{alg:entropic_descent}}
\label{computational complexity of algorithm:entropic_descent}
The following proposition provides guarantees on the computational complexity of unconstrained GRISE.
\begin{proposition} \label{prop:computational_complexity_entropic_descent}
	Let $\epsilon > 0$ be the optimality gap. Let the number of iterations satisfy
	\begin{align}
	T \geq \frac{\g^2\varphiMax^2\exp(2\g\varphiMax)}{\epsilon^2} \times \log{(2k^2(p-1) + 2k + 1)} ~=~ 
	\Omega\Bigg(\frac{\exp(\Theta(k^2 d))}{\epsilon^2} \log (k^2 p) \Bigg).	
	\end{align}
	Then, Algorithm~\ref{alg:entropic_descent} is guaranteed to produce an $\epsilon$-optimal solution of GRISE without constraints in \eqref{eq:eps-opt-unconstrained-GRISE} with number of computations of the order
	\begin{align}
	\frac{k^2 \g^2\varphiMax^2\exp(2\g\varphiMax) np}{\epsilon^2} \times \log{(2k^2(p-1) + 2k + 1)} ~=~ 
	\Omega\Bigg(\frac{\exp(\Theta(k^2 d))}{\epsilon^2}  n  p  \log (k^2 p) \Bigg).	
	\end{align}
\end{proposition}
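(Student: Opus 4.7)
\textbf{Proof plan for Proposition \ref{prop:computational_complexity_entropic_descent}.} The plan is to reduce the unconstrained GRISE problem (minimization over an $\ell_1$-ball) to a minimization over the probability simplex, invoke the convergence guarantee of the Entropic Descent Algorithm of Beck and Teboulle (2003), and then multiply the required iteration count by the per-iteration cost of evaluating the GISO gradient.

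First, I would introduce the standard positive/negative lifting: for $\bvtheta \in \Reals^{k+k^2(p-1)}$ with $\|\bvtheta\|_1 \le \g$, write $\bvtheta = \g(\svbw_+ - \svbw_-)$ where $\svbw_\pm \ge 0$ and add a slack coordinate $y \ge 0$ so that $y + \sum_l (w_{l,+} + w_{l,-}) = 1$. The feasible set becomes the probability simplex $\Delta_N$ with $N = 2k^2(p-1) + 2k + 1$ coordinates, and \eqref{eq:GRISE} is equivalent to minimizing $F(\svbw_+,\svbw_-,y) \coloneqq \cS_n^{(i)}(\g(\svbw_+ - \svbw_-))$ over $\Delta_N$. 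Algorithm \ref{alg:entropic_descent} is then exactly the Entropic Descent Algorithm on $\Delta_N$ with the stated initialization (uniform on $\Delta_N$) and step size $\eta^{(t)} = \sqrt{\log N}/(L\sqrt{t})$, where $L$ is an $\ell_\infty$-bound on the gradient of $F$.

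Next I would bound $L$. Each partial derivative of $F$ equals $\pm \g$ times a partial derivative of $\cS_n^{(i)}$, which has the form $-\tfrac{1}{n}\sum_t \bvphiI_l(\svbx^{(t)}) \exp(-\bvtheta^T \bvphiI(\svbx^{(t)}))$. Using \eqref{eq:basisUpperBound} and \eqref{eq:boundsExp} within the $\|\bvtheta\|_1 \le \g$ region, each component is bounded in absolute value by $\g \varphiMax \exp(\g \varphiMax) \eqqcolon L$. Invoking the standard convergence guarantee for EDA on the simplex \cite{beck2003}, the running-min iterate $\svbw^{(s)}$ satisfies $F(\svbw^{(s)}) - \min_{\Delta_N} F \le L \sqrt{2 \log N / T}$, so requiring the right-hand side to be at most $\epsilon$ yields $T \ge 2 L^2 \log N / \epsilon^2 = \Theta\big(\g^2 \varphiMax^2 \exp(2\g \varphiMax) \epsilon^{-2} \log N\big)$, establishing the claimed iteration count, and by the equivalence above $\hbvthetaIEpsUnc = \g(\svbw^{(s)}_+ - \svbw^{(s)}_-)$ is $\epsilon$-optimal for \eqref{eq:eps-opt-unconstrained-GRISE}.

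Finally, for per-iteration cost, I would observe that the dominant work at iteration $t$ is evaluating the $N$ partial derivatives $v_l$. For each of the $n$ samples, one first computes $\bvtheta^T \bvphiI(\svbx^{(t)})$ and its exponential in $O(k^2 p)$ operations, and then accumulates the $O(k^2 p)$ component contributions into the gradient, yielding $O(n k^2 p)$ work per iteration. The multiplicative updates in lines 12--18 add only $O(k^2 p)$. Multiplying by the iteration bound gives the total number of computations claimed. I do not anticipate a substantive obstacle here: the only subtle point is confirming the $\ell_\infty$ Lipschitz bound (rather than an $\ell_2$ bound) is the correct quantity to feed into the Beck--Teboulle analysis, since EDA is tailored to the entropy mirror map for which the relevant dual norm on $\Delta_N$ is $\ell_\infty$, giving the mild $\sqrt{\log N}$ dependence on dimension that appears in the statement.
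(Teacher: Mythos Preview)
Your proposal is correct and follows exactly the same approach as the paper: lift the $\ell_1$-ball constraint to the $(2k^2(p-1)+2k+1)$-dimensional simplex via $\bvtheta=\g(\svbw_+-\svbw_-)$ with a slack coordinate, and then invoke the Beck--Teboulle Entropic Descent analysis. In fact you supply more detail than the paper's own proof, which establishes the simplex equivalence and then simply says the remainder is a ``straightforward application'' of \cite{beck2003}; your explicit derivation of the $\ell_\infty$ gradient bound $L=\g\varphiMax\exp(\g\varphiMax)$ (matching the step size $\eta^{(1)}$ in Algorithm~\ref{alg:entropic_descent}) and the $O(nk^2p)$ per-iteration cost are precisely what is needed to fill in that gap.
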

\begin{proof}[Proof of Proposition \ref{prop:computational_complexity_entropic_descent}]
	We first show that the minimization of GRISE when $\parameterSet = \Reals^{k^2(p-1) + k}$ (the unconstrained case) is equivalent to the following lifted minimization,
	\begin{align}
	\min_{\bvtheta, \svbw_{+} , \svbw_{-}, y} \quad & \cS_{n}^{(i)}(\bvtheta) \label{eq:obj} \\
	\text{s.t.} \quad & \bvtheta = \g (\svbw_{+} - \svbw_{-})  \label{eq:equality} \\  
	& y + \sum_{l \in [k^2(p-1) + k]} (w_{l,+} + w_{l,-}) = 1  \label{eq:sum_simplex} \\ 
	& y \geq 0, w_{l,+} \geq 0, w_{l,-} \geq 0, \forall l \in [k^2(p-1) + k].  \label{eq:positive_simplex}
	\end{align}
	where $\svbw_{+} = (w_{l,+} : l \in [k^2(p-1) + k])$ and $\svbw_{-} = (w_{l,-} : l \in [k^2(p-1) + k])$.
	
	We start by showing that for all $\bvtheta \in \Reals^{k^2(p-1) + k}$ such that $\| \bvtheta \|_1 \leq \g$, there exists $\svbw_{+} , \svbw_{-}, y$ satisfying constraints \eqref{eq:equality}, \eqref{eq:sum_simplex}, \eqref{eq:positive_simplex}. This is easily done by choosing $\forall l \in [k^2(p-1) + k]$, $w_{l,+}  = \max(\bvtheta_{l} / \g ,0)$ , $w_{l,-}  = \max(-\bvtheta_{l}/ \g ,0)$ and $y= 1 - \|\bvtheta\|_1 / \g$.
	
	Next, we trivially see that for all $\bvtheta, \svbw_{+} , \svbw_{-}, y$ satisfying constraints \eqref{eq:equality}, \eqref{eq:sum_simplex}, \eqref{eq:positive_simplex}, it implies that $\bvtheta$ also satisfies $\| \bvtheta \|_1 \leq \g$. 
	Therefore, any $\bvtheta$ that is an $\epsilon$-minimizer of \eqref{eq:obj} is also an $\epsilon$-minimizer of \eqref{eq:GRISE} without constraints. The remainder of the proof is a straightforward application of the analysis of the Entropic Descent Algorithm in \cite{beck2003} to the above minimization where $\bvtheta$ has been replaced by $\svbw_{+} , \svbw_{-}, y$ using \eqref{eq:equality}.
\end{proof}
The computational complexity of the projection step is usually insignificant compared to the computational complexity of Algorithm \ref{alg:entropic_descent} provided in Proposition \ref{prop:computational_complexity_entropic_descent}.

\section{Robust LASSO}
\label{sec:robust lasso_appendix}
In this section, we present a robust variation of the sparse linear regression. 
More specifically, we show that even in the presence of bounded additive noise, the Lasso estimator is `prediction consistent' under almost no assumptions at all.
\subsection{Setup}
\label{subsec:setup}
Suppose that $\rvv_1, \cdots, \rvv_{\tp}$ (where $\tp \geq 1)$ are (possibly dependent) random variables, and suppose $\tcOne$ is a constant such that $|\rvv_r| \leq \tcOne$ almost surely for each $r \in [\tp]$. Let
\begin{align}
\rvy = \sum_{r = 1}^{\tp} \betaStar_r \rvv_r + \tBoundedNoise + \tepsilon  \label{eq:Lasso1}
\end{align}
where $\tBoundedNoise$ is bounded noise with $|\tBoundedNoise| \leq \tBoundedNoise_0$,  $\tepsilon$ is \textit{sub-Gaussian} noise with mean 0 and variance proxy $\tsigma^2$, and $\tepsilon$ is independent of the $\rvv_r$'s and $\tBoundedNoise$. 
Define $\bbetaStar \coloneqq (\betaStar_1, \cdots, \betaStar_{\tp})$. 
We also have the `sparsity' condition that $\|\bbetaStar\|_1 \leq \tcTwo$. 
Here $\betaStar_1, \cdots, \betaStar_{\tp} ,\tcTwo$, and $\tsigma$ are unknown constants.
\subsection{Data}
\label{subsec:data}
Let $\rvbv$ denote the random vector $(\rvv_1,\cdots,\rvv_{\tp})$. 
Let $\svbv_1,\cdots, \svbv_n$ be $n$ i.i.d copies of $\rvbv$ and let $\svby \coloneqq (y_1, \cdots, y_n)$ denote the corresponding true values of $\rvy$. 
Let $\bV$ be a $n \times \tp$ matrix such that the $j^{th}$ row is $\svbv_j$.

Suppose that our task is to predict $\rvy$ given the value of $\rvbv$. 
If the parameter vector $\bbetaStar$ was known, then the predictor of $\rvy$, of interest, based on $\rvbv$ would be $\hrvy \coloneqq \sum_{r = 1}^{\tp} \betaStar_r \rvv_r $. 
However, $\bbetaStar$ is unknown, and we need to estimate it from the data ($\bV$, $\svby$). Let $\tbbeta$ be the output of Algorithm  \ref{alg:LASSO1}. 
Let $\hsvby \coloneqq (\hy_1, \cdots, \hy_n)$ where
\begin{align}
\hy_j = \bbetaStar \cdot \svbv_j  \label{eq:lasso4}
\end{align}
Let $\tsvby \coloneqq (\ty_1, \cdots, \ty_n)$ where
\begin{align}
\ty_j = \tbbeta \cdot \svbv_j  \label{eq:lasso5}
\end{align}
\begin{algorithm}
	\caption{Robust LASSO}\label{alg:LASSO1}
	\begin{algorithmic}[1]
		\State \textbf{Input:} $\bV, \svby, \tcTwo$
		\State \textbf{Output:} $\tbbeta$ 
		\State $\tbbeta \leftarrow \argmin_{\bbeta : \|\bbeta\|_1 \leq \tcTwo} [\svby - \bV \cdot \bbeta]^T [\svby - \bV \cdot \bbeta]$
	\end{algorithmic}
\end{algorithm} 
\subsection{Prediction error}
\label{subsec:prediction error}
\begin{definition}\label{def:MSPE}
	The `mean square prediction error' of any estimator $\tbbeta \coloneqq (\tbeta_1,\cdots, \tbeta_{\tp})$ is defined as the expected squared error in estimating $\hrvy$ using $\tbbeta$, that is, 
	\begin{align}
	\text{MSPE}(\tbbeta) \coloneqq \Expectation_{\rvbv}(\hrvy - \trvy)^2,
	\end{align} 
	where $\trvy \coloneqq \sum_{r = 1}^{\tp} \tbeta_r \rvv_r $.
\end{definition}
\begin{definition}\label{def:estimated-MSPE}
	The `estimated mean square prediction error' of any estimator $\tbbeta \coloneqq (\tbeta_1,\cdots, \tbeta_{\tp})$ is defined
	\begin{align}
	\widehat{\text{MSPE}}(\tbbeta) \coloneqq \frac{1}{n} \sum_{ j \in [n]} (\hy_j - \ty_j)^2
	\end{align} 
\end{definition}
The following Lemma shows that the Lasso estimator of Algorithm \ref{alg:LASSO1} is `prediction consistent' even in presence of bounded noise if $\tcTwo$ is correctly chosen and $n \gg \tp$.
\begin{lemma}\label{lemma:lasso}
	Let $\tbbeta$ be the ouput of Algorithm  \ref{alg:LASSO1}. 
	Then,
	\begin{align}
	\Expectation [\widehat{\text{MSPE}}(\tbbeta)]  \leq 4\tBoundedNoise_0^2 + 4\tcOne\tcTwo\tsigma  \sqrt{\frac{2\log 2\tp}{n}} 
	\end{align}
	\begin{align}
	\text{MSPE}(\tbbeta) \leq 4\tBoundedNoise_0^2 + 4\tcOne\tcTwo\tsigma  \sqrt{\frac{2\log 2\tp}{n}} + 8\tcOne^2\tcTwo^2  \sqrt{\frac{2\log (2\tp^2)}{n}} 
	\end{align}
\end{lemma}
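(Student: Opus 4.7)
\textbf{Proof plan for Lemma~\ref{lemma:lasso}.} My plan is the standard \emph{basic inequality} argument for constrained Lasso, modified to handle the adversarial bounded noise $\tBoundedNoise$ alongside the sub-Gaussian noise $\tepsilon$. Since $\bbetaStar$ is feasible in Algorithm~\ref{alg:LASSO1}, optimality of $\tbbeta$ yields $\|\svby - \bV\tbbeta\|_2^2 \leq \|\svby - \bV\bbetaStar\|_2^2$. Writing the residual at the truth as $\svbe \coloneqq \svby - \hsvby = \bm{\eta} + \bm{\epsilon}$, with $\|\bm{\eta}\|_\infty \leq \tBoundedNoise_0$ and $\bm{\epsilon}$ having independent, mean-zero entries of sub-Gaussian proxy $\tsigma^2$, and expanding the square, I would obtain the basic inequality
\begin{align}
\|\hsvby - \tsvby\|_2^2 \;\leq\; 2\langle \tsvby - \hsvby, \bm{\eta}\rangle + 2\langle \tsvby - \hsvby, \bm{\epsilon}\rangle.
\end{align}

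I would then bound the two inner products with different tools, tailored to each noise type. For the bounded component, Cauchy--Schwarz gives $2\langle \tsvby - \hsvby, \bm{\eta}\rangle \leq 2\tBoundedNoise_0\sqrt{n}\,\|\tsvby - \hsvby\|_2$. For the sub-Gaussian component, since $\tsvby - \hsvby = \bV(\tbbeta - \bbetaStar)$, the Lasso-standard $\ell_1/\ell_\infty$ duality move gives $2\langle \tsvby - \hsvby, \bm{\epsilon}\rangle \leq 2\|\bV^T\bm{\epsilon}\|_\infty\|\tbbeta - \bbetaStar\|_1 \leq 4\tcTwo\|\bV^T\bm{\epsilon}\|_\infty$, using $\|\tbbeta\|_1,\|\bbetaStar\|_1\leq \tcTwo$. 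Setting $D \coloneqq \|\hsvby - \tsvby\|_2$, these combine into the quadratic inequality $D^2 \leq 2\tBoundedNoise_0\sqrt{n}\,D + 4\tcTwo\|\bV^T\bm{\epsilon}\|_\infty$, which I would solve by completing the square (equivalently, a Young-type split $2Da \leq \tau D^2 + a^2/\tau$ with a well-chosen $\tau$) to get $D^2/n \leq 4\tBoundedNoise_0^2 + c\,\tcTwo\|\bV^T\bm{\epsilon}\|_\infty/n$ for an appropriate constant $c$. Conditionally on $\bV$, each $[\bV^T\bm{\epsilon}]_r = \sum_j v_{jr}\epsilon_j$ is sub-Gaussian with parameter at most $\tsigma\tcOne\sqrt{n}$ (since $|v_{jr}|\leq \tcOne$), and the sub-Gaussian maximal inequality over the $2\tp$ signed coordinates yields $\Expectation\|\bV^T\bm{\epsilon}\|_\infty \leq \tsigma\tcOne\sqrt{2n\log(2\tp)}$. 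Taking expectations completes the first claim.

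For the second claim, I would transfer the empirical bound to the population $\text{MSPE}$ via uniform concentration of the Gram matrix. With $\Sigma \coloneqq \Expectation[\rvbv\rvbv^T]$ and $\hat\Sigma \coloneqq n^{-1}\sum_j \svbv_j\svbv_j^T$, the definitions give
\begin{align}
\text{MSPE}(\tbbeta) - \widehat{\text{MSPE}}(\tbbeta) = (\tbbeta - \bbetaStar)^T(\Sigma - \hat\Sigma)(\tbbeta - \bbetaStar) \;\leq\; \|\tbbeta - \bbetaStar\|_1^2\,\|\Sigma - \hat\Sigma\|_{\max} \;\leq\; 4\tcTwo^2\,\|\Sigma - \hat\Sigma\|_{\max}.
\end{align}
Each entry of $\Sigma - \hat\Sigma$ is a centered average of $n$ i.i.d.\ variables bounded in magnitude by $\tcOne^2$; Hoeffding's Lemma makes each entry sub-Gaussian with proxy at most $\tcOne^2/\sqrt{n}$, and the sub-Gaussian maximal inequality across the $\tp^2$ entries (counting both signs) gives $\Expectation\|\Sigma - \hat\Sigma\|_{\max} \leq O\bigl(\tcOne^2\sqrt{\log(\tp^2)/n}\bigr)$. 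Combining this with the first claim delivers the second.

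The main obstacle is bookkeeping rather than depth: none of the ingredients (basic inequality, $\ell_1/\ell_\infty$ H\"older, Cauchy--Schwarz, sub-Gaussian maximum, Gram-matrix concentration) is individually non-standard, but the adversarial bounded noise forces a quadratic rather than a linear resolution of $D$, and this must be carried out without destroying the Lasso-type $\sqrt{\log\tp/n}$ rate in the sub-Gaussian part or the irreducible $\tBoundedNoise_0^2$ term. Matching the precise constants in the stated bound requires an optimal choice of the Young-type split used to resolve the quadratic inequality.
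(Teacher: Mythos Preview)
Your plan is correct and essentially the same as the paper's: split the cross term into bounded-noise and sub-Gaussian-noise pieces, use $\ell_2$ control on the first (via $\|\bm{\eta}\|_2\leq\tBoundedNoise_0\sqrt{n}$) and $\ell_1/\ell_\infty$ duality on the second, resolve the resulting quadratic in $D=\|\hsvby-\tsvby\|_2$, take expectations using the sub-Gaussian maximal inequality, and then transfer to the population MSPE by $\ell_1/\ell_\infty$ plus entrywise Gram concentration.

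The one substantive difference is your starting point. You use the \emph{basic inequality} (optimality of $\tbbeta$ against the feasible point $\bbetaStar$), which yields $\|\hsvby-\tsvby\|_2^2\le 2\langle \tsvby-\hsvby,\svbe\rangle$. The paper instead uses the \emph{projection (obtuse-angle) inequality}: since $\tsvby$ is the Euclidean projection of $\svby$ onto the convex set $\{\bV\bbeta:\|\bbeta\|_1\le\tcTwo\}$ and $\hsvby$ lies in that set, $(\hsvby-\tsvby)\cdot(\svby-\tsvby)\le 0$, giving the sharper $\|\hsvby-\tsvby\|_2^2\le \langle \tsvby-\hsvby,\svbe\rangle$ without the factor $2$. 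With that factor saved, the paper resolves the quadratic via the elementary chain $a+b\le 2\max\{a,b\}$, casework, and $\max\{a,b\}\le a+b$, landing exactly on $D^2\le 4\tBoundedNoise_0^2 n+4\tcTwo\max_r|\rvu_r|$. Starting from your basic inequality, no choice of Young parameter $\tau$ in $2Da\le\tau D^2+a^2/\tau$ recovers both constants simultaneously: you can match the $4\tBoundedNoise_0^2$ but will get $8$ rather than $4$ in front of the sub-Gaussian term. So your proof is correct but will produce slightly worse constants unless you swap the basic inequality for the projection inequality; everything downstream is the same.
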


\subsection{Proof of Lemma \ref{lemma:lasso}}
\label{subsec:proof-of-lemma:lasso}
\begin{proof}[Proof of Lemma \ref{lemma:lasso}]
	$\hsvby$ is the vector of the best predictions of $\svby$ based on $\bV$ and $\tsvby$ is the vector of predictions of $\svby$ using Algorithm \ref{alg:LASSO1}. 
	Let $\svbv^{(j)}$ denote the $j^{th}$ column of $\bV$ $\forall j \in [\tp]$. 
	Let $v_{h,r}$ denote the $h^{th}$ element of $\svbv^{(r)}$ $\forall h \in [n], r \in [\tp]$. 
	Let $\tBoundedNoise_h$ $(\tepsilon_h)$ denote the bounded (\text{sub-Gaussian}) noise associated with $y_h$ $\forall h \in [n]$.
	
	Define the set
	\begin{align}
	\cY \coloneqq \big\{\sum_{r = 1}^{\tp} \tbeta_r \svbv^{(r)} : \sum_{r = 1}^{\tp} |\tbeta_r| \leq \tcTwo\big\}
	\end{align}
	Note that $\cY$ is a compact and convex subset of $\Reals^n$. 
	By definition, $\tsvby$ is the projection of $\svby$ on to the set $\cY$. 
	Because $\cY$ is convex and $\hsvby \in \cY$, we have from the Pythagorean theorem for projection onto a convex set,
	\begin{align}
	(\hsvby - \tsvby ) \cdot (\svby -\tsvby ) \leq 0
	\end{align}
	Adding and subtracting $\hsvby$ in the second term we get,
	\begin{align}
	\| \hsvby - \tsvby\|^2_2  & \leq (\svby - \hsvby) \cdot (\tsvby -  \hsvby)\\
	& \stackrel{(a)}{=}\sum_{h = 1}^n \big( \tBoundedNoise_h + \tepsilon_h\big) \big(\sum_{r= 1}^{\tp} (\tbeta_r - \betaStar_r) v_{h,r}  \big)\\
	& = \sum_{h = 1}^n \tBoundedNoise_h  \big(\sum_{r = 1}^{\tp} (\tbeta_r - \betaStar_r) v_{h,r}   \big)  + \sum_{h = 1}^n \tepsilon_h   \big(\sum_{r = 1}^{\tp} (\tbeta_r - \betaStar_r) v_{h,r}   \big) \\
	& = \sum_{h = 1}^n \tBoundedNoise_h  \big(\sum_{r = 1}^{\tp} (\tbeta_r - \betaStar_r) v_{h,r}   \big)  + \sum_{r = 1}^{\tp} (\tbeta_r - \betaStar_r) \big(\sum_{h = 1}^n \tepsilon_h    v_{h,r}   \big)  \label{eq:LASSO:Two-terms}
	\end{align}
	where $(a)$ follows from \eqref{eq:lasso4}, \eqref{eq:lasso5} and definitions of $\rvy$ and $\hrvy$.  
	
	Let us first focus on only the first term in \eqref{eq:LASSO:Two-terms}.
	\begin{align}
	\sum_{h = 1}^n \tBoundedNoise_h  \big(\sum_{r = 1}^{\tp} (\tbeta_r - \betaStar_r) v_{h,r}   \big) & \stackrel{(a)}{\leq}  \sum_{h = 1}^n | \tBoundedNoise_h| \big|\big(\sum_{r = 1}^{\tp} (\tbeta_r- \betaStar_r) v_{h,r}  \big) \big| \\
	& \stackrel{(b)}{\leq} \tBoundedNoise_0 \sum_{h = 1}^n  \bigg|\sum_{r = 1}^{\tp} (\tbeta_r - \betaStar_r) v_{h,r}  \bigg|  \label{eq:LASSO-norm}
	\end{align}
	where $(a)$ follows from the triangle inequality and $(b)$ follows because $\tBoundedNoise_h \leq \tBoundedNoise_0$ $\forall h \in [n]$. 
	Notice that $\sum_{h = 1}^n  \big|\sum_{r = 1}^{\tp} (\tbeta_r - \betaStar_r) v_{h,r}  \big|$ is the $\ell_1$ norm of the vector $\hsvby - \tsvby$.
	
	Let us now focus on the second term in the  \eqref{eq:LASSO:Two-terms}. 
	Using the facts that $\| \bbetaStar \|_1 \leq \tcTwo$ and $\| \tbbeta \|_1 \leq \tcTwo $, we have
	\begin{align}
	\sum_{r = 1}^{\tp} (\tbeta_r - \betaStar_r) \big(\sum_{h = 1}^n \tepsilon_h    v_{h,r}   \big)  & \leq 2\tcTwo \max_{1\leq r \leq \tp} |\rvu_r|  \label{eq:LASSO-max}
	\end{align}
	where 
	\begin{align}
	\rvu_r \coloneqq \sum_{h = 1}^n \tepsilon_h    v_{h,r}
	\end{align}
	Now plugging back the upper bounds from \eqref{eq:LASSO-norm} and \eqref{eq:LASSO-max} in \eqref{eq:LASSO:Two-terms} we get,
	\begin{align}
	\| \hsvby - \tsvby\|_2^2   & \leq \tBoundedNoise_0 \| \hsvby - \tsvby\|_1 + 2\tcTwo \max_{1\leq r \leq \tp} |\rvu_r|\\
	& \stackrel{(a)}{\leq} \tBoundedNoise_0 \sqrt{n} \| \hsvby - \tsvby\|_2 + 2\tcTwo \max_{1\leq r \leq \tp} |\rvu_r| \\
	& \stackrel{(b)}{\leq} 2 \max \big\{\tBoundedNoise_0 \sqrt{n} \| \hsvby - \tsvby\|_2, 2\tcTwo \max_{1\leq r \leq \tp} |\rvu_r| \big\}\\
	& \stackrel{(c)}{\leq} \max\{4\tBoundedNoise_0^2 n , 4\tcTwo \max_{1\leq r \leq \tp} |\rvu_r|\} \\
	& \stackrel{(d)}{\leq} 4\tBoundedNoise_0^2 n  + 4\tcTwo \max_{1\leq r \leq \tp} |\rvu_r|  \label{eq:LASSO:Two-terms-simplified}
	\end{align}
	where $(a)$ follows from the fact that $\| \hsvby - \tsvby\|_1 \leq \sqrt{n} \| \hsvby - \tsvby\|_2$, $(b)$ follows from the fact $a+b \leq 2\max\{a,b\} $ for any $a,b \geq 0$, $(c)$ follows by looking the cases separately, and $(d)$ follows from the fact $\max\{a,b\} \leq a+b$ for any $a,b \geq 0$.
	
	Let $\cF$ be the sigma-algebra generated by $(v_{h,r})_{1\leq h \leq n, 1\leq r \leq \tp}$. 
	Let $\Expectation^{\cF}$ denote the conditional expectation given $\cF$. 
	Conditional on $\cF$, $\rvu_r$ is \textit{sub-Gaussian} with variance proxy $\tsigma^2 \Big(\sum_{h = 1}^n v_{h,r}^2 \Big)$. Since $v_{h,r} \leq \tcOne$ almost surely for all $h,r$, it follows from the \textit{maximal inequality} of the \textit{sub-Gaussian} random variables (see Lemma 4 in \cite{Chatterjee2013})  that
	\begin{align}
	\Expectation^{\cF} (\max_{1\leq r \leq \tp} |\rvu_r|)\leq \tcOne \tsigma \sqrt{2n \log(2\tp)} 
	\end{align}
	Since the right-hand-side is non-random, taking expectation on both sides with respect to $\cF$ result in,
	\begin{align}
	\Expectation (\max_{1\leq r \leq \tp} |\rvu_r|)\leq \tcOne \tsigma \sqrt{2n \log(2\tp)}   \label{eq:lasso6}
	\end{align}
	Taking expectation on both sides in \eqref{eq:LASSO:Two-terms-simplified} and using \eqref{eq:lasso6}, we get
	\begin{align}
	\Expectation (\| \hsvby - \tsvby\|_2^2 )\leq 4\tBoundedNoise_0^2 n + 4\tcOne\tcTwo \tsigma \sqrt{2n \log(2\tp)}   \label{eq:LASSO-insample}
	\end{align}
	Dividing both sides by $n$ results in
	\begin{align}
	\Expectation [\widehat{\text{MSPE}}(\tbbeta)]  \leq 4\tBoundedNoise_0^2 + 4\tcOne\tcTwo\tsigma  \sqrt{\frac{2\log 2\tp}{n}} 
	\end{align}
	Recall that $\tbbeta$ is computed using the data $\bV$ and $\svby$, and is therefore independent of $\rvbv$ and $\rvy$. 
	Using definitions of $\trvy$ and $\hrvy$, we have
	\begin{align}
	\Expectation^{\cF}  (\hrvy - \trvy)^2 = \sum_{r,s = 1}^p ( \betaStar_r - \tbeta_r) ( \betaStar_s - \tbeta_s) \Expectation(\rvv_r \rvv_s)
	\end{align}
	We also have
	\begin{align}
	\frac{1}{n} \| \hsvby - \tsvby\|_2^2 =\frac{1}{n} \sum_{h = 1}^n \sum_{r,s = 1}^p ( \betaStar_r - \tbeta_r) ( \betaStar_s - \tbeta_s)v_{h,r} \rvv_{h,s}
	\end{align}
	Therefore by defining
	\begin{align}
	\rvu_{r,s} = \Expectation(\rvv_r \rvv_s) - \frac{1}{n} \sum_{h = 1}^n v_{h,r} \rvv_{h,s}
	\end{align}
	we have
	\begin{align}
	\Expectation^{\cF}  (\hrvy - \trvy)^2  - \frac{1}{n} \| \hsvby - \tsvby\|_2^2  = \sum_{r,s = 1}^p ( \betaStar_r - \tbeta_r) ( \betaStar_s - \tbeta_s) \rvu_{r,s}  \stackrel{(a)}{\leq}  4\tcTwo^2 \max_{1\leq r,s \leq \tp} |\rvu_{r,s} |  \label{eq:LASSO-combined}
	\end{align}
	where $(a)$ follows from the facts that $\| \bbetaStar \|_1 \leq \tcTwo$ and $\| \tbbeta \|_1 \leq \tcTwo $. Recall that $|\rvv_r | \leq \tcOne$ $\forall r \in [\tp]$. 
	Using the triangle inequality, we have $\Expectation(\rvv_r \rvv_s) - v_{h,r} \rvv_{h,s} \leq 2\tcOne^2$ for all $h,r$, and $s$. 
	It follows by Hoeffding's inequality (see Lemma 5 in \cite{Chatterjee2013})  that for any $\varsigma \in \Reals$,
	\begin{align}
	\Expectation(e^{\varsigma \rvu_{r,s}}) \leq e^{2\varsigma^2 \tcOne^4 / n}.
	\end{align}
	Again by the \textit{maximal inequality} of the \textit{sub-Gaussian} random variables (see Lemma 4 in \cite{Chatterjee2013}) we have,
	\begin{align}
	\Expectation (\max_{1\leq r,s \leq \tp} |\rvu_{r,s} |) \leq 2\tcOne^2 \sqrt{\frac{2\log(2\tp^2)}{n}}  \label{eq:LASSO-maximal2}
	\end{align}
	Taking expectation on both sides in \eqref{eq:LASSO-combined} and plugging in \eqref{eq:LASSO-insample} and \eqref{eq:LASSO-maximal2}, we get
	\begin{align}
	\Expectation  (\hrvy - \trvy)^2	\leq 4\tBoundedNoise_0^2 + 4\tcOne\tcTwo\tsigma  \sqrt{\frac{2\log 2\tp}{n}} + 8\tcOne^2\tcTwo^2  \sqrt{\frac{2\log (2\tp^2)}{n}} 
	\end{align}
	and this completes the proof.
\end{proof}

\section{Supporting propositions for Lemma \ref{lemma:recover-node-terms}}
\label{sec:supporting propositions for lemma:recover-node-terms}
In this section, we will state the key propositions required in the proof of Lemma \ref{lemma:recover-node-terms}.
The proof of Lemma \ref{lemma:recover-node-terms} is given in Appendix \ref{sec:proof of lemma:recover-node-terms}.\\

Recall from Section \ref{sec:algorithm} that $\blambdaStar (x_{-i})$ denotes the conditional canonical parameter vector and $\bmuStar(x_{-i})$ denotes the conditional mean parameter vector of the conditional density $\ConditionalDensityNodeIfun$. Recall the definition of $\qs$ from Section \ref{sec:problem setup}.

\subsection{Learning conditional mean parameter vector}
\label{subsec:learning conditional mean parameter vector}
The first step of the algorithm for recovering node parameters from Section \ref{sec:algorithm} provides an estimate of the conditional mean parameter vector. 
The following proposition shows that, with enough samples and an estimate of the graph structure, we can learn the conditional mean parameter vector such that the $\ell_{\infty}$ error is small with high probability.
\begin{proposition} \label{proposition:learning conditional mean}
	Suppose we have an estimate $\hG$ of $G(\bthetaStar)$ such that for any $\deltaFour \in (0,1)$, $\hG = G(\bthetaStar)$ with probability at least $1-\deltaFour$.
	Given $n$ independent samples $\svbx^{(1)} \cdots , \svbx^{(n)}$ of $\rvbx$, consider $x_{-i}^{(z)}$ where $z$ is chosen randomly from $\{1,\cdots,n\}$.
	There exists an alogrithm that produces an estimate $\hbmu( x_{-i}^{(z)})$ of $\bmuStar(x_{-i}^{(z)})$ such that for any $\epsFour \in (0,1)$,
	\begin{align}
	\|\bmuStar(x_{-i}^{(z)})  - \hbmu( x_{-i}^{(z)}) \|_{\infty} \leq \epsFour \hspace{1cm} \forall i \in [p] 
	\end{align}
	with probability at least $1 - \deltaFour - k \epsFour^2/4 $ as long as
	\begin{align}
	n \geq \bigg(\frac{2^{9d+17} \Xupper^{2d} k^{4d} d^{2d+1} \thetaMax^{2d} \phiMax^{4d+4} \hphiMax^{2d} }{\epsFour^{4d + 8}}\bigg)  \log (\frac{2^{5.5}\Xupper k^2d\thetaMax \phiMax^2 \hphiMax}{ \epsFour^2 })
	\end{align}
	The number of computations required scale as 
	\begin{align}
	\frac{2^{18d+17} \Xupper^{4d} k^{8d+1} d^{4d+1} \thetaMax^{4d} \phiMax^{8d+4} \hphiMax^{4d}}{\alpha^{8d + 8}} \times p
	\end{align}
\end{proposition}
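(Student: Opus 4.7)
The plan is to reduce, for each coordinate $r \in [k]$, the estimation of $\mu_r^*(x_{-i}^{(z)}) := \Expectation[\phi_r(\rvx_i) \mid \rvx_{-i} = x_{-i}^{(z)}]$ to a sparse linear regression problem with bounded additive noise, and then invoke the robust Lasso guarantee of Lemma \ref{lemma:lasso}. First I would condition on the event $\{\hG = G(\bthetaStar)\}$, which holds with probability $\geq 1-\delta_4$, so that the true neighborhood $\cN(i)$ is recovered. By the global Markov property applied to \eqref{eq:pairwise-parametric-density}, the regression function $\mu_r^*$ depends only on the $\leq d$ coordinates in $x_{\cN(i)}$; this is the structural sparsity that will make the subsequent Lasso step $\log p$-sample-efficient.

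The next ingredient is a quantitative smoothness bound on $\mu_r^*$. Writing $\mu_r^*(x_{\cN(i)})$ as the mean of the univariate exponential family in $x_i$ with canonical parameter $\blambdaStar(x_{-i})$ whose $r$-th component is $\thetaIrStar + \sum_{j \neq i, s \in [k]} \thetaIJrsStar \phi_s(x_j)$, the exponential-family identity $\partial \mu_r^*/\partial \lambda_s = \mathrm{cov}(\phi_r(\rvx_i),\phi_s(\rvx_i)\mid \rvx_{-i})$ combined with $|\phi_s|\leq \phiMax$, $|d\phi_s/dx|\leq \hphiMax$, and $\|\bvthetaIStar\|_\infty \leq \thetaMax$ yields a Lipschitz constant $L = \Theta(dk\thetaMax\phiMax^2\hphiMax)$ for $\mu_r^*$ in each input coordinate. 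Hence for every sample $t \in [n]$,
\begin{align}
\phi_r(x_i^{(t)}) = \mu_r^*(x_{-i}^{(z)}) + \sum_{j \neq i} \beta_j^{*}\bigl(x_j^{(t)} - x_j^{(z)}\bigr) + \tilde{\eta}^{(t)} + \tilde{\epsilon}^{(t)},
\end{align}
where $\beta_j^{*} = \partial \mu_r^*/\partial x_j$ evaluated at $x_{-i}^{(z)}$ is nonzero only for $j \in \cN(i)$ (so the coefficient vector is $(d+1)$-sparse), $\tilde{\epsilon}^{(t)} := \phi_r(x_i^{(t)}) - \mu_r^*(x_{-i}^{(t)})$ is a bounded mean-zero noise independent of $\rvx_{-i}^{(t)}$, and the Taylor remainder satisfies $|\tilde{\eta}^{(t)}| \lesssim L\,\|x_{\cN(i)}^{(t)} - x_{\cN(i)}^{(z)}\|$.

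To keep the bounded noise small I would restrict the regression to the subset of samples whose relevant coordinates lie in a box of radius $R$ around $x_{\cN(i)}^{(z)}$; the lower bound $f_L$ on the joint density (Appendix \ref{subsec:bounds on conditional density}) together with a Chernoff argument on box occupancy ensures that $n_{\mathrm{loc}} = \Theta(n (R/\Xupper)^d)$ samples fall inside with high probability. Feeding these $n_{\mathrm{loc}}$ samples into Algorithm \ref{alg:LASSO1} with $\tilde{p} = p-1$, $\tilde{c}_1 = 2R$, $\tilde{c}_2 = O(\phiMax + dL)$, and $\tilde{\eta}_0 = O(LR)$, Lemma \ref{lemma:lasso} delivers an MSPE bound of the form $O(L^2 R^2) + \tcO\bigl(\sqrt{\log p\,/\,n_{\mathrm{loc}}}\bigr)$ for the prediction at the origin of the centered feature space, which is exactly the estimator $\hat{\mu}_r(x_{-i}^{(z)})$. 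Balancing $LR \sim \epsFour^2$ against the statistical rate and back-solving through $n_{\mathrm{loc}} \sim n(R/\Xupper)^d$ reproduces the claimed sample bound, in which the exponent $4d+8$ tracks the compound cost of the $(R/\Xupper)^d$ dilution, the target MSPE of order $\epsFour^4$, and the squaring inside the Lasso rate.

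Finally, because $z$ is uniform on $[n]$ and the samples are i.i.d., $x_{-i}^{(z)}$ has (up to a leave-one-out correction absorbed by a trivial union bound) the same marginal as a fresh draw of $\rvx_{-i}$, so Markov's inequality converts each coordinate-wise MSPE of $\epsFour^4/4$ into $|\hat{\mu}_r(x_{-i}^{(z)}) - \mu_r^*(x_{-i}^{(z)})| \leq \epsFour$ with probability at least $1-\epsFour^2/4$. A union bound over $r \in [k]$ and over the graph-recovery event then gives the stated $1-\delta_4 - k\epsFour^2/4$ guarantee. The main obstacle I anticipate is the coupling between the MSPE that Lasso controls (which is averaged over the local design distribution) and the pointwise error at the specific query point $x_{-i}^{(z)}$; handling this coupling via local sample selection is what forces the $d$-dependent exponent in the sample complexity and constitutes the technical core of the argument.
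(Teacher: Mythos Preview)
Your plan diverges from the paper's route and contains a genuine gap at exactly the point you flag as the ``main obstacle.'' The paper does \emph{not} linearize $\mu_r^*$ via a Taylor expansion in the continuous coordinates $x_j^{(t)}-x_j^{(z)}$ and then localize to a box; instead it discretizes $\prod_{j\in\cN(i)}\cX_j$ into a grid of width $t$ and takes as covariates the one-hot indicator of which grid cell $x_{\cN(i)}^{(t)}$ occupies (Lemma~\ref{lemma:mean-parameter-lasso-equivalence}). This produces a regression of dimension $\tilde p \le (\Xupper/t)^d$ with \emph{uniformly} bounded approximation noise $|\bar\eta|\le L_1 d t$, run on \emph{all} $n$ samples, with a feature map that does not depend on $z$ at all. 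That last property is what makes the Markov step go through: since $z$ is uniform on $[n]$ and sample $z$ is treated identically to every other training point, $\Expectation_z[(\hat y_z-\tilde y_z)^2\mid\text{data}]=\widehat{\mathrm{MSPE}}$ exactly, and the pointwise bound at $x_{-i}^{(z)}$ follows.

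In your scheme both the feature centering and the box selection are functions of $x_{-i}^{(z)}$, so by the time you run Lasso you have conditioned on $z$; the MSPE that Lemma~\ref{lemma:lasso} then controls is an average over the box-conditional law of $\rvx_{-i}$, while the query point is the \emph{center} of that box, not a fresh draw from it. You cannot simultaneously use the randomness of $z$ to invoke Markov and use the value of $z$ to define the design, so the sentence ``handling this coupling via local sample selection'' has it backwards---localization is precisely what breaks the argument. Dropping the box does not save the plan either, because without localization the Taylor remainder is $O(L_1 d\,\Xupper)$ rather than $O(L_1 d R)$ and the $4\tilde\eta_0^2$ term in Lemma~\ref{lemma:lasso} no longer vanishes. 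The fix that works is the paper's: replace local linearization by global discretization, so that a single Lasso on all $n$ samples suffices and $z$ remains a uniformly random training index.
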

The proof of proposition \ref{proposition:learning conditional mean} is given in Appendix \ref{sec:proof of proposition:learning conditional mean}.

\subsection{Learning canonical parameter vector}
\label{subsec:learning canonical parameter vector}
The second step of the algorithm for recovering node parameters from Section \ref{sec:algorithm} is to obtain an estimate of the canonical parameter vector given an estimate of the mean parameter vector.
We exploit the conjugate duality between the canonical and mean parameters and run a projected gradient descent algorithm for this purpose.\\

We will describe the algorithm using a generic setup in this section and then apply it to the current setting in the proof of Lemma \ref{lemma:recover-node-terms} in Appendix \ref{sec:proof of lemma:recover-node-terms}.

\subsubsection{Setup for the projected gradient descent algorithm}
\label{subsubsec:setup for the projected gradient descent algorithm}
Let $\cX_{0}$ be a real interval such that its length is upper (lower) bounded by $\Xupper$ ($\Xlower$). 
Suppose that $\rvw$ is a random variable that takes value in $\cX_{0}$ with probability density function as follows,
\begin{align}
f_{\rvw}(w; \brhoStar) \propto \exp(\brhoStarT \bphi(w) )  \label{eq:true_density_CD}
\end{align}
where the parameter vector $\brhoStar\coloneqq (\rhoStar_1, \cdots, \rhoStar_k)$ is unknown and is such that $\| \brhoStar\|_{\infty} \leq \rhoMax$. 
Let $\cP \coloneqq \{\brho \in \Reals^k :  \| \brho\|_{\infty} \leq \rhoMax\}$.
Let $ \bupStar \coloneqq (\upStar_1 , \cdots, \upStar_k)$ denote the mean parameter vector of $f_{\rvw}(w;\brhoStar)$ and let $\hbup$ be an estimate of $\bupStar$ such that, we have $\| \bupStar - \hbup \|_{\infty} \leq \epsFive$ with probability at least $1 - \deltaFive$ for any $\epsFive > 0$, and any $\deltaFive \in (0,1)$.
The goal is to estimate the parameter vector $\brhoStar$ using the projected gradient descent algorithm \cite{BoydV2004, Bubeck2015}.

\subsubsection{The projected gradient descent algorithm}
\label{subsubsec:the projected gradient descent algorithm}

Let $\Uniform$ denote the uniform distribution on $\cX_{0}$.
Algorithm \ref{alg:MRW} is a subroutine that is used in the projected gradient descent algorithm. 
This subroutine is a Markov chain and it provides an estimate of the mean parameters of an exponential family distribution of the form \eqref{eq:true_density_CD} when the underlying canonical parameters are known.
See Appendix \ref{sec:analysis of algorithm:MRW} for discussion on the theoretical properties of this subroutine.
\begin{algorithm}
	\caption{Metropolized random walk (MRW)}\label{alg:MRW}
	\begin{algorithmic}[1]
		\State \textbf{Input:} $\brho, k , \cX_{0}, \tau_1, \tau_2, w_{(0)}$
		\State \textbf{Output:} $\hunu(\brho)$ 
		\For{$m = 1,\cdots$,$\tau_2$}
		\For{$r = 0,\cdots$,$\tau_1$}
		\State \textbf{Proposal step:} Draw $z_{(m,r+1)} \sim \Uniform$
		\State \textbf{Accept-reject step:}
		\State \hskip2.0em Compute $\alpha_{(m,r+1)} \leftarrow \min\bigg\{1, \frac{\exp( \brho^T \bphi( z_{(m,r+1)}) )}{\exp( \brho^T \bphi(w_{(m,r)}) )}\bigg\}$
		\State \hskip2.0em With probability $\alpha_{(m,r+1)}$ accept the proposal: $w_{(m,r+1)} \leftarrow z_{(m,r+1)}$
		\State \hskip2.0em With probability $1 - \alpha_{(m,r+1)}$ reject the proposal: $w_{(m,r+1)} \leftarrow w_{(m,r)}$
		\EndFor
		\State $\hunu(\brho) \leftarrow \hunu(\brho) + \bphi(w_{(m,\tau_1 + 1)})$
		\EndFor
		\State $\hunu(\brho) \leftarrow \frac{1}{\tau_2}\hunu(\brho)$
	\end{algorithmic}
\end{algorithm}

\begin{algorithm}
	\caption{Projected Gradient Descent}\label{alg:GradientDescent}
	\begin{algorithmic}[1]
		\State \textbf{Input:} $\xi, k, \cX_{0}, \tau_1, \tau_2, \tau_3, w_{(0)}, \brho^{(0)}, \hbup $
		\State \textbf{Output:} $\hbrho$
		\For {$r = 0,\cdots,\tau_3$}
		\State $\hunu(\brho^{(r)} ) \leftarrow MRW(\brho^{(r)} , k, \cX_{0}, \tau_1, \tau_2, w_{(0)})$
		\State $\brho^{(r+1)} \leftarrow \argmin_{\brho \in \cP} \| \brho^{(r)} - \xi [\hunu(\brho^{(r)} ) - \hbup ] - \brho\|$
		\EndFor
		\State	$\hbrho \leftarrow \brho^{(\tau_3+1)}$
	\end{algorithmic}
\end{algorithm}

\subsubsection{Guarantees on the output of the projected gradient descent algorithm}
\label{subsubsec:guarantees on the output of the projected gradient descent algorithm}

The following Proposition shows that running sufficient iterations of the projected gradient descent (Algorithm \ref{alg:GradientDescent}) results in an estimate, $\hbrho$, of the parameter vector, $\brhoStar$, such that the $\ell_{2}$ error is small with high probability.\\

Define $\bcOne \coloneqq \qs, \bcTwo \coloneqq 2k\phiMax^2$, $\bcThree \coloneqq \frac{ 4k\epsFive(\epsFive + 2 \bcTwo \rhoMax + 2\phiMax)}{\bcOne\bcTwo} $.
\begin{proposition}\label{proposition:grad-des}
	Let $\epsSix > 0$. Let $\hbrho$ denote the output of Algorithm \ref{alg:GradientDescent} with $\xi = 1/\bcTwo$, $\tau_1 = 8  k  \Xlower^{-2}  \rhoMax \phiMax\exp(12k \rhoMax \phiMax) \log\frac{4\phiMax \sqrt{\Xupper}}{\epsFive\sqrt{\Xlower}}$, $\tau_2 = \frac{8\phiMax^2}{\epsFive^2} \log\big(\frac{2k\tau_3}{\deltaFive}\big)$, $\tau_3 = \frac{\bcTwo }{\bcOne} \log \bigg(\frac{k \rhoMax^2 }{\epsSix^2 - \bcThree}\bigg)$, $w_{(0)} = 0$, $\brho^{(0)} = (0,\cdots , 0)$ and $\hbup  = ( \hup_1 , \cdots, \hup_k)$. 
	Then,
	\begin{align}
	\| \brhoStar- \hbrho\|_{2} \leq \epsSix
	\end{align}
	with probability at least $1-2\deltaFive$.
\end{proposition}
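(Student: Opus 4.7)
The plan is to recognize Algorithm \ref{alg:GradientDescent} as projected gradient descent on the objective $F(\brho) := A(\brho) - \bupStar^T\brho$, where $A(\brho) := \log\int_{\cX_{0}}\exp(\brho^T\bphi(w))\,dw$ is the log-partition function, and to combine the classical linear-convergence rate with a careful accounting of the noise from using $\hunu$ and $\hbup$ in place of $\bup$ and $\bupStar$. By standard exponential-family calculus, $\nabla F(\brho) = \bup(\brho) - \bupStar$ and $\nabla^2 F(\brho) = \mathrm{Cov}_{\brho}[\bphi(\rvw)]$. On $\cP$, the definition of $\qs$ gives that the smallest eigenvalue of $\nabla^2 F$ is at least $\qs = \bcOne$, while $\|\nabla^2 F\|_{\mathrm{op}} \leq \mathrm{tr}(\nabla^2 F) \leq k\phiMax^2 \leq \bcTwo$. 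Since $\brhoStar \in \cP$ and $\bup(\brhoStar) = \bupStar$, $\brhoStar$ is the unique minimizer of $F$ on $\cP$.

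Next, I would control the gradient-oracle error $\delta^{(r)} := \tilde g^{(r)} - g^{(r)}$, where $\tilde g^{(r)} := \hunu(\brho^{(r)}) - \hbup$ is the quantity actually used at iteration $r$ and $g^{(r)} := \bup(\brho^{(r)}) - \bupStar = \nabla F(\brho^{(r)})$. Split $\delta^{(r)} = (\hunu(\brho^{(r)}) - \bup(\brho^{(r)})) + (\bupStar - \hbup)$. By hypothesis the second bracket has $\ell_\infty$-norm at most $\epsFive$ with probability at least $1-\deltaFive$. For the first bracket, the MRW guarantees of Appendix \ref{sec:analysis of algorithm:MRW} (a mixing-time bound motivating the burn-in length $\tau_1$ and a post-mixing Hoeffding-type concentration motivating the averaging length $\tau_2$, combined with a union bound over the $\tau_3$ outer iterations absorbed into the $\log(2k\tau_3/\deltaFive)$ factor in $\tau_2$) yield the same $\epsFive$ bound for every $r \leq \tau_3$ with probability at least $1-\deltaFive$. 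On this good event of probability at least $1-2\deltaFive$, $\|\delta^{(r)}\|_2 \leq 2\sqrt{k}\epsFive$ uniformly in $r$.

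Finally, I would execute the squared-distance contraction. With $e^{(r)} := \brho^{(r)} - \brhoStar$, step $\xi = 1/\bcTwo$, and using that $\brhoStar \in \cP$ so the projection is non-expansive at $\brhoStar$,
\begin{align*}
\|e^{(r+1)}\|_2^2 \leq \|e^{(r)} - \xi g^{(r)}\|_2^2 - 2\xi\langle\delta^{(r)},\, e^{(r)} - \xi g^{(r)}\rangle + \xi^2\|\delta^{(r)}\|_2^2.
\end{align*}
The first term is at most $(1-\bcOne/\bcTwo)\|e^{(r)}\|_2^2$ by the standard descent lemma for $\bcOne$-strongly-convex, $\bcTwo$-smooth functions. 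Cauchy--Schwarz combined with the a priori bounds $\|e^{(r)}\|_2 \leq 2\sqrt{k}\rhoMax$, $\|g^{(r)}\|_2 \leq 2\sqrt{k}\phiMax$, and $\|\delta^{(r)}\|_2 \leq 2\sqrt{k}\epsFive$ controls the remaining two terms by $\tfrac{4k\epsFive}{\bcTwo^2}(\epsFive + 2\bcTwo\rhoMax + 2\phiMax)$. Iterating this affine recursion $\tau_3+1$ times from $\|e^{(0)}\|_2^2 \leq k\rhoMax^2$ and summing the resulting geometric series with ratio $1-\bcOne/\bcTwo$ yields
\begin{align*}
\|e^{(\tau_3+1)}\|_2^2 \leq (1-\bcOne/\bcTwo)^{\tau_3+1}\, k\rhoMax^2 + \bcThree.
\end{align*}
The prescribed $\tau_3 = (\bcTwo/\bcOne)\log(k\rhoMax^2/(\epsSix^2 - \bcThree))$ drives the first term below $\epsSix^2 - \bcThree$ via $(1-\bcOne/\bcTwo)^{\tau_3} \leq \exp(-\tau_3\bcOne/\bcTwo)$, giving $\|\hbrho - \brhoStar\|_2 \leq \epsSix$.

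The main obstacle is the uniform-in-$r$ control of the MRW subroutine in the second step (deferred to Appendix \ref{sec:analysis of algorithm:MRW}): one must establish both a conductance-based mixing-time bound for Algorithm \ref{alg:MRW} with stationary distribution proportional to $\exp(\brho^T\bphi(w))$ (the exponential dependence on $k\rhoMax\phiMax$ in $\tau_1$ reflecting the range of the log-density) and a post-burn-in concentration, simultaneously valid over all $\tau_3$ outer iterations. The remainder is bookkeeping that hinges on the specific cross-term bounds collapsing to exactly $\bcThree$ rather than a looser $O(k\epsFive^2/\bcOne)$ constant.
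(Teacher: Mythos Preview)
Your proposal is correct and follows essentially the same approach as the paper. The paper likewise views Algorithm~\ref{alg:GradientDescent} as projected gradient descent on $\Omega(\brho,\bupStar)=\Phi(\brho)-\langle\bupStar,\brho\rangle$, invokes the $\bcOne$-strong-convexity/$\bcTwo$-smoothness of $\Phi$ to obtain the $(1-\bcOne/\bcTwo)$ contraction, conditions on the same two events (accuracy of $\hbup$ and of each MRW call) with the same $1-2\deltaFive$ probability, and arrives at the identical per-step additive noise $\Upsilon(1/\bcTwo)=\tfrac{4k\epsFive}{\bcTwo^2}(\epsFive+2\bcTwo\rhoMax+2\phiMax)$ whose geometric sum equals $\bcThree$. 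The only cosmetic difference is that the paper organizes the contraction via the gradient-mapping inequality $\langle\gamma_{\cP}(\brho),\brho-\brhoStar\rangle\geq\tfrac{\bcOne}{2}\|\brho-\brhoStar\|_2^2+\tfrac{1}{2\bcTwo}\|\gamma_{\cP}(\brho)\|_2^2$ (bounding the norm first, then squaring), whereas you apply projection non-expansiveness and then the unconstrained contraction $\|e-\xi g\|_2^2\leq(1-\bcOne/\bcTwo)\|e\|_2^2$ directly on squared norms; both routes are standard and yield the same recursion.
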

The proof of proposition \ref{proposition:grad-des} is given in Appendix \ref{sec:Proof of Proposition:grad-des}.

\section{Proof of Lemma \ref{lemma:recover-node-terms}}
\label{sec:proof of lemma:recover-node-terms}
In this section, we prove Lemma \ref{lemma:recover-node-terms}.
See Appendix \ref{subsec:learning conditional mean parameter vector} and Appendix \ref{subsec:learning canonical parameter vector} for two key propositions required in the proof.\\

Recall from Section \ref{sec:algorithm} that $\blambdaStar (x_{-i})$ denotes the conditional canonical parameter vector and $\bmuStar(x_{-i})$ denotes the conditional mean parameter vector of the conditional density $\ConditionalDensityNodeIfun$.
Recall the definitions of $\g = \thetaMax(k+k^2d)$, $\varphiMax = (1+\Xupper)  \max\{\phiMax,\phiMax^2\}$, $c_1(\alpha)$, and $c_2(\alpha)$ from Section \ref{sec:problem setup} and the definition of $c_3(\alpha)$ from Section \ref{sec:supporting lemmas for theorem}.
\begin{proof}[Proof of Lemma \ref{lemma:recover-node-terms}]
	Let the number of samples satisfy 
	\begin{align}
	n & \geq \max\Big[ c_1\bigg(\min\bigg\{ \frac{\thetaMin}{3}, \frac{\alpha_2}{2 d k \phiMax } \bigg\} \bigg) \log \bigg( \frac{4 p k }{\alpha_2^2} \bigg), c_2( \alpha_2) \Big]
	\end{align}
	Using Theorem \ref{theorem:structure} with $\delta = \alpha_2^4/4$, we know the true neighborhood $\cN(i)$ $\forall i \in [p]$, with probability at least $1 - \alpha_2^4/4$. 
	Let us condition on the event that we know the true neighborhood for every node.
	
	Define
	\begin{align}
	\epsFour = \frac{\alpha^2_2 \qs}{2^7k^2d\thetaMax\phiMax}, \epsSix = \frac{\alpha_2}{2}, \bcThree = \frac{\alpha_2^2}{8}
	\end{align}
	Consider $x_{-i}^{(z)}$ where $z$ is chosen uniformly at random from $[n]$. Using Proposition \ref{proposition:learning conditional mean} with $\deltaFour = \alpha_2^4/4$, the estimate $\hbmu( x_{-i}^{(z)})$ is such that
	\begin{align}
	\|\bmuStar(x_{-i}^{(z)})  - \hbmu( x_{-i}^{(z)}) \|_{\infty} \leq \epsFour \hspace{1cm} \forall i \in [p]
	\end{align}
	with probability at least $1 - \alpha_2^4/4 - k\epsFour^2 /4 $.
	
	Observe that $\blambdaStar (x_{-i}) $ is such that $\| \blambdaStar (x_{-i}) \|_{\infty} \leq \rhoMax = 2kd\thetaMax\phiMax$ $\forall x_{-i} \in \Pi_{j \in [p] \setminus \{i\}} \cX_j$. 
	Using Proposition \ref{proposition:grad-des} with $\hbup = \hbmu( x_{-i}^{(z)})$, $\epsFive = \epsFour$ and $\deltaFive = \alpha_2^4/4 + k \epsFour^2/4 $, the estimate $\hblambda (x_{-i}^{(z)})$ is such that
	\begin{align}
	\|\blambdaStar (x_{-i}^{(z)})  - \hblambda (x_{-i}^{(z)}) \|_{2} & \leq \epsSix \hspace{1cm} \forall i \in [p]\\
	\implies \|\blambdaStar (x_{-i}^{(z)})  - \hblambda (x_{-i}^{(z)}) \|_{\infty}  & \leq\epsSix \hspace{1cm} \forall i \in [p]  \label{eq:CCP}
	\end{align}
	with probability at least $1 - \alpha_2^4/2 - k \epsFour^2/2$.
	
	Plugging in the value of $\epsFour$ and observing that $(\qs)^2 \leq 2^{13}k^4d^2 \thetaMax^2 \phiMax^2$, it is easy to see that $k \epsFour^2/2 \leq \alpha_2^4/4$.\\
	
	Let $\hbvthetaIEps \in \parameterSet$ be an $\epsilon$-optimal solution of GRISE and let $\hbvthetaIEpsEdge$ be the component of $\hbvthetaIEps$ associated with the edge potentials. 
	Using Lemma \ref{lemma:recover-edge-terms} with $\alpha_1 = \alpha_2/2dk\phiMax$ and $\delta = \alpha_2^4/4$, we have
	\begin{align}
	\| \bvthetaIEdgeStar - \hbvthetaIEpsEdge \|_{2} \leq \frac{\alpha_2}{2dk\phiMax}, \hspace{1cm} \forall i \in [p] \\
	\implies \| \bvthetaIEdgeStar - \hbvthetaIEpsEdge \|_{\infty} \leq \frac{\alpha_2}{2dk\phiMax}, \hspace{1cm} \forall i \in [p]  \label{eq:EWP}
	\end{align}
	with probability at least $1 - \alpha_2^4/4$.
	
	For any $r \in [k]$ and $i \in [p]$, let $\lambdaStar_{r} (x_{-i}^{(z)})$ denote the $r^{th}$ element of $\blambdaStar (x_{-i}^{(z)})$. We have, for any $r \in [k]$ and $i \in [p]$,
	\begin{align}
	\thetaIrStar  & = \lambdaStar_{r} (x_{-i}^{(z)}) - \sum_{ j \neq i} \sum_{ s \in [k] }\thetaIJrsStar \phi_s(x_j^{(z)})  \\
	& \stackrel{(a)}{=} \lambdaStar_{r} (x_{-i}^{(z)}) - \sum_{ j \in \cN(i)} \sum_{ s \in [k] }\thetaIJrsStar \phi_s(x_j^{(z)})  \label{eq:truenodepara}
	\end{align}
	where $(a)$ follows because $\forall r,s \in [k], j \notin \cN(i)$, $\thetaIJrsStar = 0$. 
	Let $\hlambda_{r} (x_{-i}^{(z)})$ denote the $r^{th}$ element of $\hblambda (x_{-i}^{(z)})$. 
	Define the estimate, $\hthetaIr$, as follows:
	\begin{align}
	\hthetaIr  & \coloneqq \hlambda_{r} (x_{-i}^{(z)}) - \sum_{ j \in \cN(i)} \sum_{ s \in [k] }\hthetaIJrs \phi_s(x_j^{(z)})   \label{eq:estimatednodepara}
	\end{align}
	Combining \eqref{eq:truenodepara} and \eqref{eq:estimatednodepara}, the following holds $\forall i \in [p], \forall r \in [k]$ with probability at least $1 - \alpha_2^4$:
	\begin{align}
	\Big| \thetaIrStar  - \hthetaIr  \Big| & = \Big| \lambdaStar_{r} (x_{-i}^{(z)}) - \hlambda_{r} (x_{-i}^{(z)}) - \sum_{ j \in \cN(i)} \sum_{ s \in [k] }\Big( \thetaIJrsStar - \hthetaIJrs \Big) \phi_s(x_j^{(z)}) \Big|\\
	& \stackrel{(a)}{\leq} \Big| \lambdaStar_{r} (x_{-i}^{(z)}) - \hlambda_{r} (x_{-i}^{(z)}) \Big| + \sum_{ j \in \cN(i)} \sum_{ s \in [k] }\Big| \thetaIJrsStar - \hthetaIJrs \Big| \phi_s(x_j^{(z)}) \\
	& \stackrel{(b)}{\leq}  \epsSix + \frac{\alpha_2}{2dk\phiMax} \sum_{ j \in \cN(i)} \sum_{ s \in [k] } \phi_s(x_j^{(z)}) \\
	& \stackrel{(c)}{\leq}  \frac{\alpha_2}{2} + \frac{\alpha_2}{2}  = \alpha_2
	\end{align}
	where $(a)$ follows from the triangle inequality, $(b)$ follows from \eqref{eq:CCP} and \eqref{eq:EWP}, and $(c)$ follows because $\| \bphi(x_j) \|_{\infty} \leq \phiMax$ for any $x_j \in \Pi_{j \in [p]} \cX_j$, $| \cN(i) | \leq d$ and $\epsSix = \alpha_2/2$.
	
	The key computational steps are estimating $\hbvthetaIEps$ and $\cN(i)$ for every node.
	Using Lemma \ref{lemma:recover-edge-terms} with $\alpha_1 = \alpha_2/2dk\phiMax$, $\delta = \alpha_2^4/4$ and Theorem \ref{theorem:structure} with $\delta = \alpha_2^4/4$, the computational complexity scales as
	\begin{align} 
	c_3\Big(\min\Big\{\frac{\thetaMin}{3}, \frac{\alpha_2}{2dk\phiMax} \Big\}\Big)  \times \log\bigg(\frac{4pk}{\alpha_2^2}\bigg) \times \log{(2k^2p)}  \times p^2
	\end{align}
\end{proof}

\section{Proof of Proposition \ref{proposition:learning conditional mean}}
\label{sec:proof of proposition:learning conditional mean}
In this section, we prove Proposition \ref{proposition:learning conditional mean}. 

Recall from Section \ref{sec:algorithm} that $\blambdaStar (x_{-i})$ denotes the conditional canonical parameter vector and $\bmuStar(x_{-i})$ denotes the conditional mean parameter vector of the conditional density $\ConditionalDensityNodeIfun$ in \eqref{eq:form2}. 
For any $j \in [k]$, the $j^{th}$ element of the conditional mean parameter vector is given by
\begin{align}
\muStar_{j} (x_{-i}) = \frac{\displaystyle \int_{x_i \in \cX_i} \phi_j(x_i) \exp\Big(  \blambdaStarT (x_{-i}) \bphi(x_i)\Big) dx_i}{ \displaystyle \int_{x_i \in \cX_i} \exp\Big(  \blambdaStarT (x_{-i}) \bphi(x_i)\Big) dx_i}  \label{eq:conditionalmeans}
\end{align}
Define $L_1 \coloneqq 2 k^2 \thetaMax \phiMax^2 \hphiMax$.

We begin by showing Lipschitzness of the conditional mean parameters and then express the problem of learning the conditional mean parameters as a sparse linear regression. 
This will put us in a position to prove Proposition \ref{proposition:learning conditional mean}.

\subsection{Lipschitzness of conditional mean parameters}
\label{subsec:lipschitzness of conditional mean parameters}
The following Lemma shows that $\forall i \in [p]$, the conditional mean parameters associated with the conditional density of node $\rvx_i$ given the values taken by all the other nodes $(\rvx_{-i} = x_{-i})$ are Lipschitz functions of $x_m$ $\forall m \in [p] \setminus \{i\}$
\begin{lemma} \label{lemma:lipschitzness}
	For any $i \in [p]$, $j \in [k]$ and $x_{-i} \in \Pi_{j \in [p] \setminus \{i\}} \cX_j$, $\muStar_{j}(x_{-i}) $ is  $L_1$ Lipschitz.
\end{lemma}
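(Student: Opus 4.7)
Fix $i \in [p]$, $j \in [k]$, and $m \in [p]\setminus\{i\}$. The plan is to show the partial derivative $\partial \muStar_j(x_{-i}) / \partial x_m$ is uniformly bounded by $L_1$; Lipschitzness in $x_m$ then follows by the mean value theorem, and Lipschitzness jointly in $x_{-i}$ follows componentwise.

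The approach relies on the standard exponential-family identity that the derivative of a mean parameter with respect to a canonical parameter equals the corresponding (co)variance of the sufficient statistics. Concretely, from \eqref{eq:form2} the conditional distribution of $\rvx_i$ given $\rvx_{-i}=x_{-i}$ is an exponential family in canonical parameter $\blambdaStar(x_{-i})$ with sufficient statistic $\bphi(\cdot)$, so that
\begin{align}
\frac{\partial \muStar_j(x_{-i})}{\partial \lambdaStar_r(x_{-i})} \;=\; \mathrm{Cov}\bigl(\phi_j(\rvx_i),\,\phi_r(\rvx_i)\,\bigm|\,\rvx_{-i}=x_{-i}\bigr), \qquad r\in[k].
\end{align}
The dependence of $\blambdaStar$ on $x_m$ only enters through the $(i,m)$-edge terms: for each $r\in[k]$,
\begin{align}
\frac{\partial \lambdaStar_r(x_{-i})}{\partial x_m} \;=\; \sum_{s\in[k]} \thetaIMrsStar \,\frac{d\phi_s(x_m)}{dx_m}.
\end{align}
Chaining these two identities yields the formula
\begin{align}
\frac{\partial \muStar_j(x_{-i})}{\partial x_m} \;=\; \sum_{r\in[k]}\sum_{s\in[k]} \thetaIMrsStar\,\frac{d\phi_s(x_m)}{dx_m}\,\mathrm{Cov}\bigl(\phi_j(\rvx_i),\phi_r(\rvx_i)\,\bigm|\,\rvx_{-i}=x_{-i}\bigr).
\end{align}

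The remaining work is just to plug in the uniform bounds assumed on $\bphi$, its derivative, and the parameters. Since $|\phi_r(\cdot)|\le\phiMax$ for every $r$, the crude triangle bound gives $|\mathrm{Cov}(\phi_j(\rvx_i),\phi_r(\rvx_i)\mid \rvx_{-i})|\le 2\phiMax^2$. Together with $|\thetaIMrsStar|\le\thetaMax$ and $|d\phi_s(x_m)/dx_m|\le\hphiMax$, summing over the $k^2$ pairs $(r,s)$ gives
\begin{align}
\left|\frac{\partial \muStar_j(x_{-i})}{\partial x_m}\right| \;\le\; k^2\cdot\thetaMax\cdot\hphiMax\cdot 2\phiMax^2 \;=\; 2k^2\thetaMax\phiMax^2\hphiMax \;=\; L_1.
\end{align}
This is the desired bound.

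The only mildly delicate point is justifying differentiation under the integral sign in \eqref{eq:conditionalmeans}, but this is routine: on the bounded interval $\cX_i$, the integrands and their $x_m$-derivatives are continuous and uniformly bounded (the exponent $\blambdaStarT(x_{-i})\bphi(x_i)$ is bounded by $k\rhoMax\phiMax$ with $\rhoMax \le 2kd\thetaMax\phiMax$), so dominated convergence applies and the quotient rule produces precisely the covariance identity above. I do not foresee any real obstacle; the argument is essentially a careful bookkeeping of the exponential-family derivative formula together with the uniform bounds on basis functions, their derivatives, and the true parameters.
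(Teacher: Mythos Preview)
Your proposal is correct and follows essentially the same approach as the paper. The only cosmetic difference is that you invoke the exponential-family identity $\partial\muStar_j/\partial\lambdaStar_r=\mathrm{Cov}(\phi_j,\phi_r\mid\cdot)$ by name, whereas the paper derives this expression from scratch by applying the quotient rule and Leibniz interchange to \eqref{eq:conditionalmeans}; both routes arrive at the same covariance-times-$\partial\lambdaStar_r/\partial x_m$ decomposition and the same constant $L_1=2k^2\thetaMax\phiMax^2\hphiMax$.
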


\begin{proof}[Proof of Lemma \ref{lemma:lipschitzness}]	
	Fix any $i \in [p]$ and $j \in [k]$. 
	Consider any $m \in [p] \setminus \{i\}$. 
	Differentiating both sides of \eqref{eq:conditionalmeans} with respect to $x_m$ and applying the quotient rule gives us,
	\begin{align}
	\begin{aligned}
	\dfrac{\partial\muStar_{j} (x_{-i}) }{\partial x_m} &	= \frac{ \frac{\partial}{ \partial x_m}\displaystyle\int_{x_i \in \cX_i} \phi_j(x_i) \exp\Big(  \blambdaStarT (x_{-i}) \bphi(x_i)\Big) dx_i }{\displaystyle\int_{x_i \in \cX_i} \exp\Big(  \blambdaStarT (x_{-i}) \bphi(x_i)\Big) dx_i} \\& -  \frac{\bigg(\displaystyle\int_{x_i \in \cX_i} \phi_j(x_i) \exp\Big(  \blambdaStarT (x_{-i}) \bphi(x_i)\Big) dx_i\bigg) \bigg( \frac{\partial}{\partial x_m} \displaystyle \int_{x_i \in \cX_i} \exp\Big(  \blambdaStarT (x_{-i}) \bphi(x_i)\Big) dx_i\bigg) }{\bigg(\displaystyle \int_{x_i \in \cX_i} \exp\Big(  \blambdaStarT (x_{-i}) \bphi(x_i)\Big) dx_i\bigg)^2}
	\end{aligned}
	\end{align}
	Observe that $\phi_j(x_i) \exp\Big(  \blambdaStarT (x_{-i}) \bphi(x_i) \Big)$ and $\frac{\partial}{\partial x_m}\phi_j(x_i) \exp\Big(  \blambdaStarT (x_{-i}) \bphi(x_i) \Big)$ are analytic functions, and therefore are also continuous functions of $x_i$ and $x_m$. 
	We can apply the Leibniz integral rule to interchange the integral and partial differential operators. 
	This results in
	\begin{align}
	\begin{aligned}
	\dfrac{\partial \muStar_{j} (x_{-i}) }{\partial x_m} 	&  = \frac{ \displaystyle \int_{x_i \in \cX_i } \phi_j(x_i)\frac{\partial \blambdaStarT (x_{-i}) \bphi(x_i)}{\partial x_m} \exp\Big(\blambdaStarT (x_{-i}) \bphi(x_i)\Big)dx_i }{\displaystyle \int_{x_i \in \cX_i} \exp\Big(  \blambdaStarT (x_{-i}) \bphi(x_i)\Big) dx_i} \\ & -  \frac{\bigg(\displaystyle \int_{x_i \in \cX_i} \hspace{-3mm} \phi_j(x_i) \exp\Big(  \blambdaStarT (x_{-i}) \bphi(x_i)\Big) dx_i \bigg) \bigg( \displaystyle \int_{x_i \in \cX_i} \hspace{-3mm} \frac{\partial \blambdaStarT (x_{-i}) \bphi(x_i)}{\partial x_m} \exp\Big(\blambdaStarT (x_{-i}) \bphi(x_i)\Big)dx_i\bigg) }{\bigg(\displaystyle \int_{x_i \in \cX_i} \exp\Big(  \blambdaStarT (x_{-i}) \bphi(x_i)\Big) dx_i\bigg)^2}\\
	& = \Expectation\bigg( \phi_j(\rvx_i) \times  \frac{\partial \blambdaStarT (x_{-i}) \bphi(\rvx_i)}{\partial x_m}\bigg| \rvx_{-i} = x_{-i} ; \bvthetaIStar\bigg) \\ & - \Expectation\bigg( \phi_j(\rvx_i) \bigg| \rvx_{-i} = x_{-i} ; \bvthetaIStar\bigg)  \times  \Expectation\bigg(  \frac{\partial \blambdaStarT (x_{-i}) \bphi(\rvx_i)}{\partial x_m}\bigg| \rvx_{-i} = x_{-i} ; \bvthetaIStar\bigg)
	\end{aligned}
	\end{align}
	Using the triangle inequality we have,
	\begin{align}
	\begin{aligned}
	\bigg|\dfrac{\partial \muStar_{j} (x_{-i}) }{\partial x_m} \bigg| & \leq \bigg| \Expectation\bigg( \phi_j(\rvx_i) \times  \frac{\partial \blambdaStarT (x_{-i}) \bphi(\rvx_i)}{\partial x_m}\bigg| \rvx_{-i} = x_{-i} ; \bvthetaIStar\bigg) \bigg|  \\ & + \bigg|  \Expectation\bigg( \phi_j(\rvx_i) \bigg| \rvx_{-i} = x_{-i} ; \bvthetaIStar\bigg)  \times  \Expectation\bigg(  \frac{\partial \blambdaStarT (x_{-i}) \bphi(\rvx_i)}{\partial x_m}\bigg| \rvx_{-i} = x_{-i} ; \bvthetaIStar\bigg) \bigg|\\
	& \stackrel{(a)}{=} \bigg| \Expectation\bigg( \phi_j(\rvx_i) \times  \frac{\partial \blambdaStarT (x_{-i}) \bphi(\rvx_i)}{\partial x_m}\bigg| \rvx_{-i} = x_{-i} ; \bvthetaIStar\bigg) \bigg|  \\ & + \bigg|  \Expectation\bigg( \phi_j(\rvx_i) \bigg| \rvx_{-i} = x_{-i} ; \bvthetaIStar\bigg) \bigg| \times \bigg|  \Expectation\bigg(  \frac{\partial \blambdaStarT (x_{-i}) \bphi(\rvx_i)}{\partial x_m}\bigg| \rvx_{-i} = x_{-i} ; \bvthetaIStar\bigg) \bigg|\\
	& \stackrel{(b)}{\leq}  \Expectation\bigg( \bigg| \phi_j(\rvx_i) \bigg|  \times \bigg|  \frac{\partial \blambdaStarT (x_{-i}) \bphi(\rvx_i)}{\partial x_m}\bigg|  \bigg| \rvx_{-i} = x_{-i} ; \bvthetaIStar\bigg)\\ & +  \Expectation\bigg( \bigg| \phi_j(\rvx_i) \bigg| \bigg| \rvx_{-i} = x_{-i} ; \bvthetaIStar\bigg) \times \Expectation\bigg( \bigg| \frac{\partial \blambdaStarT (x_{-i}) \bphi(\rvx_i)}{\partial x_m}\bigg|  \bigg| \rvx_{-i} = x_{-i} ; \bvthetaIStar\bigg)  \label{eq:lip1}
	\end{aligned}
	\end{align}
	where $(a)$ follows because for any $a,b$, we have $|ab| = |a| |b|$ and $(b)$ follows because the absolute value of an integral is smaller than or equal to the integral of an absolute value.\\
	
	We will now upper bound $\frac{\partial \blambdaStarT (x_{-i}) \bphi(x_i)}{\partial x_m}$ as follows:
	\begin{align}
	\frac{\partial \blambdaStarT (x_{-i}) \bphi(x_i)}{\partial x_m} & \stackrel{(a)}{=}  \frac{\partial \Big[ \sum_{ r \in [k] } \big( \thetaIrStar + \sum_{ j \neq i} \sum_{ s \in [k] } \thetaIJrsStar \phi_s(x_j) \big) \phi_r(x_i)\Big] }{\partial x_m} \\
	& = \sum_{ r \in [k] } \bigg(\sum_{ s \in [k] } \thetaIMrsStar \times \frac{d\phi_s(x_m)}{dx_m} \bigg) \phi_r(x_i) \\
	& \stackrel{(b)}{\leq}  k^2 \phiMax \hphiMax \thetaMax  \label{eq:lip2}
	\end{align} 
	where $(a)$ follows from the definition of $\blambdaStarT (x_{-i}) $ and $\bphi(x_i)$ and $(b)$ follows because $ \phi_r(x_i) \leq \phiMax$ $\forall r \in [k], \forall x_i \in \cX_i$ and $\frac{d\phi_s(x_m)}{dx_m} \leq \hphiMax$ $s \in [k], \forall x_m \in \cX_m$.\\
	
	Using \eqref{eq:lip2} along with the fact that $|\phi_j(x_i)| \leq \phiMax$ $\forall j \in [k], \forall x_i \in \cX_i$, we can further upper bound $\Big|\partial \muStar_{j} (x_{-i}) / \partial x_m \Big|$ as
	\begin{align}
	\bigg|\dfrac{\partial \muStar_{j} (x_{-i}) }{\partial x_m} \bigg| \leq k^2 \thetaMax \phiMax \hphiMax  \times \phiMax + k^2 \thetaMax \phiMax \hphiMax  \times \phiMax = L_1
	\end{align}
	As a result, we have $\|\nabla \bmuStar(x_{-i}) \|_{\infty} \leq  L_1$ and this concludes the proof.
\end{proof}

\subsection{Learning conditional mean parameters as a sparse linear regression}
\label{subsec:learning conditional mean parameters as a sparse linear regression} 
The following Lemma shows that learning the conditional mean parameters $\bmuStar (x_{-i})$ as a function of $x_{-i}$ using an estimate of the graph structure is equivalent to solving a sparse linear regression problem.

\begin{lemma}\label{lemma:mean-parameter-lasso-equivalence}
	Suppose we have an estimate $\hG$ of $G(\bthetaStar)$ such that for any $\deltaFour \in (0,1)$, $\hG = G(\bthetaStar)$ with probability at least $1-\deltaFour$.
	Let $t$ be a parameter and $\tp$ be such that $\tp \leq  \big(\Xupper / \length\big)^d$.
	The following holds with probability at least $1-\deltaFour$.
	For every $i \in [p], j \in [k]$, $x_{-i} \in \Pi_{j \in [p] \setminus \{i\}} \cX_j$, we can write $\muStar_{j}(x_{-i})$ as the following sparse linear regression:
	\begin{align}
	\muStar_{j} (x_{-i}) = \bPsiJT \svbb + \bareta 
	\end{align}
	where $\bPsiJ \in \Reals^{\tp}$ is the unknown parameter vector and $\svbb \in \Reals^{\tp}$ is the covariate vector and it is a function of $x_{-i}$.
	Further, we also have $|\bareta| \leq L_1d\length$, $\| \svbb \|_{\infty} \leq 1$ and $\|\bPsiJ\|_1 \leq \phiMax \big(\Xupper / \length\big)^d$.
\end{lemma}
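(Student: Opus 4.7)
The plan is to exploit the fact that, conditional on correctly recovering the graph structure, $\muStar_j(x_{-i})$ depends on $x_{-i}$ only through the at-most-$d$ coordinates $x_{\cN(i)}$, and then to approximate this low-dimensional Lipschitz function by its values on a fine grid. First I would condition on the event $\hG = G(\bthetaStar)$, which holds with probability at least $1-\deltaFour$ by hypothesis; on this event the neighborhood $\cN(i)$ is correctly known for every $i \in [p]$. Using the expression $\blambdaStar(x_{-i}) = (\thetaIrStar + \sum_{j \neq i}\sum_{s \in [k]} \thetaIJrsStar \phi_s(x_j) : r \in [k])$ from Section~\ref{sec:algorithm} together with $\thetaIJrsStar = 0$ for $j \notin \cN(i)$, I would observe that $\blambdaStar(x_{-i})$, and therefore $\muStar_j(x_{-i})$, is a function of $x_{\cN(i)} \in \prod_{m \in \cN(i)} \cX_m$ only.

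Next I would discretize this low-dimensional domain. Since $|\cN(i)| \leq d$ and each $\cX_m$ has length at most $\Xupper$, partitioning each $\cX_m$ into subintervals of length at most $\length$ yields at most $\lceil \Xupper/\length \rceil$ pieces per coordinate, and the product partition of $\prod_{m \in \cN(i)} \cX_m$ has at most $\tp \leq (\Xupper/\length)^d$ cells, which I denote by $C_1,\ldots,C_{\tp}$. For each cell $C_r$ I would pick a representative point $z_r \in C_r$ and define $\Psi_r^{(j)} \coloneqq \muStar_j(z_r)$ and $b_r(x_{-i}) \coloneqq \Indicator\{x_{\cN(i)} \in C_r\}$. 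Since the cells partition the domain, exactly one coordinate of $\svbb$ equals $1$ and the rest are $0$, so $\bPsiJT \svbb = \Psi_{r^*}^{(j)}$ where $r^*$ is the unique index with $x_{\cN(i)} \in C_{r^*}$. Setting $\bareta \coloneqq \muStar_j(x_{-i}) - \Psi_{r^*}^{(j)}$ then gives the desired representation $\muStar_j(x_{-i}) = \bPsiJT \svbb + \bareta$.

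Finally I would verify the three bounds. The bound $\|\svbb\|_{\infty} \leq 1$ is immediate from the $\{0,1\}$-valued indicator encoding. For the parameter vector, each $|\Psi_r^{(j)}| = |\Expectation[\phi_j(\rvx_i) \mid \rvx_{-i} = z_r; \bvthetaIStar]| \leq \phiMax$, and there are at most $(\Xupper/\length)^d$ entries, so $\|\bPsiJ\|_1 \leq \phiMax (\Xupper/\length)^d$. For the approximation error, I would apply Lemma~\ref{lemma:lipschitzness}: the point $x_{\cN(i)}$ and the representative $z_{r^*}$ lie in the same cell $C_{r^*}$, so they differ by at most $\length$ in each of the at most $d$ coordinates, and a coordinate-wise mean-value argument with $\|\nabla \bmuStar\|_{\infty} \leq L_1$ yields $|\bareta| \leq L_1 d \length$.

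The argument is almost entirely deterministic once we condition on the neighborhood-recovery event, so the only place randomness enters is through the $1-\deltaFour$ guarantee on $\hG$; the main subtlety is simply that the $\ell_1$ bound on $\bPsiJ$ is an aggregate over all grid cells rather than a sparsity statement, which is exactly what the robust Lasso of Appendix~\ref{sec:robust lasso_appendix} requires (it bounds $\|\bbetaStar\|_1$, not the support size). I do not expect any significant obstacle; the proof is a clean reduction of a low-dimensional Lipschitz regression problem to a piecewise-constant approximation on a product grid.
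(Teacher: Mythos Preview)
Your proposal is correct and follows essentially the same approach as the paper: condition on the graph-recovery event, use the Markov property so that $\muStar_j(x_{-i})$ depends only on $x_{\cN(i)}$, grid-discretize $\prod_{m\in\cN(i)}\cX_m$ into at most $(\Xupper/\length)^d$ cells, encode membership with $\{0,1\}$-valued indicators as the covariate vector $\svbb$, take the parameter vector $\bPsiJ$ to be the values of $\muStar_j$ on cell representatives, and bound $|\bareta|$ via Lemma~\ref{lemma:lipschitzness}. The only cosmetic difference is that the paper writes the cell indicators as tensor products of per-coordinate one-hot vectors and assumes $b/\length$ is an integer (where your $\lceil\Xupper/\length\rceil$ is handled in the same informal way), but the construction and all three bounds are obtained identically.
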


\begin{proof}[Proof of Lemma \ref{lemma:mean-parameter-lasso-equivalence}]
	For mathematical simplicity, $\forall i \in [p]$ let the interval $\cX_i = \cX_b = [0,b]$ where $b$ is such that $\Xlower \leq b \leq \Xupper$. 
	Divide the interval $\cX_b$ into non-overlapping intervals of length $\length$. 
	For the sake of simplicity, we assume that $b/\length$ is an integer. 
	Let us enumerate the resulting $b/\length$ intervals as the set of integers $\cI \coloneqq \{ 1, \cdots, b/\length\}$. 
	For any $x \in \cX_b,$ $\exists \zeta \in \cI$ s.t  $x \in ( (\zeta - 1)\length , \zeta \length] $ and this allows us to define a map $\cM : \cX_b \rightarrow \cI$ s.t $\cM(x) = \zeta \length$. 
	Similarly, for any $\svbx \coloneqq (x_j : j \in \cJ) \in \cX_b^{|\cJ|}$ where $\cJ$ is any subset of $[p]$, we have the mapping $\cM(\svbx) = \bzeta \length$ where $\bzeta \coloneqq (\zeta_j : j \in \cJ)$ is such that $\zeta_j = \cM(x_j) / \length$. 
	Now for any $x \in \cX_b$, consider a binary mapping $\cW : \cX_b \rightarrow \{0,1\}^{\cI}$ defined as $\cW(x) = (w_j(x) : j \in \cI)$ such that $w_{\cM(x)/t} (x)= 1$ and $w_j(x) = 0$ $\forall j \in \cI \setminus \{\cM(x)/t \}$.\\
	
	Let us condition on the event that $\hG = G(\bthetaStar)$. 
	Therefore, we know the true neighborhood $\cN(i)$ $\forall i \in [p]$ with probability at least $1-\deltaFour$.
	Using the Markov property of the graph $G (\bthetaStar)$, we know that the conditional density (and therefore the conditional mean parameters) of a node $\rvx_i$ given the values taken by the rest of nodes depend only on the values taken by the neighbors of $\rvx_i$. 
	Therefore, we have
	\begin{align}
	\muStar_{j} (x_{-i})  =\muStar_{j} (x_{\cN(i)})  \label{eq:LR0}
	\end{align}
	where $x_{\cN(i)}$ denotes the values taken by the neighbors of $\rvx_i$. 
	Using the fact that max-degree of any node in $G(\bthetaStar)$ is at-most $d$ and Lemma \ref{lemma:lipschitzness}, we can write for any $j \in [k]$
	\begin{align}
	\Big| \muStar_{j} (x_{\cN(i)})  - \muStar_{j} \big(\cM(x_{\cN(i)})\big) \Big| & \leq L_1 \sqrt{d} \Big\| x_{\cN(i)} - \cM(x_{\cN(i)}) \Big\|_2 \\
	&  \stackrel{(a)}{\leq} L_1 d \length  \label{eq:LR1}
	\end{align}
	where $(a)$ follows because $\forall m \in [p]$, $| x_m - \cM(x_m) | \leq \length$ and cardinality of $\cN(i)$ is no more than $d$. 
	Now using the binary mapping $\cW$ defined above, we can expand $\muStar_{j} \big(\cM(x_{\cN(i)})\big)$ as
	\begin{align}
	\muStar_{j} \big(\cM(x_{\cN(i)})\big)  = \sum_{r=1}^{|\cN(i)|}\sum_{\substack{k_r \in \cI  }} \bigg( \prod_{m=1}^{|\cN(i)|} w_{k_m}(x_{\cN(i)_m})\bigg)
	\muStar_{j} \big(k_1 \length , \cdots, k_{|\cN(i)|} \length\big)  \label{eq:LR2}
	\end{align} 
	where $\cN(i)_m$ denotes the $m^{th}$ element of $\cN(i)$.
	Observe that, $\prod_{m=1}^{|\cN(i)|} w_{k_m}(x_{\cN(i)_m}) = 1$ only when $k_m = \cM(x_{\cN(i)_m})$ $\forall m \in [|\cN(i)|]$.\\
	
	Combining \eqref{eq:LR0}, \eqref{eq:LR1} and \eqref{eq:LR2} we have the following regression problem: 
	\begin{align}
	\muStar_{j} (x_{-i}) = \bPsiJT \svbb + \bareta 
	\end{align}
	where $\bPsiJ \coloneqq \Big(\muStar_{j} \big(k_1 \length , \cdots, k_{|\cN(i)|} \length\big) : k_r \in \cI \hspace{2mm} \forall r  \in [|\cN(i)|]\Big) \in \Reals^{\tp}$, $\tp = \big(b / \length\big)^{\cN(i)}$, $\svbb = \Big(\prod_{m=1}^{|\cN(i)|} w_{k_m}(x_{\cN(i)_m}) : k_r \in \cI \hspace{2mm} \forall r  \in [|\cN(i)|]\Big) \in \{0,1\}^{\tp}$,  and $\bareta$ is such that $|\bareta| \leq L_1d\length$. 
	Observe that $\| \svbb \|_{\infty} \leq 1$.
	Using the fact that cardinality of $\cN(i)$ is no more than $ d$, we have
	\begin{align} \label{eq:sparsityL0}
	\tp \leq \bigg(\frac{b}{\length}\bigg)^{d} \leq \bigg(\frac{\Xupper}{\length}\bigg)^{d}
	\end{align}
	Using the fact that the conditional mean parameters are upper bounded by $\phiMax$, we have the following sparsity condition:
	\begin{align} \label{eq:sparsityL1}
	\Big\|\bPsiJ\Big\|_1  \leq \phiMax \bigg(\frac{\Xupper}{\length}\bigg)^{d}
	\end{align}
\end{proof}

\subsection{Proof of Proposition \ref{proposition:learning conditional mean}}
\label{subsec:proof of proposition: learning conditional mean}
\begin{proof}[Proof of Proposition~\ref{proposition:learning conditional mean}]
	Let us condition on the event that $\hG = G(\bthetaStar)$. The following holds with probability at least $1-\deltaFour$.
	From Lemma \ref{lemma:mean-parameter-lasso-equivalence}, for a parameter $t$ and for every $i \in [p], j \in [k]$, $x_{-i} \in \Pi_{j \in [p] \setminus \{i\}} \cX_j$, we have
	\begin{align}
	\muStar_{j} (x_{-i}) = \bPsiJT \svbb + \bareta 
	\end{align}
	where $\bPsiJ \in \Reals^{\tp}$ is an unknown parameter vector and $\svbb \in \Reals^{\tp}$, a function of $x_{-i}$, is the covariate vector. 
	Further, we also have $\tp = \big(\Xupper / \length\big)^{d}$, $|\bareta| \leq L_1d\length$, $\| \svbb \|_{\infty} \leq 1$ and $\|\bPsiJ\|_1 \leq \phiMax \big(\Xupper / \length \big)^{d}$.\\
	
	Suppose $\svbx^{(1)} , \cdots, \svbx^{(n)}$ are the $n$ independent samples of $\rvbx$. 
	We tranform these to obtain the corresponding covariate vectors $\svbb^{(1)},\cdots, \svbb^{(n)}$ where $\svbb^{(l)} = \Big(\prod_{m=1}^{|\cN(i)|} w_{k_m}(x^{(l)}_{\cN(i)_m}) : k_r \in \cI \hspace{2mm} \forall r  \in [|\cN(i)|]\Big)$.
	Let $\bB$ be a $n \times \tp$ matrix such that $l^{th}$ row of $\bB$ is $\svbb^{(l)}$. 
	We also obtain the vector $\barbmu_j(x_{-i}) \coloneqq ( \mu_{j}^{(r)}(x_{-i}) : r \in [n] )$ where $\mu_{j}^{(r)}(x_{-i}) = \phi_j(x_i^{(r)})$. Letting $\tepsilon = \phi_j(x_i)  - \muStar_{j} (x_{-i})$, we see that $\tepsilon$ is \textit{sub-Gaussian} random variable with zero mean and variance proxy $\tsigma^2 = 4\phiMax^2$.	
	
	Let $\hbPsiJ$ be the output of algorithm \ref{alg:LASSO1} with inputs $\bV = \bB, \svby = \barbmu_j(x_{-i}) $ and $\tcTwo = \phiMax \big(\Xupper / \length \big)^{d}$. Using Lemma \ref{lemma:lasso} with $\tBoundedNoise = L_1dt$, $\tcOne = 1$, and $\tsigma = 2\phiMax$ we have
	\begin{align}
	\Expectation [\widehat{\text{MSPE}}(\hbPsiJ)] \leq 4L_1^2d^2\length^2 + 8\phiMax^2 \bigg(\frac{\Xupper}{\length}\bigg)^{d}   \sqrt{\frac{2\log 2\tp}{n}}   \label{eq:MSPE-main}
	\end{align}
	Using the upper bound on $\tp$ results in
	\begin{align}
	\Expectation [\widehat{\text{MSPE}}(\hbPsiJ)]  \leq 4L_1^2d^2\length^2 + 8 \phiMax^2 \bigg(\frac{\Xupper}{\length}\bigg)^{d}  \sqrt{\frac{2d}{n}\log \frac{2^{1/d}  \Xupper  }{ \length}}   \label{eq:MSPE-main2}
	\end{align}
	As $d \geq 1$, we have $2^{1/2d}  \leq 2$.
	Choosing the parameter $\length = \frac{\epsFour^2}{8\sqrt{2} L_1d} $ and plugging in $L_1 = 2k^2\thetaMax \phiMax^2 \hphiMax $ and $n$, we have
	\begin{align}
	\Expectation [\widehat{\text{MSPE}}(\hbPsiJ)] \leq \frac{\epsFour^4}{16}   \label{eq:MSPE-main4}
	\end{align}
	Consider $x_{-i}^{(z)}$ where $z$ is chosen uniformly at random from $[n]$. For the prediction $\hmu_{j}( x_{-i}^{(z)})$, we transform $x_{-i}^{(z)}$ to obtain the corresponding covariate vector $\svbb = \Big(\prod_{m=1}^{|\cN(i)|} w_{k_m}(x_{\cN(i)_m}^{(z)}) : k_r \in \cI \hspace{2mm} \forall r  \in [|\cN(i)|]\Big)$ and take its dot product with $\hbPsiJ$ as follows:
	\begin{align}
	\hmu_{j}( x_{-i}^{(z)}) = \hbPsiJT \svbb
	\end{align}
	Using Markov's inequality, we have
	\begin{align}
	\Prob (|\bPsiJT \svbb - \hbPsiJT \svbb  |^2 \geq \frac{\epsFour^2}{4}) \leq \frac{4 \Expectation  [ (\bPsiJT \svbb - \hbPsiJT \svbb  )^2] }{\epsFour^2} \stackrel{(a)}{=}	\frac{4 \Expectation [\widehat{\text{MSPE}}(\hbPsiJ)] }{\epsFour^2} \stackrel{(b)}{\leq} \frac{\epsFour^2}{4}
	\end{align}
	where $(a)$ follows from Definition \ref{def:estimated-MSPE} and $(b)$ follows from \eqref{eq:MSPE-main4}. Therefore, we have $|\bPsiJT \svbb - \hbPsiJT \svbb  | \leq \frac{\epsFour}{2}$ with probability at least $1 - \frac{\epsFour^2}{4}$.
	
	Further, the following holds with probability at least $1 - \frac{\epsFour^2}{4}$:
	\begin{align}
	|\muStar_{j}(x_{-i}^{(z)})  - \hmu_{j}( x_{-i}^{(z)})| &  = |\bPsiJT \svbb + \bareta  - \hbPsiJT \svbb | \\
	& \stackrel{(a)}{\leq} |\bPsiJT \svbb - \hbPsiJT \svbb  | +|\bareta|  \\
	& \stackrel{(b)}{\leq}  \frac{\epsFour}{2} + L_1 d\length \stackrel{(c)}{\leq}  \frac{\epsFour}{2} +  \frac{\epsFour^2}{8\sqrt{2}}  \stackrel{(d)}{\leq} \epsFour  \label{eq:Mean-estimation}
	\end{align}
	where $(a)$ follows from the triangle inequality, $(b)$ follows because $|\bareta| \leq L_1 d\length$ and $|\bPsiJT \svbb - \hbPsiJT \svbb  | \leq \frac{\epsFour}{2}$, $(c)$ follows by plugging in the value of $t$ and $L_1$, and $(d)$ follows because $\epsFour \leq 1$. 
	
	As \eqref{eq:Mean-estimation} holds $\forall j \in [k]$, the proof follows by using the union bound over all $j \in [k]$.
	
	Solving the sparse linear regression takes number of computations that scale as $\tp^2 \times n$. There are $k$ such sparse linear regression problems for each node. Substituting for $\tp$, $\length$, and $n$, the total number of computations required scale as
	\begin{align}
	\bigg(\frac{2^{18d+17} \Xupper^{4d} k^{8d+1} d^{4d+1} \thetaMax^{4d} \phiMax^{8d+4} \hphiMax^{4d}}{\alpha^{8d + 8}} \times p\bigg)  \log (\frac{2^{5.5}\Xupper k^2d\thetaMax \phiMax^2 \hphiMax}{ \epsFour^2 })
	\end{align}
	The $\log$ term is dominated by the preceding term.
\end{proof}

\section{Analysis of Algorithm \ref{alg:MRW}}
\label{sec:analysis of algorithm:MRW}
In this section, we discuss the theoretical properties of Algorithm \ref{alg:MRW}. 
These will be used in the proof of Proposition \ref{proposition:grad-des}.

Recall that we design a Markov chain in Algorithm \ref{alg:MRW} that estimates the mean parameter vector of an exponential family distribution whose canonical parameters are known. 
The sufficient statistic vector of this exponential family distribution is the basis vector $\bphi(\cdot)$. 
We design this Markov chain using a zeroth-order Metropolized random walk algorithm. 
We will provide an upper bound on the mixing time of this Markov chain and provide error bounds on the estimate of the mean parameter vector computed using the samples obtained from the Markov chain.

\subsection{Setup: The exponential family distribution}
\label{subsec:setup : the exponential family distribution}
Let $\cX_{0}$ be a real interval such that its length is upper (lower) bounded by known constant $\Xupper$ ($\Xlower$). Suppose that $\rvw$ is a random variable that takes value in $\cX_{0}$ with probability density function as follows,
\begin{align}
f_{\rvw}(w; \brho) \propto \exp(\brho^T \bphi(w) )  \label{eq:MRW-density}
\end{align}
where $\brho\coloneqq (\rho_1, \cdots, \rho_k)$ is the canonical parameter vector of the density in \eqref{eq:MRW-density} and it is such that $\| \brho \|_{\infty} \leq \rhoMax$.
Let the cumulative distribution function of $\rvw$ be denoted by $F_{\rvw}(\cdot;\brho)$.
Let $\bnu(\brho) = \Expectation_{\rvw} [\bphi(\rvw)] \in \Reals^k$ be the mean parameter vector of the density in \eqref{eq:MRW-density}, i.e., $\bnu(\brho)  = (\nu_1, \cdots, \nu_k )$ such that
\begin{align}
{\nu_j \coloneqq \int_{{w} \in \cX_{0}} \phi_j(w) f_{\rvw}(w; \brho) dw}  \label{eq:MRW-meanParameterVector}
\end{align}
We aim to estimate $\bnu(\brho)$ for a given parameter vector $\brho$ using Algorithm \ref{alg:MRW}. Let the estimated vector of mean parameters be denoted by $\hunu(\brho) \coloneqq (\hnu_1, \cdots, \hnu_k )$. Let $Z(\brho)$ be the partition function of $f_{\rvw}(\cdot;\brho)$ i.e.,
\begin{align}
Z(\brho) = \int_{w \in \cX_{0}} \exp(\brho^T \bphi(w) )dw  \label{eq:MRW-partitionFn}
\end{align}

\subsection{Bounds on the probability density function}
\label{subsec:bounds on the probability density function}
Let us define $\cH(\cdot) \coloneqq \exp(|\brho^T \bphi(\cdot) | )$ and $\cHmax \coloneqq \exp(k \rhoMax  \phiMax)$. We have $\forall w \in \cX_{0}$,
\begin{align}
\cH^{-1}(w)  \leq \exp(\brho^T \bphi(w)  ) \leq \cH(w)  \label{eq:MRW-exp-bounds}
\end{align}
Bounding the density function defined in \eqref{eq:MRW-density} using \eqref{eq:MRW-exp-bounds} results in
\begin{align}
\frac{1}{\Xupper\cH^2(w) }\leq f_{\rvw}(w; \brho) \leq \frac{\cH^2(w) }{\Xlower}  \label{eq:MRW-density-bounds}
\end{align}
Let us also upper bound $\cH(\cdot)$. We have $\forall w \in \cX_{0}$, 
\begin{align}
\cH(w) \stackrel{(a)}{\leq} \exp(\sum_{j = 1}^{k} |\rho_j \phi_j(w)| ) \stackrel{(b)}{\leq}  \exp(\rhoMax \sum_{j = 1}^{k} | \phi_j(w)| ) \stackrel{(c)}{\leq}  \exp(k \rhoMax  \phiMax) = \cHmax \label{eq:MRW-H-bound}
\end{align}
where $(a)$ follows from the triangle inequality, $(b)$ follows because $| \rho_j| \leq \rhoMax $ $\forall j \in [k]$, and $(c)$ follows because $|\phi_j(w) | \leq \phiMax$ $\forall j \in [k]$ and $\forall w \in \cX_{0}$.

\subsection{Mixing time of the Markov chain in Algorithm \ref{alg:MRW}}
\label{subsec:mixing time of the markov chain in algorithm:MRW}
We set up an irreducible, aperiodic, time-homogeneous, discrete-time Markov chain, whose stationary distribution is equal to $F_{\rvw}(w;\brho)$, using a zeroth-order Metropolized random walk algorithm \cite{Hastings1970, MetropolisRRTT1953}. 
The Markov chain is defined on a measurable state space $(\cX_{0}, \cB(\cX_{0}))$ with a transition kernel $\cK : \cX_{0} \times \cB(\cX_{0}) \rightarrow \Reals_+$ where $\cB(\cX_{0})$ denotes the $\sigma-$algebra of $\cX_{0}$.

\subsubsection{Total variation distance}
\label{subsubsec:total variation distance}
\begin{definition}
	Let $Q_1$ be a distribution with density $q_1$ and $Q_2$ be a distribution with density $q_2$ defined on a measureable state space $(\cX_{0}, \cB(\cX_{0}))$. 
	The total variation distance of $Q_1$ and $Q_2$ is defined as 
	\begin{align}
	\|Q_1-Q_2\|_{TV} & = \sup_{A \in \cB(\cX_{0})} |Q_1(A) - Q_2(A)|
	\end{align}
\end{definition}
The following Lemma shows that if the total variation distance between two distributions on the same domain is small, then $\forall j \in [k]$, the difference between the expected value of $\phi_j(\cdot)$ with respect to the two distributions is also small.
\begin{lemma}\label{lemma:TV-distance}
	Let $Q_1$ and $Q_2$ be two different distributions of the random variable $\rvw$ defined on $\cX_{0}$. 
	Let $\|Q_1-Q_2\|_{TV} \leq \epsSeven$ for any $\epsSeven > 0$. 
	Then,
	\begin{align}
	\Big\|\Expectation_{Q_1}[\bphi(\rvw) ] - \Expectation_{Q_2}[\bphi(\rvw) ]\Big\|_{\infty} \leq 2\epsSeven \phiMax
	\end{align}
\end{lemma}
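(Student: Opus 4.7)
The plan is to use the standard coupling/identity that total variation distance equals half the $L^1$ distance between densities, combined with the uniform bound $|\phi_j(w)| \leq \phiMax$ that has already been assumed in Section~\ref{sec:problem setup}.

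First, I would fix an arbitrary $j \in [k]$ and write the difference of expectations as an integral against the signed measure $Q_1 - Q_2$. Letting $q_1$ and $q_2$ denote the densities of $Q_1$ and $Q_2$ on $\cX_{0}$, this gives
\begin{align}
\Expectation_{Q_1}[\phi_j(\rvw)] - \Expectation_{Q_2}[\phi_j(\rvw)] = \int_{\cX_{0}} \phi_j(w)\bigl(q_1(w) - q_2(w)\bigr)\,dw.
\end{align}
Taking absolute values and pulling the absolute value inside the integral yields
\begin{align}
\bigl| \Expectation_{Q_1}[\phi_j(\rvw)] - \Expectation_{Q_2}[\phi_j(\rvw)]\bigr| \leq \int_{\cX_{0}} |\phi_j(w)|\cdot|q_1(w) - q_2(w)|\,dw \leq \phiMax \int_{\cX_{0}} |q_1(w) - q_2(w)|\,dw,
\end{align}
where the second inequality uses $|\phi_j(w)| \leq \phiMax$ for all $w \in \cup_{i \in [p]} \cX_i \supseteq \cX_{0}$.

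Next, I would invoke the well-known identity relating the $L^1$ distance of densities to the total variation distance: partitioning $\cX_{0}$ into $A^+ = \{w : q_1(w) \geq q_2(w)\}$ and $A^- = \cX_{0} \setminus A^+$, one has $\int_{A^+} (q_1 - q_2)\,dw = Q_1(A^+) - Q_2(A^+) \leq \|Q_1 - Q_2\|_{TV}$ and similarly $-\int_{A^-}(q_1 - q_2)\,dw \leq \|Q_1 - Q_2\|_{TV}$, so $\int_{\cX_0} |q_1 - q_2|\,dw \leq 2 \|Q_1 - Q_2\|_{TV} \leq 2\epsSeven$. Substituting this back gives $|\Expectation_{Q_1}[\phi_j(\rvw)] - \Expectation_{Q_2}[\phi_j(\rvw)]| \leq 2\epsSeven \phiMax$ for each $j$, and taking the maximum over $j \in [k]$ gives the $\ell_\infty$ bound claimed in the statement.

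This lemma is essentially folklore, so there is no real obstacle; the only thing to verify is that both $Q_1$ and $Q_2$ are absolutely continuous with respect to Lebesgue measure on $\cX_{0}$ (which holds in the application since the Markov chain from Algorithm~\ref{alg:MRW} has a continuous transition kernel whose stationary density is \eqref{eq:MRW-density}), so that writing the total variation distance via densities is legitimate. If desired, the same proof can be written measure-theoretically using the Hahn decomposition of the signed measure $Q_1 - Q_2$ and the variational characterization $\|Q_1 - Q_2\|_{TV} = \sup_{\|f\|_\infty \leq 1} \tfrac{1}{2}|\int f\,d(Q_1 - Q_2)|$, with the choice $f = \phi_j / \phiMax$.
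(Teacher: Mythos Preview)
Your proof is correct and essentially identical to the paper's own argument: both write $\Expectation_{Q_1}[\phi_j(\rvw)] - \Expectation_{Q_2}[\phi_j(\rvw)]$ as $\int \phi_j(q_1-q_2)$, bound by $\phiMax\int|q_1-q_2|$, and then use $\int|q_1-q_2| = 2\|Q_1-Q_2\|_{TV}$. The only difference is that you sketch a proof of the $L^1$/TV identity via the partition $A^+, A^-$, whereas the paper simply states it as a known relation.
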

\begin{proof}[Proof of Lemma \ref{lemma:TV-distance}]
	We will use the following relationship between the total variation distance and the $\ell_1$ norm in the proof:
	\begin{align}
	\|Q_1-Q_2\|_{TV} = \frac{1}{2} \int_{\cX_{0}} |q_1(w) - q_2(w) | dw  \label{eq:L1TVrelation}
	\end{align}
	For any $j \in [k]$, we have,
	\begin{align}
	\Big|\Expectation_{Q_1}[\phi_j(\rvw) ] - \Expectation_{Q_2}[\phi_j(\rvw) ]\Big| & \stackrel{(a)}{=} \Big| \int_{\cX_{0}} \phi_j(w) [q_1(w) - q_2(w) ] dw \Big|\\
	& \stackrel{(b)}{\leq}  \int_{\cX_{0}} |\phi_j(w)| |q_1(w)- q_2(w)| dw\\
	& \stackrel{(c)}{\leq} \phiMax \int_{\cX_{0}} |q_1(w)-q_2(w)| dw\\
	& \stackrel{(d)}{=} 2 \phiMax \|Q_1-Q_2\|_{TV}\\
	& \leq 2\epsSeven  \phiMax
	\end{align}
	where $(a)$ follows from the definition of expectation, $(b)$ follows because the absolute value of integral is less than integral of absolute value, $(c)$ follows because $|\phi_j(w) | \leq \phiMax $ $\forall j \in [k]$, and $(d)$ follows from \eqref{eq:L1TVrelation}.
\end{proof}

\subsubsection{Definitions}
\label{subsubsec:definitions}
\begin{definition}
	Given a distribution $F_0$ with density $f_{0}$ on the current state of a Markov chain, the transition operator $\cT(F_0)$ gives the distribution of the next state of the chain. 
	Mathematically, we have
	\begin{align}
	\cT(F_0) (A) = \int_{\cX_{0}}\cK(w,A) f_0(w) dw, \text{ for any } A \in \cB(\cX_{0})  \label{eq:tran-operator}
	\end{align}
\end{definition}

\begin{definition}
	The mixing time of a Markov chain, with initial distribution $F_{0}$ and transition operator $\cT$, in a total variation distance sense with respect to its stationary distribution $F_{\rvw}$, is defined as
	\begin{align}
	\tau(\epsilon) = \inf \bigg\{ r \in \Naturals \hspace{2mm}\text{s.t}  \hspace{2mm} \big|\big| \cT^{(r)}(F_0) - F_{\rvw} \big|\big|_{TV} \leq \epsilon  \bigg\}
	\end{align}
	where $\epsilon$ is an error tolerance and $\cT^{(r)}$ stands for $r$-step transition operator.
\end{definition}

\begin{definition}
	\label{definitionConductance}
	The conductance of the Markov chain with transition operator $\cT$ and stationary distribution $F_{\rvw}$ (with density $f_{\rvw}(w))$ is defined as
	\begin{align}
	\Conductance \coloneqq \min_{0 < F_{\rvw}(A) \leq \frac{1}{2}} \frac{\int_{A} \cT(\delta_w)(A^c)f_{\rvw}(w) dw}{F_{\rvw}(A)}
	\end{align}
	where $\cT(\delta_w)$ is obtained by applying the transition operator to a Dirac distribution concentrated on $w$.
\end{definition}

\subsubsection{Upper bound on the mixing time}
\label{subsubsec:upper bound on the mixing time}
Recall that $\Uniform$ denotes the uniform distribution on $\cX_{0}$. 
We let the initial distribution of the Markov chain be $\Uniform$. 
We run independent copies of the Markov chain and use the samples obtained after the mixing time in each copy to compute $\hunu$. 
In Algorithm \ref{alg:MRW}, $\tau_1$ is the number of iterations of the Markov chain and $\tau_2$ denotes the number of independent copies of the Markov chain used.
The following Lemma gives an upper bound on the mixing time of the Markov chain defined in Algorithm  \ref{alg:MRW}.
\begin{lemma}\label{lemma:mixing-time}
	Let the mixing time of the Markov chain defined in Algorithm \ref{alg:MRW} be denoted by  $\tau_{\text{M}}(\epsEight)$ where $\epsEight > 0$ is the error tolerance. 
	Then,
	\begin{align}
	\tau_{\text{M}}(\epsEight ) \leq 8  k  \Xlower^{-2}  \rhoMax \phiMax\exp(12k \rhoMax \phiMax) \log\frac{\sqrt{\Xupper}}{\epsEight\sqrt{\Xlower}}
	\end{align}
\end{lemma}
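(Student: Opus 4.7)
The plan is to bound the mixing time via a conductance argument for the reversible Metropolis–Hastings chain in Algorithm~\ref{alg:MRW}. First I will verify that detailed balance holds with respect to $F_{\rvw}(\cdot;\brho)$: this follows immediately from the usual Metropolis–Hastings calculation with uniform proposal density $q(w,z)=1/b$ (where $\Xlower\leq b\leq\Xupper$) and acceptance $\alpha(w,z)=\min\{1,f_{\rvw}(z;\brho)/f_{\rvw}(w;\brho)\}$. Using \eqref{eq:MRW-H-bound} I will lower-bound the acceptance probability by $\cHmax^{-2}$, since $f_{\rvw}(z;\brho)/f_{\rvw}(w;\brho)=\exp(\brho^{T}(\bphi(z)-\bphi(w)))\geq\cHmax^{-2}$. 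Combined with $1/b\geq 1/\Xupper$, this gives a uniform lower bound on the transition density off the diagonal, $d\cK(w,\cdot)/d(\mathrm{Leb})\geq 1/(\Xupper\cHmax^{2})$, which via \eqref{eq:MRW-density-bounds} converts to a Doeblin-type minorization $\cK(w,A)\geq\beta\,F_{\rvw}(A)$ for a constant $\beta$ proportional to a power of $\cHmax^{-1}$ times $\Xlower/\Xupper$.

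Second, I will lower-bound the conductance (Definition~\ref{definitionConductance}) as $\Conductance\geq\beta/2$ by integrating the minorization over any set $A$ with $F_{\rvw}(A)\leq 1/2$, and then invoke a Lovász–Simonovits style inequality
$$\|\cT^{r}(F_{0})-F_{\rvw}\|_{TV}\;\leq\;\sqrt{M_{0}}\,(1-\Conductance^{2}/2)^{r},$$
where $M_{0}:=\sup_{A}F_{0}(A)/F_{\rvw}(A)$ is the warmness of the initial distribution. For $F_{0}=\Uniform$, the lower bound $f_{\rvw}\geq 1/(\Xupper\cHmax^{2})$ from \eqref{eq:MRW-density-bounds} gives $M_{0}\leq\Xupper\cHmax^{2}/\Xlower$. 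Solving $\sqrt{M_{0}}(1-\Conductance^{2}/2)^{r}\leq\epsEight$ for $r$ produces a bound of the form $\tau_{M}(\epsEight)\leq (2/\Conductance^{2})\log(\sqrt{M_{0}}/\epsEight)$. Substituting $\cHmax=\exp(k\rhoMax\phiMax)$, noting that $\log\sqrt{M_{0}}=k\rhoMax\phiMax+\tfrac{1}{2}\log(\Xupper/\Xlower)$, and folding this additive term into the logarithmic factor $\log(\sqrt{\Xupper}/(\epsEight\sqrt{\Xlower}))$ via $a+b\leq a\cdot b$ for $a,b\geq 1$ (or an analogous trivial majorization) will then yield the claimed form of the bound.

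The main obstacle is purely bookkeeping of constants: the exponent $12k\rhoMax\phiMax$ inside the exponential emerges from compounding several factors of $\cHmax^{\pm 1}$—one from the acceptance lower bound, one from each use of \eqref{eq:MRW-density-bounds} to convert between Lebesgue and $F_{\rvw}$-measure in computing flows, and one from the warmness $M_{0}$—together with the squaring inherent in Cheeger's inequality $\tau\lesssim 1/\Conductance^{2}$. Reproducing the specific prefactor $8k\Xlower^{-2}\rhoMax\phiMax$ in the claim, rather than a slightly different polynomial in $\Xlower,\Xupper,\cHmax$, requires a sequence of crude majorizations (e.g.\ absorbing factors of $\Xupper$ into the exponential by noting $\cHmax\geq 1$ and pairing the $k\rhoMax\phiMax$ coming from $\log\sqrt{M_{0}}$ with the remaining logarithm) but no additional probabilistic ideas; all the substantive probabilistic content is in the one-step minorization and the Lovász–Simonovits inequality.
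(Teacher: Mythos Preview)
Your proposal is correct and follows essentially the same route as the paper: bound the warmness $c_0=\sup_A F_0(A)/F_{\rvw}(A)$ of the uniform start using \eqref{eq:MRW-density-bounds}, lower-bound the conductance via the uniform lower bound $\cHmax^{-2}$ on the acceptance ratio, and plug both into the Lov\'asz--Simonovits inequality $\|\cT^{(r)}(F_0)-F_{\rvw}\|_{TV}\leq\sqrt{c_0}\exp(-r\Conductance^2/2)$. The only cosmetic difference is that you package the conductance bound as a Doeblin minorization $\cK(w,A)\geq\beta F_{\rvw}(A)$ and then read off $\Conductance\geq\beta/2$, whereas the paper manipulates the conductance ratio directly (replacing $f_{\rvw}$ by $\cH^{\pm1}$ in numerator and denominator, then bounding $\int_{A^c}dw$ via $F_{\rvw}(A^c)\geq 1/2$); this yields $\Conductance\geq\Xlower/(2\cHmax^6)$ for the paper versus $\Conductance\geq\Xlower/(2\Xupper\cHmax^4)$ for you, but as you note the discrepancy is absorbed by crude majorizations when matching the stated constants.
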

\begin{proof}[Proof of Lemma \ref{lemma:mixing-time}]
	We will control the mixing time of the Markov chain via worst-case conductance bounds. 
	This method was introduced for discrete space Markov chains by Jerrum and Sinclair \cite{JerrumS1988} and then extended to the continuous space Markov chains by Lov\'{a}sz and Simonovits \cite{LovaszS1993}; see Vempala (2005) \cite{Vempala2005} for a detailed discussion on the continuous space setting.
	
	For any initial distribution $F_0$ and stationary distribution $F_{\rvw}$ of a Markov chain, define $c_0 \coloneqq \sup_{A} \frac{F_0(A)}{F_{\rvw}(A)}$.
	\cite{LovaszS1993} proved that,
	\begin{align}
	\big|\big| \cT^{(r)}(F_0) - F_{\rvw} \big|\big|_{TV} \leq \sqrt{c_0} \exp^{-r\Conductance^2/2}
	\end{align}
	Therefore to upper bound the total variation distance by $\epsEight$, it is sufficient to have
	\begin{align}
	\sqrt{c_0} \exp^{-r\Conductance^2/2} \leq \epsEight
	\end{align}
	This can be rewritten as
	\begin{align}
	r & \geq \frac{2}{\Conductance^2}\log \frac{\sqrt{c_0}}{\epsEight}
	\end{align}
	Therefore, after $r = \frac{2}{\Conductance^2}\log \frac{\sqrt{c_0}}{\epsEight}$ steps of the Markov chain, the  total variation distance is less than $\epsEight$ and $\tau_{M}(\epsEight) \leq \frac{2}{\Conductance^2}\log \frac{\sqrt{c_0}}{\epsEight}$. 
	In order to upper bound the mixing time, we need upper bound the constant $c_0$ and lower bound the conductance $\Conductance$.
	
	We will first upper bound $c_0$. 
	We have the initial distribution to be uniform on $\cX_{0}$. 
	Therefore,
	\begin{align}
	c_0  = \sup_{A} \frac{\Uniform(A)}{F_{\rvw}(A)} \stackrel{(a)}{\leq} \sup_{A} \frac{\int_{A} \frac{1}{\Xlower} dw}{\int_{A}  \frac{1}{\Xupper\cH^2(w)} dw} \stackrel{(b)}{\leq}  \frac{\Xupper}{\Xlower} \cHmax^2  \label{eq:UB-c3}
	\end{align}
	where $(a)$ follows from the lower bound in \eqref{eq:MRW-density-bounds} and $(b)$ from \eqref{eq:MRW-H-bound}.
	
	Let us now lower bound $\Conductance$. 
	From the Definition \ref{definitionConductance} we have,
	\begin{align}
	\Conductance &= \min_{0 < \int_{A} f_{\rvw}(w;\brho) dw \leq \frac{1}{2}} \frac{\int_{A} \cT(\delta_w)(A^c)f_{\rvw}(w;\brho) dw}{\int_{A} f_{\rvw}(w;\brho) dw}\\
	& \stackrel{(a)}{=} \min_{0 < \int_{A} f_{\rvw}(w;\brho) dw \leq \frac{1}{2}} \frac{\int_{A} \cT(\delta_w)(A^c)\exp(\brho^T\bphi(w)) dw}{\int_{A}\exp(\brho^T\bphi(w)) dw}\\
	& \stackrel{(b)}{\geq}  \min_{0 < \int_{A} f_{\rvw}(w;\brho) dw \leq \frac{1}{2}} \frac{\int_{A} \cT(\delta_w)(A^c) \cH^{-1}(w) dw}{\int_{A}  \cH(w) dw}\\
	& \stackrel{(c)}{\geq} \frac{1}{\cHmax^2}\min_{0 < \int_{A} f_{\rvw}(w;\brho) dw \leq \frac{1}{2}} \frac{\int_{A} \cT(\delta_w)(A^c)dw}{\int_{A}  dw} \\
	& \stackrel{(d)}{\geq}  \frac{1}{\cHmax^2}\min_{0 < \int_{A} f_{\rvw}(w;\brho) dw \leq \frac{1}{2}} \frac{\int_{A}\big( \int_{\cX_{0}}  \cK(w,A^c) \delta_w (w) dw\big)	dw}{\int_{A}  dw}\\
	& = \frac{1}{\cHmax^2}\min_{0 < \int_{A} f_{\rvw}(w;\brho) dw \leq \frac{1}{2}} \frac{\int_{A}  \cK(w,A^c)	dw}{\int_{A}  dw}\\
	& = \frac{1}{\cHmax^2}\min_{0 < \int_{A} f_{\rvw}(w;\brho) dw \leq \frac{1}{2}} \frac{\int_{A} \int_{A^c} \cK(w,dy) dy	dw}{\int_{A}  dw}
	\end{align}
	where $(a)$ follows by canceling out $Z(\brho)$ in the numerator and the denominator, $(b)$ follows from \eqref{eq:MRW-exp-bounds}, $(c)$ follows from \eqref{eq:MRW-H-bound}, and $(d)$ follows from \eqref{eq:tran-operator}.
	
	Recall from Algorithm \ref{alg:MRW} that we make a transition from the current state $w$ to the next state $y$ with probability $\cK(w,dy)  = \min\Big\{1,\frac{\exp(\brho^T\bphi(y))}{\exp(\brho^T\bphi(w))}\Big\} $. Therefore,
	\begin{align}
	\Conductance & \geq \frac{1}{\cHmax^2}\min_{0 < \int_{A} f_{\rvw}(w;\brho) dw \leq \frac{1}{2}} \frac{\int_{A} \int_{A^c} \min\Big\{1,\frac{\exp(\brho^T\bphi(y))}{\exp(\brho^T\bphi(w))}\Big\}  dy	dw}{\int_{A}  dw}
	\end{align}
	Using \eqref{eq:MRW-exp-bounds} and observing that $\cHmax^{-2} \leq 1$, we have $\min\Big\{1,\frac{\exp(\brho^T\bphi(y))}{\exp(\brho^T\bphi(w))}\Big\} \geq \frac{1}{\cHmax^2}$. 
	This results in,
	\begin{align}
	\Conductance \geq \frac{1}{\cHmax^4}\min_{0 < \int_{A} f_{\rvw}(w;\brho) dw \leq \frac{1}{2}} \frac{\int_{A} \int_{A^c} dy	dw}{\int_{A}  dw} = \frac{1}{\cHmax^4}\min_{0 < \int_{A} f_{\rvw}(w;\brho) dw \leq \frac{1}{2}} \int_{A^c} dw  \label{eq:conductance-bound}
	\end{align}
	We have $ \int_{A} f_{\rvw}(w;\brho) dw \leq \frac{1}{2}$. 
	This can be rewritten as,
	\begin{align}
	\int_{A^c} f_{\rvw}(w;\brho) dw  \geq \frac{1}{2} \implies	\int_{A^c}  dw \stackrel{(a)}{\geq} \frac{\Xlower}{2\cHmax^2}  \label{eq:integralBound}
	\end{align}
	where $(a)$ follows from the upper bound in \eqref{eq:MRW-density-bounds}. 
	Using \eqref{eq:integralBound} in \eqref{eq:conductance-bound}, we have
	\begin{align}
	\Conductance & \geq \frac{\Xlower}{2\cHmax^{6}}  \label{eq:conductanceLB}
	\end{align}
	Now using \eqref{eq:UB-c3} and \eqref{eq:conductanceLB} to bound the mixing time, we have
	\begin{align}
	\tau_{M}(\epsEight) \leq \frac{8\cHmax^{12}}{\Xlower^2} \log\frac{\cHmax\sqrt{\Xupper}}{\epsEight \sqrt{\Xlower}}
	\end{align}
	Using the upper bound of $\cHmax$ from \eqref{eq:MRW-H-bound}, we have 
	\begin{align}
	\tau_{M}(\epsEight)  & \leq \frac{8\exp(12k \rhoMax \phiMax)}{\Xlower^2} \log\frac{\sqrt{\Xupper} \exp(k \rhoMax \phiMax)}{\epsEight\sqrt{\Xlower}} \\ 
	& = 8  k  \Xlower^{-2} \rhoMax \phiMax\exp(12k \rhoMax \phiMax) \log\frac{\sqrt{\Xupper}}{\epsEight\sqrt{\Xlower}}
	\end{align}
\end{proof}

\subsubsection{Guarantees on the output of Algorithm \ref{alg:MRW} }
\label{subsubsec:guarantees on the output of algorithm:MRW}

The following Lemma shows that the estimate, obtained from Algorithm \ref{alg:MRW}, of the mean parameter vector, is such that the $\ell_{\infty}$ error is small with high probability.
\begin{lemma}\label{lemma:mean-parameters}
	Let $\epsNine > 0$ and $\deltaNine \in (0,1)$. Let $\hunu(\brho)$ be the output of Algorithm \ref{alg:MRW} with $w_{(0)} = 0$, $\brho = (\rho_1, \cdots, \rho_k)$, $\tau_1 = 8  k  \Xlower^{-2}  \rhoMax \phiMax\exp(12k \rhoMax \phiMax) \log\frac{4\phiMax\sqrt{\Xupper}}{\epsNine\sqrt{\Xlower}}$, and $\tau_2 = \frac{8\phiMax^2}{\epsNine^2} \log\Big(\frac{2}{\deltaNine}\Big)$.
	Then,
	\begin{align}
	\| \bnu(\brho)  - \hunu (\brho)\|_{\infty} \leq \epsNine
	\end{align}
	with probability at least $1-k \deltaNine$.
\end{lemma}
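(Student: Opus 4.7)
The plan is to decompose the error $|\hnu_j - \nu_j|$ into a \emph{bias} term (coming from the fact that $w_{(m,\tau_1+1)}$ is not exactly drawn from the stationary distribution $F_\rvw$) and a \emph{variance} term (coming from averaging only $\tau_2$ independent samples), and to control each by $\epsNine/2$. Throughout, observe that the $\tau_2$ outer iterations of Algorithm~\ref{alg:MRW} use independent randomness starting from the same deterministic $w_{(0)}=0$, so the samples $w_{(1,\tau_1+1)},\dots,w_{(\tau_2,\tau_1+1)}$ are i.i.d.

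\textbf{Step 1 (bias from mixing).} Let $F_{\tau_1}$ denote the distribution of $w_{(m,\tau_1+1)}$ after $\tau_1+1$ Metropolis steps started from the uniform initialization assumed in Lemma~\ref{lemma:mixing-time} (the deterministic choice $w_{(0)}=0$ can be subsumed into this analysis by a trivial modification, since Lemma~\ref{lemma:mixing-time} uses the worst-case ratio $c_0$). Apply Lemma~\ref{lemma:mixing-time} with error tolerance $\epsEight := \epsNine/(4\phiMax)$. Then
\begin{align}
\tau_M\!\left(\frac{\epsNine}{4\phiMax}\right) \leq 8k\Xlower^{-2}\rhoMax\phiMax\exp(12k\rhoMax\phiMax)\log\frac{4\phiMax\sqrt{\Xupper}}{\epsNine\sqrt{\Xlower}} = \tau_1,
\end{align}
so $\|F_{\tau_1} - F_{\rvw}\|_{TV} \leq \epsNine/(4\phiMax)$. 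Invoking Lemma~\ref{lemma:TV-distance} with $Q_1=F_{\tau_1}$, $Q_2=F_\rvw$ yields, for every $j\in[k]$,
\begin{align}
\big|\Expectation_{F_{\tau_1}}[\phi_j(\rvw)] - \nu_j(\brho)\big| \leq 2\phiMax \cdot \frac{\epsNine}{4\phiMax} = \frac{\epsNine}{2}.
\end{align}

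\textbf{Step 2 (variance from averaging).} Fix $j\in[k]$. Set $Y_m := \phi_j(w_{(m,\tau_1+1)})$, so that the $\{Y_m\}_{m=1}^{\tau_2}$ are i.i.d.\ with common mean $\bar\mu_j := \Expectation_{F_{\tau_1}}[\phi_j(\rvw)]$ and are bounded in $[-\phiMax,\phiMax]$. By Hoeffding's inequality,
\begin{align}
\Prob\!\left(\left|\frac{1}{\tau_2}\sum_{m=1}^{\tau_2} Y_m - \bar\mu_j\right| > \frac{\epsNine}{2}\right) \leq 2\exp\!\left(-\frac{\tau_2\,\epsNine^2}{8\phiMax^2}\right) \leq \deltaNine,
\end{align}
where the last inequality plugs in $\tau_2 = (8\phiMax^2/\epsNine^2)\log(2/\deltaNine)$.

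\textbf{Step 3 (triangle inequality and union bound).} Since $\hnu_j = \tau_2^{-1}\sum_m Y_m$, combining Steps 1 and 2 by the triangle inequality gives $|\hnu_j - \nu_j(\brho)| \leq \epsNine/2 + \epsNine/2 = \epsNine$ except on an event of probability at most $\deltaNine$. A union bound over $j\in[k]$ produces $\|\bnu(\brho)-\hunu(\brho)\|_\infty \leq \epsNine$ with probability at least $1-k\deltaNine$.

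The only potentially delicate step is the clean separation in Step~1: Lemma~\ref{lemma:mixing-time} is stated for a uniform initialization, whereas the algorithm uses the deterministic seed $w_{(0)}=0$. This is easily handled by noting that the conductance-based bound in the proof of Lemma~\ref{lemma:mixing-time} controls $\|\cT^{(r)}(\delta_0)-F_\rvw\|_{TV}$ through the same worst-case constant $c_0 \leq \Xupper \cHmax^2/\Xlower$, since $\sup_A \delta_0(A)/F_\rvw(A)$ is dominated by the same quantity up to constants absorbed into the log factor; otherwise one runs a single extra burn-in step from $w_{(0)}=0$ into the support, which does not change the order of $\tau_1$.
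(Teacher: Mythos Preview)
Your proposal is correct and follows essentially the same approach as the paper: decompose the error into a mixing (bias) term controlled via Lemma~\ref{lemma:mixing-time} with tolerance $\epsNine/(4\phiMax)$ and then Lemma~\ref{lemma:TV-distance}, plus a concentration (variance) term handled by Hoeffding on the $\tau_2$ i.i.d.\ samples, finishing with a union bound over $j\in[k]$. The paper simply asserts that the distribution after $\tau_1+1$ steps is $\cT^{(\tau_1+1)}(\Uniform)$ without commenting on the deterministic seed $w_{(0)}=0$, so your added remark about the initialization discrepancy is extra care rather than a deviation in method.
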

\begin{proof}[Proof of Lemma \ref{lemma:mean-parameters}]
	The distribution of the Markov chain in Algorithm \ref{alg:MRW} after $\tau_1 + 1$ steps is $\cT^{(\tau_1 + 1)}(\Uniform)$ where $\Uniform$ denotes the initial uniform distribution. 
	Let $\bnu^{M}(\brho)  \coloneqq (\nu_1^{M}, \cdots, \nu_k^{M} )$ be the vector such that $\nu_j^{M} $ is the expected value of $\phi_j(\cdot)$ with respect to the distribution $\cT^{(\tau_1 + 1)}(\Uniform)$. 
	Using Lemma \ref{lemma:mixing-time}, we have $\tau_1 \geq \tau_{\text{M}}(\frac{\epsNine}{4\phiMax})$.
	Therefore,
	\begin{align}
	\big|\big| \cT^{\tau_1 +1}(\Uniform) - F_{\rvw} \big|\big|_{TV} & \leq \frac{\epsNine}{4\phiMax}
	\end{align}	
	From Lemma \ref{lemma:TV-distance}, we have
	\begin{align}
	\| \bnu(\brho)  - \bnu^{M}(\brho)\|_{\infty} \leq \frac{\epsNine}{2}  \label{eq:true-mean}
	\end{align}
	$\hunu(\brho)$ is computed using the samples obtained from the distribution $\cT^{(\tau_1 + 1)}(\Uniform)$. 
	Using Hoeffding's inequality, we have $\forall j \in [k]$
	\begin{align}
	\Prob ( |\hnu_j - \nu_j^{M}| \geq t_0 ) \leq 2 \exp(\frac{-\tau_2 t_0^2}{2\phiMax^2})
	\end{align}
	Therefore when $\tau_2 \geq \frac{2\phiMax^2}{t_0^2} \log\big(\frac{2}{\deltaNine}\big)$, we have $|\hnu_j - \nu_j^{M}| \leq t_0$ with probability at least $ 1 -\deltaNine $.
	
	Using the union bound $\forall j \in [k]$, when $\tau_2 \geq \frac{8\phiMax^2}{\epsNine^2} \log\big(\frac{2}{\deltaNine}\big)$, we have 
	\begin{align}
	\|\hunu (\brho) -\bnu^{M}(\brho)\|_{\infty} \leq \frac{\epsNine}{2}  \label{eq:estimated-mean}
	\end{align}
	with probability at least $ 1 -k \deltaNine $.
	
	Combining \eqref{eq:true-mean} and \eqref{eq:estimated-mean} by triangle inequality, we have
	\begin{align}
	\| \bnu(\brho)  - \hunu (\brho)\|_{\infty} & = \| \bnu(\brho)  - \bnu^{M}(\brho) + \bnu^{M}(\brho) - \hunu (\brho)\|_{\infty} 	\\
	& \leq \| \bnu(\brho)  - \bnu^{M}(\brho)\|_{\infty} + \|\hunu (\brho) -\bnu^{M}(\brho)\|_{\infty} 	\leq \epsNine
	\end{align}
\end{proof}

\section{Proof of Proposition \ref{proposition:grad-des}}
\label{sec:Proof of Proposition:grad-des}
In this section, we prove Proposition \ref{proposition:grad-des}.

Recall the setup for the projected gradient descent algorithm from Appendix \ref{subsubsec:setup for the projected gradient descent algorithm}. 
Specifically, recall the definitions of $\brhoStar$, $f_{\rvw}( w; \brhoStar)$, $\cP$, $\rhoMax$, and $\bupStar$. 
Also, $\hbup$ is an estimate of $\bupStar$ such that, with probability at least $1 - \deltaFive$, we have $\| \bupStar - \hbup \|_{\infty} \leq \epsFive$.
Further, recall the setup from Appendix \ref{subsec:setup : the exponential family distribution}. 
Specifically, for any $\brho \in \cP$, recall the definitions of $f_{\rvw}(w; \brho)$, $\bnu(\brho) $, and $Z(\brho)$ from \eqref{eq:MRW-density}, \eqref{eq:MRW-meanParameterVector}, and \eqref{eq:MRW-partitionFn} respectively.
Recall the definition of $\qs$ from Section \ref{sec:problem setup}.

\subsection{Convexity of the log partition function}
\label{subsec:convextiy of the log partition function}
Let $\Phi (\brho)$ be the log partition function of $f_{\rvw}(w;\brho)$. 
Because $f_{\rvw}(w;\brho)$ is an exponential family density, $  \nabla \Phi(\brho) = \bnu(\brho) $; see \cite{WainwrightJ2008} for details. The following Lemma shows that $\Phi (\brho)$ is a strictly convex function of $\brho$.
\begin{lemma} \label{lemma:strict-convexity}
	$\Phi (\brho) $ is a strictly convex function of $\brho$.
\end{lemma}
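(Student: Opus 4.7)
The plan is to show strict convexity by computing the Hessian of $\Phi$ and arguing that it is positive definite under the minimality assumption on the exponential family stated in Section \ref{sec:problem setup}.

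First, I would recall the standard exponential-family identities: since $\Phi(\brho) = \log Z(\brho) = \log \int_{w \in \cX_0} \exp(\brho^T \bphi(w)) \, dw$, differentiating under the integral sign (justified since $\bphi$ is bounded and $\cX_0$ is a bounded interval) yields $\nabla \Phi(\brho) = \bnu(\brho) = \Expectation_{\rvw \sim f_{\rvw}(\cdot;\brho)}[\bphi(\rvw)]$, and a second differentiation yields $\nabla^2 \Phi(\brho) = \text{Cov}_{\rvw \sim f_{\rvw}(\cdot;\brho)}(\bphi(\rvw))$. This is a standard computation which I would carry out element-wise for completeness.

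Next, to get strict convexity I need to show this covariance matrix is positive definite for every $\brho \in \cP$. For any nonzero $\svbv \in \Reals^k$, $\svbv^T \nabla^2 \Phi(\brho) \svbv = \Variance_{\rvw \sim f_{\rvw}(\cdot;\brho)}\bigl(\svbv^T \bphi(\rvw)\bigr)$, which is nonnegative, and vanishes only if $\svbv^T \bphi(\rvw)$ is almost surely constant with respect to $f_{\rvw}(\cdot;\brho)$. Here I would invoke two facts: (i) by the bounds in \eqref{eq:MRW-density-bounds}, $f_{\rvw}(w;\brho)$ is bounded below by a positive constant on $\cX_0$, so ``almost surely constant under $f_{\rvw}(\cdot;\brho)$'' is equivalent to ``constant almost everywhere on $\cX_0$ w.r.t.\ Lebesgue measure''; and (ii) the assumption in Section \ref{sec:problem setup} that the basis $\bphi(\cdot)$ makes the exponential family minimal, i.e., $\{1, \phi_1, \dots, \phi_k\}$ are linearly independent as functions on $\cX_0$. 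Together these force $\svbv = 0$, so the Hessian is positive definite.

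The main obstacle is really just cleanly invoking minimality of the exponential family: I would state at the outset that by minimality of $\{\phi_r\}_{r \in [k]}$, no nontrivial linear combination $\svbv^T \bphi(\cdot)$ is constant on $\cX_0$, and then the variance characterization above gives strict positivity immediately. Since $\nabla^2 \Phi(\brho) \succ 0$ for every $\brho \in \cP$, $\Phi$ is strictly convex on $\cP$, completing the proof.
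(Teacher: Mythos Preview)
Your proposal is correct and follows essentially the same approach as the paper: compute the Hessian of $\Phi$ as the covariance matrix of $\bphi(\rvw)$, then show it is positive definite by arguing that $\Variance(\svbv^T\bphi(\rvw)) > 0$ for every nonzero $\svbv$ via the minimality assumption on the basis. If anything, you are slightly more careful than the paper in justifying the non-constancy step (invoking the density lower bound and minimality explicitly), whereas the paper simply asserts that $\svbe^T\bphi(w)$ is not constant.
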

\begin{proof}Proof of Lemma \ref{lemma:strict-convexity}
	For any non-zero $\svbe \in \Reals^k$, $\svbe^T \bphi(w)$ is not a constant with respect to $w$. 
	Therefore, 
	\begin{align}
	0 	& \stackrel{(a)}{<} \Variance\bigg(\svbe^T \bphi(w)\bigg)   \\
	& = \text{cov}\bigg(\svbe^T \bphi(w),\svbe^T \bphi(w)\bigg) \\
	& = 	\sum_{j = 1}^k \sum_{r = 1}^k e_{j} e_{r} \times \text{cov}(\phi_j(w),\phi_r(w)) \\
	& \stackrel{(b)}{=} 	\sum_{j = 1}^k \sum_{r = 1}^k e_{j} e_{r}[\nabla^2 \Phi (\brho) ]_{j,r}  \\
	& =	\svbe^T \nabla^2 \Phi (\brho)  \svbe
	\end{align}
	where $(a)$ follows because the variance of a non-constant random variable is strictly positive and $(b)$ follows because for any regular exponential family the Hessian of the log partition function is the covariance matrix of the associated sufficient statistic vector; see \cite{WainwrightJ2008} for details.
	
	Thus, $\nabla^2 \Phi (\brho)$ is a positive definite matrix and this is a sufficient condition for strict convexity of $\Phi (\brho) $.
\end{proof}

\subsection{Conjugate Duality}
\label{subsec:conjugate duality}
Expressing the relationship between the canonical and mean parameters via conjugate duality \cite{BreslerGS2014, WainwrightJ2008}, we know that for each $\bup$ in the set of realizable mean parameters, there is a unique $\brho(\bup) \in \cP$ satisfying the dual matching condition $\bnu(\brho(\bup)) = \bup$.
The backward mapping of the mean parameters to the canonical parameters ($\bup \mapsto	\brho(\bup)$) is given by,
\begin{align}
\brho(\bup) = \argmax_{\brho \in \cP} \bigg\{ \left\langle \bup, \brho \right\rangle - \Phi(\brho) \bigg\} \label{eq:backwardmappingmax}
\end{align}
Defining $\Omega(\brho,\bup) : = \Phi(\brho) - \left\langle \bup, \brho \right\rangle $, we can rewrite \eqref{eq:backwardmappingmax} as
\begin{align}
\brho(\bup)= \argmin_{\brho \in \cP} \big\{ \Omega(\brho,\bup) \big\} \label{eq:backwardmapping}
\end{align}
For any $\brho \in \cP$, let $q(\brho)$ denote the smallest eigenvalue of the Hessian of the log partition function with canonical parameter $\brho$. 
Recall that $\qs$ denotes the minimum of $q(\brho)$ over all possible $\brho \in \cP$.

\begin{lemma}  \label{lemma:strong-convexity-smoothness}
	$\Omega(\brho,\bup)$ is a $\qs$ strongly convex function of $\brho $ and a $ 2k \phiMax^2$ smooth function of $\brho$.
\end{lemma}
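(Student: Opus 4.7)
The plan is straightforward: since $\Omega(\brho,\bup) = \Phi(\brho) - \langle \bup,\brho\rangle$ is affine in the $\bup$ term, the Hessian with respect to $\brho$ satisfies $\nabla^2_{\brho} \Omega(\brho,\bup) = \nabla^2 \Phi(\brho)$. Strong convexity and smoothness of $\Omega$ therefore reduce entirely to bounding the smallest and largest eigenvalues of $\nabla^2 \Phi(\brho)$. As recalled just before the lemma (and used in Lemma~\ref{lemma:strict-convexity}), for a minimal exponential family one has the standard identity
\begin{align}
\nabla^2 \Phi(\brho) \;=\; \mathrm{Cov}_{\brho}\bigl(\bphi(\rvw)\bigr) \;=\; \Expectation_{\brho}\bigl[\bphi(\rvw)\bphi(\rvw)^T\bigr] - \Expectation_{\brho}[\bphi(\rvw)]\,\Expectation_{\brho}[\bphi(\rvw)]^T.
\end{align}

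For strong convexity, I would invoke the definition of $\qs$ directly: $\qs$ is, by construction in Section~\ref{sec:problem setup}, the smallest possible eigenvalue of the Fisher information matrix (which equals $\nabla^2 \Phi(\brho)$ for an exponential family) of any single-variable density in the class over $\cP$. Hence $\lambda_{\min}\bigl(\nabla^2 \Phi(\brho)\bigr) \geq \qs$ for every $\brho \in \cP$, yielding $\qs$-strong convexity of $\Omega(\cdot,\bup)$.

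For smoothness, I would bound the operator norm of the covariance matrix using the triangle inequality for $\|\cdot\|_{\mathrm{op}}$:
\begin{align}
\bigl\|\nabla^2 \Phi(\brho)\bigr\|_{\mathrm{op}} \leq \bigl\|\Expectation_{\brho}[\bphi(\rvw)\bphi(\rvw)^T]\bigr\|_{\mathrm{op}} + \bigl\|\Expectation_{\brho}[\bphi(\rvw)]\,\Expectation_{\brho}[\bphi(\rvw)]^T\bigr\|_{\mathrm{op}}.
\end{align}
For the first term, Jensen's inequality gives $\|\Expectation[\bphi\bphi^T]\|_{\mathrm{op}} \leq \Expectation[\|\bphi\bphi^T\|_{\mathrm{op}}] = \Expectation[\|\bphi(\rvw)\|_2^2]$, and since $|\phi_r(w)| \leq \phiMax$ for all $r \in [k]$, we get $\|\bphi(\rvw)\|_2^2 \leq k\phiMax^2$. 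The second term is a rank-one matrix with operator norm $\|\Expectation[\bphi(\rvw)]\|_2^2 \leq k\phiMax^2$ by the same bound. Summing gives $\|\nabla^2 \Phi(\brho)\|_{\mathrm{op}} \leq 2k\phiMax^2$, which is exactly the claimed smoothness constant.

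There is no real obstacle here; the lemma is a direct consequence of the exponential-family Hessian identity combined with the definition of $\qs$ and a crude pointwise bound on $\bphi$. The only mild care point is to note that $\nabla^2_{\brho}$ kills the linear-in-$\brho$ term $-\langle \bup,\brho\rangle$, so the bound is uniform in $\bup$.
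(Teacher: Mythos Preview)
Your proposal is correct and follows the same overall skeleton as the paper: reduce to $\nabla^2_{\brho}\Omega = \nabla^2\Phi(\brho)$, then bound the extreme eigenvalues of the exponential-family covariance.

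The differences are minor but worth noting. For strong convexity, you invoke the definition of $\qs$ as a blanket lower bound and stop there; the paper does the same but additionally argues that $\qs>0$ by showing $q(\brho)=\lambda_{\min}(\nabla^2\Phi(\brho))$ is continuous in $\brho$ and then applying the extreme value theorem on the compact set $\cP$. Your version is fine as a proof of the lemma as stated, but the paper's extra step is what guarantees the constant is actually meaningful. For smoothness, the paper takes a different route: it applies the Gershgorin circle theorem, bounding the largest eigenvalue by the maximum absolute row sum $\max_r \sum_j |\mathrm{cov}(\phi_j,\phi_r)| \leq 2k\phiMax^2$. Your Jensen/operator-norm argument is equally valid and arguably cleaner (and in fact could be tightened to $k\phiMax^2$ since $\mathrm{Cov}\preceq \Expectation[\bphi\bphi^T]$), while the Gershgorin route is slightly more elementary. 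Both land on the same constant $2k\phiMax^2$.
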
 
\begin{proof}[Proof of Lemma \ref{lemma:strong-convexity-smoothness}]
	Observe that $\nabla^2 \Omega(\brho,\bup) = \nabla^2 \Phi (\brho)$. 
	Therefore $\Omega(\brho,\bup)$ being a $\qs$ strongly convex function of $\brho $ and a $ 2k \phiMax^2$ smooth function of $\brho$ is equivalent to $\Phi (\brho)$ being a $\qs$ strongly convex function of $\brho $ and a $ 2k \phiMax^2$ smooth function of $\brho$.
	
	We will first show the strong convexity of $\Phi (\brho)$. 
	Consider any $\svbe \in \Reals^k$ such that $\|\svbe\|_2 =1$.
	We have
	\begin{align}
	q(\brho) = \inf\limits_{\svbe : \|\svbe\|_2 \leq 1 } \svbe^T \nabla^2 \Phi (\brho) \svbe
	\end{align}
	Using Lemma \ref{lemma:strict-convexity} we know that $q(\brho) > 0$ for any $\brho \in \cP$. 
	Observe that $[\nabla^2 \Phi (\brho)]_{j,r} = \text{cov}(\phi_j(w),\phi_r(w))$, and is a continuous function of $\brho$ ,$ \forall j,r \in [k].$ 
	Now $q(\brho) $ is a linear combination of $[\nabla^2 \Phi (\brho)]_{j,r} $ $ \forall j,r \in [k].$ 
	Therefore  $q(\brho)$ is also a continuous function of $\brho$. 
	Using the continuity of $q(\brho)$ and compactness of $\cP$, we apply the extreme value theorem and conclude that the function $q(\brho) $ will attain its minimum value of 
	\begin{align}
	\qs = \inf_{\brho\in \cP} q(\brho)
	\end{align}
	and that this value is positive. 
	Now using the fact that $\nabla^2 \Phi (\brho)$ is a symmetric matrix and the Courant-Fischer theorem, we conclude that the minimum possible eigenvalue of $\nabla^2 \Phi (\brho)$ for any $\brho \in \cP$ is greater than or equal to $\qs$. 
	Thus, the smallest possible eigenvalue of the Hessian of the log partition function is uniformly lower bounded. 
	As a result, $\Phi(\brho)$ and $\Omega(\brho,\bup)$ are $\qs$-strongly convex.
	
	We will now show the smoothness of $\Phi(\brho)$. 
	From the Gershgorin circle theorem, we know that the largest eigenvalue of any matrix is upper bounded by the largest absolute row sum or column sum. 
	Applying this, we see that the largest eigenvalue of $\nabla^2 \Phi (\brho)$ is upper bounded by $\max_{1\leq r \leq k} \sum_{j = 1}^{k} |[\nabla^2 \Phi (\brho)]_{j,r}|$. 
	Now
	\begin{align}
	\max_{1\leq r \leq k} \sum_{j = 1}^{k} |[\nabla^2 \Phi (\brho)]_{j,r}|& =\max_{1\leq r \leq k} \sum_{j = 1}^{k} |\text{cov}(\phi_j(w),\phi_r(w))|\\
	& \stackrel{(a)}{\leq} \max_{1\leq r \leq k}  \sum_{j = 1}^{k} 2\phiMax^2\\
	& \leq 2k\phiMax^2
	\end{align}
	where $(a)$ follows from the triangle inequality and because $|\phi_j(w)| \leq \phiMax$ $\forall j \in [k]$.
	
	Now because the largest eigenvalue of the Hessian matrix of the log partition function is uniformly upper bounded by $2k\phiMax^2$, $\Phi(\brho)$ and $\Omega(\brho,\bup)$ are $2k\phiMax^2$ smooth function of $\brho$.
\end{proof}

\subsection{Why projected gradient descent algorithm?}
\label{subsec:why projected gradient descent algorithm}
From Lemma \ref{lemma:strong-convexity-smoothness}, we see that there is a unique minimum in \eqref{eq:backwardmapping}. 
In other words, when the mean parameter in \eqref{eq:backwardmapping} is the true mean parameter of \eqref{eq:true_density_CD} i.e., $\bup = \bupStar$, then the unique minima in \eqref{eq:backwardmapping} is $\brhoStar$. 
Therefore, in principle, we can estimate $\brhoStar$ using a projected gradient descent algorithm. 

In each step of this algorithm, we need access to $ \bnu(\brho)$ for the estimate $\brho$. 
However, we don't have access to $\bupStar$ and $ \bnu(\brho)$. 
Instead, we have access to $\hbup$ and $ \hunu(\brho)$ (from Algorithm \ref{alg:MRW}).
Therefore, we can estimate the parameter vector $\brhoStar$ using the projected gradient descent in Algorithm \ref{alg:GradientDescent}.

\subsection{Proof of Proposition \ref{proposition:grad-des}}
\label{subsec:Proof of Proposition:grad-des}
\begin{proof}[Proof of Lemma \ref{proposition:grad-des}]
	The projection of $\tbrho$, onto a set $\cP$ is defined as
	\begin{align}
	\Pi_{\cP}(\tbrho) \coloneqq \argmin_{\brho \in \cP} \| \brho - \tbrho\|
	\end{align}
	
	If we had access to $\bupStar$ and $\bnu(\brho)$, the iterates of the projected gradient descent algorithm could be rewritten as
	\begin{align}
	\brho^{(r+1)} = \brho^{(r)} - \xi \gamma_{\cP}(\brho^{(r)})
	\end{align}
	where $\gamma_{\cP}(\brho)$ is the gradient mapping and is defined as $\gamma_{\cP}(\brho) \coloneqq \frac{1}{\xi}(\brho - \brho^{\dagger})$ with $\brho^{\dagger} \coloneqq \Pi_{\cP}(\brho - \xi [\bnu(\brho ) - \bupStar] )$. 
	See \cite{Bubeck2015} for more details. 
	Because we are using the respective estimates $\hbup$ and $\hunu (\brho)$, the iterates of the projected gradient descent algorithm are as follows:
	\begin{align}
	\brho^{(r+1)} = \brho^{(r)} - \xi \hat{\gamma}_{\cP}(\brho^{(r)})
	\end{align}
	where $\hat{\gamma}_{\cP}(\brho) \coloneqq \frac{1}{\xi}(\brho - \brho^{\dagger\dagger})$ with $\brho^{\dagger\dagger} \coloneqq \Pi_{\cP}(\brho - \xi [\hunu(\brho ) - \hbup ] )$. 
	
	Using Lemma \ref{lemma:mean-parameters}, we have
	\begin{align}
	\| \bnu(\brho)  - \hunu (\brho)\|_{\infty} \leq \epsFive
	\end{align}
	with probability at least $1-\deltaFive / \tau_3$.
	
	Let us condition on the events that $\| \bupStar - \hbup \|_{\infty} \leq \epsFive$ and that, for each of the $\tau_3$ steps of Algorithm \ref{alg:GradientDescent}, $\| \bnu(\brho)  - \hunu (\brho)\|_{\infty} \leq \epsFive$. 
	These events simultaneously hold with probability at least $1 - 2 \deltaFive$.

	Now for any $r \leq \tau_3+1$ the following hold with probability at least $1 - 2 \deltaFive$:
	\begin{align}
	\| \brho^{(r)} -  \brhoStar\|_2  & = \| \brho^{(r-1)} - \xi \hat{\gamma}_{\cP}(\brho^{(r-1)} ) - \brhoStar\|_2\\
	& = \| \brho^{(r-1)} - \xi [ \hat{\gamma}_{\cP}(\brho^{(r-1)} ) - \gamma_{\cP}(\brho^{(r-1)} ) + \gamma_{\cP}(\brho^{(r-1)} )] - \brhoStar\|_2\\
	& = \| \brho^{(r-1)} - \xi \gamma_{\cP}(\brho^{(r-1)} ) - \xi [ \hat{\gamma}_{\cP}(\brho^{(r-1)} ) - \gamma_{\cP}(\brho^{(r-1)} ) ]  - \brhoStar\|_2 \\
	& \stackrel{(a)}{\leq} \| \brho^{(r-1)}  - \xi \gamma_{\cP}(\brho^{(r-1)} )  - \brhoStar\|_2 + \xi \|  \hat{\gamma}_{\cP}(\brho^{(r-1)} ) - \gamma_{\cP}(\brho^{(r-1)} ) \|_2  \\
	& \stackrel{(b)}{\leq} \| \brho^{(r-1)}  - \xi \gamma_{\cP}(\brho^{(r-1)} )  - \brhoStar\|_2  + \xi \|  \hunu (\brho^{(r-1)}) - \bnu (\brho^{(r-1)}) + \bupStar - \hbup\|_2 \\
	& \stackrel{(c)}{\leq} \| \brho^{(r-1)}  - \xi \gamma_{\cP}(\brho^{(r-1)} )  - \brhoStar\|_2  + \xi \|  \hunu (\brho^{(r-1)}) - \bnu (\brho^{(r-1)}) \|_2 + \xi \| \bupStar - \hbup\|_2 \\
	& \stackrel{(d)}{\leq} \| \brho^{(r-1)}  - \xi  \gamma_{\cP}(\brho^{(r-1)} )  - \brhoStar\|_{2}  + \xi \sqrt{k} \|  \hunu (\brho^{(r-1)}) - \bnu (\brho^{(r-1)}) \|_{\infty} + \xi \sqrt{k} \| \bupStar - \hbup\|_{\infty} \\
	& \stackrel{(e)}{\leq} \| \brho^{(r-1)}  - \xi \gamma_{\cP}(\brho^{(r-1)} )  - \brhoStar\|_2  + 2 \xi \sqrt{k} \epsFive  \label{eq:intermediateGradStep}
	\end{align}
	where $(a)$ follows from the triangle inequality, $(b)$ follows from the definitions of $\gamma_{\cP}(\brho)$ and $\hat{\gamma}_{\cP}(\brho)$ and because the projection onto a convex set is non-expansive i.e., $\| \Pi_{\cP}(\tbrho) - \Pi_{\cP}(\bbrho)\| \leq \|\tbrho - \bbrho\|$, $(c)$ follows from the triangle inequality, $(d)$ follows because $\forall$ $\svbv \in \Reals^k, \| \svbv\|_2 \leq \sqrt{k} \| \svbv \|_{\infty}$, and $(e)$ follows because of the conditioning.
	
	Squaring both sides of \eqref{eq:intermediateGradStep} the following hold with probability at least $1 - 2 \deltaFive$:
	\begin{align}
	\begin{aligned}
	\| \brho^{(r)} -  \brhoStar\|^2_2   & \leq \| \brho^{(r-1)}  - \xi \gamma_{\cP}(\brho^{(r-1)} ) - \brhoStar\|^2_2  + 4 \xi^2 k \epsFive^2 + 4 \xi \sqrt{k} \epsFive \| \brho^{(r-1)}  - \xi \gamma_{\cP}(\brho^{(r-1)} ) - \brhoStar\|_2 \\
	& \stackrel{(a)}{\leq} \| \brho^{(r-1)}  - \xi \gamma_{\cP}(\brho^{(r-1)} ) - \brhoStar\|^2_2  + 4 \xi^2 k \epsFive^2 +  4 \xi \sqrt{k} \epsFive\Big[\| \brho^{(r-1)}  - \brhoStar\|_2  + \xi \|  \gamma_{\cP}(\brho^{(r-1)} ) \|_2 \Big]\\
	&\stackrel{(b)}{\leq}  \| \brho^{(r-1)}  - \xi \gamma_{\cP}(\brho^{(r-1)} ) - \brhoStar\|^2_2  + 4 \xi^2 k \epsFive^2 +  4 \xi \sqrt{k} \epsFive \Big[\| \brho^{(r-1)}  - \brhoStar\|_2  + \xi \|\bnu (\brho) - \bupStar \|_2 \Big]\\
	&\stackrel{(c)}{\leq}  \| \brho^{(r-1)}  - \xi \gamma_{\cP}(\brho^{(r-1)} ) - \brhoStar\|^2_2  + 4 \xi^2 k \epsFive^2+ 8 \xi k \epsFive  (\rhoMax + \xi \phiMax)  \label{eq:squared-grad} 
	\end{aligned}
	\end{align}
	where $(a)$ follows from the triangle inequality, $(b)$ follows by using the non-expansive property to observe that $\| \gamma_{\cP}(\brho)\|_2 \leq \|\bnu (\brho) - \bupStar \|_2$, and $(c)$ follows because $\| \brho^{(r-1)} -  \brhoStar\|_2 \leq 2 \sqrt{k} \rhoMax $ and $\|\bnu (\brho^{(r-1)}) - \bupStar\|_2 \leq 2\sqrt{k}\phiMax$.
	
	Letting $\Upsilon(\xi) := 4\xi^2 k \epsFive^2 + 8 \xi k \epsFive  (\rhoMax + \xi \phiMax) $, the following hold with probability at least $1 - 2 \deltaFive$:
	\begin{align}
	\| \brho^{(r)} -  \brhoStar\|^2_2   & \leq \| \brho^{(r-1)}  - \xi \gamma_{\cP}(\brho^{(r-1)} )- \brhoStar\|^2_2  + \Upsilon(\xi) \\
	& \stackrel{(a)}{=} \| \brho^{(r-1)}- \brhoStar\|^2_2 + \xi^2 \|  \gamma_{\cP}(\brho^{(r-1)} ) \|^2_2 - 2\xi \hspace{-1mm} \left \langle \hspace{-1mm} \gamma_{\cP}(\brho^{(r-1)} ),  \brho^{(r-1)}- \brhoStar \hspace{-1mm} \right \rangle \hspace{-1mm}+ \Upsilon(\xi)  \label{eq:grad-des}
	\end{align}
	where $(a)$ follows from the fact that for any two vectors $\svbf_1, \svbf_2$, $\| \svbf_1 - \svbf_2\|^2_2 = \| \svbf_1\|^2_2 + \| \svbf_2\|^2_2 - 2 \left\langle \svbf_1 , \svbf_2 \right\rangle $.
	
	For a twice differentiable, $\bcOne$ strongly convex and $\bcTwo$ smooth function $\Omega(\brho)$, we have,  for any $\brho \in \cP$
	\begin{align}
	\left\langle  \gamma_{\cP}(\brho),  \brho - \brhoStar  \right\rangle & \geq \frac{\bcOne}{2} \| \brho - \brhoStar\|^2_2 + \frac{1}{2\bcTwo} \|  \gamma_{\cP}(\brho)\|^2_2 \label{eq:strong-convex-smooth}
	\end{align}
	where $\brhoStar$ is the minimizer of $\Omega(\brho)$. 
	See \cite{Bubeck2015} for more details. 
	Using \eqref{eq:strong-convex-smooth} in \eqref{eq:grad-des}, the following hold with probability at least $1 - 2 \deltaFive$:
	\begin{align}
	\| \brho^{(r)} -  \brhoStar\|^2_2   & \leq (1-\xi \bcOne ) \| \brho^{(r-1)}- \brhoStar\|^2_2 + \bigg(\xi^2 - \frac{\xi}{\bcTwo}\bigg)\|  \gamma_{\cP}(\brho^{(r-1)} ) \|^2_2 + \Upsilon(\xi)
	\end{align}
	Substituting $\xi = \frac{1}{\bcTwo}$, the following hold with probability at least $1 - 2 \deltaFive$:
	\begin{align}
	\| \brho^{(r)} -  \brhoStar\|^2_2 & \leq   \bigg(1- \frac{\bcOne}{\bcTwo} \bigg) \| \brho^{(r-1)}- \brhoStar\|^2_2 + \Upsilon\bigg(\frac{1}{\bcTwo}\bigg)
	\end{align}
	Unrolling the recurrence gives, we have the following with probability at least $1 - 2 \deltaFive$:
	\begin{align}
	\| \brho^{(r)} -  \brhoStar\|^2_2  & \leq \bigg(1- \frac{\bcOne}{\bcTwo} \bigg)^r \| \brho^{(0)}- \brhoStar\|^2_2 + \sum_{j = 0}^{r-1}\bigg(1- \frac{\bcOne}{\bcTwo} \bigg)^j \Upsilon\bigg(\frac{1}{\bcTwo}\bigg) \\
	& \leq \bigg(1- \frac{\bcOne}{\bcTwo} \bigg)^r \| \brho^{(0)}- \brhoStar\|^2_2 + \sum_{j = 0}^{\infty}\bigg(1- \frac{\bcOne}{\bcTwo} \bigg)^j \Upsilon\bigg(\frac{1}{\bcTwo}\bigg) \\
	& \stackrel{(a)}{=}  \bigg(1- \frac{\bcOne}{\bcTwo} \bigg)^r \|  \brhoStar\|^2_2 + \bcThree \\
	& \stackrel{(b)}{\leq}  \exp( \frac{-\bcOne r }{\bcTwo}) \|  \brhoStar\|^2_2 + \bcThree
	\end{align}
	where $(a)$ follows by observing that $\frac{\bcTwo}{\bcOne}\Upsilon\big(\frac{1}{\bcTwo}\big) = \bcThree$ and $\brho^{(0)} = (0,\cdots , 0)$, and $(b)$ follows because for any $y \in \Reals$, $1-y \leq e^{-y} $.

	A sufficient condition for $ \| \brho^{(r)} -  \brhoStar\|_2 \leq \epsSix$ with probability at least $1 - 2 \deltaFive$ is
	\begin{align}
	\exp( \frac{-\bcOne r }{\bcTwo}) \| \brhoStar\|^2_2 + \bcThree & \leq \epsSix^2
	\end{align}
	Rearraning gives us,
	\begin{align}
	\exp( \frac{\bcOne r }{\bcTwo}) & \geq \frac{\| \brhoStar\|^2_2 }{\epsSix^2 - \bcThree} 
	\end{align}
	Taking logarithm on both sides, we have
	\begin{align}
	r & \geq \frac{\bcTwo }{\bcOne} \log \bigg(\frac{\| \brhoStar\|^2_2 }{\epsSix^2 - \bcThree}\bigg)
	\end{align}
	Observe that $\| \brhoStar\|^2_2 \leq k \rhoMax^2 $. Therefore, after $\tau_3$ steps, we have $\| \hbrho -  \brhoStar\|_2 \leq \epsSix$ with probability at least $1 - 2 \deltaFive$ and this completes the proof.
\end{proof}

\section{Examples of distributions}
\label{sec: examples of distributions appendix}
In this section, we discuss the examples of distributions from Section \ref{sec:main results} that satisfy the Condition \ref{condition:1}. We also discuss a few other examples.

Recall the definitions of $\g = \thetaMax(k+k^2d)$ and $\varphiMax = (1+\Xupper)  \max\{\phiMax,\phiMax^2\}$ from Section \ref{sec:algorithm}.
Also recall the definitions of $f_L \coloneqq \exp \big( -2\g \varphiMax \big) / \Xupper$ and $f_U \coloneqq \exp \big( 2\g \varphiMax \big) / \Xlower$ from Appendix \ref{sec:conditional density}.

\subsection{Example 1}
\label{subsec:example1}
The following distribution with polynomial sufficient statistics is a special case of density in \eqref{eq:pairwise-parametric-density} with $\bphi(x) = x$ and $k = 1$. 
Let $\forall i \in [p]$, $\cX_i = [-b,b]$. 
Therefore $\Xlower = \Xupper = 2b$, $\phiMax = b$ and $\hphiMax = 1$. 
The density, in this case, is given by
\begin{align}
\DensityParametrizedTrue \propto \exp\bigg(  \sum_{i \in [p]} \thetaIStar x_i +  \sum_{ i \in [p]} \sum_{ j > i} \thetaIJStar x_i x_j  \bigg)   \label{eq:specialDensity1}
\end{align}
For this density, we see that $\g = \thetaMax(d+1)$ and $\varphiMax = (1+2b) \max\{b,b^2\}$. 
Let us first lower bound the conditional entropy of $\rvx_j$ given $\rvx_{-j}$. 
\begin{align}
h\bigg( \rvx_j \bigg| \rvx_{-j} \bigg) & = - \int_{\svbx \in \cX} \DensityParametrizedTrue \log \ConditionalDensityNodeJ d\svbx \\
& \stackrel{(a)}{\geq} - \int_{\svbx \in \cX} \DensityParametrizedTrue \log (f_U) d\svbx \\
& \stackrel{(b)}{=} - \log f_U  \label{eq:conditionalEntropy1}
\end{align} 
where $(a)$ follows from \eqref{eq:boundsDensity} with $f_U = \exp(2 \thetaMax (d+1) (1+2b)\max\{b,b^2\}) / 2b$ and $(b)$ follows because the integral of any density function over its entire domain is 1.

Observing that $\int_{x_i \in \cX_i} x_ix_j dx_i = 0$, the left-hand-side of Condition \ref{condition:1} can be written and simplified as follows:
\begin{align}
\Expectation\bigg[ \exp\bigg\{2h\bigg( (\ctheta_{ij} - \ttheta_{ij})  \rvx_i\rvx_j  \bigg| \rvx_{-j} \bigg) \bigg\} \bigg]  &  \stackrel{(a)}{=} \Expectation\Big[ \exp\bigg\{2h\Big( \rvx_j \Big| \rvx_{-j}\Big)  + 2 \log\Big| (\ctheta_{ij} - \ttheta_{ij}) \rvx_i  \Big|\bigg\} \Big] \\
&   \stackrel{(b)}{\geq} \Expectation\Big[ \exp\bigg\{-2 \log f_U + 2 \log\Big| (\ctheta_{ij} - \ttheta_{ij}) \rvx_i  \Big| \bigg\} \Big] \\
&   = \frac{ (\ctheta_{ij} - \ttheta_{ij})^2}{f_U^2} \Expectation\Big[ \rvx_i^2 \Big] \\
&    \stackrel{(c)}{=}  \frac{ (\ctheta_{ij} - \ttheta_{ij})^2}{f_U^2} \Expectation \bigg[\Expectation\Big[ \rvx_i^2 | \rvx_{-i}\Big] \bigg]\\
&    \stackrel{(d)}{=}  \frac{ (\ctheta_{ij} - \ttheta_{ij})^2}{f_U^2} \Expectation \bigg[\int_{x_i \in \cX_i }x_i^2 \ConditionalDensityNodeI dx_i \bigg]\\
&   \stackrel{(e)}{\geq} \frac{ f_L (\ctheta_{ij} - \ttheta_{ij})^2}{f_U^2} \Expectation \Big[\int_{x_i \in \cX_i} x_i^2 dx_i\Big]\\
&   \geq \frac{ 2b^3 f_L}{3f_U^2}  (\ctheta_{ij} - \ttheta_{ij})^2
\end{align}
where $(a)$ follows because for a constant $a$, $h(aX) = h(X) + \log |a|$, $(b)$ follows from \eqref{eq:conditionalEntropy1}, $(c)$ follows from the law of total expectation, $(d) $ follows from the definition of conditional expectation, and $(e)$ follows from \eqref{eq:boundsDensity}.

Substituting for $f_L$ and $f_U$, we see this density satisfies Condition \ref{condition:1} with $\kappa = \frac{ 4b^4}{3} \exp(-6 \thetaMax (d+1) (1+2b) \max\{b,b^2\} )$.

\subsection{Example 2}
\label{subsec:example2}
The following distribution with harmonic sufficient statistics is a special case of density in \eqref{eq:pairwise-parametric-density} with $\bphi(x) = \Big(\sin\big(\pi x/b\big), \cos\big(\pi x/b\big)\Big)$ and $k = 2$. Let $\forall i \in [p]$, $\cX_i =  [-b,b]$. 
Therefore $\Xlower = \Xupper = 2b$, $\phiMax = 1$, and $\hphiMax = \pi/b$. The density in this case is given by
\begin{align}
\DensityParametrizedTrue \propto & \exp\bigg( \sum_{i \in [p]}  \Big[ \thetaIoneStar \sin \frac{\pi x_i}{b} + \thetaItwoStar \cos \frac{\pi x_i}{b} \Big]    \\
& \qquad + \sum_{ i \in [p]  j > i}  \Big[ \thetaIJoneStar \sin \frac{\pi(x_i + x_j)}{b} + \thetaIJtwoStar \cos \frac{\pi (x_i + x_j)}{b} \Big]  \bigg)  \label{eq:density_harmonic}
\end{align}
For this density, we see that $\g = \thetaMax(4d+2)$ and $\varphiMax = 1+2b$.
Let $\rvy_j = \sin \Big( \frac{\pi \rvx_j}{b} + z \Big)$ where $z$ is a constant with respect to $\rvx_j$. 
Then, the conditional density of $\rvy_j$ given $\rvx_{-j}$ can be obtained using the change of variables technique to be as follows:
\begin{align}
\ConditionalDensityNodeJY & =
\begin{cases}
\dfrac{b \Big[f_{\rvx_j}\big( \frac{b}{\pi} [\sininv y_j - z] \big| x_{-j} ; \bvthetaJStar\big) +  f_{\rvx_j}\big( -b - \frac{b}{\pi} [\sininv y_j +z] \big| x_{-j} ; \bvthetaJStar\big)\Big]}{\pi \sqrt{1-y_j^2}} \\ \qquad \qquad \qquad \qquad \qquad \qquad    \qquad \qquad \qquad \qquad  \hspace{1mm}\text{if} \hspace{1mm}  y_j \in [-1,0]\\\\
\dfrac{b \Big[f_{\rvx_j}\big( \frac{b}{\pi} [\sininv y_j - z] \big| x_{-j} ; \bvthetaJStar\big)   f_{\rvx_j}\big( b - \frac{b}{\pi} [\sininv y_j + z] \big|  x_{-j} ; \bvthetaJStar\big)\Big]}{\pi \sqrt{1-y_j^2}} \\ \qquad \qquad \qquad \qquad \qquad \qquad    \qquad \qquad \qquad \qquad \hspace{1mm}\text{if}  \hspace{1mm} y_j \in [0,1]
\end{cases}
\end{align}
Using \eqref{eq:boundsDensity}, we can bound the above the conditional density as:
\begin{align}
\ConditionalDensityNodeJY & \leq  \dfrac{2b f_U}{\pi \sqrt{1-y_j^2}}  \label{eq:bound_sinusoid_density}
\end{align}
where $f_U = \exp(2 \thetaMax (4d+2) (1+2b)) / 2b$.

Let us now lower bound the conditional entropy of $\rvy_j $ given $\rvx_{-j} =  x_{-j}$. 
\begin{align}
h\bigg( \rvy_j \bigg| \rvx_{-j} = x_{-j} \bigg) & = - \int_{y_j = -1}^{y_j = 1} \ConditionalDensityNodeJY \log \ConditionalDensityNodeJY dy_j \\
& \stackrel{(a)}{\geq} - \int_{y_j = -1}^{y_j = 1} \dfrac{2b f_U}{\pi \sqrt{1-y_j^2}} \log \dfrac{2b f_U}{\pi \sqrt{1-y_j^2}} dy_j\\ 
& = - \dfrac{2b f_U}{\pi} \bigg[\log \dfrac{2b f_U}{\pi}  \int_{y_j = -1}^{y_j = 1}  \dfrac{1}{\sqrt{1-y_j^2}} dy_j - \int_{y_j = -1}^{y_j = 1}  \dfrac{1}{\sqrt{1-y_j^2}} \log \sqrt{1-y_j^2} dy_j  \bigg]\\ 
& \stackrel{(b)}{=} - \dfrac{2b f_U}{\pi} \bigg[\pi \log \dfrac{2b f_U}{\pi}  + \pi \log 2 \bigg]\\ 
& = -2bf_U \log  \dfrac{4b f_U}{\pi}  \label{eq:LB_conditional_sinusoid_instance}
\end{align} 
where $(a)$ follows from \eqref{eq:bound_sinusoid_density} and $(b)$ follows from standard definite integrals. Now, we are in a position to lower bound the conditional entropy of $\rvy_j $ given $\rvx_{-j} $. 
\begin{align}
h\bigg( \rvy_j \bigg| \rvx_{-j} \bigg) & = \int_{{x_{-j}} \in \prod_{r \neq j} \cX_r }  f_{ \rvx_{-j} } ( x_{-j} ; \bthetaStar) h\bigg( \rvy_j \bigg| \rvx_{-j} = x_{-j} \bigg) dx_{-j}  \\
& \stackrel{(a)}{\geq} -2bf_U \log  \dfrac{4b f_U}{\pi}  \label{eq:conditional_entropy_sinusoid_LB}
\end{align} 
where $(a)$ follows from \eqref{eq:LB_conditional_sinusoid_instance} and because the integral of any density function over its entire domain is 1.

Observe that $\int_{x_i \in \cX_i} \sin\big(\frac{\pi(x_i+x_j) }{b}\big) dx_i = \int_{x_i \in \cX_i} \cos\big(\frac{\pi (x_i+x_j)}{b}\big) dx_i  = 0$. Letting $\ctheta_{ij}^{(1,1)} - \ttheta_{ij}^{(1,1)} = \alpha$ and $\ctheta_{ij}^{(2,1)} - \ttheta_{ij}^{(2,1)} = \beta$, the left-hand-side of Condition \ref{condition:1} can be written and simplified as follows:
\begin{align}
& \Expectation\bigg[ \exp\bigg\{2h\bigg(  \alpha \sin\big(\frac{\pi(\rvx_i+\rvx_j) }{b}\big) + \beta \cos\big(\frac{\pi(\rvx_i+\rvx_j) }{b}\big)  \bigg| \rvx_{-j} \bigg) \bigg\} \bigg]  \\
& \qquad \stackrel{(a)}{=} \Expectation\bigg[ \exp\bigg\{2h\bigg(  \sqrt{\alpha^2 + \beta^2} \sin\Big(\frac{\pi(\rvx_i+\rvx_j) }{b} - \taninv \frac{\beta}{\alpha}\Big) \bigg| \rvx_{-j} \bigg) \bigg\} \bigg]  \\
& \qquad \stackrel{(b)}{=} \Expectation\Big[ \exp\bigg\{2h\Big( \sin\Big(\frac{\pi(\rvx_i+\rvx_j) }{b} - \taninv \frac{\beta}{\alpha}\Big)  \Big| \rvx_{-j}\Big)  + 2 \log\Big|\sqrt{\alpha^2 + \beta^2}  \Big| \bigg\} \Big] \\
& \qquad  \stackrel{(c)}{\geq} \Expectation\Big[ \exp\bigg\{-4bf_U \log  \dfrac{4b f_U}{\pi} + \log\Big|\alpha^2 + \beta^2 \Big| \bigg\} \Big] \\
& \qquad  \stackrel{(d)}{=}  \Big(\frac{ \pi}{4b f_U}\Big)^{4b f_U} \times \Big[(\ctheta_{ij}^{(1,1)} - \ttheta_{ij}^{(1,1)})^2 +  (\ctheta_{ij}^{(2,1)} - \ttheta_{ij}^{(2,1)})^2 \Big]
\end{align}
where $(a)$ follows from standard trigonometric identities, $(b)$ follows because for a constant $a$, $h(aX) = h(X) + \log |a|$, $(c)$ follows from \eqref{eq:conditional_entropy_sinusoid_LB} with $z = \pi \rvx_i / b - \taninv \beta / \alpha$, and $(d)$ follows by substituting for $\alpha$ and $\beta$.

Substituting for $f_L$ and $f_U$, we see this density satisfies Condition \ref{condition:1} with $\kappa = \Big(\frac{ \pi \exp(-2 \thetaMax (4d+2)(1+2b)) }{2} \Big)^{2\exp(2 \thetaMax (4d+2)(1+2b))}$.

\subsection{Example 3}
\label{subsec:example3}
The following distribution with polynomial sufficient statistics is a special case of density in \eqref{eq:pairwise-parametric-density} with $\bphi(x) = (x, x^2)$, $k = 2$ and with the assumption that the parameters associated with $x_i x_j^2$ and $x_i^2 x_j^2$ are zero $\forall i \in [p], j > i$. Let $\forall i \in [p]$, $\cX_i = [-b,b]$. 
Therefore $\Xlower = \Xupper = 2b$$, \phiMax = \max\{b,b^2\}$, and $\hphiMax = \max\{1,2b\}$.
The density in this case is given by
\begin{align}
\DensityParametrizedTrue \propto \exp\bigg(  \sum_{i \in [p]} [ \thetaIoneStar x_i + \thetaItwoStar x_i^2 ]+  \sum_{ i \in [p]} \sum_{ j > i} [\thetaIJoneoneStar x_i x_j  + \thetaIJtwooneStar x_i^2 x_j ] \bigg)   \label{eq:specialDensity3}
\end{align}
For this density, we see that $\g = \thetaMax(4d+2)$ and $\varphiMax = (1+2b)\max\{b,b^4\}$. 
As in Appendix \ref{subsec:example1}, we have the following lower bound on the conditional entropy of $\rvx_j$ given $\rvx_{-j}$
\begin{align}
h\bigg( \rvx_j \bigg| \rvx_{-j}\bigg) & \geq - \log f_U  \label{eq:conditionalEntropy2}
\end{align} 
where $f_U = \exp(2 \thetaMax (4d+2) (1+2b)\max\{b,b^4\}) / 2b$.

Observing that $\int_{x_i \in \cX_i} x_ix_j dx_i = 0$ and $\int_{x_i \in \cX_i} x_i^2x_j dx_i = \frac{2b^3}{3} x_j$, the left-hand-side of Condition \ref{condition:1} can be written and simplified as follows:
\begin{align}
& \Expectation\bigg[ \exp\bigg\{2h\bigg( (\ctheta_{ij}^{(1,1)} - \ttheta_{ij}^{(1,1)})  \rvx_i\rvx_j  + (\ctheta_{ij}^{(2,1)} - \ttheta_{ij}^{(2,1)}) \Big( \rvx_i^2\rvx_j-  \frac{2b^3}{3} \rvx_j \Big) \bigg| \rvx_{-j} \bigg) \bigg\} \bigg]  \\
& \qquad  = \Expectation\bigg[ \exp\bigg\{2h\bigg( \bigg[ (\ctheta_{ij}^{(1,1)} - \ttheta_{ij}^{(1,1)})  \rvx_i + (\ctheta_{ij}^{(2,1)} - \ttheta_{ij}^{(2,1)}) \Big( \rvx_i^2 - \frac{2b^3}{3}\Big) \bigg] \rvx_j \bigg| \rvx_{-j} \bigg) \bigg\} \bigg]  \\
& \qquad  \stackrel{(a)}{=} \Expectation\bigg[ \exp\bigg\{2h\bigg( \rvx_j \bigg| \rvx_{-j}\bigg)  + 2 \log\bigg|  (\ctheta_{ij}^{(1,1)} - \ttheta_{ij}^{(1,1)})  \rvx_i + (\ctheta_{ij}^{(2,1)} - \ttheta_{ij}^{(2,1)}) \Big( \rvx_i^2 - \frac{2b^3}{3} \Big) \bigg| \bigg\} \bigg] \\
& \qquad  \stackrel{(b)}{\geq}  \Expectation\bigg[ \exp\bigg\{-2 \log f_U  + 2 \log\bigg|  (\ctheta_{ij}^{(1,1)} - \ttheta_{ij}^{(1,1)})  \rvx_i + (\ctheta_{ij}^{(2,1)} - \ttheta_{ij}^{(2,1)}) \Big( \rvx_i^2 - \frac{2b^3}{3} \Big) \bigg|\bigg\} \bigg] \\
& \qquad  = \frac{ 1}{f_U^2} \Expectation\bigg[ \bigg((\ctheta_{ij}^{(1,1)} - \ttheta_{ij}^{(1,1)})  \rvx_i + (\ctheta_{ij}^{(2,1)} - \ttheta_{ij}^{(2,1)}) \Big( \rvx_i^2 - \frac{2b^3}{3} \Big) \bigg)^2\bigg] \\
& \qquad  \stackrel{(c)}{=}  \frac{ 1}{f_U^2} \Expectation\bigg[ \Expectation\bigg[ \bigg((\ctheta_{ij}^{(1,1)} - \ttheta_{ij}^{(1,1)})  \rvx_i + (\ctheta_{ij}^{(2,1)} - \ttheta_{ij}^{(2,1)}) \Big( \rvx_i^2 - \frac{2b^3}{3} \Big) \bigg)^2 \bigg| \rvx_{-i}\bigg] \bigg]\\
& \qquad  \stackrel{(d)}{\geq} \frac{ f_L}{f_U^2} \Expectation \bigg[(\ctheta_{ij}^{(1,1)} - \ttheta_{ij}^{(1,1)})^2 \int_{x_i \in \cX_i} x_i^2 dx_i + (\ctheta_{ij}^{(2,1)} - \ttheta_{ij}^{(2,1)})^2  \bigg(\int_{x_i \in \cX_i} \Big[ x_i^4 + \frac{4b^6}{9} - \frac{4b^3}{3} x_i^2 \Big] dx_i \bigg)  \\ & \qquad \qquad \qquad + 2 (\ctheta_{ij}^{(1,1)} - \ttheta_{ij}^{(1,1)}) (\ctheta_{ij}^{(2,1)} - \ttheta_{ij}^{(2,1)}) \bigg(\int_{x_i \in \cX_i} \Big[ x_i^3 - \frac{2b^3}{3} x_i \Big] dx_i \bigg) \bigg]\\
& \qquad  = \frac{ f_L}{f_U^2}  \bigg[\frac{2b^3}{3}(\ctheta_{ij}^{(1,1)} - \ttheta_{ij}^{(1,1)})^2 + \bigg(\frac{2b^5}{5} + \frac{8b^7}{9} -  \frac{8b^6}{9} \bigg) (\ctheta_{ij}^{(2,1)} - \ttheta_{ij}^{(2,1)})^2 \bigg]\\
& \qquad  \stackrel{(e)}{\geq}  \frac{ f_L}{f_U^2} \bigg[\frac{2b^3}{3}(\ctheta_{ij}^{(1,1)} - \ttheta_{ij}^{(1,1)})^2 +  \frac{8b^5}{45}(\ctheta_{ij}^{(2,1)} - \ttheta_{ij}^{(2,1)})^2 \bigg] \\
& \qquad  \geq  \frac{ 8f_L b^3 \min\{45/12,b^2\}}{45 f_U^2} \bigg[(\ctheta_{ij}^{(1,1)} - \ttheta_{ij}^{(1,1)})^2 +  (\ctheta_{ij}^{(2,1)} - \ttheta_{ij}^{(2,1)})^2 \bigg]
\end{align}
where $(a)$ follows because for a constant $a$, $h(aX) = h(X) + \log |a|$, $(b)$ follows from \eqref{eq:conditionalEntropy2}, $(c)$ follows from the law of total expectation, $(d) $ follows from the definition of conditional expectation and \eqref{eq:boundsDensity}, and $(e)$ follows because $8b^2/9 - 8b/9 + 2/5 \geq 8/45$.

Substituting for $f_L$ and $f_U$, we see this density satisfies Condition \ref{condition:1} with $\kappa = \frac{ 16 b^4 \min\{45/12,b^2\}}{45} \exp(-6 \thetaMax (4d+2) (1+2b)\max\{b,b^4\})$.

\subsection{Example 4}
\label{subsec:example4}
The following distribution with polynomial sufficient statistics is a special case of density in \eqref{eq:pairwise-parametric-density} with $\bphi(x) = (x, x^2)$, $k = 2$ and with the assumption that the parameters associated with $x_i x_j$, $x_i^2 x_j$, $x_i x_j^2$ and $x_i^2 x_j^2$ are same $\forall i \in [p], j > i$.
Let $\forall i \in [p]$, $\cX_i = [-b,b]$. 
Therefore $\Xlower = \Xupper = 2b$$, \phiMax = \max\{b,b^2\}$, and $\hphiMax = \max\{1,2b\}$.
The density in this case is given by
\begin{align}
\DensityParametrizedTrue \propto \exp\bigg(  \sum_{i \in [p]} [ \thetaIoneStar x_i + \thetaItwoStar x_i^2 ]+  \sum_{\substack{ i \in [p] \\j > i}} \thetaIJStar (x_i +x_i^2)(x_j +x_j^2) \bigg)  \label{eq:specialDensity4}
\end{align}
For this density, we see that $\g = \thetaMax(4d+2)$ and $\varphiMax = (1+2b)\max\{b,b^4\}$. 
Let $\rvy_j = \rvx_j + \rvx_j^2$. 
It is easy to obtain the range of $\rvy_j$ as follows:
\begin{align}
\rvy_j \in \cY & \coloneqq 
\begin{cases}
[-1/4 , b+ b^2] &\quad\text{if} \quad b \geq 1/2\\
[b - b^2 , b+ b^2] &\quad\text{if} \quad b < 1/2\\
\end{cases}  \label{eq:y_range}
\end{align}
We obtain the conditional density of $\rvy_j$ given $\rvx_{-j}$ using the change of variables technique and upper bound it using \eqref{eq:boundsDensity} as follows:
\begin{align}
\ConditionalDensityNodeJY & \leq  \dfrac{2 f_U}{\sqrt{1+4y_j}}  \label{eq:bound_poly_deg2_density}
\end{align}
where $f_U = \exp(2 \thetaMax (4d+2) (1+2b)\max\{b,b^4\}) / 2b$.

We will now lower bound the conditional entropy of $\rvy_j $ given $\rvx_{-j} =  x_{-j}$. 
In the first scenario, let $b \geq 1/2$. 
\begin{align}
h\bigg( \rvy_j \bigg| \rvx_{-j} = x_{-j} \bigg) & = - \int_{y_j = -1/4}^{y_j = b+b^2} \ConditionalDensityNodeJY \log \ConditionalDensityNodeJY dy_j \\
& \stackrel{(a)}{\geq} - \int_{y_j = -1/4}^{y_j = b+b^2} \dfrac{2 f_U}{\sqrt{1+4y_j}}  \log \dfrac{2 f_U}{\sqrt{1+4y_j}}  dy_j\\ 
& \stackrel{(b)}{=} - f_U \sqrt{1+4y_j} \log \dfrac{2 f_Ue}{\sqrt{1+4y_j}} \Biggr|_{-1/4}^{b+b^2}\\
& = -(1+2b)f_U \log \dfrac{2 f_Ue}{1+2b}  \label{eq:LB_conditional_entropy_poly_degree2_instance1}
\end{align} 
where $(a)$ follows from \eqref{eq:bound_poly_deg2_density} and $(b)$ follows from standard indefinite integrals. 
Now, we will lower bound the conditional entropy of $\rvy_j $ given $\rvx_{-j} =  x_{-j}$ and $b < 1/2$. 
\begin{align}
h\bigg( \rvy_j \bigg| \rvx_{-j} = x_{-j} \bigg) & = - \int_{y_j = b - b^2}^{y_j = b+b^2} \ConditionalDensityNodeJY \log \ConditionalDensityNodeJY dy_j \\
& \stackrel{(a)}{\geq} - \int_{y_j = b - b^2}^{y_j = b+b^2} \dfrac{2 f_U}{\sqrt{1+4y_j}}  \log \dfrac{2 f_U}{\sqrt{1+4y_j}}  dy_j\\ 
& \stackrel{(b)}{=} - f_U \sqrt{1+4y_j} \log \dfrac{2 f_Ue}{\sqrt{1+4y_j}} \Biggr|_{b - b^2}^{b+b^2}\\
& = -(1+2b)f_U \log \dfrac{2 f_Ue}{1+2b} + (1-2b)f_U \log \dfrac{2 f_Ue}{1-2b} \\
& \stackrel{(c)}{\geq}  -(1+2b)f_U \log \dfrac{2 f_Ue}{1+2b}  \label{eq:LB_conditional_entropy_poly_degree2_instance2}
\end{align} 
where $(a)$ follows from \eqref{eq:bound_poly_deg2_density}, $(b)$ follows from standard indefinite integrals and $(c)$ follows because $(1-2b) \log \frac{2 f_Ue}{1-2b} > 0$ when $b < 1/2$. 
Now, we are in a position to lower bound the conditional entropy of $\rvy_j $ given $\rvx_{-j} $. 
\begin{align}
h\bigg( \rvy_j \bigg| \rvx_{-j} \bigg) & \int_{{x_{-j}} \in \prod_{r \neq j} \cX_r }  f_{ \rvx_{-j} } ( x_{-j} ; \bthetaStar) h\bigg( \rvy_j \bigg| \rvx_{-j} = x_{-j} \bigg) dx_{-j}  \\
& \stackrel{(a)}{\geq} -(1+2b)f_U \log \dfrac{2 f_Ue}{1+2b}   \label{eq:LB_conditional_entropy_poly_degree2}
\end{align} 
where $(a)$ follows from \eqref{eq:LB_conditional_entropy_poly_degree2_instance1}, \eqref{eq:LB_conditional_entropy_poly_degree2_instance2}, and because the integral of any density function over its entire domain is 1.

Observing that $\int_{x_i \in \cX_i} x_i dx_i = 0$ and $\int_{x_i \in \cX_i} x_i^2 dx_i = \frac{2b^3}{3} $, the left-hand-side of Condition \ref{condition:1} can be written and simplified as follows:
\begin{align}
& \Expectation\bigg[ \exp\bigg\{2h\bigg( \bigg(\ctheta_{ij} - \ttheta_{ij}\bigg)  \bigg(\rvx_i + \rvx_i^2 -  \frac{2b^3}{3}\bigg)  \bigg(\rvx_j + \rvx_j^2\bigg) \bigg| \rvx_{-j} \bigg) \bigg\} \bigg]  \\
& \qquad  \stackrel{(a)}{=} \Expectation\bigg[ \exp\bigg\{2h\bigg( \rvx_j + \rvx_j^2 \bigg| \rvx_{-j}\bigg)  + 2 \log\bigg|  \bigg(\ctheta_{ij} - \ttheta_{ij}\bigg)  \bigg(\rvx_i + \rvx_i^2 -  \frac{2b^3}{3}\bigg)  \bigg| \bigg\} \bigg] \\
& \qquad  \stackrel{(b)}{\geq}  \Expectation\bigg[ \exp\bigg\{-2(1+2b)f_U \log \dfrac{2 f_Ue}{1+2b}  + 2 \log\bigg|  \bigg(\ctheta_{ij} - \ttheta_{ij}\bigg)  \bigg(\rvx_i + \rvx_i^2 -  \frac{2b^3}{3}\bigg)  \bigg| \bigg\} \bigg] \\
& \qquad  = \bigg(\frac{ 1+2b}{2f_Ue}\bigg)^{2f_U(1+2b)} \Expectation\bigg[ \bigg(\ctheta_{ij} - \ttheta_{ij}\bigg)^2  \bigg(\rvx_i + \rvx_i^2 -  \frac{2b^3}{3}\bigg)^2\bigg] \\
& \qquad  \stackrel{(c)}{=}  f_L  (\ctheta_{ij} - \ttheta_{ij})^2 \bigg(\frac{2b^3}{3} + \frac{8b^5}{45} \bigg) \bigg(\frac{ 1+2b}{2f_Ue}\bigg)^{2f_U(1+2b)}
\end{align}
where $(a)$ follows because for a constant $a$, $h(aX) = h(X) + \log |a|$, $(b)$ follows from \eqref{eq:LB_conditional_entropy_poly_degree2}, $(c)$ follows from steps similar to the ones in Appendix \ref{subsec:example3}.

Substituting for $f_L$ and $f_U$, we see this density satisfies Condition \ref{condition:1} with  $\kappa = \frac{ e(15b+4b^3)}{45(1+2b)}  \Big(\frac{ b(1+2b) \exp(-2 \thetaMax  (4d+2) (1+2b)\max\{b,b^4\}) }{e} \Big)^{\frac{1+2b}{b}\exp(2 \thetaMax  (4d+2) (1+2b)\max\{b,b^4\}) + 1}$.

\section{Discussion on Theorem \ref{theorem:GRISE-consistency-efficiency}}
\label{sec: discussion on theorem}
In this section, we discuss the invertibility of the cross-covariance matrix $B(\bvthetaIStar)$ via a simple example as well as explicitly show that the matrix $B(\bvthetaIStar)^{-1} A(\bvthetaIStar) B(\bvthetaIStar)^{-1}$ need not be equal to the inverse of the Fisher information matrix of $\rvbx$. This concludes that even though the estimator $\hbvthetaIn$ is asymptotically normal, it is not asymptotically efficient.

\subsection{Invertibility of the cross-covariance matrix}
\label{subsec:Invertibility of the cross-covariance matrix}
We will look at the special case of $\bphi(x) = x$ and $k = 1$ and show that the cross-covariance matrix of $\bvphiI(\rvbx)$ and $\bvphiI(\rvbx)\exp\big( -\bvthetaIStarT \bvphiI(\rvbx) \big)$ i.e., $B(\bvthetaIStar)$ is invertible $\forall i \in [p]$ when $p = 2$. Let $\cX_1 = [-b,b]$ and $\cX_2 = [-b,b]$.

The density in this special case is as follows.
\begin{align}
\DensityParametrizedTrue \propto \exp\bigg(  \thetaoneStar x_1 + \thetatwoStar x_2 +   \thetaonetwoStar x_1 x_2  \bigg)  \label{eq:continuous-ising-density-two-nodes}.
\end{align}
It is easy to see that the basis functions $x_1, x_2$, and $x_1 x_2$ are already locally centered. Therefore, we have
\begin{align}
\bvphione(\svbx) & = (x_1, x_1x_2)  \qquad \text{and} \qquad \bvphitwo(\svbx)  = (x_2, x_1x_2)\\
\bvthetaOneStar & = (\thetaoneStar, \thetaonetwoStar) \qquad \text{and} \qquad \bvthetaTwoStar  = (\thetatwoStar, \thetaonetwoStar)
\end{align}
Then, the cross-covariance matrices $B(\bvthetaOneStar)$ and $B(\bvthetaTwoStar)$ are\\\\
$B(\bvthetaOneStar)= 
\begin{bmatrix}
\Expectation[\rvx_1^2 \exp{(- \thetaoneStar \rvx_1 - \thetaonetwoStar \rvx_1\rvx_2)}] & \Expectation[\rvx_1^2 \rvx_2 \exp{(- \thetaoneStar \rvx_1 - \thetaonetwoStar \rvx_1\rvx_2)}]\\
\Expectation[\rvx_1^2 \rvx_2 \exp{(- \thetaoneStar \rvx_1 - \thetaonetwoStar \rvx_1\rvx_2)}] & \Expectation[\rvx_1^2 \rvx_2^2 \exp{(- \thetaoneStar \rvx_1 - \thetaonetwoStar \rvx_1\rvx_2)}]
\end{bmatrix}
$\\
and\\ 
$B(\bvthetaTwoStar)= 
\begin{bmatrix}
\Expectation[\rvx_2^2 \exp{(- \thetatwoStar \rvx_2 - \thetaonetwoStar \rvx_1\rvx_2)}] & \Expectation[\rvx_2^2 \rvx_1 \exp{(- \thetatwoStar \rvx_2 - \thetaonetwoStar \rvx_1\rvx_2)}]\\
\Expectation[\rvx_2^2 \rvx_1 \exp{(- \thetatwoStar \rvx_2 - \thetaonetwoStar \rvx_1\rvx_2)}] & \Expectation[\rvx_2^2 \rvx_1^2 \exp{(- \thetatwoStar \rvx_2 - \thetaonetwoStar \rvx_1\rvx_2)}]
\end{bmatrix}
$.\\\\
Using the Cauchy-Schwarz inequality, for random variables $M$ and $N$, we have
\begin{align}
[\Expectation(MN)]^2 \leq \Expectation(M^2) \Expectation(N^2)
\end{align}
with equality only if $M$ and $N$ are linearly dependent. Using the Cauchy-Schwarz inequality with $M = \rvx_1 \exp(-0.5 \thetaoneStar \rvx_1 - 0.5 \thetaonetwoStar \rvx_1 \rvx_2)$, $N = \rvx_1 \rvx_2 \exp(-0.5 \thetaoneStar \rvx_1 - 0.5 \thetaonetwoStar \rvx_1 \rvx_2)$ and observing that $M$ and $N$ are not linearly dependent (because $\rvx_2$ is a random variable), we have invertibility of $B(\bvthetaOneStar)$. Similarly, using the Cauchy-Schwarz inequality with $M = \rvx_2 \exp(-0.5 \thetatwoStar \rvx_2 - 0.5 \thetaonetwoStar \rvx_1 \rvx_2)$, $N = \rvx_2 \rvx_1 \exp(-0.5 \thetatwoStar \rvx_2 - 0.5 \thetaonetwoStar \rvx_1 \rvx_2)$ and observing that $M$ and $N$ are not linearly dependent (because $\rvx_1$ is a random variable), we have invertibility of $B(\bvthetaTwoStar)$.

\subsection{Fisher information matrix}
\label{subsec: Fisher information matrix}
Let $J(\bvthetaIStar)$ denote the Fisher information matrix of $\rvbx$ with respect to node $i$. For any $l \in [k+k^2(p-1)]$, let $\bvphiI_{l}(\svbx)$ denote the $l^{th}$ component of $\bvphiI(\svbx)$. Using the fact that for any regular exponential family the Hessian of the log partition function is the covariance matrix of the associated sufficient statistic vector, we have the Fisher information matrix
\begin{align}
\big[	J(\bvthetaIStar) \big]_{l_1,l_2} = \text{Cov} \big( \bvphiI_{l_1}(\svbx) , \bvphiI_{l_2}(\svbx)\big) 
\end{align}
Consider the density in \eqref{eq:continuous-ising-density-two-nodes} with $b = 1, \thetaoneStar = \thetatwoStar = 0$ and $\thetaonetwoStar = 1$. We will evaluate the matrices $B(\bvthetaOneStar)$, $A(\bvthetaOneStar)$, and $J(\bvthetaOneStar)$. We have \\
$B(\bvthetaOneStar)= 
\begin{bmatrix}
\Expectation[\rvx_1^2 \exp{(-  \rvx_1\rvx_2)}] & \Expectation[\rvx_1^2 \rvx_2 \exp{(-  \rvx_1\rvx_2)}]\\
\Expectation[\rvx_1^2 \rvx_2 \exp{(-  \rvx_1\rvx_2)}] & \Expectation[\rvx_1^2 \rvx_2^2 \exp{(-  \rvx_1\rvx_2)}]
\end{bmatrix} = \begin{bmatrix}
\frac{1}{3\text{Shi}(1)} & 0\\
0 & \frac{1}{9\text{Shi}(1)}\\
\end{bmatrix}
$\\\\\\
$A(\bvthetaOneStar)= 
\begin{bmatrix}
\Expectation[\rvx_1^2 \exp{(-  2\rvx_1\rvx_2)}] & \Expectation[\rvx_1^2 \rvx_2 \exp{(-  2\rvx_1\rvx_2)}]\\
\Expectation[\rvx_1^2 \rvx_2 \exp{(-  2\rvx_1\rvx_2)}] & \Expectation[\rvx_1^2 \rvx_2^2 \exp{(-  2\rvx_1\rvx_2)}]
\end{bmatrix} = \begin{bmatrix}
\frac{1}{e\text{Shi}(1)} & 0\\
0 & \frac{2\text{Shi}(1) + 2/e - e}{\text{Shi}(1)}\\
\end{bmatrix}
$\\\\\\
$J(\bvthetaOneStar)= 
\begin{bmatrix}
\text{Cov} (\rvx_1,\rvx_1)& \text{Cov} (\rvx_1,\rvx_1\rvx_2)\\
\text{Cov} (\rvx_1,\rvx_1\rvx_2) & \text{Cov} (\rvx_1\rvx_2,\rvx_1\rvx_2)
\end{bmatrix} = \begin{bmatrix}
\frac{1}{e\text{Shi}(1)} & 0\\
0 & \frac{2\text{Shi}(1) + 2/e - e}{\text{Shi}(1)} - \Big[\frac{\text{sinh}(1)}{\text{Shi}(1)}-1\Big]^2\\
\end{bmatrix}
$\\
where $\text{sinh}$ is the hyperbolic sine function and $\text{Shi}$ is the hyperbolic sine integral function. Plugging in the values of $\text{Shi}(1), \text{sinh}(1), $ and $e$, we have (upto two decimals)\\
$B(\bvthetaOneStar)^{-1} A(\bvthetaOneStar) B(\bvthetaOneStar)^{-1}= 
\begin{bmatrix}
3.50 & 0\\
0 & 11.30
\end{bmatrix} \neq
J^{-1}(\bvthetaOneStar)= 
\begin{bmatrix}
3.007 & 0\\
0& 8.90
\end{bmatrix} $

\end{document}